\newtheorem{theorem}{Theorem}
\newtheorem{lemma}[theorem]{Lemma}
\newtheorem{prop}[theorem]{Proposition}
\newtheorem{remark}{Remark}
\newtheorem{definition}{Definition}
\newcommand{\N}{\mathbb{N}}
\newcommand{\cellof}[1]{L_n(#1)}
\newcommand{\cardcellof}[1]{N_n(L_n(#1))}
\newcommand{\Esp}[1]{\mathbb{E} \left[ #1\right]}
\newcommand{\Prob}[1]{\mathbb{P} \left( #1 \right)}
\newcommand{\ind}[1]{\mathbb{1}_{#1}}
\newcommand{\NBC}{|L_n(X) \cap \mathcal{B} |}
\newcommand{\NWC}{|L_n(X) \cap \mathcal{W}|}
\newcommand{\expkstar}[1]{2^{k^\star #1}}
\newcommand{\replace}[2]{{#2}}
\title{Analyzing the tree-layer structure of Deep Forests}
\author[1]{Ludovic Arnould} 
\author[1]{Claire Boyer}
\author[2]{Erwan Scornet}
\affil[1]{LPSM, Sorbonne Universit\'e}
\affil[2]{CMAP, Ecole Polytechnique }
\title{Analyzing the tree-layer structure of Deep Forests}
\begin{document}

\maketitle

\begin{abstract}
    Random forests on the one hand, and neural networks on the other hand, have met great success in the machine learning community for their predictive performance. Combinations of both have been proposed in the literature, notably leading to the so-called deep forests (DF) \cite{zhou2019deep}. In this paper, our aim is not to benchmark DF performances but to investigate instead their underlying mechanisms. Additionally, DF architecture can be generally simplified into more simple and 
    computationally efficient shallow forests networks. Despite some instability, the latter may outperform standard predictive tree-based methods.
    %In order to precisely quantify the improvement achieved by these light network configurations over standard tree learners,
    %we theoretically study the performance of a shallow tree network made of two layers, each one composed of a single centered tree.
    We exhibit a theoretical framework in which a shallow tree network is shown to enhance the performance of classical decision trees. 
    In such a setting, we provide tight theoretical lower and upper bounds on its excess risk. 
    These theoretical results show the interest of tree-network architectures for well-structured data provided that the first layer, acting as a data encoder, is rich enough.
\end{abstract}

\section{Introduction}

Deep Neural Networks (DNNs) are among the most widely used machine learning algorithms. They are composed of parameterized differentiable non-linear modules trained by gradient-based methods, which rely on the backpropagation procedure. Their performance mainly relies on layer-by-layer processing as well as feature transformation across layers. Training neural networks usually requires complex hyper-parameter tuning \cite{NIPS2011_4443} and a huge amount of data. Although DNNs recently achieved great results in many areas, they remain very complex to handle and unstable to input noise \cite{zheng2016improving}.

Recently, several attempts have been made to consider networks with non-differentiable modules. Among them the Deep Forest (DF) algorithm \cite{zhou2019deep}, which uses Random Forests (RF) \cite{breiman2001random} as neurons, has received a lot of attention in recent years in various applications such as hyperspectral image processing \cite{liu2020morphological}, medical imaging \cite{sun2020adaptive}, drug interactions \cite{su2019deep, zeng2020network} or even fraud detection \cite{zhang2019distributed}.
%deal with small data sets \cite{utkin2020imprecise}, 

Since the DF procedure stacks multiple layers, each one being composed of complex nonparametric RF estimators, the rationale behind the procedure remains quite obscure. However DF methods exhibit impressive performances in practice, suggesting that stacking RFs and extracting features from these estimators at each layer is a promising way to leverage on the RF performance in the neural network framework. 
%{\color{red} The goal of this paper is to improve our understanding of why and to what extent do DFs perform better than RFs.}
{The goal of this paper is not an exhaustive empirical study of prediction performances of DF \citep[see][]{zhou2019deepbis}  but rather to understand how stacking trees in a network fashion may result in competitive infrastructure.}

\paragraph{Related Works.}
%First of all, we list several methods to combine Neural Nets with Trees or Forests which have recently been presented. This list is not exhaustive but we focus on the models that are similar to ours. We distinguish the net-like architectures and the tree-like architectures. 
Different manners of stacking trees exist \replace{}{ \citep[see][for a general survey on stacking methods]{ghods2020survey}}, as the Forwarding Thinking Deep Random Forest (FTDRF), proposed by \cite{ftdrf}, for which the proposed network contains trees which directly transmit their output to the next layer (contrary to Deep Forest in which their output is first averaged before being passed to the next layer). A different approach by  \cite{feng2018multi} consists in  rewriting tree gradient boosting as a simple neural network whose layers can be made arbitrary large depending on the boosting tree structure. The resulting estimator is more simple than DF but does not leverage on the ensemble method properties of random forests. 

In order to prevent overfitting and to lighten the model, several ways to simplify DF architecture have been investigated. \cite{DF_confidence_screening} considers RF whose complexity varies through the network, and combines it with a confidence measure to pass high confidence instances directly to the output layer.
%and by putting small RF at the first levels. 
Other directions towards DF architecture simplification are to play on the nature of the RF involved \cite{DERT}  (using Extra-Trees instead of Breiman's RF),  on the number of RF per layer 
\cite{jeong2020lightweight} (implementing layers of many forests with few trees), 
or even on the number of features passed between two consecutive layers \cite{su2019deep} by relying on an importance measure to process only the most important features at each level.
The simplification can also occur once the DF architecture is trained, as in \cite{kim2020interpretation} selecting in each forest the most important paths to reduce the network time- and memory-complexity.
Approaches to increase the approximation capacity of DF have also been proposed by adjoining weights to trees or to forests in each layer \cite{utkin2017discriminative,utkin2020improvement}, replacing the forest by more complex estimators (cascade of ExtraTrees) \cite{berrouachedi2019deep}, or by combining several of the previous modifications notably incorporating data preprocessing \cite{guo2018bcdforest}.
Overall, the related works on DF exclusively represent algorithmic contributions without a formal understanding of the driving mechanisms at work inside the forest cascade. 
\paragraph{Contributions.}
In this paper, we analyze the benefit of combining trees in  network architecture both theoretically and numerically. %{\color{red}We emphasize that the good performances of DF have already been shown in the literature (see the original article as well as the numerous applications aforementioned) and that the aims of our study are 1) questioning and quantifying the benefits of using DFs instead of RFs and 2) understanding what are the reasons explaining those good performances}. 
As the performances of DF have already been validated by the literature \citep[see][]{zhou2019deepbis}, the main goals of our study are (i) to quantify the potential benefits of DF over RF, and (ii) to understand the mechanisms at work in such complex architectures.
We show in particular that much lighter configuration can be on par with DF default configuration, leading to a drastic reduction of the number of parameters in few cases. For most datasets, considering DF with two layers is already an improvement over the basic RF algorithm. However, the performance of the overall method is highly dependent on the structure of the first random forests, which leads to stability issues.  
%
%We establish theoretical upper and lower bounds for the risk of the shallow tree network which are shown to be tight. With these results, 
%
By establishing tight lower and upper bounds on the risk, we prove that a shallow tree-network may outperform an individual tree in the specific case of a well-structured dataset if the first encoding tree is rich enough. This is a first step to understand the interest of extracting features from trees, and more generally the benefit of tree networks. 
\paragraph{Agenda.}
DF are formally described in Section \ref{sec:deep_forest}. 
Section~\ref{sec:ref_numerical_analysis} is devoted to the numerical study of DF, by evaluating the influence of the number of layers in DF architecture, by showing that shallow sub-models of one or two layers perform the best, and finally by understanding the influence of tree depth in cascade of trees.
%We then define the shallow centered tree network model study theoretically  in Section \ref{sec:theoretical_results}.
%Since we are particularly interested in understanding what information is extracted from trees at each layer that can be useful for building the next one, we study a shallow centered tree network model in Section \ref{sec:theoretical_results}.
Section~\ref{sec:theoretical_results} contains the theoretical analysis of the shallow centered tree network. For reproducibility purposes, all codes together with all experimental procedures are to be found in the supplementary materials.

\section{Deep Forests}
\label{sec:deep_forest}

\subsection{Description}

Deep Forest \cite{zhou2019deep} is a hybrid learning procedure in which random forests  are used as the elementary components (neurons) of a neural network. 
Each layer of DF is composed of an assortment of Breiman's forests and Completely-Random Forests (CRF) \cite{fan2003random} and trained one by one. 
In a classification setting, each forest of each layer outputs a class probability distribution for any query point $x$, corresponding to the distribution of the labels in the node containing $x$.
At a given layer, the distributions output by  all forests of this layer are concatenated, together with the raw data. This new vector serves as input for the next DF layer. 
This process is repeated for each layer and the final classification is performed by averaging the forest outputs of the best layer (without raw data) and applying the \texttt{argmax} function. The overall architecture is depicted in Figure \ref{fig:DF}.

\begin{figure}[!htp]
    \centering
    \includegraphics[width = 0.43\textwidth]{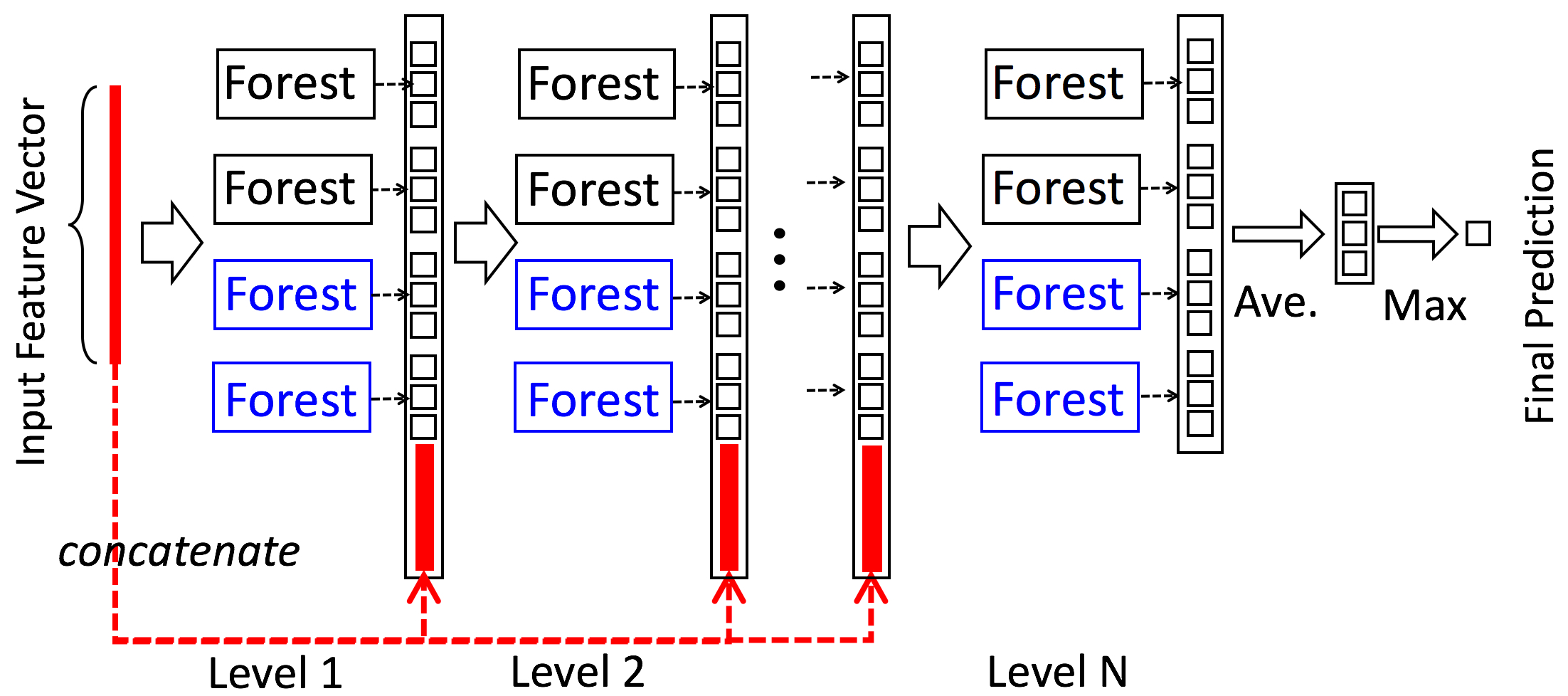}
    \caption{Deep Forest architecture (the scheme is taken from \cite{zhou2019deep}).}
    \label{fig:DF}
\end{figure}

\subsection{DF hyperparameters}
Deep Forests contain an important number of tuning parameters. Apart from the traditional parameters of random forests, DF architecture depends on the number of layers, the number of forests per layer, the type and proportion of random forests to use (Breiman or CRF). In \cite{zhou2019deep}, the default configuration is set to 8 forests per layer, 4 CRF and 4 RF, 500 trees per forest (other forest parameters are set to \texttt{sk-learn} \cite{scikit-learn} default values), and layers are added until 3 consecutive layers do not show score improvement.

Due to their large number of parameters and the fact that they use a complex algorithm as elementary bricks, DF consist in a potential high-capacity procedure. However, as a direct consequence, the numerous pa\-ra\-me\-ters are difficult to estimate (requiring specific tuning of the optimization process) and need to be stored which leads to high prediction time and large memory consumption. Besides, the layered structure of this estimate, and the fact that each neuron is replaced by a powerful learning algorithm makes the whole prediction hard to properly interpret.

%Several attempts to lighten the architecture in \cite{jeong2020lightweight}, \cite{DERT} and \cite{berrouachedi2019deep} have been conducted. \cite{jeong2020lightweight} modifies the default configuration by increasing the number of forests by layer and decreasing the number of trees by forest. Besides, the authors of \cite{DERT} and \cite{berrouachedi2019deep} replace RF by CRF or sets of CRF which are much faster to train. Overall, the resulting architectures remain heavy and difficult to interpret. \cite{kim2020interpretation} remarkably simplifies DF by selecting in each forest the most important paths in order to reduce the time and memory complexity of the network and to make interpretation easier without loosing performance. 

As already pointed out, several attempts to lighten the architecture have been conducted. In this paper, we will propose and assess the performance of a lighter DF configuration on tabular datasets.

\begin{remark}
DF \cite{zhou2019deep} was first designed to classify images. To do so, a pre-processing network called Multi Grained Scanning (MGS) based on convolutions is first applied to the original images. Then the Deep Forest algorithm runs with the newly created features as inputs. 
\end{remark}

\section{Refined numerical analysis of DF architectures} 
\label{sec:ref_numerical_analysis}

In order to understand the benefit of using a complex architecture like Deep Forests, we compare different configurations of DF on six datasets in which the output is binary, multi-class or continuous, see Table~\ref{tab:dataset_description} for description. All classification datasets belong to the UCI repository, the two regression ones are Kaggle datasets (Housing data and Airbnb Berlin 2020)\footnote{https://www.kaggle.com/raghavs1003/airbnb-berlin-2020 \\
https://www.kaggle.com/c/house-prices-advanced-regression-techniques/data}. 
\replace{}{Note that the Fashion Mnist features are built using the Multi Grained Scanning process from the DF original article \cite{zhou2019deep} (see \ref{app:exp_fashion} for the encoding details). }

%\begin{wraptable}[3]{r}{0.4\textwidth}
%	\flushright
%	\vspace{-0.7cm}
%%	    \resizebox*{0.3\textwidth}{!}{%
  %      \begin{tabular}{||c c c||} 
   %     \hline
    %    Dataset & Type & Train/Test Size & Dim \\ [0.5ex] 
     %   \hline
      %  Adult & Class. (2) & 32560 / 16281 & 14 \\ 
       % \hline
%        Higgs & Class. (2) & 120000 / 80000 & 28 \\
 %       \hline 
  %       Letter &  Class. (26) & 16000 / 4000 & 16\\
   %     \hline
    %    Yeast &  Class. (10) & 1038 / 446 & 8\\
     %    \hline
      %  Airbnb & Regr. & 91306 / 39132 & 13\\
       % \hline
        %Housing & Regr. & 1095/ 365 & 61 \\ %[0.1ex] 
%%       \end{tabular}%
  %      }
%	   \end{center}
%\end{wraptable}
\begin{table}[h]
\small\addtolength{\tabcolsep}{-4pt}
\centering
 \begin{tabular}{||c c c c||} 
 \hline
 Dataset & Type (Nb of classes) & Train/Val/Test Size & Dim \\ [0.5ex] 
 \hline
 Adult & Class. (2) & 26048/ 6512/ 16281 & 14 \\ 
 \hline
 Higgs & Class. (2) & 120000/ 28000/ 60000 & 28 \\
 \hline
 Fashion Mnist & Class (10) & 24000/ 6000/ 8000 &260 \\
\hline 
Letter &  Class. (26) & 12800/ 3200/ 4000 & 16\\
\hline
Yeast &  Class. (10) & 830/ 208/ 446 & 8\\
 \hline
 Airbnb & Regr. & 73044/ 18262/ 39132 & 13\\
 \hline
 Housing & Regr. & 817/ 205/ 438 & 61 \\ [0.1ex] 
 \hline
\end{tabular}
\caption{Description of the datasets.}
\label{tab:dataset_description}
\end{table}

\vspace{-0.3cm}
In what follows, we propose a light DF configuration. We show that our light configuration performance is comparable to the performance of the default DF architecture of \cite{zhou2019deep}, thus questioning the relevance of deep models. Therefore, we analyze the influence of the number of layers in DF architectures, showing that DF improvements mostly rely on the first layers of the architecture.
To gain insights about the quality of the new features created by the first layer, we consider a shallow tree network for which we evaluate the performance as a function of the first-tree depth. 
%study the influence of tree depth in a simplified version of tree network. one of , whose performances are comparable to the default DF architecture of 
% We then evaluate the contribution of each layer in a DF algorithm: the best layers will be shown to be generally the first ones, tempering the interest of deeper models
% shallow configurations will be shown to perform greatly, in regard to their low-complexity.
% and through there, shallow configurations will be shown to perform greatly, in regard to their low-complexity. 
% Then, we will evaluate the contribution of each layer in a DF algorithm:  the best layers will be shown to be generally the first ones, tempering the interest of deeper models. {}

\subsection{Towards DF simplification}
\label{sec:toward_simplification}
\paragraph{Setting.} We compare the performances of the following DF architectures on the datasets summarized in Table \ref{tab:dataset_description}: 
\begin{enumerate}[(i)]
\item  the default setting of DF, described in Section~\ref{sec:deep_forest}; %introduced by \cite{zhou2019deep} and  described in the above section, 
\item the best DF architecture obtained by grid-searching over the number of forests per layer, the number of trees per forest and the maximum depth of each tree. \replace{}{The selected architecture is chosen with respect to the performances achieved on validation datasets};
\item  a new light DF architecture, composed of 2 layers, 2 forests per layer (one RF and one CRF) with only 50 trees of depth 30 trained only once;
\item the first layer of the best DF;
\item the first layer of the light DF;
\item a ``Flattened best DF as RF" which consists in one RF with as many trees as in the best DF with similar forest parameters (refer to Supplementary Materials \ref{app:best_config_details} and Table \ref{tab:best_configurations} for details);
\item a ``Flattened light DF as RF" which corresponds to one RF with as many trees as in the light DF with similar forest parameters.
\end{enumerate}

\paragraph{Results.} Results are presented in Figures \ref{fig:DFvsLight_clas} and \ref{fig:DFvsLight_reg}.
 Each bar plot respectively corresponds to the average accuracy or the average $R^2$ score over 10 tries for each test dataset; the error bars stand for accuracy or $R^2$ standard deviation.
 The description of the resulting best DF architecture for each dataset is given in Table \ref{tab:best_configurations} (Supplementary Materials). 
\begin{figure}[h]
    \centering
        \includegraphics[width = 0.43\textwidth]{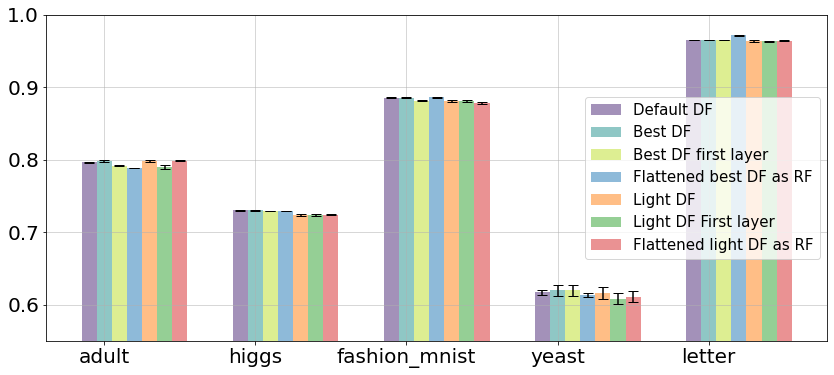}
    \caption{Accuracy of different DF architectures for classification datasets (10 runs per bar plot).}
    \label{fig:DFvsLight_clas}
\end{figure}
\begin{figure}[h]
    \centering
    \vspace{-0.3cm}
    \includegraphics[width = 0.43\textwidth]{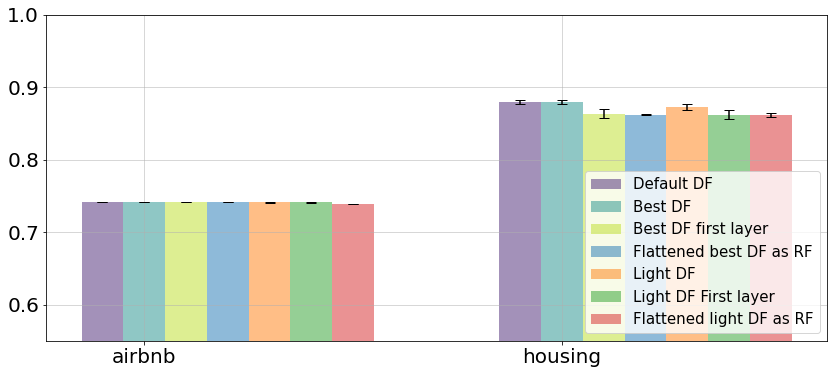}
    \caption{$R^2$ score of different DF architectures for regression datasets (10 runs per bar plot).}
    \label{fig:DFvsLight_reg}
\end{figure}

\replace{}{
As highlighted in Figure \ref{fig:DFvsLight_clas}, the performance of the light configuration for classification datasets is comparable to the default and the best configurations' one, while being much more computationally efficient: faster to train, faster at prediction, cheaper in terms of memory (see Table \ref{tab:mem_time_light_df} in the Supplementary Materials for a comparison of computing time and memory consumption). Moreover, except on the Letter dataset, the DF performs better than its RF equivalent. The results for the Letter dataset can be explained by the fact that the CRFs within the DF are outperformed by Breiman RFs in this specific case.
%This should be qualified by the yardstick of dataset regression results (see Figure~\ref{fig:DFvsLight_reg}). Indeed, for this type of problems, each forest in each layer outputs a scalar compared to the classification tasks in which the output is a vector whose size equals the number of classes. Therefore in regression, the extracted representation at each layer is simplistic thus requiring a deeper architecture. {\color{blue} I disagree with the precedent sentence. }
Overall, for classification tasks, the small performance enhancement of Deep Forests (Default or Best DF) over our light configuration should be assessed in the light of their additional complexity. This questions the usefulness of stacking several layers made of many forests, resulting in a heavy architecture. }
We further propose an in-depth analysis of the role of each layer to the global DF performance. 

\subsection{Tracking the best sub-model}
\label{subsec:optimal_layer}
%{\color{gray}In this section, we experimentally show that i) a shallow DF can do almost as good as a refined DF and ii) building a DF architecture deeper than 2 layers is irrelevant.}

\paragraph{Setting.} On all the previous datasets, we train a DF architecture by specifying the \replace{}{maximal} number $p$ of layers. Unspecified hyper-parameters are set to default value (see Section \ref{sec:deep_forest}). For each $p$, we consider the truncated sub-models composed of layer $1$, layer $1$-$2$, $\hdots$, layer $1$-$p$, where layer $1$-$p$ is the original DF with $p$ layers. For each value of $p$, we consider the previous nested sub-models with $1, 2, \hdots, p$ layers, and compute the predictive accuracy of the best sub-model. 

%and select the best one. For each $p$, we display boxplots of predictive accuracy for the previously chosen sub DF model. 

\paragraph{Results.} We only display results for the Adult dataset in Figure~\ref{fig:acc_depth} (all the other datasets show similar results, see Section~\ref{app:subsec:optimal_layer} of the Supplementary Materials). \replace{}{The score (accuracy or $R^2$-score) corresponds to the result on the test dataset.} 
We observe that adding layers to the Deep Forest does not significantly change the accuracy score. Even if the variance changes by adding layers, we are not able to detect any pattern, which suggests that the variance of the procedure performance is unstable with respect to the \replace{}{maximal} number of layers.

\begin{figure}[h]
    \centering
    \includegraphics[width = 0.49\linewidth]{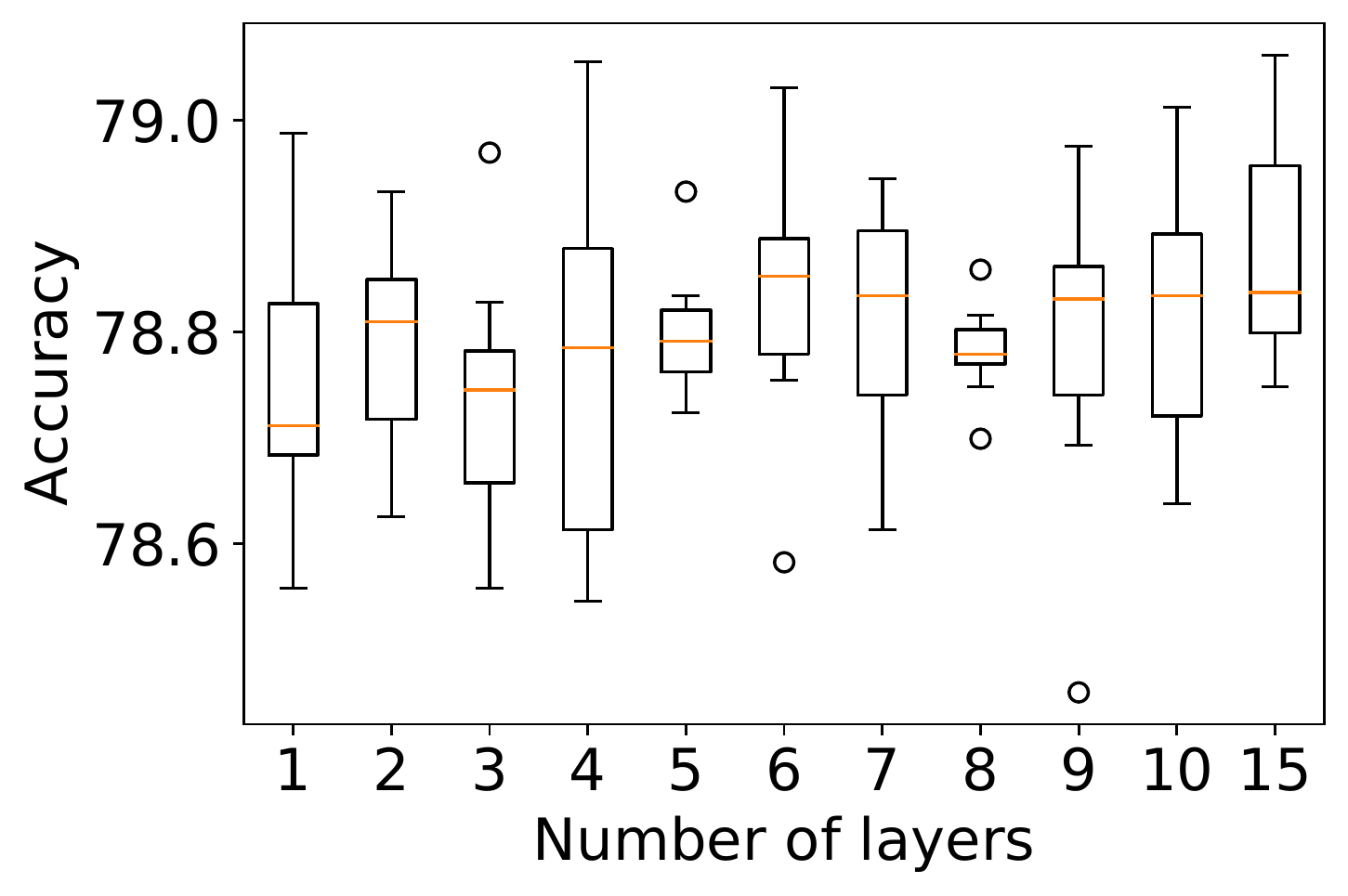}
    \includegraphics[width = 0.49\linewidth]{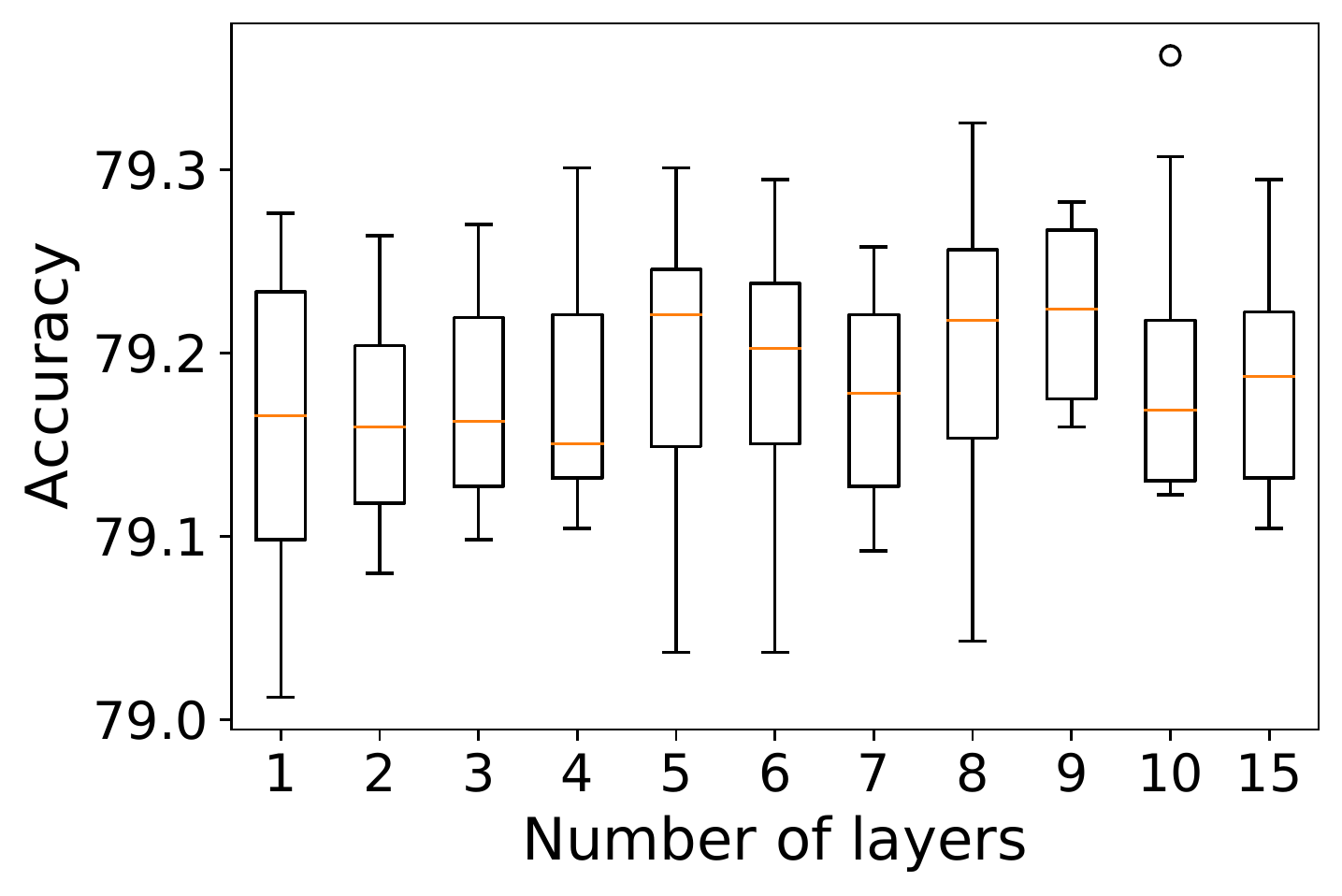}
    \caption{Adult dataset. Boxplots over 10 runs of the accuracy of a DF sub-model with 1 (Breiman) forest by layer (left) or 4 forests (2 Breiman, 2 CRF) by layer (right), depending on the maximal number of layers of the global DF model.}
    \label{fig:acc_depth}
\end{figure}

\begin{figure}[h]
    \centering
    \includegraphics[width = 0.49\linewidth]{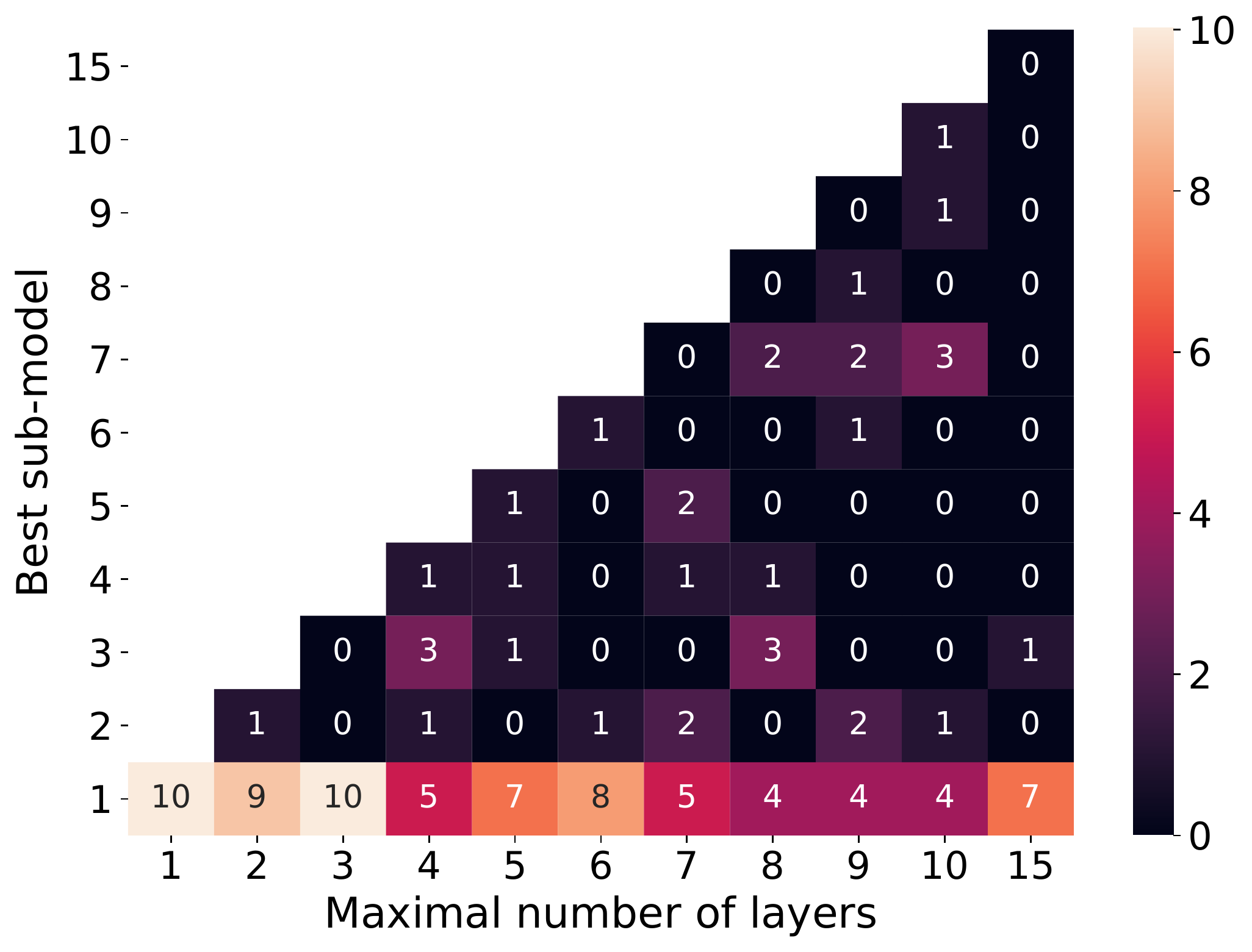}
    \includegraphics[width = 0.49\linewidth]{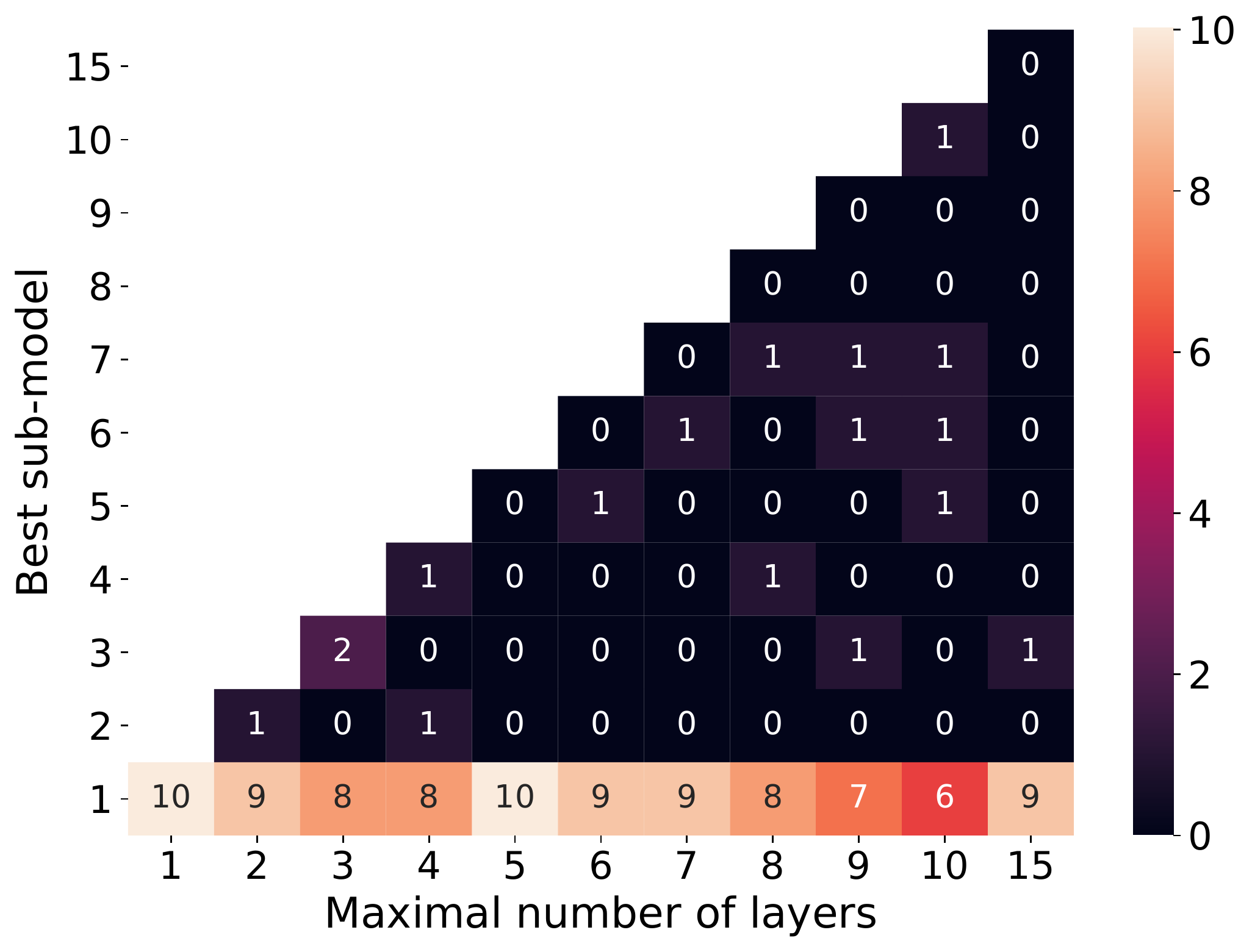}
    \caption{Adult dataset. Heatmap counting the optimal layer index over 10 tries of a default DF with 1 (Breiman) forest per layer (left) or 4 forests (2 Breiman, 2 CRF) per layer (right), with respect to the maximal  number of layers. The number corresponding to $(n,m)$ on the x- and y-axes indicates how many times out of 10 the layer $m$ is optimal when running a cascade network with a maximal number $n$ of layers.}
    \label{fig:heatmap}
\end{figure}

Globally, we observe that the sub-models with one or two layers often lead to the best performance (see Figure \ref{fig:heatmap} for the Adult dataset and Supplementary Materials  \ref{app:subsec:optimal_layer}). When the dataset is small (Letter or Yeast), the sub-model with only one layer (i.e.\ a standard RF \replace{}{or an aggregation of RFs}) is almost always optimal since a single RF with no maximum depth constraint already overfits on most of these datasets. Therefore the second layer, building upon the predictions of the first layer, entails overfitting as well, therefore leading to no improvement of the overall model.
Besides, one can explain the pre\-do\-mi\-nan\-ce of small sub-models by the weak additional flexibility created by each layer: on the one hand, each new feature vector size corresponds to the number of classes times the number of forests which can be small with respect to the number of input features; on the other hand, the different forests within one layer are likely to produce similar probability outputs, especially if the number of trees within each forest is large. %As for regression data, we note that the behaviour may depend of the dataset characteristics. For the Airbnb dataset (Figure \ref{fig:airbnb_encoding_influence}), the conclusions are similar to the ones for the Adult dataset in Figure \ref{fig:tree_encoding_influence2}. The heatmap shows that the best submodel is always the first one (one layer).
The story is a little bit different for the Housing dataset, for which the best submodel is between 2 and 6. As noticed before, this may be the result of the frustratingly simple representation of the new features created at each layer. Eventually, these numerical experiments cor\-ro\-bo\-ra\-te the relevance of shallow DF as the light configuration proposed in the previous section. 

We note that adding forests in each layer decreases the number of layers needed to achieve a pre-specified performance. This is surprising and is opposed to the common belief that in Deep Neural Networks, adding layers is usually better than adding neurons in each layer.

We can conclude from the empirical results that the first two layers convey the performance enhancement in DF. Contrary to NNs, depth is not an important feature of DFs. The following studies thus focus on two-layer architectures which are deep enough to reproduce the improvement of deeper architectures over single RFs.

\subsection{A precise understanding of depth enhancement}
\label{sec:shallowCART}

In order to finely grasp the influence of tree depth in DF, we study a simplified version: a shallow CART tree network, composed of two layers, with one CART per layer. 

\paragraph{Setting.} In such an architecture, the first-layer tree is fitted on the training data. For each sample, the first-layer tree outputs a probability distribution (or a value in a regression setting), which is referred to as ``encoded data" and given as input to the second-layer tree, with the raw features as well.
For instance, considering binary classification data with classes 0 and 1, with raw features $(x_1,x_2,x_3)$, the input of the second-layer tree is a 5-dimensional feature vector $(x_1, x_2, x_3, p_0, p_1)$, with $p_0$ (resp.\ $p_1$) the predicted probabilities by the first-layer tree for the class $0$ (resp.\ $1$).

For each dataset of Table \ref{tab:dataset_description}, we first determine the optimal depth $k^\star$ of a single CART tree via 3-fold cross validation. 
Then, for a given first-layer tree with a fixed depth, we fit a second-layer tree, allowing its depth to vary. 
We then compare the resulting shallow tree networks in three different cases: when the (fixed) depth of the first tree is (i) less than $k^\star$, (ii) equal to $k^\star$, and (iii) larger than $k^\star$. 
We add the optimal single tree performance to the comparison.

\begin{figure}[h!]
    \centering
    \includegraphics[width = 0.45\textwidth]{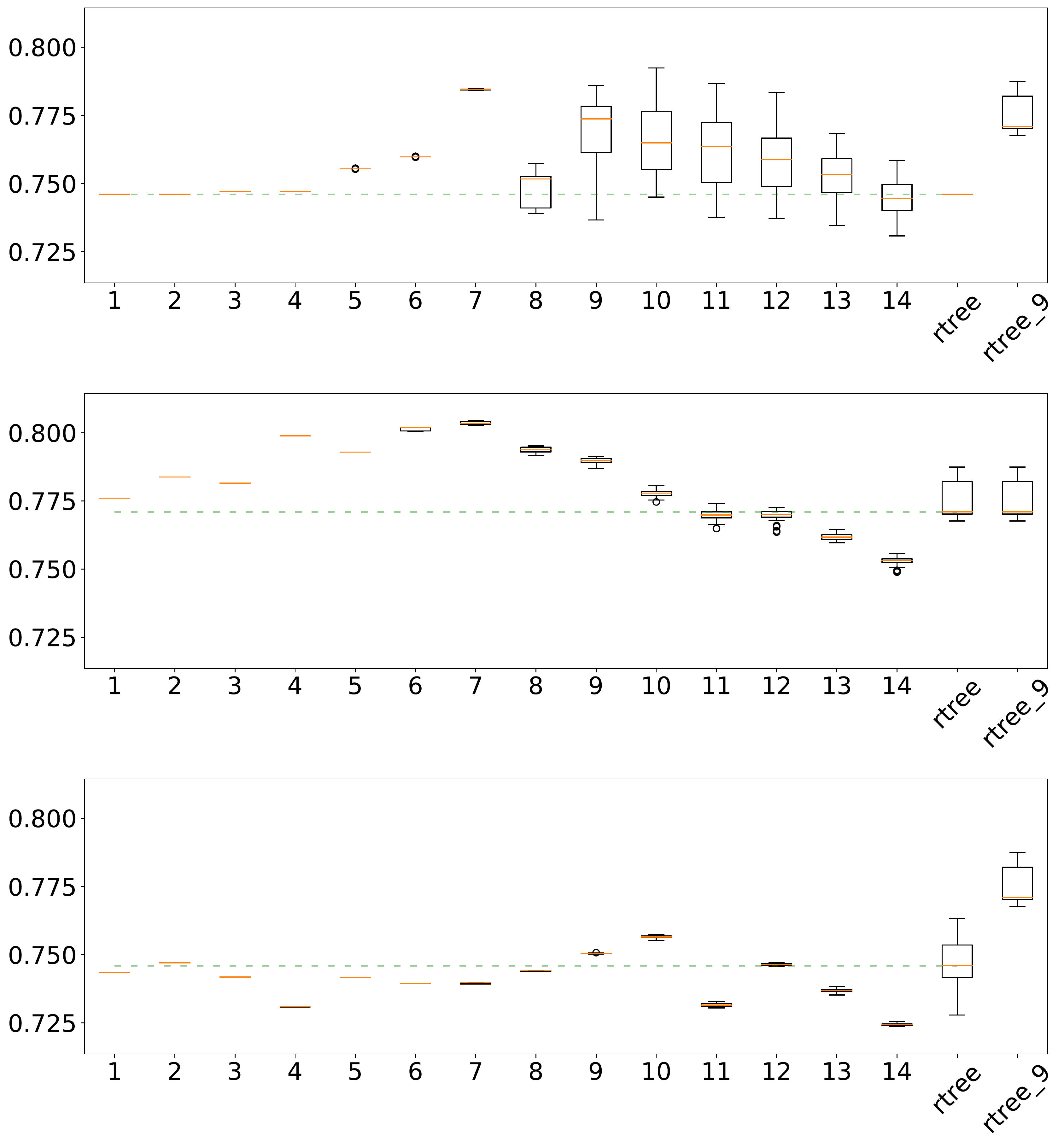}
    \caption{Adult dataset. Accuracy \replace{}{on the test dataset} of a two-layer tree architecture w.r.t.\ the second-layer tree depth, when the first-layer (encoding) tree  is of depth 2 (top), 9 (middle), and 15 (bottom). \texttt{rtree} is a single tree of respective depth 2 (top), 9 (middle), and 15 (bottom),  applied on raw data. For this dataset, the optimal depth of a single tree is 9 and the tree with the optimal depth is depicted as \texttt{rtree\_9} in each plot.
    The green dashed line indicates the median score of the \texttt{rtree}. All boxplots are obtained by 10 different runs.}
    \label{fig:tree_encoding_influence2}
\end{figure}

\paragraph{Results.}  Results are displayed in Figure  \ref{fig:tree_encoding_influence2}
for the {Adult dataset} only
(see Supplementary Materials~\ref{app:sec:shallowCART} for the results on the other datasets).
Specifically noticeable in Figure \ref{fig:tree_encoding_influence2} (top), the tree network architecture can introduce performance instability when the second-layer tree grows (e.g.\ when the latter is successively of depth 7, 8 and 9).

Furthermore, when the encoding tree is not deep enough (top), the second-layer tree improves the accuracy until it approximately reaches the optimal depth $k^\star$. In this case, the second-layer tree compensates for the poor encoding, but cannot improve over a single tree with optimal depth $k^\star$.
Conversely, when the encoding tree is more developed than an optimal single tree (bottom) {- overfitting regime}, the second-layer tree may not lead to any improvement, or worse,  may degrade the performance of the first-layer tree. 
%In Figure \ref{fig:tree_encoding_influence2} (middle), the encoding is made with a tree of optimal depth. Improvement occurs after when the second tree reaches depth 4. Finally when the encoding tree is too deep, the second layer tree always deteriorates the first-layer performance.
% {\color{gray} CB ne voit pas trop si cette info est pertinente : If adding a second layer tree slightly improves the accuracy, it is because the second layer tree merges first tree cells, thus decreasing the variance of the first tree a bit. However, when cutting along the new feature, the geometry of the problem is not radically changed. Therefore the new representation of the data cannot drastically change the prediction of the new layer. These considerations are further elaborated in the following section.
%}
On all datasets, the second-layer tree is observed to always make its first cut over the new features (see Figure~\ref{fig:adult_etree_optimal_ZOOM} and Supplementary Materials).
%the ones in the Appendix \ref{app:sec:shallowCART} to visualize the constructed tree network structure).  %\ref{fig:adult_etree_optimal}, \ref{fig:Higgs_etree_optimal}, \ref{fig:letter_etree_optimal} in the appendix
%
%
In the case of binary classification, a single cut of the second-layer tree along a new feature yields to gather all the leaves of the first tree, predicted respectively as $0$ and $1$, into two big leaves, therefore reducing the predictor variance (cf.\ Figure \ref{fig:tree_encoding_influence2} (middle and bottom)). 
Furthermore, when considering multi-label classification with $n_{\textrm{classes}}$, the second-layer tree must cut over at least $n_{\textrm{classes}}$ features to recover the partition of the first tree (see Figure \ref{fig:letter_encoding_influence}). Similarly, in the regression case, the second tree needs to perform a number of splits equal to the number of leaves of the first tree in order to recover the partition of the latter. 
\begin{figure}[h!]
    \centering
    \includegraphics[width = 0.4\textwidth]{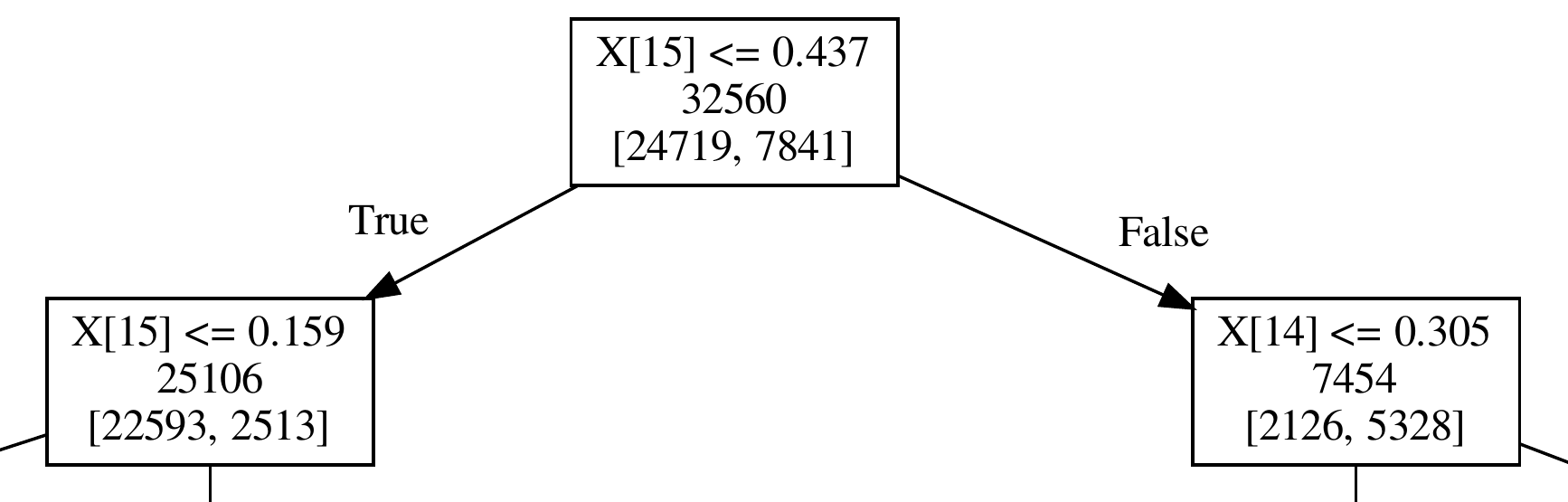}
    \caption{Adult dataset. Focus on the first levels of the second-layer tree structure when the first layer tree is of depth 9 (optimal depth). Raw features range from X[0] to X[13], X[14] and X[15] are the features built by the first-layer tree.}
    \label{fig:adult_etree_optimal_ZOOM}
\end{figure}

In Figure \ref{fig:tree_encoding_influence2} (middle), one observes that with a first-layer tree of optimal depth, the second-layer tree may outperform an optimal single tree, by improving both the average accuracy and its variance. We aim at theoretically quantifying this performance gain in the next section.

\section{Theoretical study of a shallow tree network}
\label{sec:theoretical_results}

In this section, we focus on the theoretical analysis of a simplified tree network. Our aim is to exhibit settings in which a tree network outperforms a single tree. Recall that the second layer of a tree network gathers tree leaves of the first layer with similar distributions. For this reason, we believe that a tree network is to be used when the dataset has a very specific structure, in which the same link between the input and the output can be observed in different subareas of the input space.  Such a setting is described in Section~\ref{sec:pb_setting}

To make the theoretical analysis possible, we study centered trees (see Definition~\ref{def:shallow_tree_network}) instead of CART. Indeed, studying the original CART algorithm is still nowadays a real challenge and analyzing stacks of CART seems out-of-reach in short term. As highlighted by the previous empirical analysis, we believe that the results we establish theoretically are shared by DF. All proofs are postponed to the Supplementary Materials. 

%This limitation will be shown not that strong as the principles at hand behind the performance of DF are shared by this simplified model. 

%\replace{Regarding the \replace{damier}{chessboard} dataset,\cb{on ne sait pas ce qu'est le damier, cette phrase arrive trop tôt} we choose a setting which allows us to highlight the mechanics of the DF architecture: the second layer gathers tree leaves of the first layer with similar distribution, therefore we choose a dataset with similar patterns across different parts of the space. The scope of this theoretical study and how it can be generalized is discussed at the end of this section. }{}
%\cb{déplacer cette remarque}
%\replace{}{Considering such an architecture, it is easy to exhibit a setting unfavourable to tree networks compared to standard tree learners. Indeed, ... }
%\cb{to complete}

\subsection{The network architecture}

We assume to have access to a dataset $\mathcal{D}_n = \{(X_1, Y_1),$ $ \hdots,(X_n,Y_n)\}$  of i.i.d.\ copies of the generic pair $(X,Y)$  \replace{}{with $X$ living in $[0,1]^d$ and $Y \in \{0,1\}$ being the label associated to $X$}.

\paragraph{Notations.}
Given a decision tree, we denote by $\cellof{X}$ the leaf of the tree containing $X$ and $\cardcellof{X}$ the number of data points falling into $\cellof{X}$. The prediction of such a tree at point $X$ is given by 
$$
\hat{r}_n(X) = \frac{1}{\cardcellof{X}} \displaystyle \sum_{X_i \in \cellof{X}} Y_i$$
with the convention $0/0=0$, i.e. the prediction for $X$ in a leaf with no observations is arbitrarily set to zero. 

\paragraph{A shallow centered tree network.} We want to theoretically analyze the benefits of stacking trees. To do so, we focus on two trees in cascade and will try to determine, in particular, the influence of the first (encoding) tree on the performance of the whole tree network. 
To catch the variance reduction property of tree networks already emphasized in the previous section, we consider a regression setting:
let $r(x) = \mathbb{E}[Y |X=x]$ be the regression function and for any function $f$, its quadratic risk is defined as $R(f) = \mathbb{E}[(f(X) - r(X))^2],$ where the expectation is taken over $(X,Y, \mathcal{D}_n)$.

\begin{definition}[Shallow centered tree network] 
\label{def:shallow_tree_network}The shallow tree network consists in two trees in cascade:
\begin{itemize}
    \item \textbf{(Encoding layer)} The first-layer tree is a \emph{cycling centered tree of depth $k$}. It is built independently of the data by splitting recursively on each variable, at the center of the cells. \replace{}{The first cut is made along the first coordinate,  the second along the second coordinate, etc. The tree construction is stopped when exactly $k$ cuts have been made}. %The tree construction is stopped when all cells have been cut exactly $k$ times along each variable. 
    For each point $X$, we extract the empirical mean $\bar{Y}_{\cellof{X}}$ of the outputs $Y_i$ falling into the leaf $\cellof{X}$ and we pass the new feature $\bar{Y}_{\cellof{X}}$ to the next layer, together with the original features $X$. 
    \item \textbf{(Output layer)} The second-layer tree is a \emph{centered tree of depth $k'$} for which a cut can be performed at the center of a cell along a raw feature  (as done by the encoding tree) or along the new feature  $\bar{Y}_{\cellof{X}}$. In this latter case, two cells corresponding to $\{\bar{Y}_{\cellof{X}} < 1/2\}$ and $\{\bar{Y}_{\cellof{X}} \geq 1/2\}$ are created.
\end{itemize}
The resulting predictor composed of the two trees in cascade, of respective depth $k$ and $k'$,  trained on the data $(X_1,Y_1), \hdots, (X_n, Y_n)$ is denoted by $\hat{r}_{k,k',n}$. 
\end{definition}

The two cascading trees can be seen as two layers of trees, hence the name of the shallow tree network. 
Note in particular that $\hat{r}_{k,0,n}(X)$ is the prediction given by the first encoding tree only and  outputs, as a classical tree, the mean of the $Y_i$'s falling into a leaf containing $X$. 
%When considering two trees in cascade, the predictor $\hat{r}_{k,k',n}(X)$ may output the mean of the $Y_i$'s with the $X_i$'s falling into a union of the first-tree leaves containing $X$. 

\subsection{Problem setting}
\label{sec:pb_setting}

\paragraph{Data generation.} 
%We assume to be given a dataset $\mathcal{D}_n$ composed of i.i.d.\ pairs $(X_1, Y_1), \hdots,(X_n,Y_n)$, with the same distribution as the generic pair $(X,Y)$, where 
The data $X$  is assumed to be uniformly distributed over \replace{}{ $[0,1]^d$} and $Y \in \{0,1\}$. 
Let $k^{\star}$ be \replace{an even integer}{a multiple of $d$} and let $p \in (1/2,1]$. 
\replace{For all $i,j \in \{1, \hdots, 2^{k^{\star}/2}\}$, we denote $C_{ij}$ the cell $  \left[\frac{i-1}{2^{k^{\star}/2}},\frac{i}{2^{k^{\star}/2}}\right) \times \left[\frac{j-1}{2^{k^{\star}/2}},\frac{j}{2^{k^{\star}/2}}\right)$. }{
We \replace{}{build a regular partition of the space with \textit{cells}} $C_1,\hdots, C_{\expkstar{}}$ of generic form
$$
\prod_{k=1}^d \left[\frac{i_k}{2^{k^{\star}/d}},\frac{i_k+1}{2^{k^{\star}/d}}\right),
$$ for $i_1,...,i_d \in \{0,...,2^{k^\star/d} -1\}$.}
We arbitrary assign a color (black or white) to each cell, which has a direct influence on the distribution of $Y$ in the cell. More precisely, for $x$ in a given cell $C$,
\begin{align}
\mathbb{P}[Y=1|X=x] =  \left\lbrace \begin{array}{cc}
   p  &  \textrm{if $C$ is a black cell, }\\
   1-p  & \textrm{if $C$ is a white one. }
\end{array}\right. \label{eq:binary_quantif}
\end{align}
We define $\mathcal{B}$ (resp. $\mathcal{W}$) as the union of black (resp. white) cells and $N_\mathcal{B} \in \{ 0 ,\hdots , 2^{k^\star}\}$ (resp. $N_\mathcal{W}$) as the number of black (resp. white) cells. Note that $N_\mathcal{W} = 2^{k^\star}-N_\mathcal{B}$. The location and the numbers of the black and white cells are arbitrary.  
This distribution corresponds to a \emph{generalized chessboard} structure. 
%: for each cell, which is of size $2^{-k^{\star}/2}\times2^{-k^{\star}/2}$, either the true proportion of $1$ is $p>1/2$  or the true proportion of $0$ is $p>1/2$, depending on the cell location. 
The whole distribution is thus pa\-ra\-me\-te\-ri\-zed by $k^{\star}$ ($2^{k^{\star}}$ is the total number of cells), $p$ and $N_\mathcal{B}$. Examples of this distribution are depicted in Figures \ref{fig:random_chessboard} and \ref{fig:Damier}  for different configurations and $d=2$. 
%\replace{}{Indeed, building such a partition of the space (with cells of equivalent distribution) given data of high dimension would require a more important number of cells as $d$ increases ($\expkstar{} = O(2^d)$).}

\begin{figure}
     \centering
     \begin{subfigure}[b]{0.14\textwidth}
         \centering
         \includegraphics[width=\textwidth]{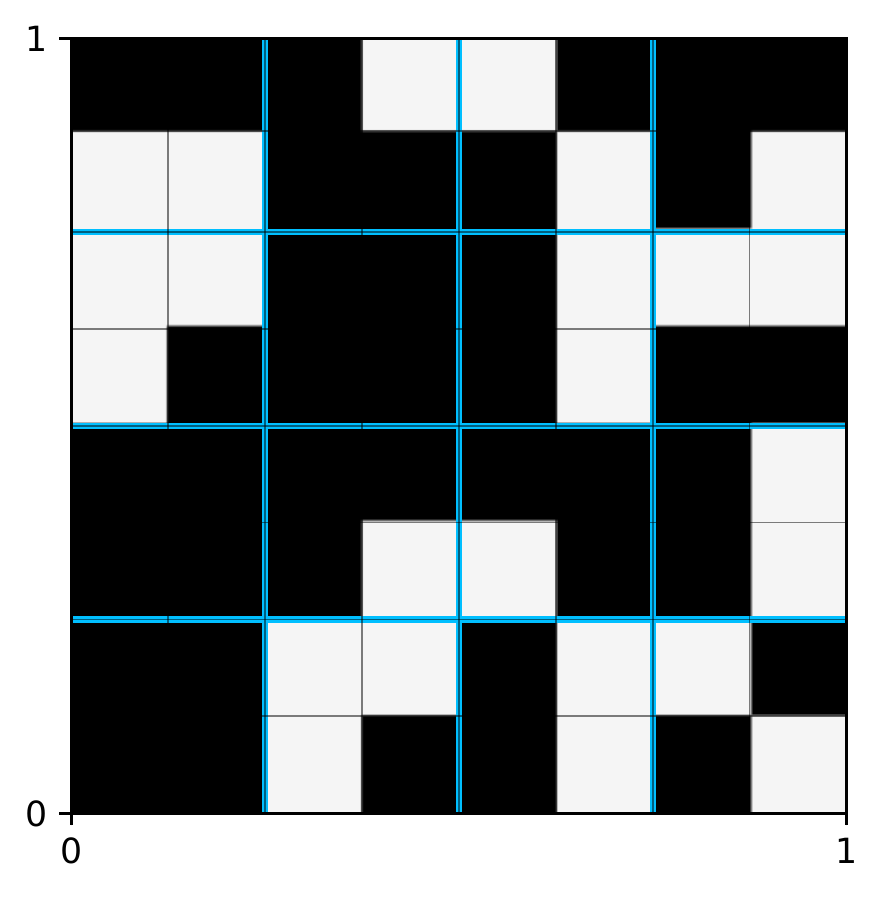}
         \caption{Depth 4}
         \label{fig:random_damier_small_depth}
     \end{subfigure}
     \hspace{0.05cm}
     \begin{subfigure}[b]{0.14\textwidth}
         \centering
         \includegraphics[width=\textwidth]{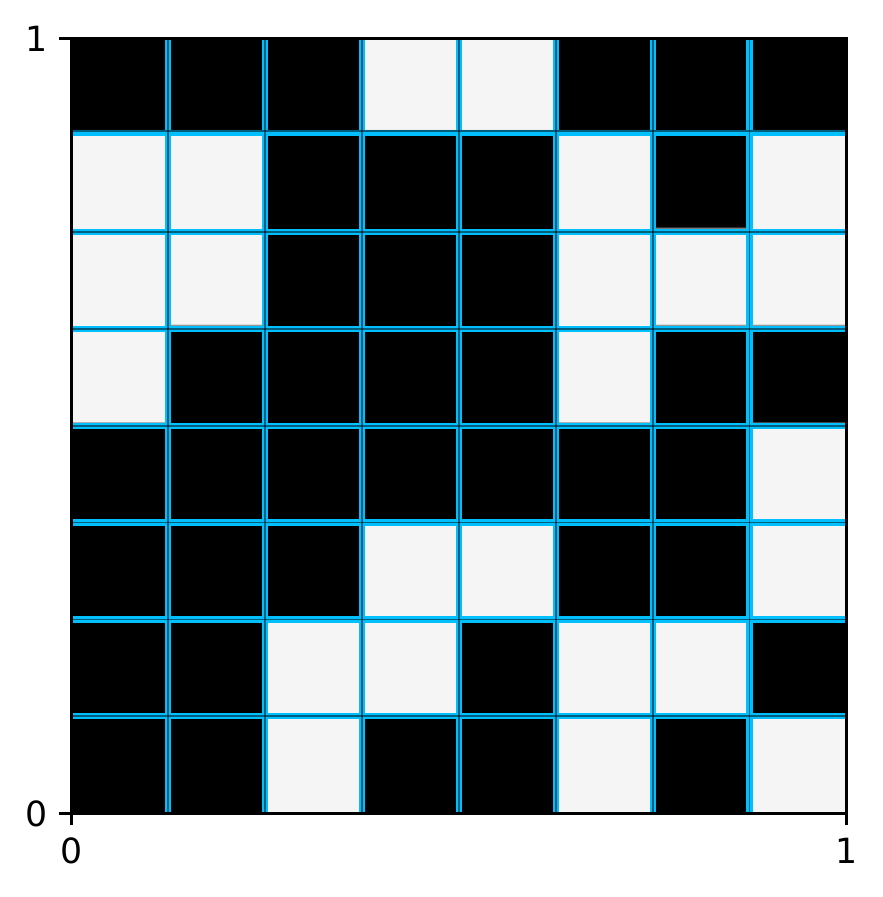}
         \caption{Depth 6}
         \label{fig:random_damier_perfect_depth}
     \end{subfigure}
     \hspace{0.05cm}
     \begin{subfigure}[b]{0.14\textwidth}
         \centering
         \includegraphics[width=\textwidth]{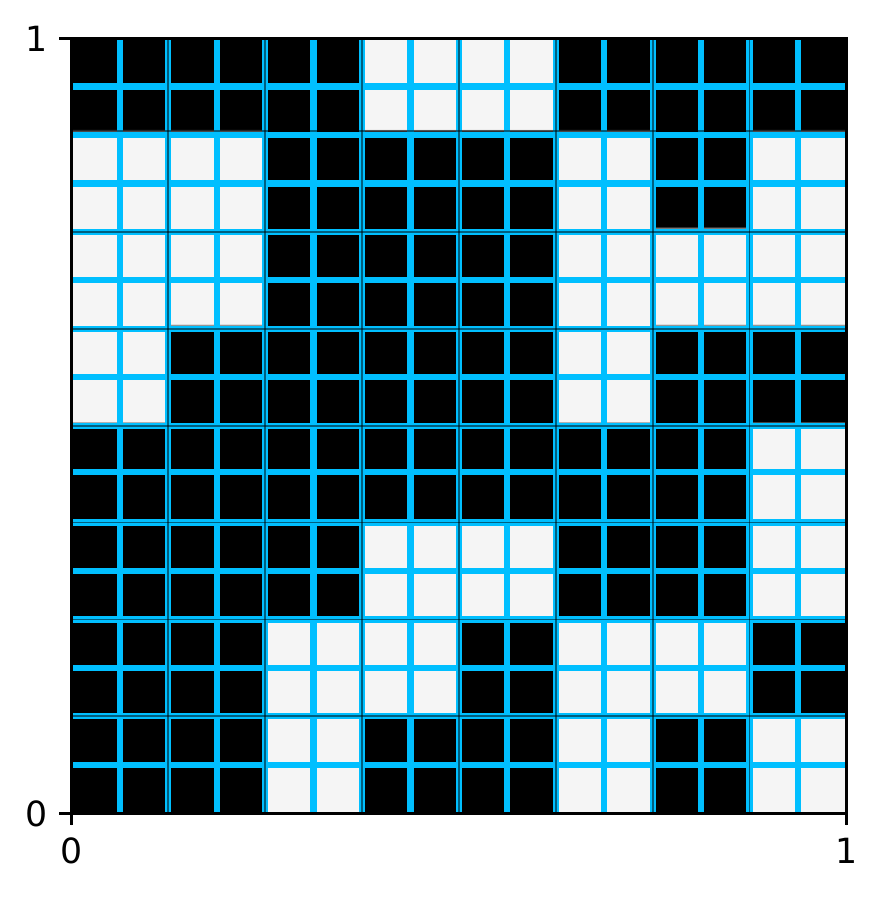}
         \caption{Depth 8}
         \label{fig:random_damier_big_depth}
     \end{subfigure}
        \caption{Arbitrary chessboard data distribution for $k^{\star} = 6$ and $N_\mathcal{B}=40$ black cells ($p$ is not displayed here). Partition of the (first) encoding tree of depth 4, 6, 8 (from left to right) is displayed in blue. The optimal depth of a single centered tree for this chessboard distribution is 6.}
        \label{fig:random_chessboard}
\end{figure}

%, the number of black and white cells can be either random or deterministic and we explicit these two aspects in the next propositions.
%We denote $N_\mathcal{B} \in \{0,2^{k^\star}\}$ the number of black cells. The black and white cells are randomly located on the hypercube according to a uniform distribution on the permutation set of all configurations with $N_\mathcal{B}$ black cells. 
%\begin{align*}
%\mathbb{P}[Y=1|X=x] =  \left\lbrace \begin{array}{cc}
%   p  &  \textrm{if $L_{i,j}$ is a black cell}  \\
%   1-p  & \textrm{otherwise}
%\end{array}\right.
%\end{align*}
%This distribution corresponds to a chessboard structure: 
% For each cell, which is of size $2^{-k^{\star}/2}\times2^{-k^{\star}/2}$, either the true proportion of $1$ is $p>1/2$  or the true proportion of $0$ is $p>1/2$, depending on $L_{i,j} \in \mathcal{B}$ or $L_{i,j} \in \mathcal{W}$. Note that the distribution is parameterized by $k^{\star}$, $p$ and $N_\mathcal{B}$, and that $2^{k^{\star}}$ corresponds to the total number of cells. Such a distribution which we call \textit{\structure{}{the generalized} chessboard distribution} is depicted in Figure \ref{fig:Damier}. 

\paragraph{Why such a structured setting?} 
The data distribution introduced above is highly structured, which can be seen as a restrictive study setting. \replace{Outside this specific framework, it seems difficult for shallow tree networks to improve over a single tree. For instance,}{However, the generalized chessboard is nothing but a discretized quantification of the regression function $r$ using only 2 values (see Equation~\eqref{eq:binary_quantif}). 
%The main restriction lies in the binary behavior of our function.
Going further than quantification towards general discretization does not seem appropriate for tree networks. To see this,
}
consider a more general distribution such as 
\begin{align*}
\mathbb{P}[Y=1|X=x] = 
   P_{ij}  &  \textrm{ when $x\in C_{ij}$},
\end{align*}
where $P_{ij}$ is a random variable drawn uniformly in $[0,1]$.

\begin{lemma}
\label{lem:counter_example}
Consider the previous setting with $k \geq k^{\star}$. In the infinite sample setting, the risks of a single tree and a shallow tree network are given by $R(\hat{r}_{k,0,\infty}) = 0$ and 
\begin{align*}
     R(\hat{r}_{k,1,\infty})  \geq  \frac{1}{48} \Big(  1 - \frac{8}{2^{k^{\star}}-1}\Big)
     + \frac{1}{2^{2^{k^{\star}}}} \frac{9}{24}. 
\end{align*}
\end{lemma}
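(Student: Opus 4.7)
The plan is to compute both risks directly in the infinite-sample regime. First, because $k^\star$ is a multiple of $d$ and the cyclic centered encoding tree splits each coordinate at least $k^\star/d$ times when $k \geq k^\star$, its leaves refine the cell partition $\{C_c\}_{c=1}^{N}$ where $N := 2^{k^\star}$. Thus both the prediction in each leaf \cellof{X} and the new feature $\bar Y_{\cellof{X}}$ converge to $\Esp{Y \mid X \in \cellof{X}} = P_c$ whenever $X \in C_c$. Since $r(x) = P_c$ on $C_c$, this immediately gives $\hat r_{k,0,\infty}(X) = r(X)$ almost surely and therefore $R(\hat r_{k,0,\infty}) = 0$.

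For $\hat r_{k,1,\infty}$, set $A := \{c : P_c < 1/2\}$, $B := \{c : P_c \geq 1/2\}$, $M := |A| \sim \mathrm{Bin}(N,1/2)$. The depth-$1$ second tree performs one centered cut: either a raw-feature cut at $1/2$ (splitting the $N$ cells spatially, independently of $P$) or the new-feature cut at $1/2$ (partitioning the cells into $A$ and $B$). A direct second-moment computation gives an expected within-leaf variance of $(N-2)/(12N)$ for every raw-feature cut but only $(N-2)/(48N)$ for the new-feature cut on the event $\mathcal{E} := \{1 \leq M \leq N-1\}$. The CART variance-minimizing rule therefore picks the new-feature cut on $\mathcal{E}$, where the conditional risk reads
\begin{equation*}
R(\hat r_{k,1,\infty}\mid P) = \frac{1}{N}\Bigl[\sum_{c \in A}(P_c-\bar P_A)^2 + \sum_{c \in B}(P_c-\bar P_B)^2\Bigr].
\end{equation*}

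Taking expectation over $P$ and conditioning on $M$, the variables $(P_c)_{c \in A}$ are i.i.d.\ $\mathrm{Unif}[0,1/2]$ and $(P_c)_{c \in B}$ are i.i.d.\ $\mathrm{Unif}[1/2,1]$, each with variance $1/48$; applying the identity $\Esp{\sum_{i=1}^m (U_i-\bar U)^2} = (m-1)\mathrm{Var}(U)$ gives $\Esp{R \mid M} = (N-2)/(48N)$ for every $M \in \{1,\dots,N-1\}$. Nonnegativity of the loss then yields
\begin{equation*}
R(\hat r_{k,1,\infty}) \;\geq\; \Prob{\mathcal{E}}\cdot \frac{N-2}{48N} \;=\; (1-2^{1-N})\cdot\frac{N-2}{48N},
\end{equation*}
and a separate analysis of the boundary event $\mathcal{E}^c = \{M \in \{0,N\}\}$ (each outcome of probability $2^{-N}$, during which the new-feature cut degenerates and the algorithm reverts to a raw-feature cut) contributes an additional positive term. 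Routine algebraic bounds --- notably $2/N \leq 8/(N-1)$ for $N \geq 2$ --- recast these contributions into the stated form.

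The hard part will be the cut-selection argument: comparing all $d+1$ candidate within-leaf variances to certify that CART picks the new-feature cut on the bulk event $\mathcal{E}$, and separately handling the degenerate behavior on $\mathcal{E}^c$. The constants $8$ and $9/24$ in the stated bound are loose, chosen so as to absorb this boundary analysis into a clean closed-form expression.
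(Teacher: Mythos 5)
Your overall strategy matches the paper's: condition on how the cells split into the ``high'' group $\{P_c\ge 1/2\}$ and the ``low'' group $\{P_c<1/2\}$, use that conditionally the $P_c$'s are i.i.d.\ uniform on a half-interval with variance $1/48$, and treat the degenerate all-one-colour event separately. Your route through the ANOVA identity $\Esp{\sum_{i=1}^m(U_i-\bar U)^2}=(m-1)\mathrm{Var}(U)$ is in fact cleaner than the paper's expansion of the squared average into cross-terms followed by the technical binomial bound on $\Esp{1/N_{\mathcal B}\mid N_{\mathcal B}>0}$, and your main term $(1-2^{1-N})\tfrac{N-2}{48N}$ already dominates the stated bound (one checks $\tfrac{8}{N-1}-\tfrac{2}{N}\ge 20\cdot 2^{-N}$ for all $N=2^{k^\star}\ge 2$), so deferring the boundary-event bookkeeping is harmless even though you present it as necessary for the $\tfrac{9}{24}2^{-2^{k^\star}}$ term.

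The one step I would not let stand is the cut-selection argument. First, it is not needed: the lemma (read together with Lemma~\ref{lem:depth1} and the discussion preceding the main results) takes the single second-layer cut to be along the new feature at $1/2$, which is exactly what the paper's proof assumes, so no selection rule has to be certified. Second, as written it does not work: comparing \emph{expected} within-leaf variances ($\tfrac{N-2}{12N}$ versus $\tfrac{N-2}{48N}$) says nothing about which cut a per-realization variance-minimizing rule picks, and there are realizations in $\mathcal E$ where a raw cut is strictly better --- e.g.\ with $N=4$ and values $(0.05,0.1,0.45,0.55)$ arranged so that the cells carrying $0.45$ and $0.55$ lie in the same spatial half, the raw cut's within-leaf sum of squares is about $0.006$ while the $1/2$-threshold cut's is about $0.095$. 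If you genuinely adopted the greedy rule, the risk on such realizations would be the raw-cut risk and your inequality $R\ge \Prob{\mathcal E}\cdot\tfrac{N-2}{48N}$ would require an additional argument. Simply fix the cut to be along $\bar Y_{\cellof{X}}$ at $1/2$, as the paper does, and the rest of your computation goes through.
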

Lemma~\ref{lem:counter_example} highlights the fact that a tree network has a positive bias, which is not the case for a single tree. Besides, by letting $k^{\star}$ tend to infinity (that is the size of the cells tends to zero), the above chessboard distribution boils down to a very generic classification framework. In this latter case, the tree network performs poorly since its risk is lower bounded by $1/48$. 
In short, when the data distribution is disparate across the feature space, the averaging performed by the second tree leads to a biased regressor.  
Note that Lemma \ref{lem:counter_example} involves a shallow tree network, performing only one cut on the second layer. But similar conclusions could be drawn for a deeper second-layer tree, until its depth reaches $k^\star$. Indeed, considering $\hat{r}_{k,k^\star,\infty}$ would result in an unbiased regressor, with comparable performances as of a single tree, while being much more complex.

Armed with Lemma \ref{lem:counter_example}, we believe that the intrinsic structure of DF and tree networks makes them useful to detect similar patterns spread across the feature space. This makes the generalized chessboard distribution particularly well suited for analyzing such behavior.
The risk of a shallow tree network in the infinite sample regime for the generalized chessboard distribution is studied in Lemma~\ref{lem:depth1}.
%\la{Pour que le point (i) du Lemme 1 soit vrai il faut que l'on choisisse $N_\mathcal{B}$ aléatoirement ou qu'on prenne le sup sur $N_\mathcal{B}$: sinon par exemple avec $N_\mathcal{B}=2$ il y a des configurations non biaisées pour un second arbre de profonduer $k<k'<k^*$. 
% Mais le Lemme justifie la coupe unique du second arbre donc si on le change on peut pas vraiment choisir n'importe quel $N_\mathcal{B}$ dans la dernière proposition si ?}
% \cb{il me semble que l'on avait dit que le lemme 1 était ok en raisonnant uniformément sur toutes les configurations possibles.}
\begin{lemma}
\label{lem:depth1}
Assume that the data follows the generalized chessboard distribution described above with parameter $k^\star$, $N_\mathcal{B}$ and $p$.
%Suppose that the location of the cells is uniformly chosen over the set of all configurations with $\expkstar{}$ cells and $N_\mathcal{B}$ black cells.
In the infinite sample regime, the following holds for the shallow tree network $\hat{r}_{k,k',n}$ (Definition \ref{def:shallow_tree_network}). 
\begin{enumerate}[(i)]
    \item \textbf{Shallow encoding tree.} Let $k<k^\star$. The risk of the shallow tree network is minimal for all configurations of the chessboard if the second-layer tree is of depth $k'\geq k^{\star}$ and if the $k^{\star}$ first cuts are performed along raw features only. 
    \item \textbf{Deep encoding tree.} Let $k \geq k^\star$. The risk of the shallow tree network is minimal for all configurations of the chessboard if the second-layer tree is of depth $k'\geq 1$ and if the first cut is performed along the new feature $\bar{Y}_{\cellof{X}}$.
\end{enumerate}
\end{lemma}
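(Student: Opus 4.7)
The plan is to reduce both parts of the lemma to the identity
\begin{align*}
R(\hat{r}_{k,k',\infty}) \;=\; \sum_{L'} \Prob{X\in L'}\,\mathrm{Var}\bigl[r(X)\mid X\in L'\bigr],
\end{align*}
where $L'$ runs over the leaves of the second-layer tree, and then to show that in each regime the stated cutting rule forces $r$ to be constant on every such $L'$, so that every variance term vanishes and the risk reaches its minimum, namely $0$. To obtain the identity I would invoke the infinite-sample limit: each leaf has positive Lebesgue measure and $X$ is uniform, so by the strong law of large numbers the in-leaf averages, including the new feature $\bar Y_{\cellof{X}}$ of the encoding tree and the prediction of the second tree on each $L'$, converge to the corresponding conditional expectations. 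Writing $r(x)=p\,\ind{x\in\mathcal{B}}+(1-p)\,\ind{x\in\mathcal{W}}$ and applying a Pythagorean decomposition around $\Esp{r(X)\mid X\in L'}$ on each leaf yields the announced formula.

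For part (i), with $k<k^\star$ and $k'\geq k^\star$, the key point is that the $k^\star$ first cuts of the second-layer tree, performed along raw features in the cycling centered manner inherited from the encoding tree, divide each of the $d$ coordinates exactly $k^\star/d$ times. The resulting partition at depth $k^\star$ coincides with the chessboard partition $\{C_1,\ldots,C_{\expkstar{}}\}$, and any further cuts (raw-feature or new-feature) only refine it. Since $r$ takes a single value ($p$ or $1-p$) on each $C_j$, it is constant on every leaf $L'$ of the second tree; hence every conditional variance is zero and $R(\hat r_{k,k',\infty})=0$.

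For part (ii), with $k\geq k^\star$, a count of cuts per coordinate shows that each encoding cell $\cellof{X}$ is entirely contained in a single chessboard cell $C_j$. The new feature therefore satisfies $\bar Y_{\cellof{X}}\to p$ when $\cellof{X}\subset\mathcal{B}$ and $\bar Y_{\cellof{X}}\to 1-p$ when $\cellof{X}\subset\mathcal{W}$; because $p>1/2$, the threshold at $1/2$ exactly separates black from white encoding cells. A single cut along the new feature therefore produces the two leaves $\mathcal{B}$ and $\mathcal{W}$, on which $r$ is constant, and any further cuts only refine this partition. As in (i), the risk decomposition gives $R(\hat r_{k,k',\infty})=0$, which is minimal.

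The main delicacy is combinatorial rather than analytic: one must verify that under the cycling centered rule, $k^\star$ raw-feature cuts do produce the full chessboard partition in (i), and that at encoding depth $k\geq k^\star$ every encoding cell lies inside a single $C_j$ in (ii). Both facts reduce to checking that each coordinate has been cut at least $\lceil k^\star/d\rceil$ times, which follows from the cycling schedule together with $k^\star$ being a multiple of $d$. Once this is in place, the risk identity and the structure of $r$ take care of the rest.
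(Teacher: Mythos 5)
Your argument for achievability is correct and is essentially the paper's: in the infinite-sample regime the second-layer prediction on a leaf $L'$ is $\Esp{r(X)\mid X\in L'}$, so the risk reduces to the average within-leaf variance of $r$; under the stated cutting rules every leaf of the second tree is contained in a single chessboard cell in case (i), and in one of $\mathcal{B}$ or $\mathcal{W}$ in case (ii) (since $p>\tfrac12$ makes the threshold at $\tfrac12$ separate black from white encoding leaves), so the risk is $0$ and hence minimal. The combinatorial facts you flag are exactly the ones the paper relies on.

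What you omit is the second half of the paper's proof of (i). The paper does not stop at ``depth $k'\geq k^\star$ with raw-feature cuts gives risk $0$''; it also exhibits a configuration --- the balanced chessboard with $N_\mathcal{B}=2^{k^\star-1}$ --- on which \emph{every} second-layer tree of depth $k'<k^\star$ is biased with risk $(p-\tfrac12)^2>0$: there each encoding leaf contains equally many black and white cells, so the new feature $\bar{Y}_{\cellof{X}}$ is identically $\tfrac12$ and any cut along it produces one leaf covering everything and predicting $\tfrac12$, while raw-feature cuts of depth $<k^\star$ likewise leave mixed leaves. This necessity component is what justifies the conclusion drawn from the lemma in the text (that when $k<k^\star$ the second tree cannot exploit the encoding and must rebuild the partition from scratch, whereas when $k\geq k^\star$ a single new-feature cut suffices). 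If the lemma is read as a bare implication your proof is complete, but to support the way the lemma is actually used you should add the balanced-chessboard counterexample showing that strategies violating the stated conditions fail on at least one configuration.
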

In the infinite sample regime, Lemma~\ref{lem:depth1} shows that the pre-processing is useless when the encoding tree is shallow ($k<k^\star$): the second tree cannot leverage on the partition of the first one and needs to build a finer partition from zero.

Lemma~\ref{lem:depth1} also provides an interesting perspective on the second-layer tree which either acts as a copy of the first-layer tree or can simply be of depth one. 
%We believe that in this latter case, the shallow network may benefit from the variance reduction of the second-layer tree, which gathers similar cells and averages their prediction to build the output. This phenomenon has been empirically observed when dealing with two layers of CART trees in Section \ref{sec:shallowCART}. 

\begin{remark}
The results established in Lemma~\ref{lem:depth1} for centered-tree networks also empirically hold for CART ones (see Figures \ref{fig:tree_encoding_influence2},\ref{fig:higgs_encoding_influence},\ref{fig:letter_encoding_influence},\ref{fig:yeast_encoding_influence},\ref{fig:airbnb_encoding_influence},\ref{fig:housing_encoding_influence}: $(i)$ the second-layer CART trees always make their first cut on the new feature and always near 1/2; $(ii)$ if the first-layer CART is biased, then the second-layer tree will not improve the accuracy of the first tree (see Figure \ref{fig:tree_encoding_influence2} (top)); $(iii)$ if the first-layer CART is developed enough, then the second-layer CART acts as a variance reducer (see Figure \ref{fig:tree_encoding_influence2}, middle and bottom). 
\end{remark}

\subsection{Main results}

Building on Lemma~\ref{lem:counter_example} and  \ref{lem:depth1}, we now focus on a shallow network whose second-layer tree is of depth one, and whose first cut is performed along the new feature $\bar{Y}_{\cellof{X}}$ at $1/2$. Two main regimes of training can be therefore identified when the first tree is either shallow  ($k < k^{\star}$) or deep  ($k \geq k^{\star}$). 

%\begin{remark}  Lemma \ref{lem:depth1} highlights that in the case of two layers of centered trees, the second tree should be either developed in a similar way as a single tree when the first layer is biased, or can be seen as a variance reducer (by only one cut) when the first layer is unbiased. This has been empirically observed when dealing with two layers of CART trees as well.
%, trained on the chessboard dataset. \cb{on a des simus à mettre en annexe ?}
%\end{remark}

%: a shallow encoding tree network or a deep $(i)$ when the depth of the first centered tree $k$ is lower than $k^\star$, each leaf of the first tree contains several cells with potentially black and white cells which may imply a biased estimation of either $p$ or $(1-p)$. 
%We will see that in this case, the second-layer tree cannot reduce the bias of the first tree; 
%$(ii)$ when $k\geq k^\star$, each leaf is included in a cell; therefore the prediction of the first-layer tree is unbiased as well as that of the second-layer tree which will then act as a variance reducer.
%where all the labels $Y_i$ are drawn according to a similar Bernoulli distribution of parameter $p$ or $1-p$. 
In the first regime ($k < k^{\star}$), to establish precise non-asymptotics bounds, we study the balanced chessboard distribution (see Figure~\ref{fig:Damier}). 
Such a distribution has been studied in the unsupervised literature, in order to generate distribution for $X$ via copula theory \cite{ghosh2002chessboard,ghosh2009patchwork} or has been mixed with other distribution in the RF framework  \cite{biau2008consistency}.
%Despite its simplicity, it is sufficient enough to highlight some interesting properties of tree-based methods. 
Intuitively, this is a worst-case configuration  for centered trees in terms of bias. Indeed, if $k< k^\star$, each leaf contains the same number of black and white cells. Therefore in expectation the mean value of the leaf is $1/2$ which is non informative. 
%To study the interest of using a shallow tree network instead of a single tree, we first establish upper and lower bounds for a single centered tree of depth $k < k^{\star}$ and $k \geq k^{\star}$ respectively. 
%As a theoretical benchmark, we study the theoretical performance of a single centered tree on the chessboard data; this is the purpose of the following Proposition.

\begin{figure}
     \centering
     \begin{subfigure}[b]{0.14\textwidth}
         \centering
         \includegraphics[width=\textwidth]{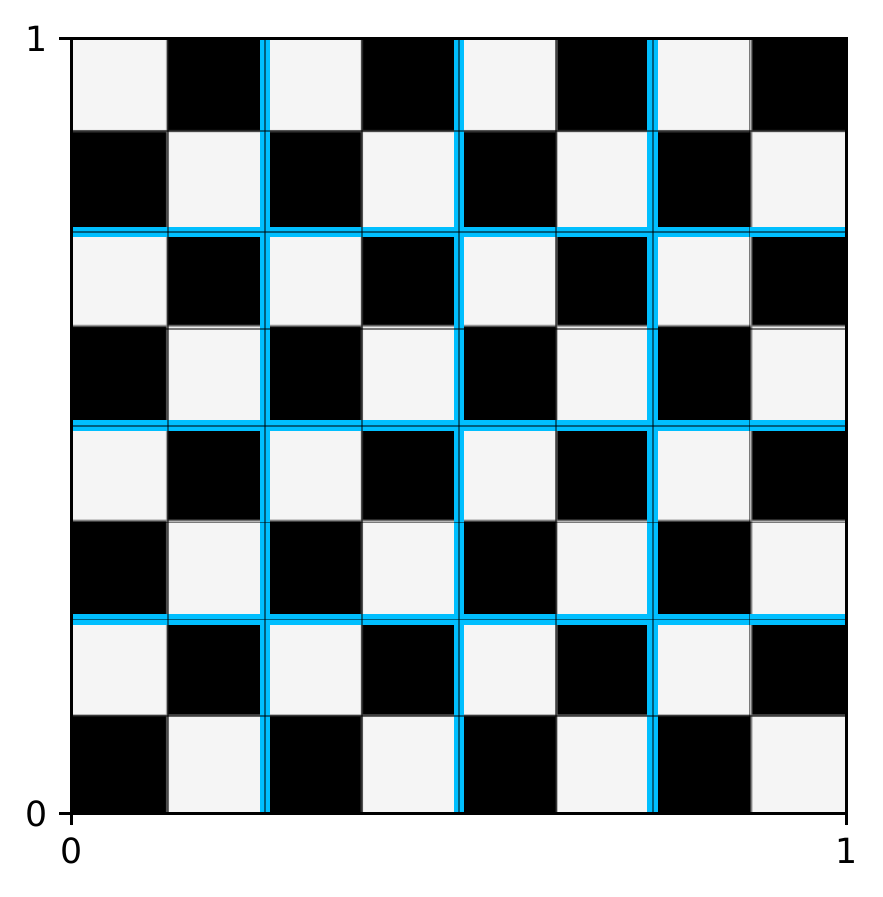}
         \caption{Depth 4}
         \label{fig:damier_small_depth}
     \end{subfigure}
     \hspace{0.05cm}
     \begin{subfigure}[b]{0.14\textwidth}
         \centering
         \includegraphics[width=\textwidth]{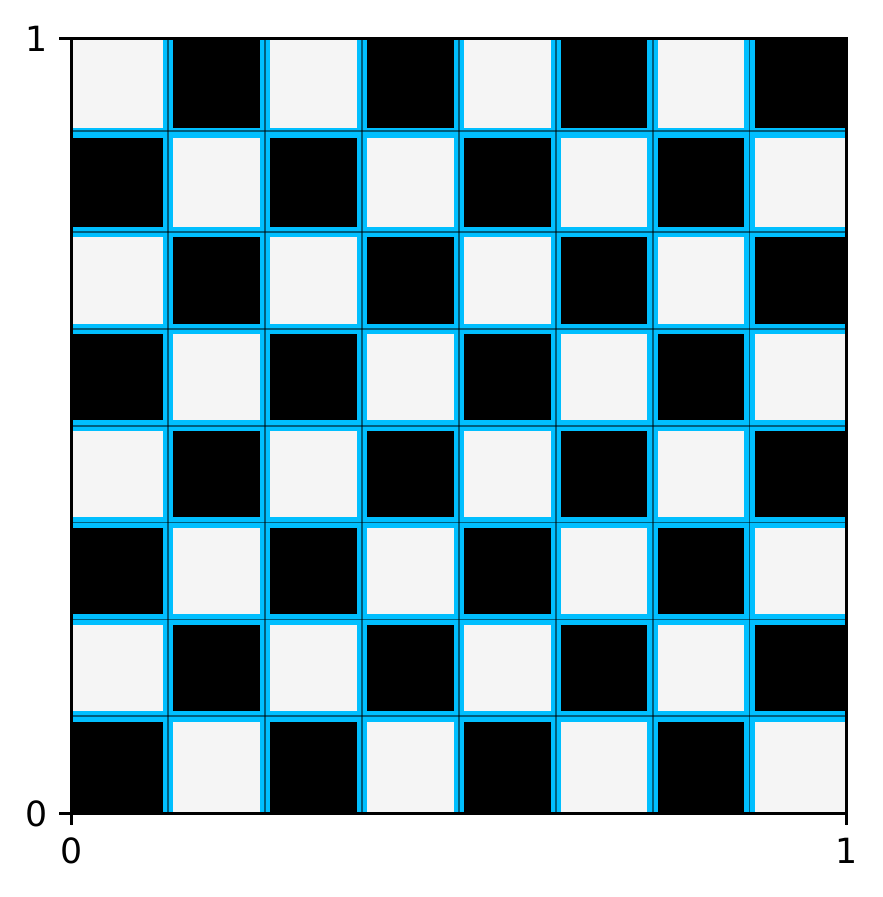}
         \caption{Depth 6}
         \label{fig:damier_perfect_depth}
     \end{subfigure}
     \hspace{0.05cm}
     \begin{subfigure}[b]{0.14\textwidth}
         \centering
         \includegraphics[width=\textwidth]{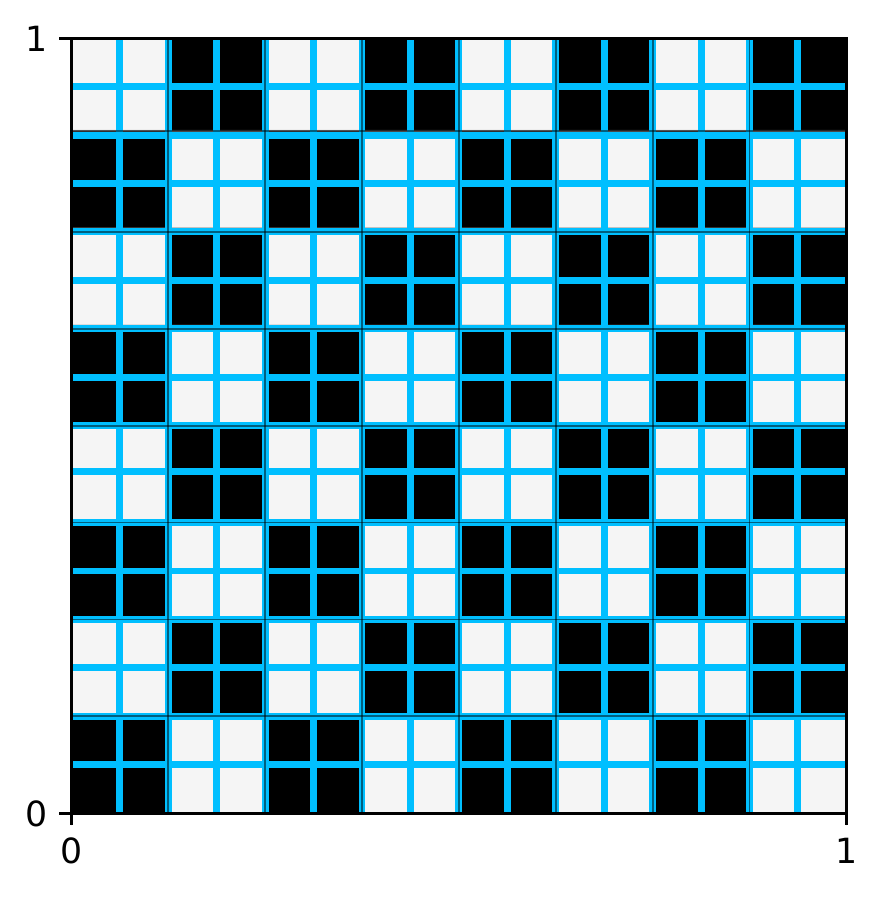}
         \caption{Depth 8}
         \label{fig:damier_big_depth}
     \end{subfigure}
        \caption{Chessboard data distribution for $k^{\star} = 6$ and $N_{\mathcal{B}}=2^{k^\star-1}$. Partition of the (first) encoding tree of depth 4, 6, 8 (from left to right) is displayed in blue. The optimal depth of a single centered tree for this chessboard distribution is 6.}
        \label{fig:Damier}
\end{figure}

\begin{prop}[Risk of a single tree and a shallow tree network when $k<k^{\star}$]
\label{prop:balanced_chessboard}
Assume that the data is drawn according to a balanced chessboard distribution with parameters $k^{\star}$, $N_\mathcal{B}=2^{k^\star-1}$ and $p>1/2$ (see Figure \ref{fig:Damier}). 
%Consider the predictor $\hat{r}_{k,0,n}$ corresponding to a single centered tree of depth $k\in \N^\star$.
%Then,
\begin{enumerate}
    \item Consider a single tree $\hat{r}_{k,0,n}$ of depth $k \in \mathbb{N}^\star$. We have,
    \begin{align*}
         R(\hat{r}_{k,0,n}) 
        &\leq \left(p-\frac{1}{2}\right)^2  + \frac{2^{k}}{2(n+1)} +   \frac{(1-2^{-k})^n}{4}; 
    %    & \qquad +   \frac{(1-2^{-k})^n}{4};
        %&\leq \left(p-\frac{1}{2}\right)^2  + O\left(\frac{2^k}{n}\right)
    \end{align*}
    and
    \begin{align*}
        R(\hat{r}_{k,0,n})
        %\geq \frac{2^k}{4(n+1)}\left(1-(1-2^{-k})^n \right)+ \left(p-\frac{1}{2}\right)^2  +  \frac{(1-2^{-k})^n}{4}
        & \geq  \left(p-\frac{1}{2}\right)^2  + \frac{2^k}{4(n+1)} \\
        & \quad +   \frac{(1-2^{-k})^n}{4} \left( 1 - \frac{2^k}{n+1} \right).
        %&\geq \left(p-\frac{1}{2}\right)^2 + O\left(\frac{2^k}{n}\right)
    \end{align*}
    \item Consider the shallow tree network $\hat{r}_{k,1,n}$. We have
        \begin{align*}
         & \hspace{-0.5cm} R (\hat{r}_{k,1,n})  \leq  \left( p - \frac{1}{2} \right)^2 + \frac{2^{k/2+3} (p-\frac{1}{2})}{\sqrt{\pi n}} \\
        & \hspace{-0.5cm} + \frac{7 \cdot 2^{2k+2}}{\pi^2 (n+1)} (1 + \varepsilon_{k,p}) +\frac{p^2+(1-p)^2}{2} \left(1-2^{-k}\right)^n
        \end{align*} 
        where 
        $\varepsilon_{k,p} = o(2^{-k/2})$ uniformly in $p$, and 
          $$R (\hat{r}_{k,1,n}) \geq \left( p-\frac{1}{2} \right)^2.$$
        %$$\max \left\{ (p-\frac{1}{2})^2 , (p-\frac{1}{2})^2 + \frac{p-\frac{1}{2}}{\sqrt{\pi n}} \left(1 - \frac{16}{5\sqrt{2^k}} \right) -\frac{1}{n+2} \left( 2^{k+2}(p-\frac{1}{2}) - \frac{57}{100 \pi })\right)   \right\},$$
\end{enumerate}
\end{prop}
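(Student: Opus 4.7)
The plan for Part 1 is a standard leaf-wise bias--variance decomposition. The depth-$k$ centered tree partitions $[0,1]^d$ into $2^k$ congruent cells, so for any leaf $L$ one has $\mathbb{P}(X\in L)=2^{-k}$ and $N:=N_n(L)\sim\mathrm{Binomial}(n,2^{-k})$. The structural fact powering the argument is that, because $k<k^\star$, every leaf of the first tree contains as many black as white sub-cells; hence conditionally on $X_i\in L$ each $Y_i$ is $\mathrm{Bernoulli}(1/2)$ and $\bar Y_L\mid N$ is the empirical mean of $N$ iid $\mathrm{Bernoulli}(1/2)$. Splitting on $\{N=0\}$ versus $\{N\geq 1\}$, using in the latter case the conditional squared bias $(1/2-r(X))^2=(p-1/2)^2$ and conditional variance $1/(4N)$, the risk collapses to
$$R(\hat r_{k,0,n}) = (p-1/2)^2 + \tfrac14\,\mathbb{E}[\mathbb{1}_{N\geq 1}/N] + \tfrac14(1-2^{-k})^n.$$
Matching bounds then follow from estimates on $\mathbb{E}[\mathbb{1}_{N\geq 1}/N]$: the upper bound uses $1/N\leq 2/(N+1)$ combined with the classical identity $\mathbb{E}[1/(N+1)] = (1-(1-2^{-k})^{n+1})/((n+1)\,2^{-k})$, while the lower bound combines $1/N\geq 1/(N+1)$ with a careful binomial-moment bookkeeping that explains the appearance of the correction $1-2^k/(n+1)$.

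For the lower bound of Part 2, I would use a one-line Jensen argument conditional on $X$. Setting $g(X):=\mathbb{E}[\hat r_{k,1,n}(X)\mid X]$, which is a function of $X$ only through its first-tree leaf $L$ and so equals a constant $g_L$ on $L$, one gets
$$R(\hat r_{k,1,n})\geq \mathbb{E}[(g(X)-r(X))^2] = \sum_L 2^{-k}\,\mathbb{E}\bigl[(g_L-r(X))^2\,\big|\, X\in L\bigr].$$
Since inside a balanced leaf $r(X)$ takes the values $p$ and $1-p$ each with probability $1/2$, a direct computation gives $\mathbb{E}[(g_L-r(X))^2\mid X\in L] = (g_L-1/2)^2 + (p-1/2)^2$, and dropping the first nonnegative term yields the announced $(p-1/2)^2$ lower bound.

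For the upper bound of Part 2, the plan is to expand
$$(\hat r_{k,1,n}(X)-r(X))^2 = (\hat r_{k,1,n}(X)-1/2)^2 - 2(\hat r_{k,1,n}(X)-1/2)(r(X)-1/2) + (p-1/2)^2$$
and to estimate each piece. The $(p-1/2)^2$ contributes directly; the empty-cell event $\{N_n(L(X))=0\}$, of probability $(1-2^{-k})^n$, is isolated and bounded separately to produce the $(p^2+(1-p)^2)(1-2^{-k})^n/2$ summand. For the cross term I would use the $Y\leftrightarrow 1-Y$ / black$\leftrightarrow$white symmetry of the distribution to reduce $\mathbb{E}[\hat r_{k,1,n}(X)-1/2\mid X\in\textrm{black}]$ to a boundary quantity of the form $\mathbb{P}(\bar Y_L=1/2)$, which the local CLT (or direct Stirling) for $\mathrm{Binomial}(N,1/2)$ places at order $\sqrt{2/(\pi N)}\asymp \sqrt{2^k/n}$; multiplied by $(p-1/2)$ this yields the $2^{k/2+3}(p-1/2)/\sqrt{\pi n}$ contribution. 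For the variance term $\mathbb{E}[(\hat r_{k,1,n}(X)-1/2)^2]$, I would write $\bar Y_A$ and $\bar Y_B$ as weighted averages of the centred leaf means $\bar Y_{L'}-1/2$, exploit the independence of $\bar Y_{L'}$ across distinct first-tree leaves conditional on the counts $N_n(L')$, and bound the resulting second moment leaf by leaf. The main obstacle is precisely this last step: the selection indicator $\mathbb{1}_{\bar Y_{L'}<1/2}$ is strongly coupled with $\bar Y_{L'}$, so crude moment bounds fail and one must use sharp Stirling-type expansions of $\mathrm{Binomial}(N,1/2)$ probabilities both to extract the clean $2^{2k}/n$ factor and to absorb the lower-order correction $\varepsilon_{k,p}=o(2^{-k/2})$ uniformly in $p$.
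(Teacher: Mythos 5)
Your Part~1 follows the paper's route essentially verbatim: the same reduction to $R(\hat r_{k,0,n}) = (p-\tfrac12)^2 + \tfrac14\,\mathbb{E}[\mathbb{1}_{N>0}/N] + \tfrac14(1-2^{-k})^n$ from the fact that every leaf is half black and half white, followed by binomial estimates. One caveat on the lower bound: the route $1/N\ge 1/(N+1)$ plus the exact formula for $\mathbb{E}[1/(N+1)]$ only yields $\mathbb{E}[\mathbb{1}_{N>0}/N]\ge \tfrac{2^k}{n+1}(1-(1-2^{-k})^{n+1})-(1-2^{-k})^n$, which falls short of the bound $\tfrac{2^k}{n+1}(1-(1-2^{-k})^{n})$ needed for the stated constants by roughly $(1-2^{-k})^n$; the paper instead invokes the sharper inequality $\mathbb{E}[\mathbb{1}_{Z>0}/Z]\ge \frac{1-(1-q)^n}{(n+1)q}$ from its technical lemma. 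Your Part~2 lower bound is correct and genuinely different from the paper's, and cleaner: Jensen conditional on $X$ plus the observation that $\mathbb{E}[\hat r_{k,1,n}(X)\mid X]$ is constant on each first-layer leaf while $r$ averages to $1/2$ there gives $(p-\tfrac12)^2$ in two lines, whereas the paper runs a conditional bias--variance decomposition and the inequality $(x-Np)^2+(x-N(1-p))^2\ge 2N^2(p-\tfrac12)^2$. Both rest on the same balanced-leaf symmetry.

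The Part~2 upper bound is where there is a genuine gap. In your decomposition $(\hat r-r)^2=(\hat r-\tfrac12)^2-2(\hat r-\tfrac12)(r-\tfrac12)+(p-\tfrac12)^2$, the cross term is identically zero: $\hat r_{k,1,n}(X)$ depends on $X$ only through its first-layer leaf, and conditionally on that leaf and on $\mathcal{D}_n$ the event $\{X\in\mathcal{B}\}$ has probability exactly $1/2$, so $\mathbb{E}[(\hat r-\tfrac12)(r-\tfrac12)]=0$ --- this is the very symmetry you exploit for the lower bound. Your proposed mechanism for the $2^{k/2+3}(p-\tfrac12)/\sqrt{\pi n}$ term therefore produces nothing, and the whole burden falls on $\mathbb{E}[(\hat r-\tfrac12)^2]$, precisely the piece you defer as ``the main obstacle'' without carrying it out. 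That estimate genuinely requires the conditioned-binomial machinery: the paper obtains it by splitting the risk over the four events $\{X\in\mathcal{B}\ \text{or}\ \mathcal{W}\}\cap\{X\in\tilde{\mathcal{B}}_k\ \text{or}\ \tilde{\mathcal{W}}_k\}$, bounding the matched events by $(p-\tfrac12)^2$ directly from the defining inequality $\sum_{X_i\in\tilde{\mathcal{B}}_k}Y_i\ge N_n(\tilde{\mathcal{B}}_k)/2$, and controlling the mismatched events via the mean and variance of a $\mathfrak{B}(N_j,\tfrac12)$ conditioned on the sign of $\bar Y_{L_j}-\tfrac12$, with a Cauchy--Schwarz step $\sum_j\sqrt{N_j}\le 2^{k/2}\sqrt{N_n(\cdot)}$ that is the actual source of the $2^{k/2}(p-\tfrac12)/\sqrt{n}$ term. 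Your decomposition could in principle be completed (the vanishing cross term only makes the target easier to reach), but as written it neither derives the middle term correctly nor performs the one estimate that matters.
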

First, note that our bounds are tight in both cases ($k< k^{\star}$ and $k \geq k^{\star}$) since the rates of the upper bounds match that of the lower ones. 
The first statement in Proposition~\ref{prop:balanced_chessboard} quantifies the bias of a single tree of depth $k < k^{\star}$: the term $(p-1/2)^2$ appears in both the lower and upper bounds, which means that no matter how large the training set is, the risk of the tree does not tend to zero.
The shallow tree network suffers from the same bias term as soon as the first-layer tree is not deep enough. Here, the flaws of the first-layer tree transfer to the whole network. 
In all bounds, the term $(1-2^{-k})^n$ corresponding to the probability of $X$ falling into an empty cell is classic and cannot be eliminated for centered trees, whose splitting strategy is independent of the dataset. %\cb{A discuter plus tard : peut-être phrase à enlever?} \la{Je suis d'accord pour l'enlever}

Proposition \ref{prop:new_risk_small_k} in the Supplementary Materials extends the previous result to the case of a random chessboard, in which each cell has a probability of being black or white. The same phenomenon is observed: the bias of the first layer tree is not reduced, even in the infinite sample regime. 

In the second regime ($k \geq k^{\star})$, the tree network may improve over a single tree as shown in Proposition \ref{prop:new_risk_large_k}.
%Stacking two layers of trees together still leads to a partition-type estimator with axis-aligned splits. It allows to build more complex partitions since it may gather cells of the first tree that are disconnected. This may lead to an improvement of the resulting estimator, by reducing the variance in the corresponding cell collections as shown hereafter.

\begin{prop}[Risk of a single tree and a shallow tree network when $k\geq k^\star$]
\label{prop:new_risk_large_k}
Consider a generalized chessboard with parameters ${k^\star}$, $N_\mathcal{B}$ and $p>1/2$. 
\begin{enumerate}
    \item Consider a single tree $\hat{r}_{k,0,n}$ of depth $k \in \mathbb{N}^\star$. We have
    \begin{align*}
    \hspace{-0.6cm}
     R(\hat{r}_{k,0,n}) \leq \frac{2^{k}p(1-p)}{n+1} 
    + \left(p^2 + (1-p)^2\right) \frac{(1-2^{-k})^n}{2}, 
    \end{align*}
    and
    \begin{align*}
        \hspace{-0.3cm}
    %\frac{2^k p(1-p)}{n} \left(1-(1-2^{-k})^n \right)^2 + p^2(1-2^{-k})^n  \leq \mathbb{E} \left[ (r_n(X) - r(X))^2 )\right] \leq  \frac{2^{k+1} p(1-p)}{n} + p^2(1-2^{-k})^n.
         &R(\hat{r}_{k,0,n})  \geq  \frac{2^{k-1}p(1-p)}{n+1}  \\
        & + \Bigg(p^2 +  (1-p)^2  -  \frac{2^{k}p(1-p)}{n+1} \Bigg) \frac{(1-2^{-k})^n}{2}.
    \end{align*}
    \item Consider the shallow tree network $\hat{r}_{k,1,n}$. Letting  $$\bar{p}_{\mathcal{B}}^2=\left( \frac{N_\mathcal{B}}{\expkstar{}} p^2 + \frac{\expkstar{} - N_\mathcal{B}}{\expkstar{}}(1-p)^2 \right) (1-2^{-k})^n,$$
    we have
        \begin{align*}
            \hspace{-0.5cm} R(\hat{r}_{k,1,n}) \leq 2 \cdot \frac{p(1-p)}{n+1} + \frac{2^{k+1} \varepsilon_{n,k,p}}{n} + \bar{p}_{\mathcal{B}}^2,
            %&+ \left( \frac{N_\mathcal{B}}{\expkstar{}} p^2 + \frac{\expkstar{} - N_\mathcal{B}}{\expkstar{}}(1-p)^2 \right) (1-2^{-k})^n
        \end{align*}
        where $\varepsilon_{n,k,p} = n ( 1 - \frac{ 1 - e^{-2(p-\frac{1}{2})^2}}{2^k})^n$, and for all $n \geq  2^{k+1}(k+1)$, 
        \begin{align*}
            R (\hat{r}_{k,1,n}) \geq \frac{ 2p(1-p) }{n} - \frac{2^{k+3} (1-\rho_{k,p})^n}{n} + \bar{p}_{\mathcal{B}}^2,
            %&+ \left( \frac{N_\mathcal{B}}{\expkstar{}} p^2 + \frac{\expkstar{} - N_\mathcal{B}}{\expkstar{}}(1-p)^2 \right) (1-2^{-k})^n
        \end{align*}
        where $0<\rho_{k,p}<1$ depends only on $p$ and $k$.
\end{enumerate}
\end{prop}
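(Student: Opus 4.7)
The plan is to exploit, in both parts, the fact that when $k \geq k^\star$ every leaf $L$ of the depth-$k$ centered tree is contained in a single chessboard cell. Hence the regression function is constant on $L$, equal to some $r_L \in \{p, 1-p\}$ determined by the color of that cell. Because the tree is data-independent, $N_L \sim \mathrm{Bin}(n, 2^{-k})$, and conditionally on $N_L \geq 1$, $\bar{Y}_L$ is the mean of $N_L$ i.i.d.\ Bernoulli$(r_L)$ variables. The noise variance per leaf is $p(1-p)$ irrespective of the color, a key simplification.

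\textbf{Single tree (part 1).} A bias--variance decomposition per leaf gives
\begin{align*}
R(\hat{r}_{k,0,n}) = p(1-p)\,\Esp{\frac{\mathbb{1}_{N_L\geq 1}}{N_L}} + (1-2^{-k})^n \cdot \frac{1}{2^k}\sum_L r_L^2 .
\end{align*}
The upper bound follows from the identity $\Esp{1/(N_L+1)} = (1-(1-2^{-k})^{n+1})/((n+1)2^{-k})$ combined with $\mathbb{1}_{m\geq 1}/m \leq 2/(m+1)$, which yields $\Esp{\mathbb{1}_{N_L\geq 1}/N_L} \leq 2^k/(n+1)$; the matching lower bound uses $\mathbb{1}_{N_L\geq 1}/N_L \geq \mathbb{1}_{N_L\geq 1}/n$ to extract the correction $(1-2^{-k})^n(1 - 2^k p(1-p)/(n+1))$. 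Bounding the averaged squared-bias term $2^{-k}\sum_L r_L^2 \leq (p^2+(1-p)^2)/2$ (invoking the black/white symmetry to reduce to $N_{\mathcal{B}} \leq 2^{k^\star-1}$) finishes part 1 by elementary algebra.

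\textbf{Shallow tree network (part 2).} Let $Q_L = \mathbb{1}\{\bar{Y}_L \geq 1/2\}$. The second-layer cut at $1/2$ partitions the non-empty leaves into a ``high'' and a ``low'' side, and, with the convention $0/0=0$, test points whose first-layer leaf is empty contribute exactly $r_L^2$ to the squared error, producing the term $\bar{p}_{\mathcal{B}}^2$. Introduce the event
\begin{align*}
\mathcal{E}=\{Q_L = \mathbb{1}\{L\text{ is black}\}\ \forall\, L\text{ with }N_L\geq 1\}.
\end{align*}
On $\mathcal{E}$, the high-side prediction $\bar{Y}_+$ is the empirical mean of the $N_+\sim\mathrm{Bin}(n,N_{\mathcal{B}}/2^{k^\star})$ Bernoulli$(p)$ labels in black leaves, and symmetrically for $\bar{Y}_-$. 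Thus the non-empty contribution to the risk on $\mathcal{E}$ reduces to $\Esp{p(1-p)/N_+} + \Esp{p(1-p)/N_-}$, which by the same binomial identity as in part 1 yields the leading $2p(1-p)/(n+1)$. To control $\mathcal{E}^c$, Hoeffding's inequality gives $\Prob{L\text{ mis-sided}\mid N_L=m} \leq e^{-2m(p-1/2)^2}$, whence
\begin{align*}
\Prob{\mathcal{E}^c} \leq 2^k\, \Esp{e^{-2N_L(p-1/2)^2}\mathbb{1}_{N_L\geq 1}} \leq 2^k\!\left(1 - \frac{1-e^{-2(p-1/2)^2}}{2^k}\right)^{\!n} = \frac{2^k \varepsilon_{n,k,p}}{n}.
\end{align*}
A mis-sided leaf of size $N_L$ shifts $\bar{Y}_\pm$ by at most $O(N_L/n)$, so its quadratic impact is $O(N_L^2/n^2)$; summing yields the stated $2^{k+1}\varepsilon_{n,k,p}/n$. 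The lower bound mirrors this: restrict to $\mathcal{E}$ (whose probability is at least $1 - 2^{k+3}(1-\rho_{k,p})^n/n$ with $\rho_{k,p}$ the per-leaf mis-siding probability), apply Jensen to lower bound $\Esp{1/N_+}$, and retain the bias $\bar{p}_{\mathcal{B}}^2$. The side condition $n \geq 2^{k+1}(k+1)$ ensures $n/2^k \geq 2(k+1)$, so that the per-leaf binomial concentrates enough for Hoeffding to overpower the factor $2^k$.

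\textbf{Main obstacle.} The hardest step is quantifying the influence of mis-sided \emph{small} non-empty leaves on $\bar{Y}_\pm$. A pure union bound over the $2^k$ leaves is useless because a leaf with $N_L = 1$ mis-sides with probability $1-p$, which is not small. The key insight is that such a leaf contributes only a fraction $N_L/n$ of the weight to $\bar{Y}_\pm$, so its quadratic effect on the prediction is $O(1/n^2)$ rather than $O(1)$. Turning this heuristic into matching constants in both the upper and lower bounds --- in particular isolating the exact constant $\rho_{k,p}$ coming from the one-sided binomial probability --- is where the substantive calculation lies.
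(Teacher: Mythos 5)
Your overall strategy coincides with the paper's: part 1 is the same per-leaf bias--variance decomposition exploiting that each leaf is color-pure when $k\geq k^\star$, with the binomial moment bounds $\Esp{\ind{Z>0}/Z}\asymp 2^k/(n+1)$; and part 2 introduces the same good event (the paper's $\{\mathcal{B}=\tilde{\mathcal{B}}_k\}\cap\{\mathcal{W}=\tilde{\mathcal{W}}_k\}$), controls its complement by Hoeffding plus a union bound over the $2^k$ leaves, and reduces the risk on the good event to a two-group binomial computation. Your upper bound is sound in outline, because one can drop the indicator of the good event and dominate the main term by the \emph{unconditional} second moment of $\frac{1}{N_n(\mathcal{B})}\sum_{X_i\in\mathcal{B}}Y_i$, which really is a binomial variance; this is exactly what the paper does.

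The gap is in the lower bound. On your event $\mathcal{E}$ the labels are no longer unconditioned i.i.d.\ Bernoulli$(p)$: you have conditioned every non-empty black leaf's empirical mean to lie above $1/2$, and this conditioning both shifts the conditional mean of $\bar{Y}_+$ away from $p$ and, crucially, \emph{deflates} the conditional variance of $\sum_{X_i\in\mathcal{B}}Y_i$ below $N_+\,p(1-p)$, since the low tail of each leaf's binomial has been discarded. Hence the risk restricted to $\mathcal{E}$ is not $\Esp{p(1-p)/N_+}$, and ``apply Jensen to lower bound $\Esp{1/N_+}$'' does not by itself recover the leading term $2p(1-p)/n$. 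The substantive missing step --- which is where most of the work in the paper's proof lies --- is to show that the truncated per-leaf variance satisfies $\mathrm{Var}\bigl(Z_j\mid N_j,\,Z_j\geq N_j/2\bigr)\geq N_jp(1-p)-C\,N_j^{3/2}\bigl(4p(1-p)\bigr)^{N_j/2}$, obtained from the closed form for $\Esp{Z\mid Z\geq k}$ (Lemma S1(v)) together with bounds on the central binomial coefficient, and then to average this over $N_j\sim\mathfrak{B}(n,2^{-k})$ to produce the $\bigl(1-2^{-k}(1-\sqrt{4p(1-p)})\bigr)^n$ correction. This is also why $\rho_{k,p}$ is a minimum of \emph{two} rates, $1-\sqrt{4p(1-p)}$ (variance deflation of the conditioned binomial) and $1-e^{-2(p-1/2)^2}$ (mis-siding probability), rather than coming only from the one-sided mis-siding probability as your sketch asserts. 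Without this truncated-variance estimate the lower bound does not close.
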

Proposition~\ref{prop:new_risk_large_k} shows that there exists a benefit from using this network when the first-layer tree is deep enough. In this case, the risk of the shallow tree network is $O(1/n)$ whereas that of a single tree is $O(2^k/n)$. In presence of complex and highly structured data (large $k^{\star}$ and similar distribution in different areas of the input space), the shallow tree network benefits from a variance reduction phenomenon by a factor $2^k$.
These theoretical bounds are numerically assessed in the Supplementary Materials (see Figures \ref{fig:rtree_bound_small_depth} to \ref{fig:etree_bound_big_depth_app}) showing their tightness for a particular choice of the chessboard configuration. 

\replace{}{Finally, note that although the dimension $d$ does not explicitly appear in our bounds, it is closely related to $k^{\star}$. Indeed, in high dimensions, modelling the regression function requires a finer partition, hence a direct relation of the form $k^{\star} \gg d$. Therefore, obtaining an unbiased estimator with a reduced variance as in Proposition~\ref{prop:balanced_chessboard} is more stringent in high dimensions, since it requires to choose $k \geq k^{\star} \gg d$.} 

\section{Conclusion}

In this paper, we study both numerically and theoretically DF and its elementary components. We show that sta\-cking layers of trees (and forests) may improve the predictive performance of the algorithm. However, most of the improvements rely on the first DF-layers. We show that the performance of a shallow tree network (composed of single CART) depends on the depth of the first-layer tree. When the first-layer tree is deep enough, the second-layer tree may build upon the new features created by the first tree by acting as a variance reducer. 

To quantify this phenomenon, we propose a first theoretical analysis of a shallow tree network (composed of centered trees). Our study exhibits the crucial role of the first (encoding) layer: if the first-layer tree is biased, then the entire shallow network inherits this bias, otherwise the second-layer tree acts as a good variance reducer. One should note that this variance reduction cannot be obtained by averaging many trees, as in RF structure: the variance of an averaging of centered trees with depth $k$ is of the same order as one of these individual trees \cite{biau2012,klusowski2018sharp}, whereas two trees in cascade (the first one of depth $k$ and the second of depth $1$) may lead to a variance reduction by a $2^k$ factor. This highlights the benefit of tree-layer architectures over standard ensemble methods.
We thus believe that this first theoretical study of this shallow tree network paves the way of the mathematical understanding of DF.

First-layer trees, and more generally the first layers in DF architecture, can be seen as data-driven encoders. More precisely, the first layers in DF create an automatic embedding of the data, building on the specific conditional relation between the output and the inputs, therefore potentially improving the performance of the overall structure. Since preprocessing is nowadays an important part of all machine learning pipelines, we believe that our analysis is interesting beyond the framework of DF. 
%In particular, our study can directly be applied to the FTDRF network presented in \cite{ftdrf} which is very similar to DF.

%One could also point out that the considered shallow tree network does not include the averaging feature that DF inherit from layers of RF. Indeed, DF allow to build more complex space partitions, that could help improving the accuracy of the global architecture. Nonetheless, in numerical experiments, we show that shallow DF may be already sufficient to enhance performance. 
%We thus believe that this first theoretical study of the shallow tree network paves the way of the mathematical understanding of DF.

% This drawback is mentioned in Zhou's article who advises to enrich the created features by adding forest specifications (for instance neighbour node predictions, number of nodes, etc). To this end, dealing with small and unbalanced datasets of high dimensionality, the authors of \cite{feng2018multi} introduce the variance of the $k$ most important features of each forest, $k \in \{1, $max depth$\}$ to reduce overfitting and enrich the new representation of the data.

% Without modifying layer output's vector size, it is possible to produce different probability distribution vector in order to get a richer layer representation. For instance, as proposed in \cite{ftdrf}, one can use different estimators within one layer (SVM, RF, CRF, Tree, Boosted Tree, etc). We did not explore that path which is out of our scope of research at the moment.

\bibliographystyle{plain}
\bibliography{biblio}

\clearpage

\appendix 

\renewcommand{\thesection}{S\arabic{section}}
\renewcommand{\thetable}{S\arabic{table}}
\renewcommand{\thefigure}{S\arabic{figure}}
\renewcommand{\thetheorem}{S\arabic{theorem}}
\renewcommand{\thefootnote}{S\arabic{footnote}}

\section{Additional figures}

\subsection{Computation times for Section \ref{sec:toward_simplification}}
\label{app:table_time}
\subsection{Computation times for Section \ref{sec:ref_numerical_analysis}}
\label{app:table_time}

\begin{table}[h!]
	  \begin{center}
	    \resizebox*{0.7\textwidth}{!}{%
        \begin{tabular}{|c |c |c| c| c| c | c |} 
        \hline
        & Yeast & Housing & Letter & Adult & Airbnb & Higgs \\ [0.5ex] 
        \hline
        Default DF time & 13m19s & 9m38s & 20m31 & 13m57s & 23m23s &  43m53s\\ 
        \hline
        Light DF time & 7s & 6s & 8s & 8s & 10s & 13s\\ 
        \hline
        Default DF MC (MB) & 11 & 6 & 174 & 139 & 166 & 531 \\ 
        \hline
        Light DF MC (MB) & 5  & 4 & 109 & 72 & 100 & 318\\
        \hline
        \end{tabular}%
        }
	  \end{center}	
	  \caption{Comparing the time and memory consumption of DF and Light DF.}
\label{tab:mem_time_light_df}
\end{table}

\subsection{Table of best configurations, supplementary to Section \ref{subsec:optimal_layer}}
\label{app:best_config_details}

\begin{table}[ht]
\centering
 \begin{tabular}{||c | c c ||} 
 \hline
 Dataset & Best configuration hyperparam. & Optimal sub-model\\ [0.5ex] 
 \hline
 Adult & 6 forests, 20 trees, max depth 30 & 2 \\ 
 \hline
 Higgs & 10 forests, 280 trees, max depth None &  2 \\
 \hline 
 Fashion Mnist & 8 forests, 500 trees, max depth None (default) & 2 \\
 \hline
 Letter & 8 forests, 500 trees, max depth None (default) & 1\\
 \hline
 Yeast & 6 forests, 200 trees, max depth 30 & 1\\
 \hline
 Airbnb & 10 forests, 400 trees, max depth None & 1\\
 \hline
 Housing & 8 forests, 280 trees, max depth 100 & 14\\ [0.1ex] 
 \hline
\end{tabular}
\caption{Details of the best configurations obtained in Figures \ref{fig:DFvsLight_clas} and \ref{fig:DFvsLight_reg}.}
\label{tab:best_configurations}
\end{table}

To find the best configuration, we ran a grid search over the following parameters : number of forests per layer (from 2 to 10) , number of trees per forest (from 30 to 1000), max depth of each tree (from 5 to 100 plus None). \\

\subsection{Fashion Mnist MGS encoding}
\label{app:exp_fashion}

The Fashion Mnist dataset was encoded using the MGS process with two forests, one Breiman RF, one CRF, both of them having 150 trees, 10 samples per leaf minimum and other parameters set to default. Three windows were used of sizes/strides. Then we apply a mean pooling process of size (3,3) to each created filter.

\clearpage

\subsection{Additional figures to Section \ref{sec:shallowCART}}
\label{app:sec:shallowCART}

\begin{figure}[H]
    \centering
    \includegraphics[width = 0.5\textwidth]{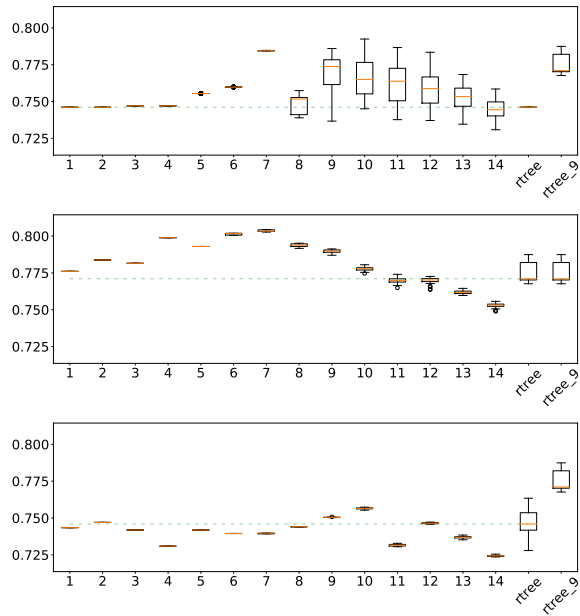}
    \caption{Adult dataset. Accuracy of a two-layer tree architecture w.r.t.\ the second-layer tree depth, when the first-layer (encoding) tree  is of depth 2 (top), 9 (middle), and 15 (bottom). \texttt{rtree} is a single tree of respective depth 2 (top), 9 (middle), and 15 (bottom),  applied on raw data. For this dataset, the optimal depth of a single tree is 9 and the tree with the optimal depth is depicted as \texttt{rtree\_9} in each plot.
    The green dashed line indicates the median score of the \texttt{rtree}. All boxplots are obtained by 10 different runs.}
    \label{fig:adult_encoding_influence}
\end{figure}

\begin{figure}[!h]
    \centering
    \includegraphics[width = 0.9\textwidth]{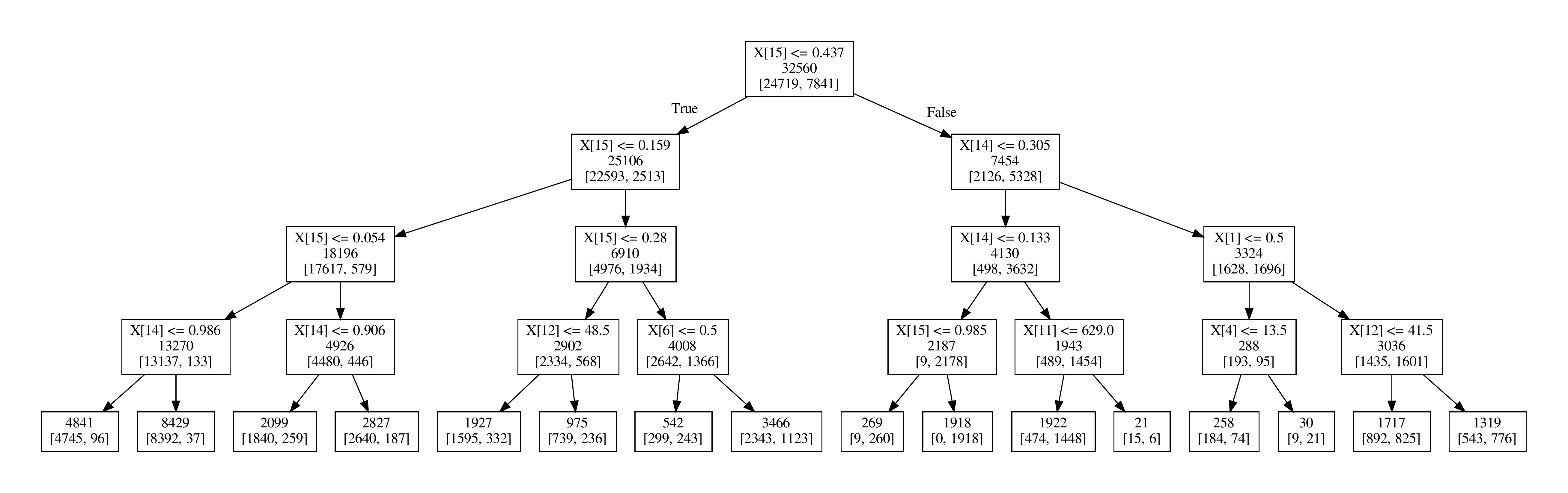}
    \caption{Adult dataset. Second-layer tree structure of depth 4 when the first-layer tree is of depth 9 (optimal depth). Raw features range from X[0] to X[13], X[14] and X[15] are the features built by the first-layer tree.}
    \label{fig:adult_etree_optimal}
\end{figure}
\clearpage

\begin{figure}[h]
    \centering
    \includegraphics[width = 0.5\textwidth]{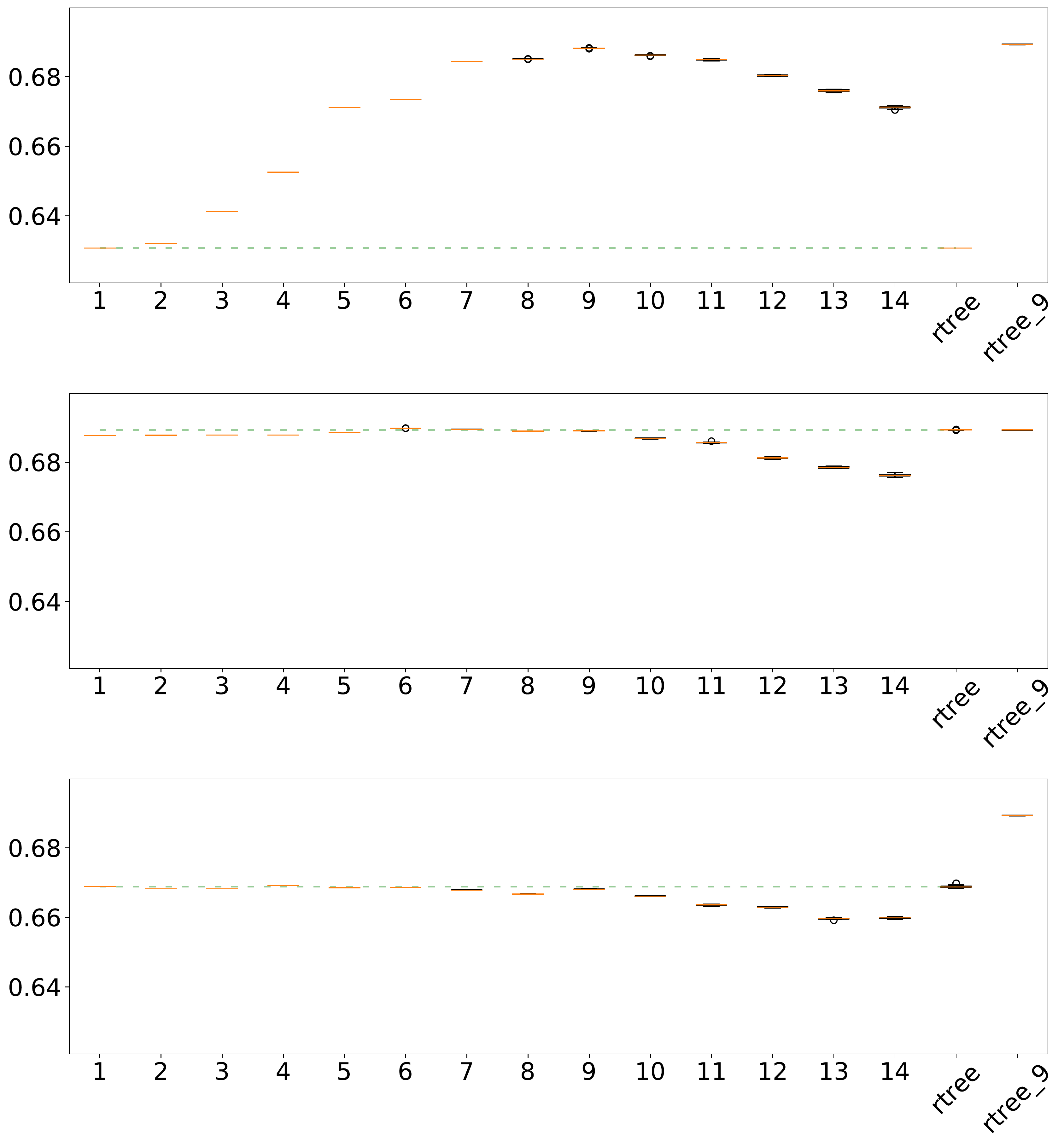}
    \caption{Higgs dataset. Accuracy of a two-layer tree architecture w.r.t.\ the second-layer tree depth, when the first-layer (encoding) tree  is of depth 2 (top), 9 (middle), and 15 (bottom). \texttt{rtree} is a single tree of respective depth 2 (top), 9 (middle), and 15 (bottom),  applied on raw data. For this dataset, the optimal depth of a single tree is 9 and the tree with the optimal depth is depicted as \texttt{rtree\_9} in each plot.
    The green dashed line indicates the median score of the \texttt{rtree}. All boxplots are obtained by 10 different runs.}
    \label{fig:higgs_encoding_influence}
\end{figure}

\begin{figure}[!h]
    \centering
    \includegraphics[width = 0.8\textwidth]{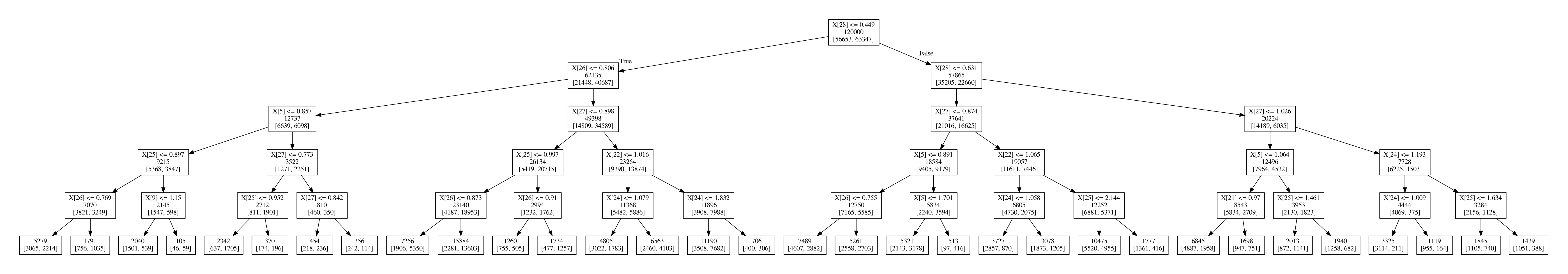}
    \caption{Higgs dataset. Second-layer tree structure of depth 5 when the first-layer tree is of depth 2 (low depth). Raw features range from X[0] to X[13], X[14] and X[15] are the features built by the first-layer tree.}
    \label{fig:Higgs_etree_low}
\end{figure}

\begin{figure}[!h]
    \centering
    \includegraphics[width = 0.8\textwidth]{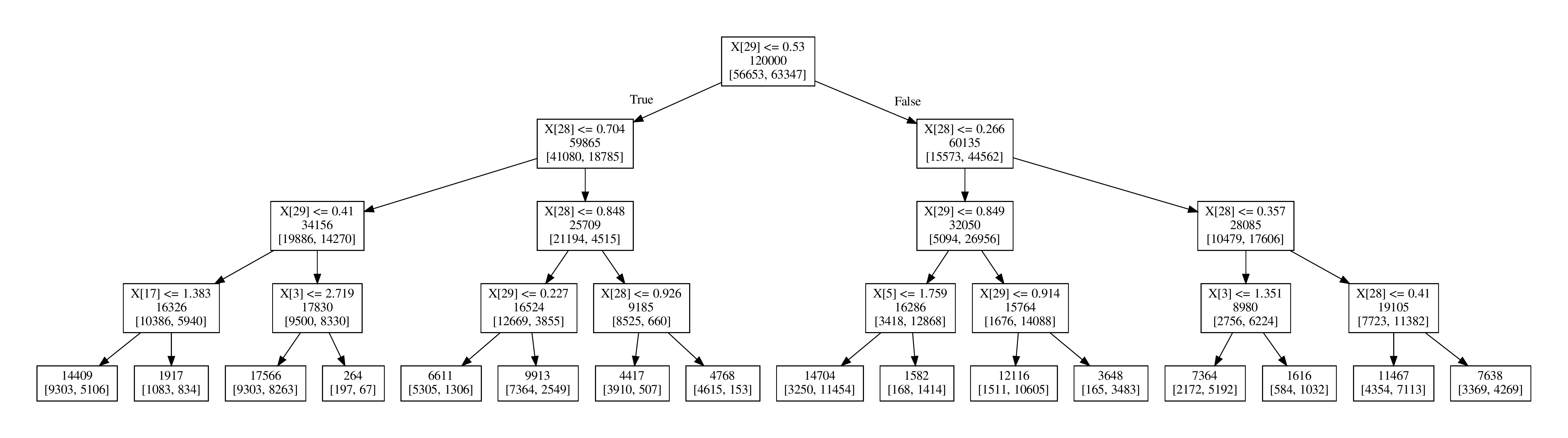}
    \caption{Higgs dataset. Second-layer tree structure of depth 4 when the first-layer tree is of depth 9 (optimal depth). Raw features range from X[0] to X[27], X[28] and X[29] are the features built by the first-layer tree.}
    \label{fig:Higgs_etree_optimal}
\end{figure}

\clearpage

\begin{figure}[h]
    \centering
    \includegraphics[width = 0.5\textwidth]{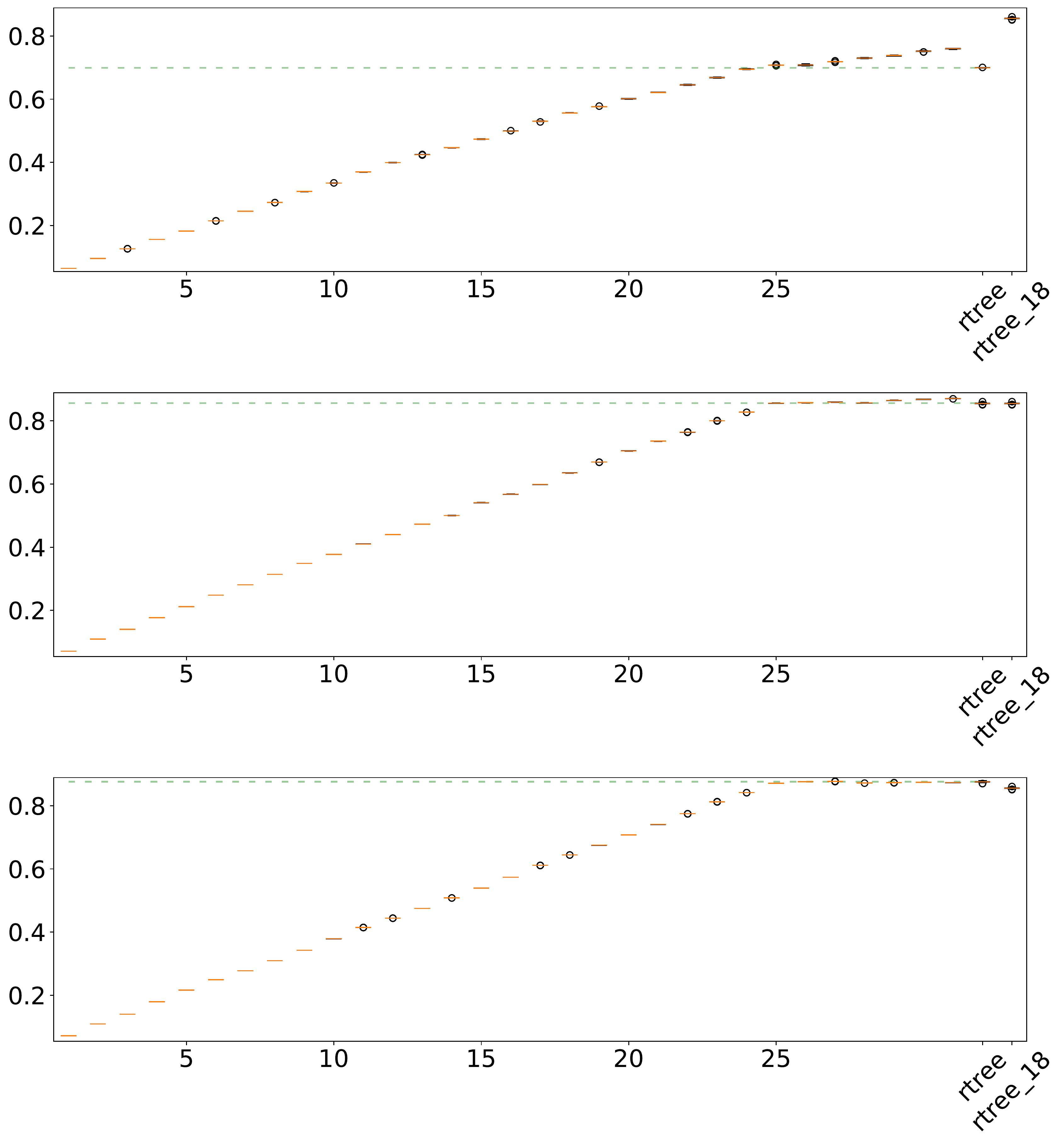}
    \caption{Letter dataset. Accuracy of a two-layer tree architecture w.r.t.\ the second-layer tree depth, when the first-layer (encoding) tree  is of depth 10 (top), 18 (middle), and 26 (bottom). \texttt{rtree} is a single tree of respective depth 10 (top), 18 (middle), and 26 (bottom), applied on raw data. For this dataset, the optimal depth of a single tree is 18 and the tree with the optimal depth is depicted as \texttt{rtree\_18} in each plot.
    The green dashed line indicates the median score of the \texttt{rtree}. All boxplots are obtained by 10 different runs.}
    \label{fig:letter_encoding_influence}
\end{figure}

\begin{figure}[h]
    \centering
    \includegraphics[width = 0.5\textwidth]{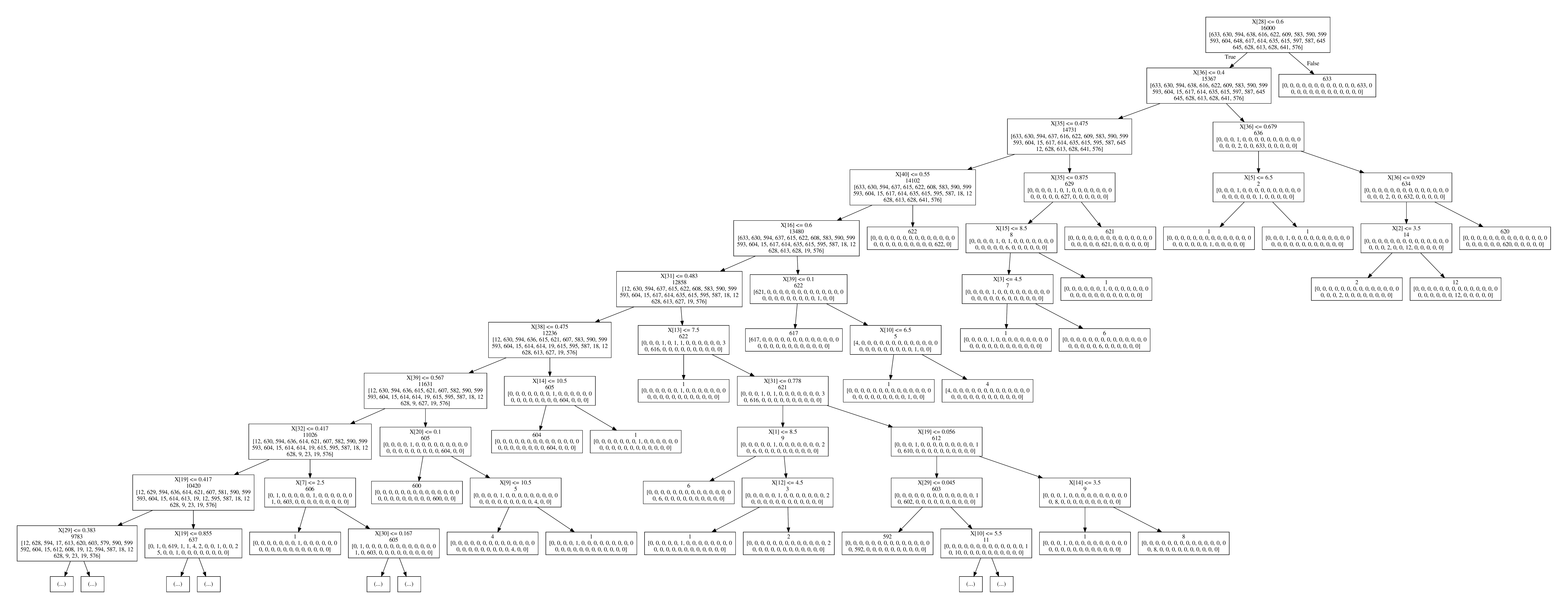}
    \caption{Letter dataset. Second-layer tree structure of depth 30 when the first-layer tree is of depth 18 (optimal depth). We only show the first part of the tree up to depth 10. Raw features range from X[0] to X[15]. The features built by the first-layer tree range from X[16] to X[41].}
    \label{fig:letter_etree_optimal}
\end{figure}

\clearpage

\begin{figure}[H]
    \centering
    \includegraphics[width = 0.5\textwidth]{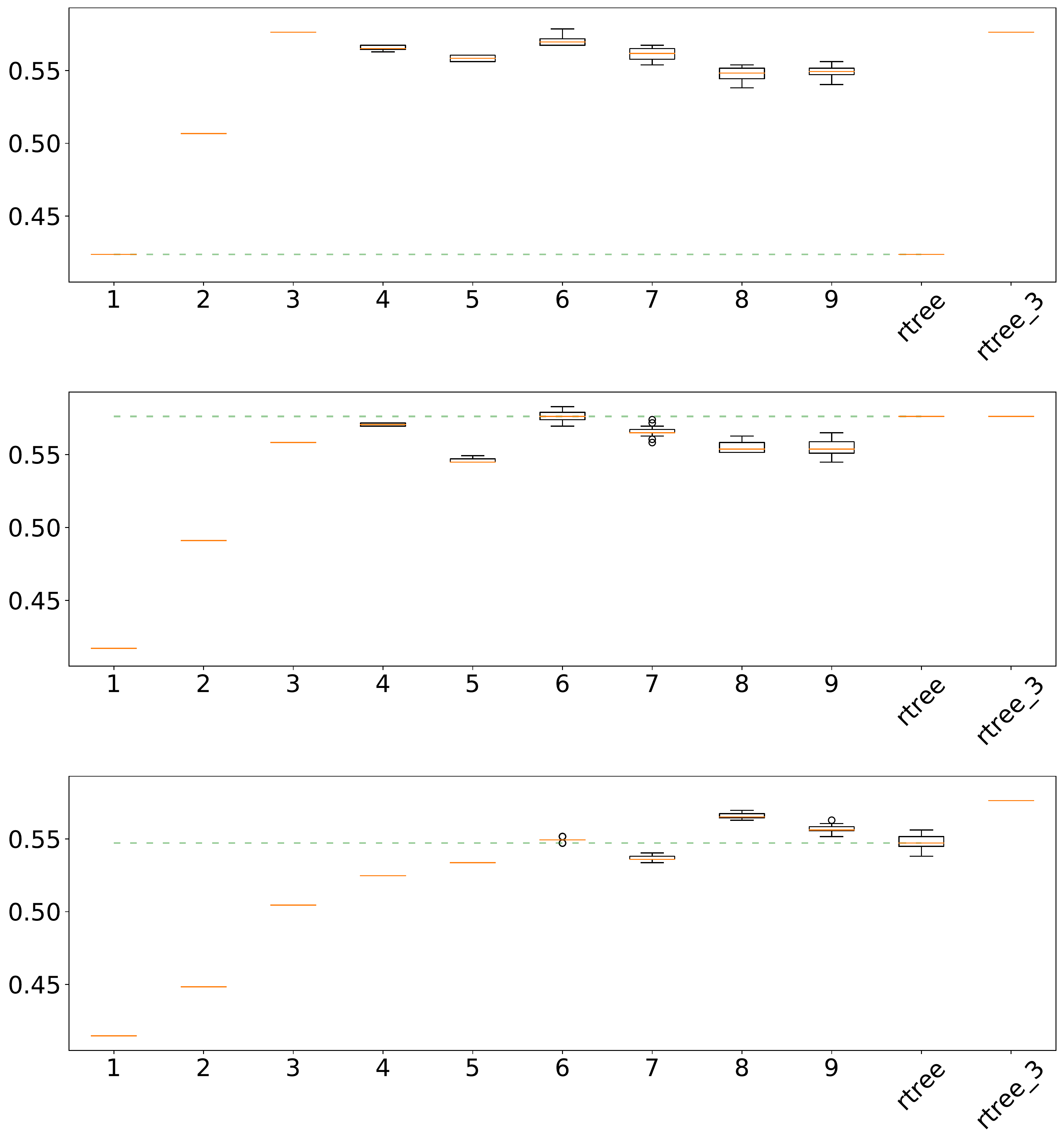}
    \caption{Yeast dataset. Accuracy of a two-layer tree architecture w.r.t.\ the second-layer tree depth, when the first-layer (encoding) tree  is of depth 1 (top), 3 (middle), and 8 (bottom). \texttt{rtree} is a single tree of respective depth 1 (top), 3 (middle), and 8 (bottom),  applied on raw data. For this dataset, the optimal depth of a single tree is 3 and the tree with the optimal depth is depicted as \texttt{rtree\_3} in each plot.
    The green dashed line indicates the median score of the \texttt{rtree}. All boxplots are obtained by 10 different runs.}
    \label{fig:yeast_encoding_influence}
\end{figure}

\begin{figure}[!h]
    \centering
    \includegraphics[width = 0.8\textwidth]{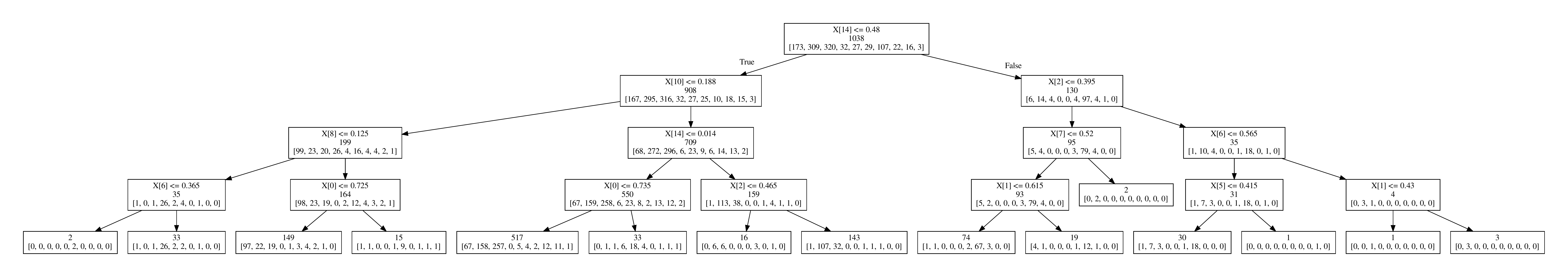}
    \caption{Yeast dataset. Second-layer tree structure of depth 4 when the first-layer tree is of depth 3 (optimal depth). Raw features range from X[0] to X[7]. The features built by the first-layer tree range from X[8] to X[17].}
    \label{fig:yeast_etree_optimal}
\end{figure}

\begin{figure}[h]
    \centering
    \includegraphics[width = 0.5\textwidth]{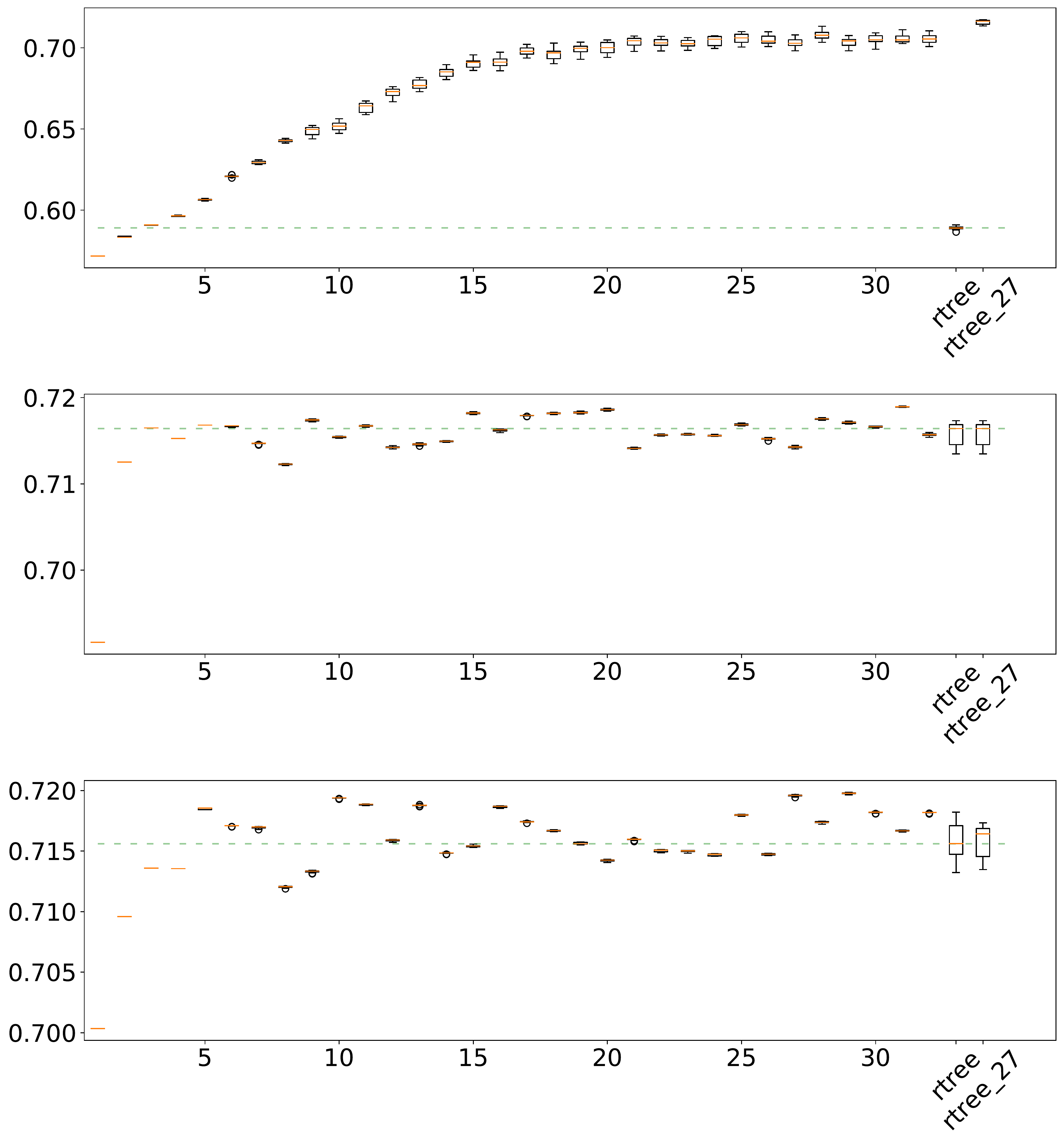}
    \caption{Airbnb dataset. $R^2$ score of a two-layer tree architecture w.r.t.\ the second-layer tree depth, when the first-layer (encoding) tree  is of depth 10 (top), 27 (middle), and 32 (bottom). \texttt{rtree} is a single tree of respective depth 10 (top), 27 (middle), and 32 (bottom),  applied on raw data. For this dataset, the optimal depth of a single tree is 27 and the tree with the optimal depth is depicted as \texttt{rtree\_27} in each plot.
    The green dashed line indicates the median score of the \texttt{rtree}. All boxplots are obtained by 10 different runs.}
    \label{fig:airbnb_encoding_influence}
\end{figure}

\begin{figure}[h]
    \centering
    \includegraphics[width = 0.8\textwidth]{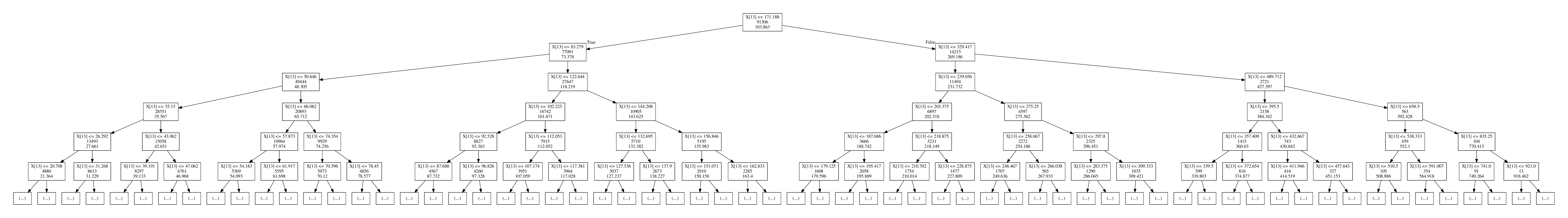}
    \caption{Airbnb dataset. Second-layer tree structure of depth 28 when the first-layer tree is of depth 26 (optimal depth). We only show the first part of the tree up to depth 5. Raw features range from X[0] to X[12], X[13] is the feature built by the first-layer tree.}
    \label{fig:airbnb_etree_optimal}
\end{figure}

\clearpage 

\begin{figure}[h]
    \centering
    \includegraphics[width = 0.5\textwidth]{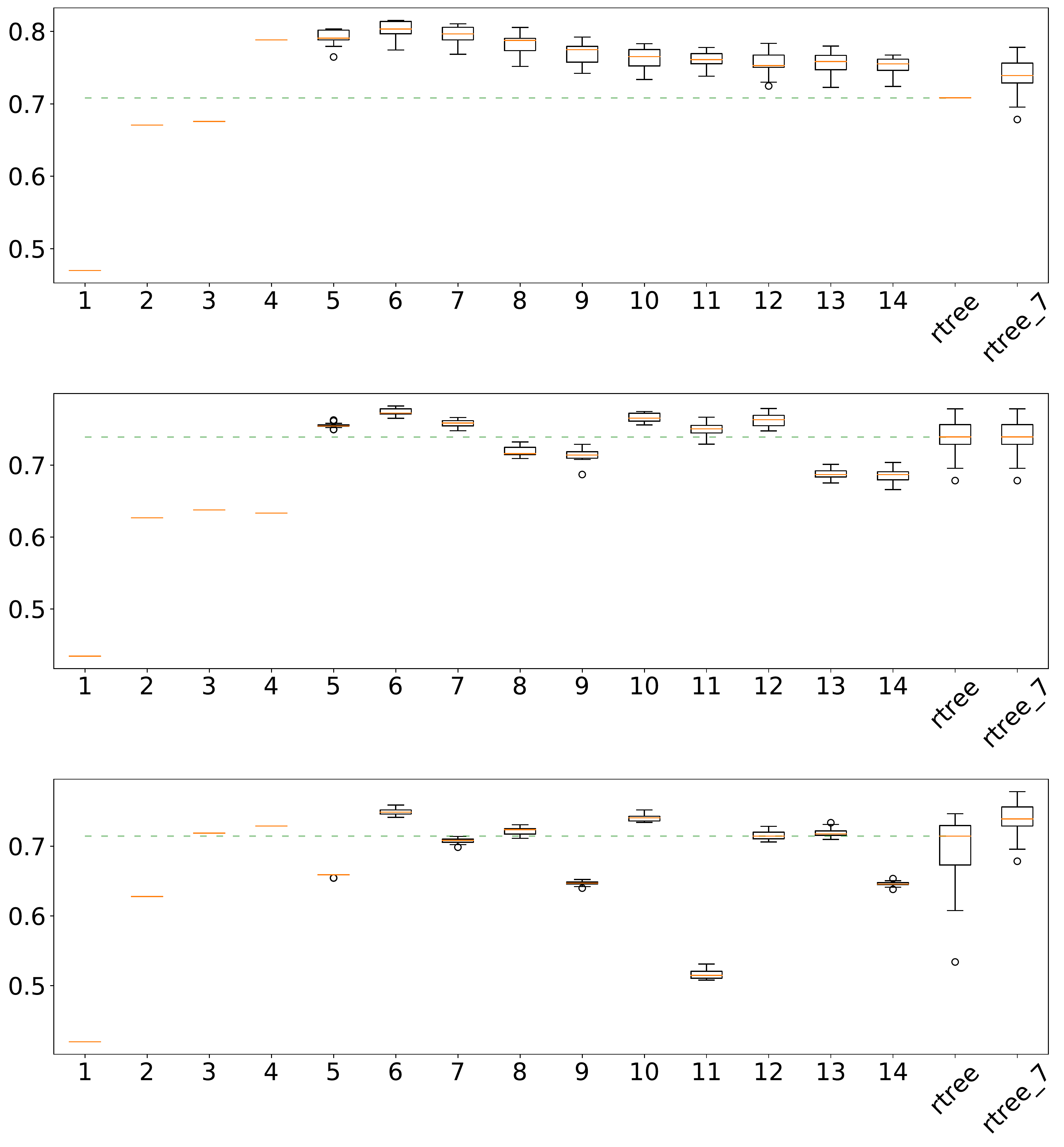}
    \caption{Housing dataset. $R^2$ score of a two-layer tree architecture w.r.t.\ the second-layer tree depth, when the first-layer (encoding) tree  is of depth 3 (top), 7 (middle), and 12 (bottom). \texttt{rtree} is a single tree of respective depth 3 (top), 7 (middle), and 12 (bottom),  applied on raw data. For this dataset, the optimal depth of a single tree is 9 and the tree with the optimal depth is depicted as \texttt{rtree\_7} in each plot.
    The green dashed line indicates the median score of the \texttt{rtree}. All boxplots are obtained by 10 different runs.}
    \label{fig:housing_encoding_influence}
\end{figure}

\begin{figure}[h]
    \centering
    \includegraphics[width = 0.6\textwidth]{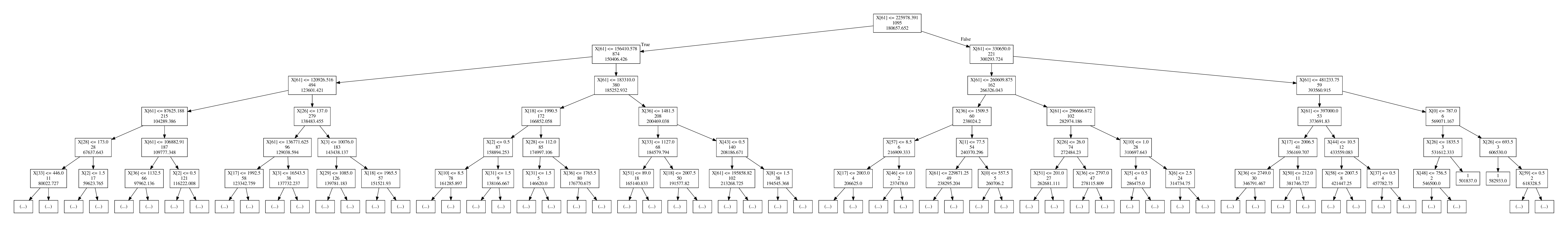}
    \caption{Housing dataset. Second-layer tree structure of depth 10 when the first-layer tree is of depth 7 (optimal depth). We only show the first part of the tree up to depth 5. Raw features range from X[0] to X[60], X[61] is the feature built by the first-layer tree.}
    \label{fig:housing_etree_optimal}
\end{figure}

\subsection{Additional figures to Section \ref{subsec:optimal_layer}}
\label{app:subsec:optimal_layer}

\begin{figure}[!htp]
    \centering
    \includegraphics[width = 0.4\textwidth]{adult_accuracy_depth_1f.pdf}
    \includegraphics[width = 0.4\textwidth]{adult_accuracy_depth_4f.pdf}
    \caption{Adult dataset. Boxplots over 10 tries of the accuracy of a DF with 1 forest by layer (left) or 4 forests by layer (right), with respect to the DF maximal number of layers.}
    \label{fig:adult_acc_depth}
\end{figure}

\begin{figure}[!htp]
    \centering
    \includegraphics[width = 0.4\textwidth]{adult_heatmap_depth_1f.pdf}
    \includegraphics[width = 0.4\textwidth]{adult_heatmap_depth_4f.pdf}
    \caption{Adult dataset. Heatmap counting the index of the sub-optimal model over 10 tries of a default DF with 1 (Breiman) forest per layer (left) or 4 forests (2 Breiman, 2 CRF) per layer (right), with respect to the maximal  number of layers.}
    \label{fig:adult_heatmap}
\end{figure}

\begin{figure}[!htp]
    \centering
    \includegraphics[width = 0.4\textwidth]{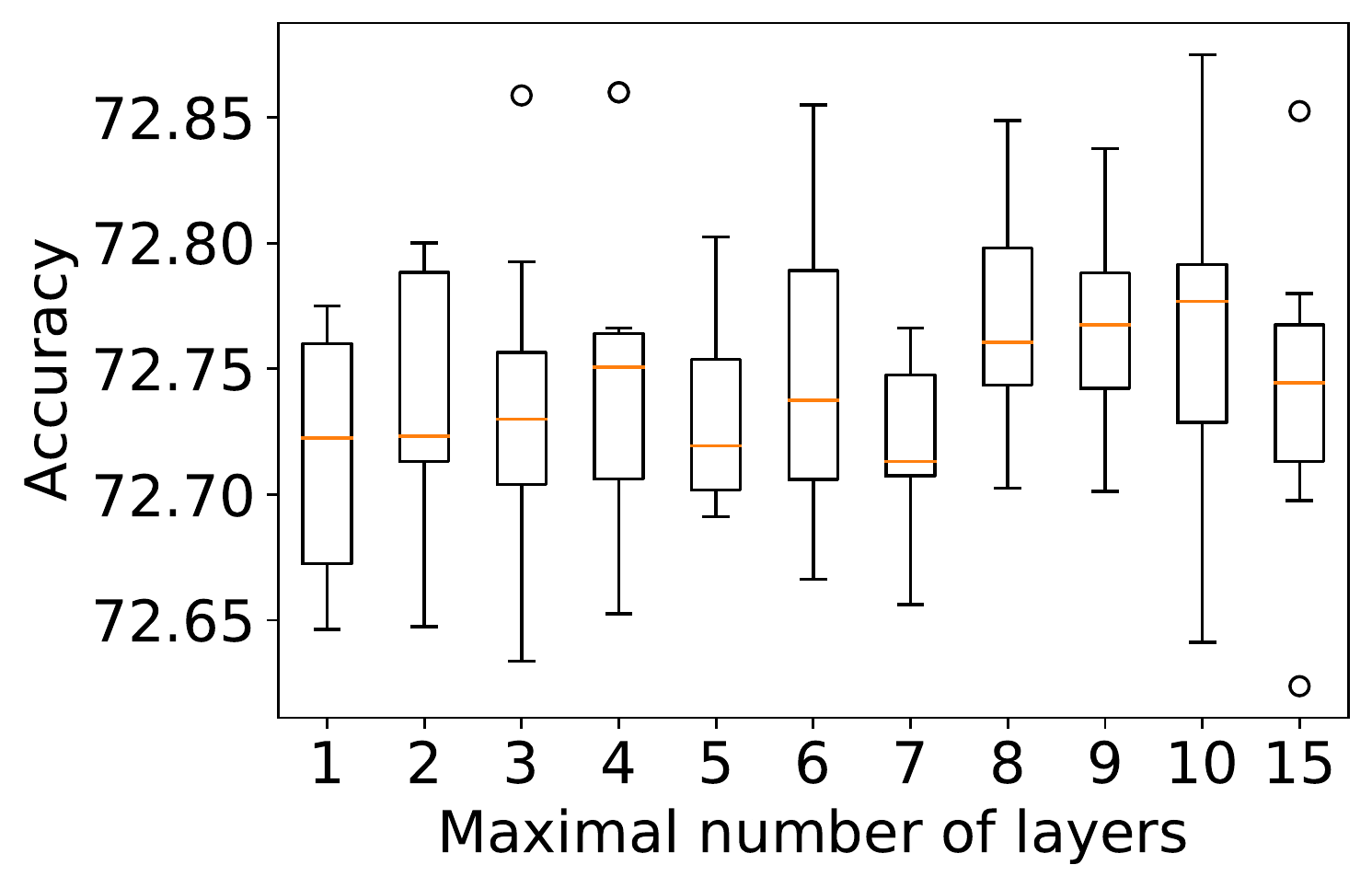}
    \includegraphics[width = 0.4\textwidth]{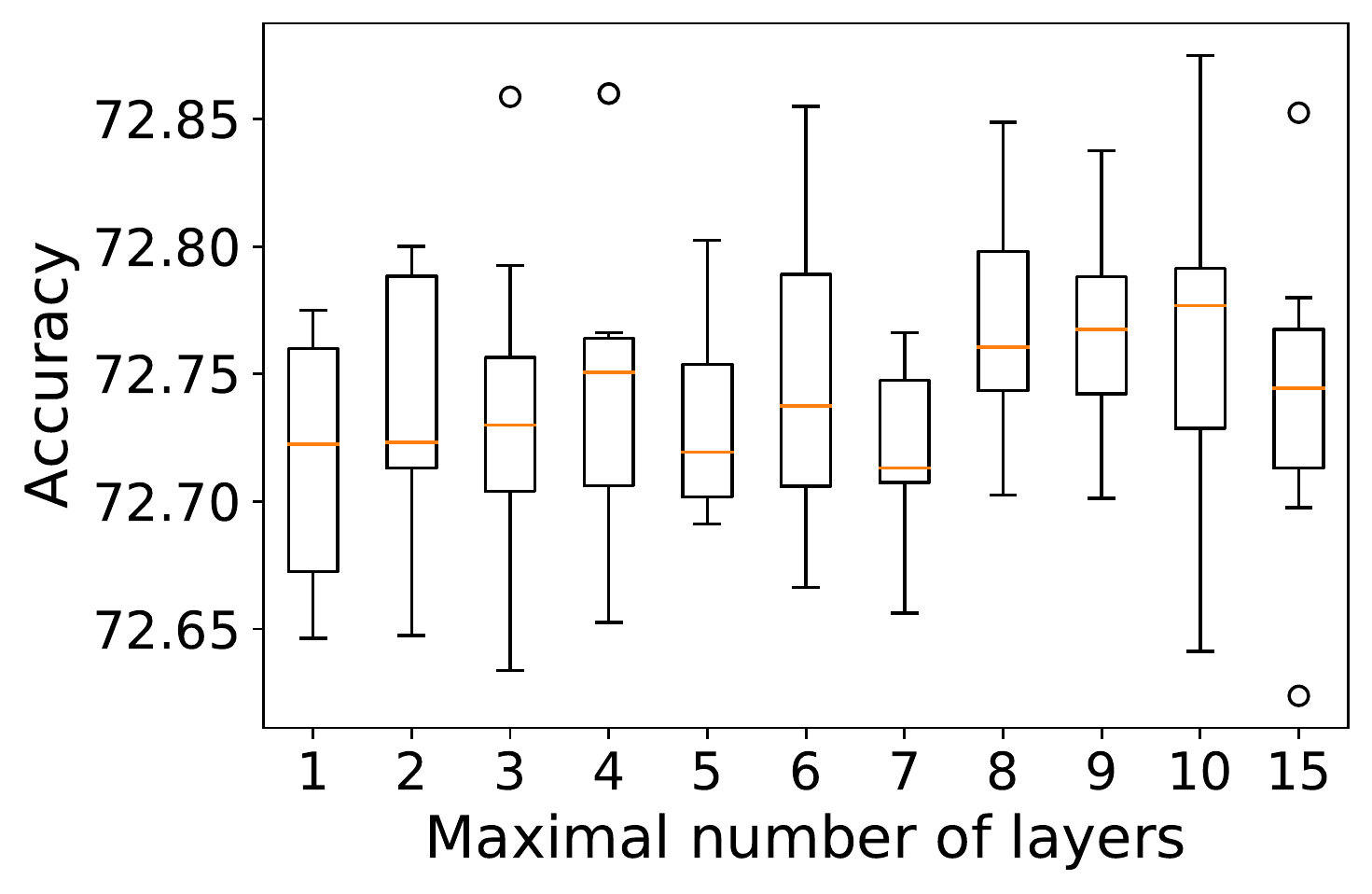}
    \caption{Higgs dataset. Boxplots over 10 tries of the accuracy of a DF with 1 forest by layer (left) or 4 forests by layer (right), with respect to the DF maximal number of layers.}
    \label{fig:higgs_acc_depth}
\end{figure}

\begin{figure}[!htp]
    \centering
    \includegraphics[width = 0.4\textwidth]{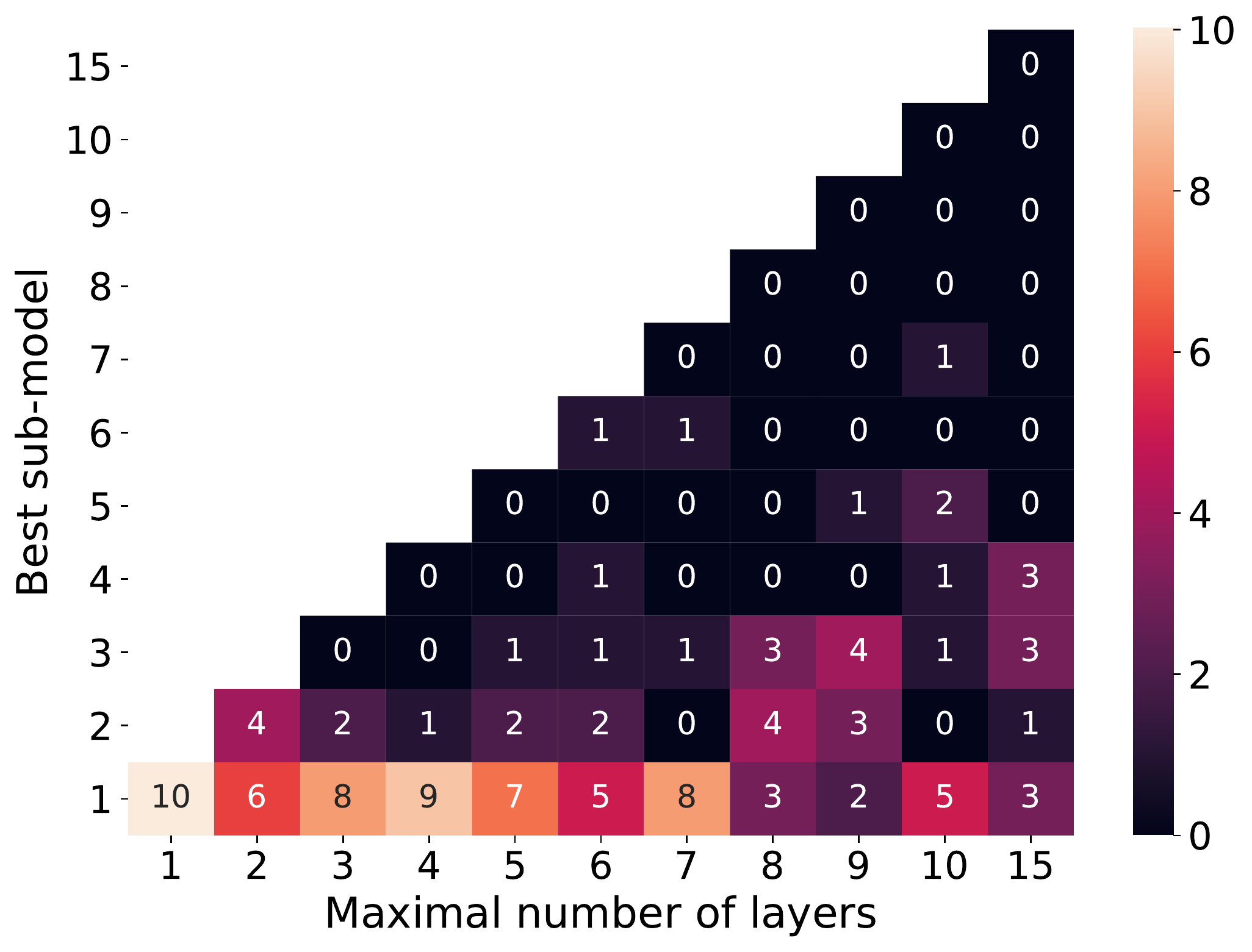}
    \includegraphics[width = 0.4\textwidth]{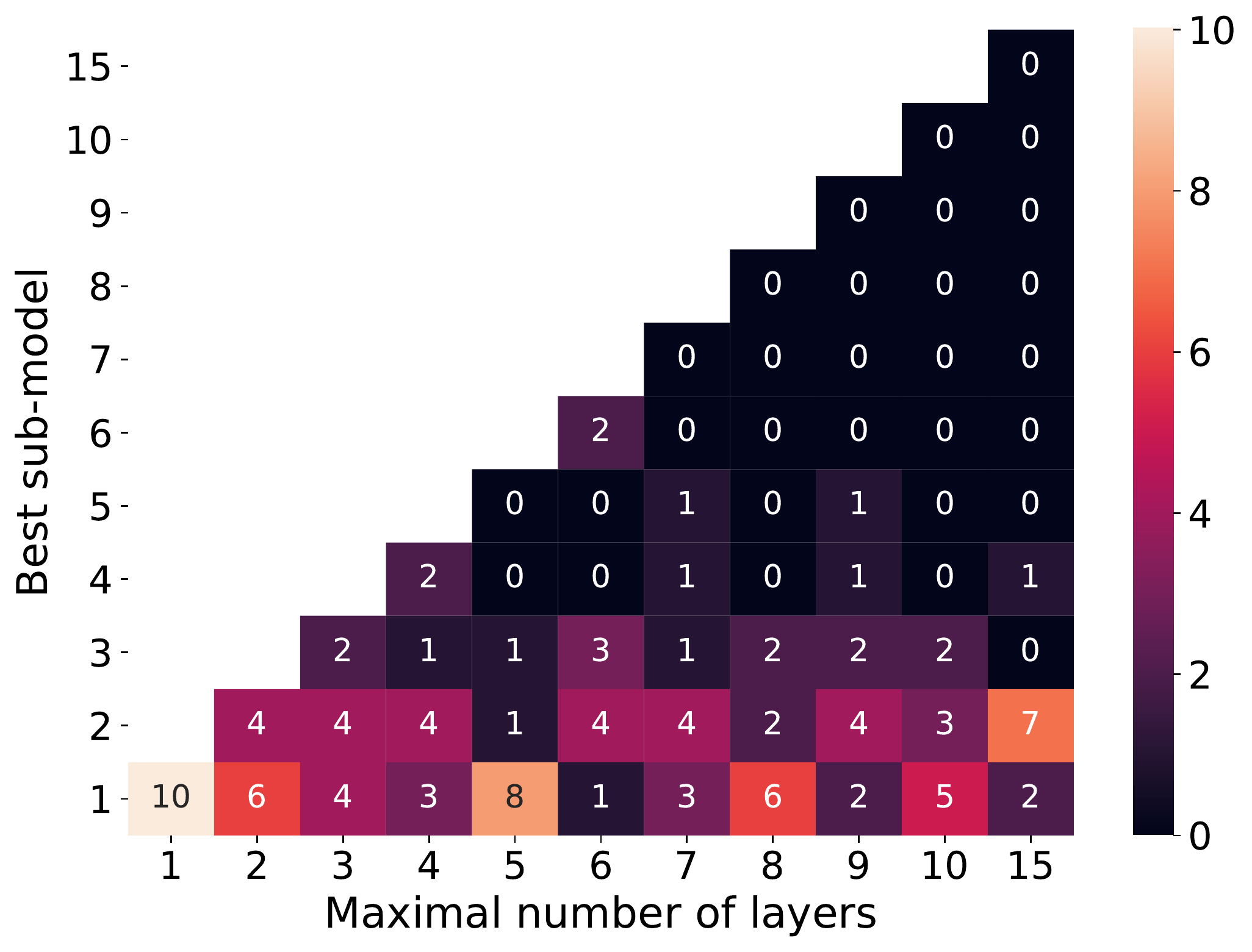}
    \caption{Higgs dataset. Heatmap counting the index of the sub-optimal model over 10 tries of a default DF with 1 (Breiman) forest per layer (left) or 4 forests (2 Breiman, 2 CRF) per layer (right), with respect to the maximal  number of layers.}
    \label{fig:higgs_heatmap}
\end{figure}

\begin{figure}[!htp]
    \centering
    \includegraphics[width = 0.4\textwidth]{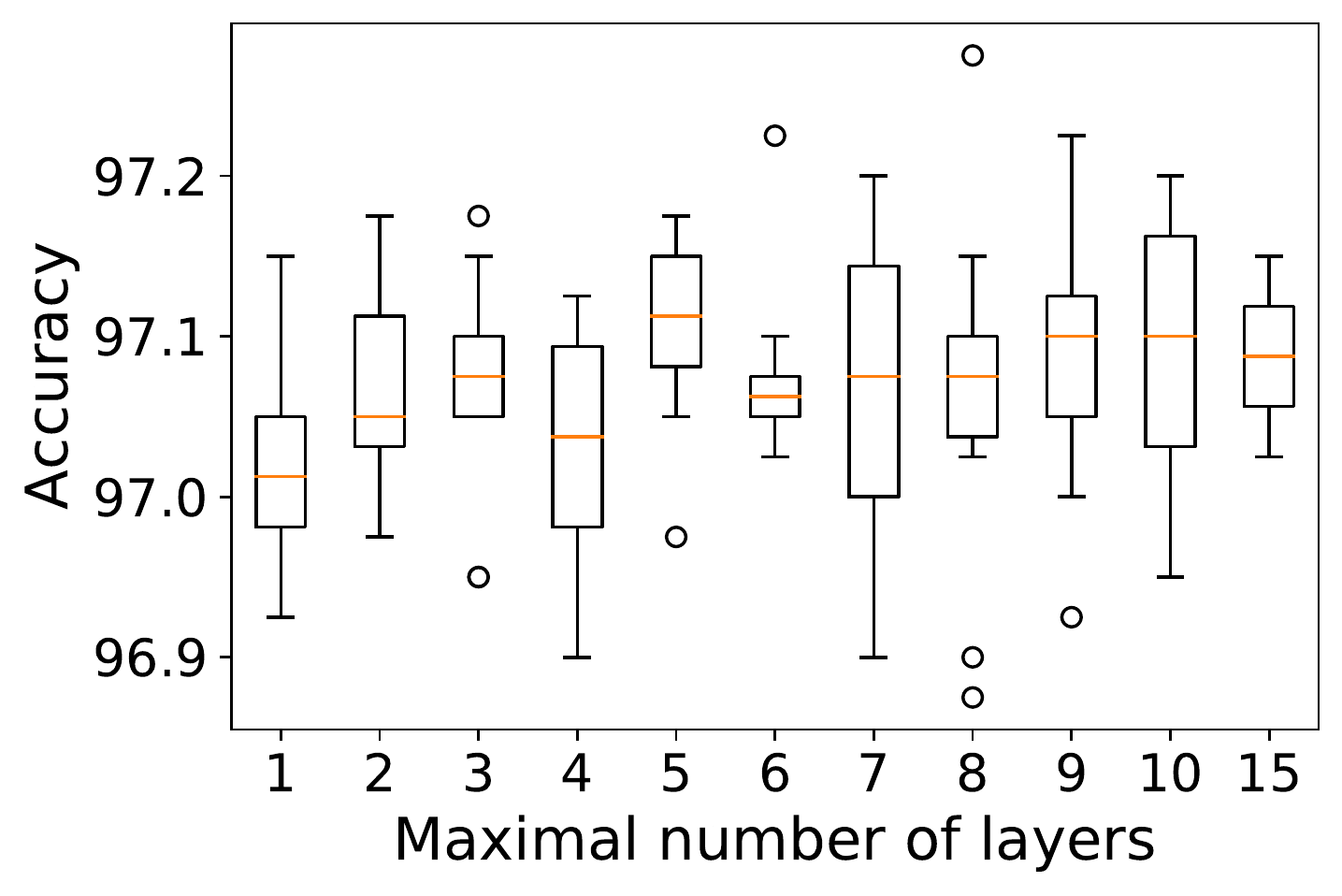}
    \includegraphics[width = 0.4\textwidth]{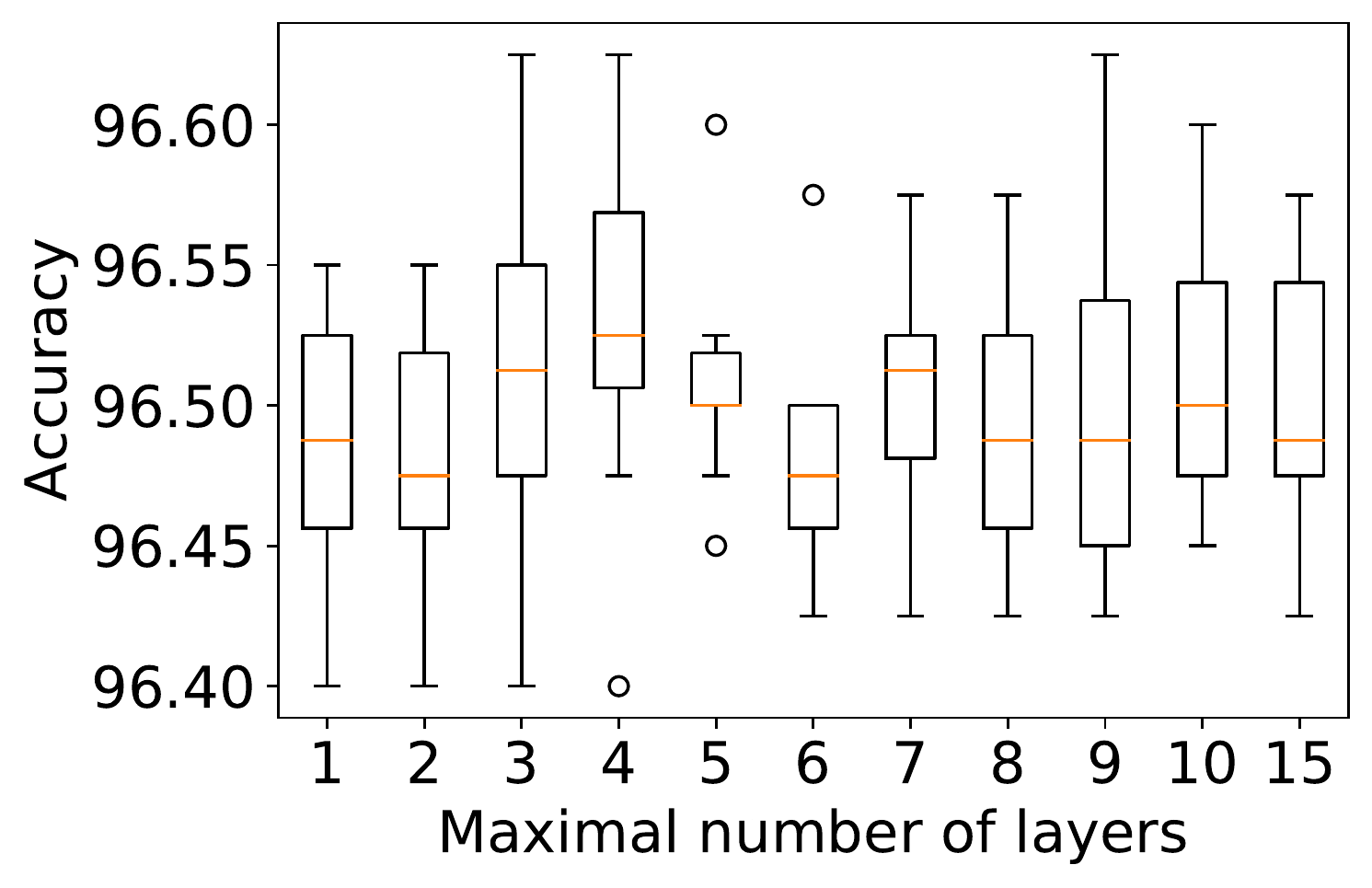}
    \caption{Letter dataset. Boxplots over 10 tries of the accuracy of a DF with 1 forest by layer (left) or 4 forests by layer (right), with respect to the DF maximal number of layers.}
    \label{fig:letter_accuracy_depth}
\end{figure}

\begin{figure}[!htp]
    \centering
    \includegraphics[width = 0.4\textwidth]{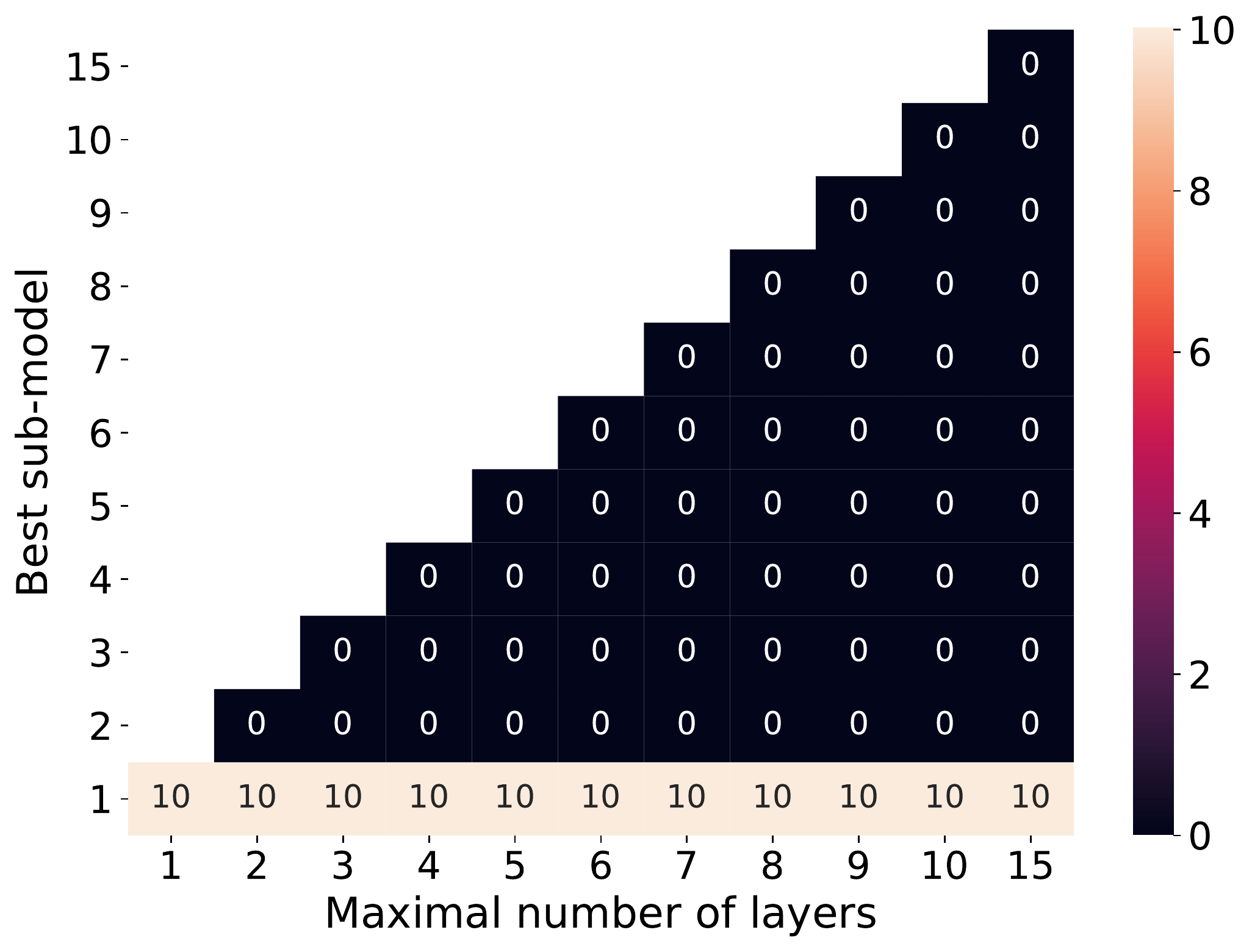}
    \includegraphics[width = 0.4\textwidth]{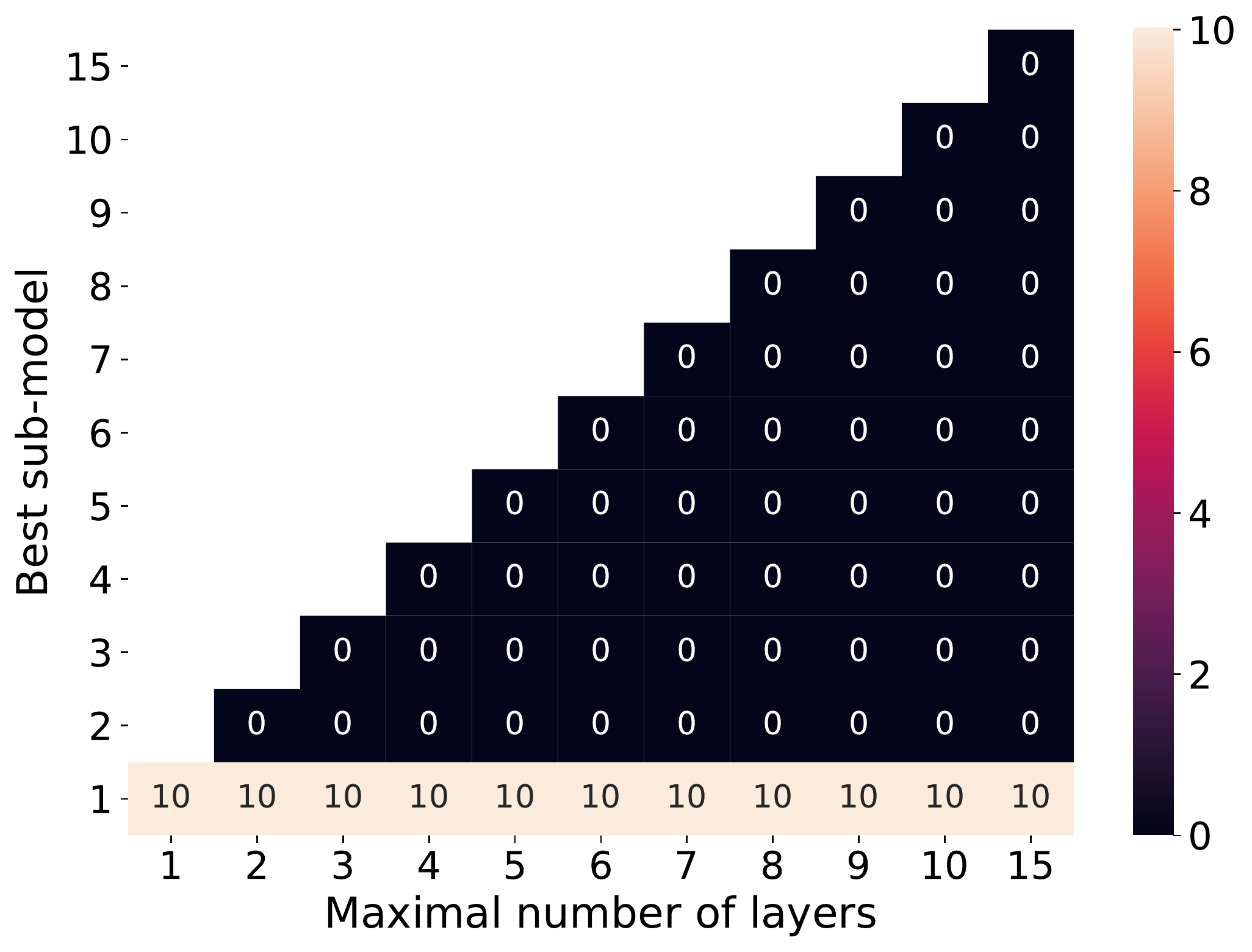}
    \caption{Letter dataset. Heatmap counting the index of the sub-optimal model over 10 tries of a default DF with 1 (Breiman) forest per layer (left) or 4 forests (2 Breiman, 2 CRF) per layer (right), with respect to the maximal  number of layers.}
    \label{fig:letter_heatmap}
\end{figure}

\begin{figure}[!htp]
    \centering
    \includegraphics[width = 0.4\textwidth]{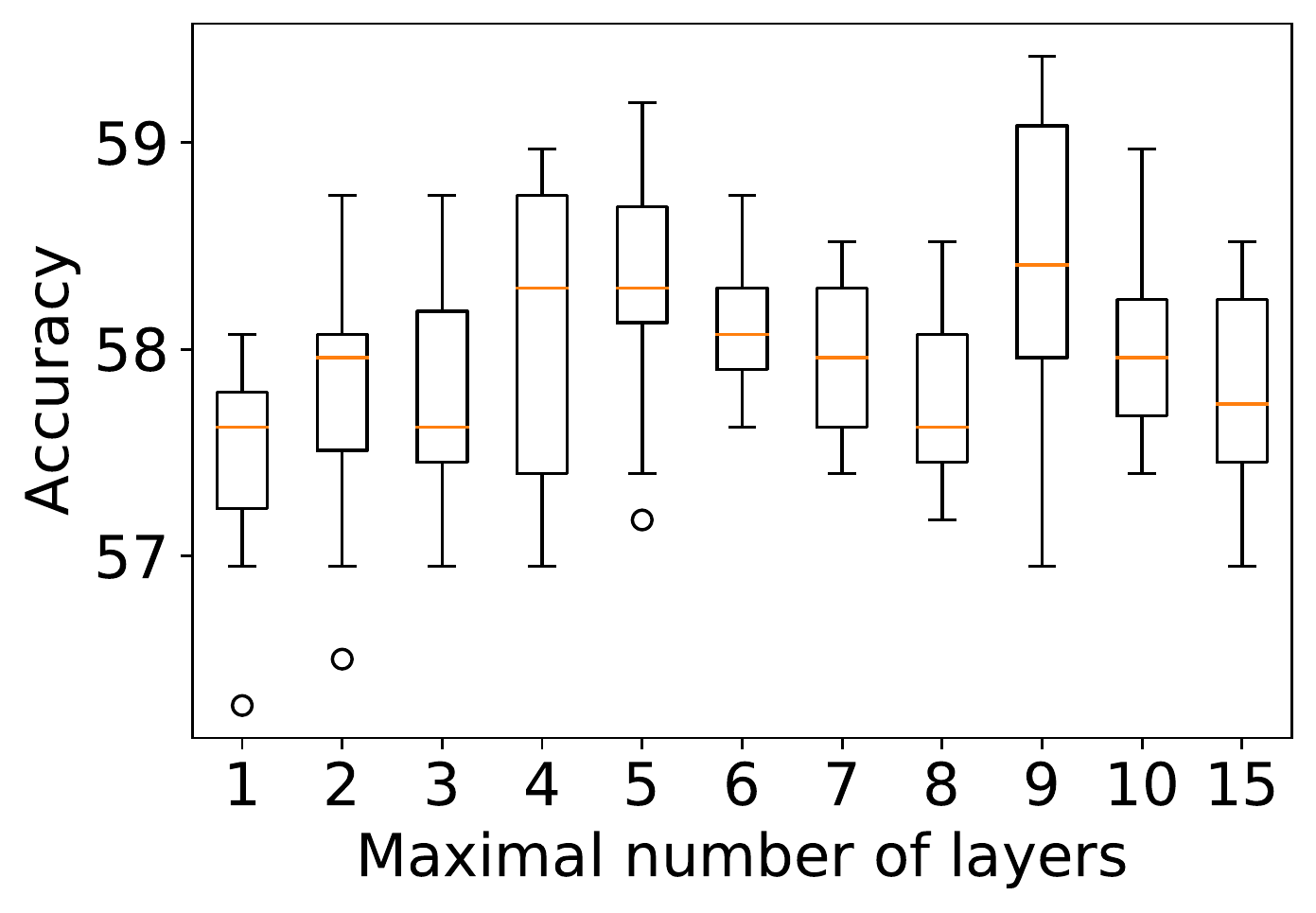}
    \includegraphics[width = 0.4\textwidth]{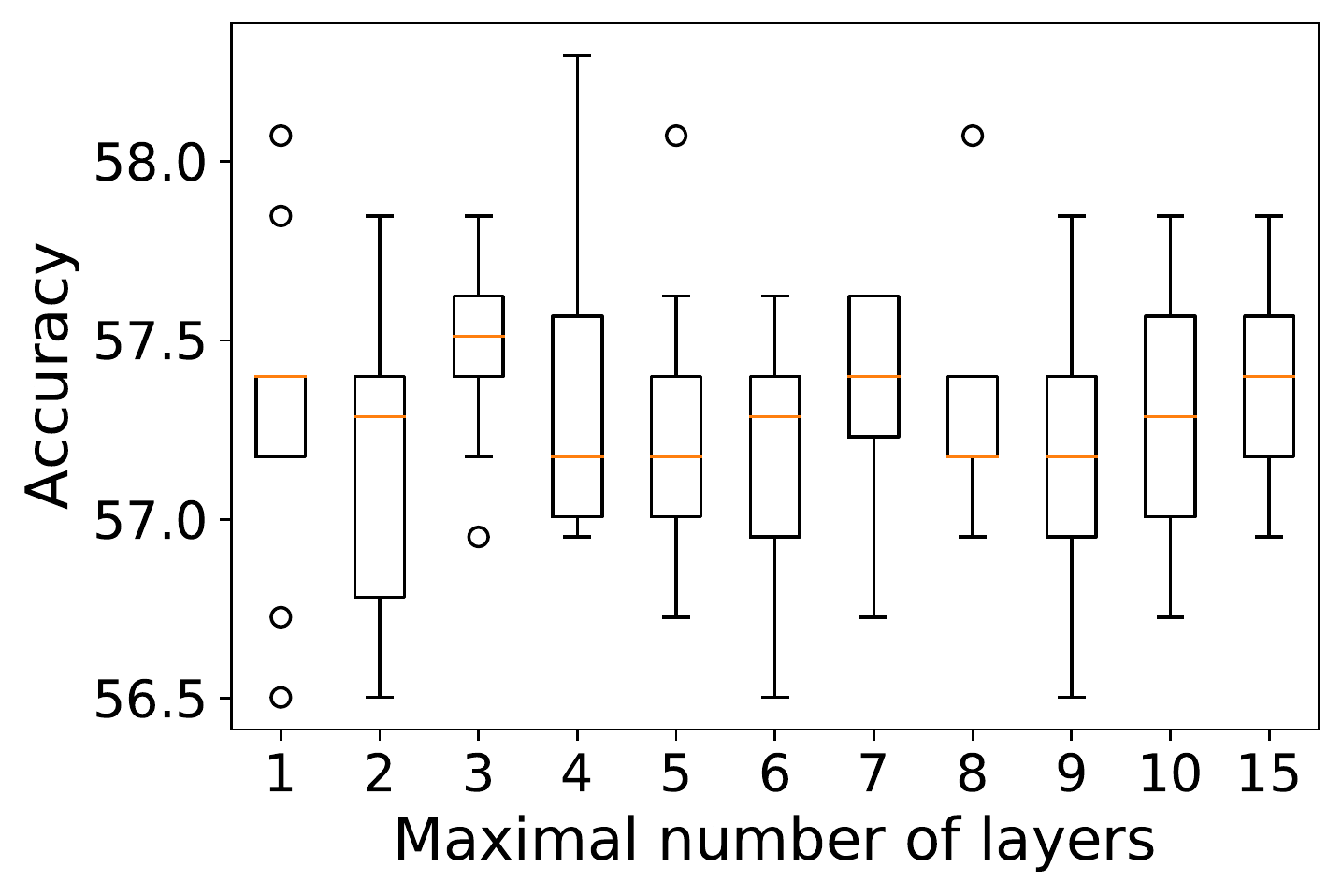}
    \caption{Yeast dataset. Boxplots over 10 tries of the accuracy of a DF with 1 forest by layer (left) or 4 forests by layer (right), with respect to the DF maximal number of layers.}
    \label{fig:yeast_acc_depth}
\end{figure}

\begin{figure}[!htp]
    \centering
    \includegraphics[width = 0.4\textwidth]{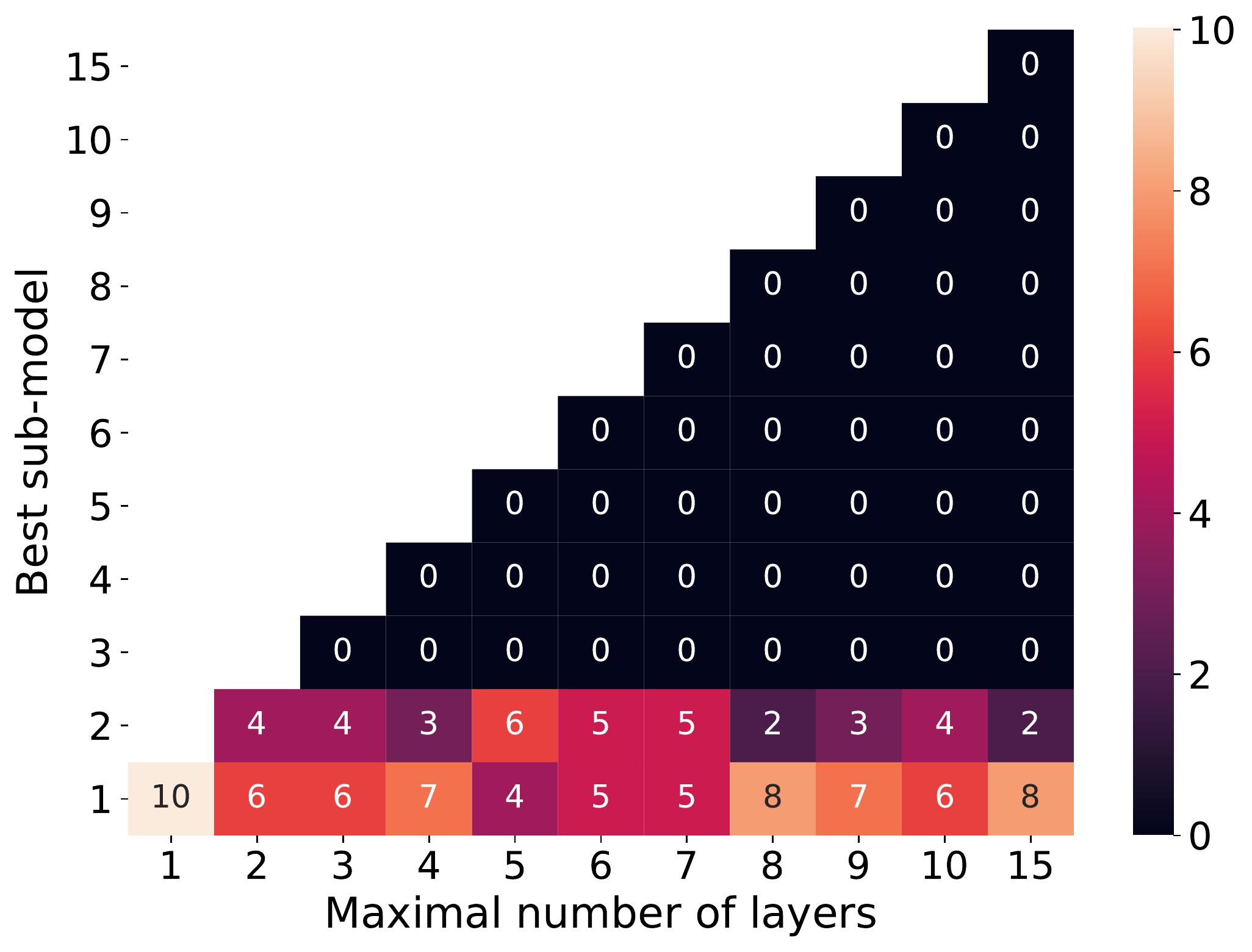}
    \includegraphics[width = 0.4\textwidth]{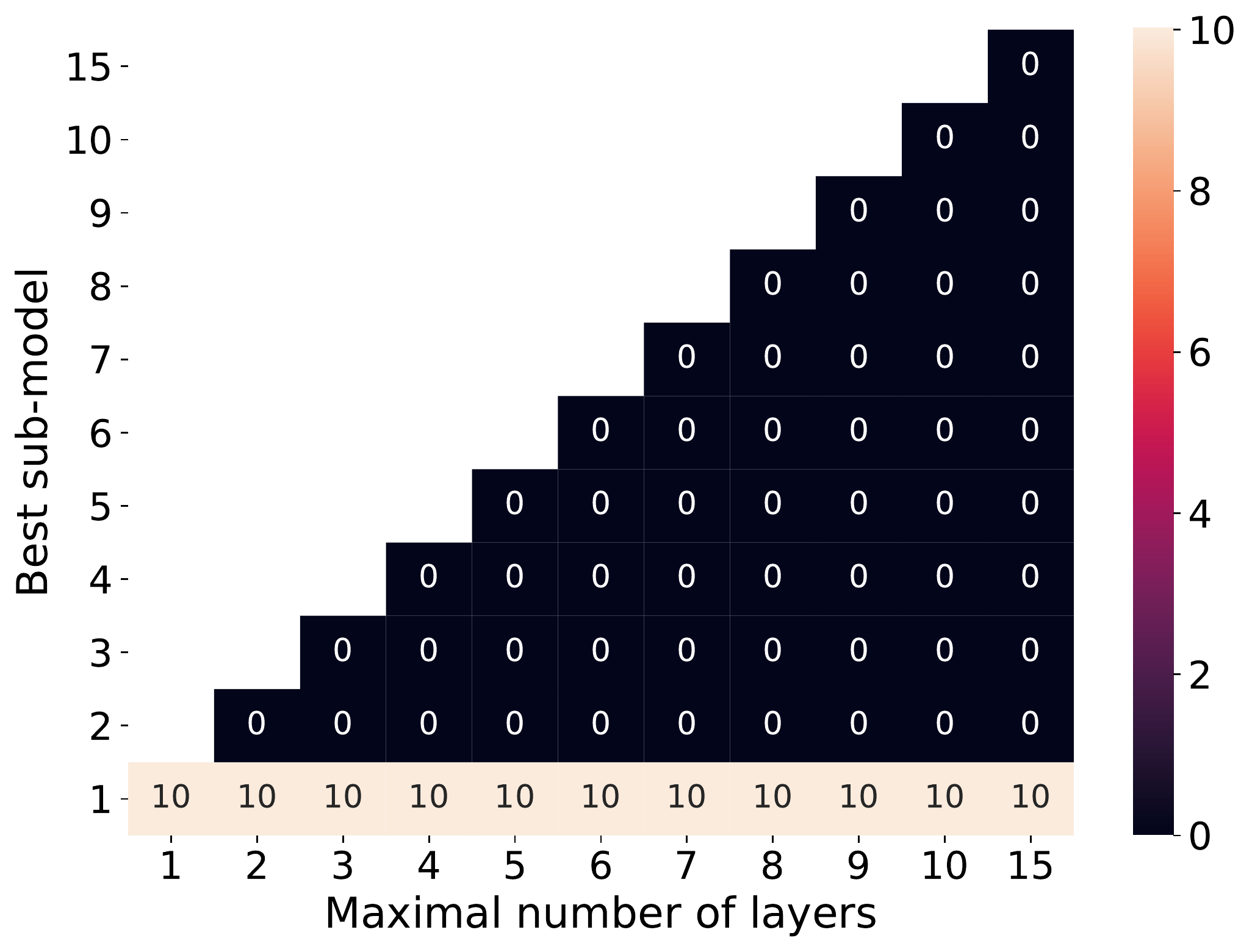}
    \caption{Yeast dataset. Heatmap counting the index of the sub-optimal model over 10 tries of a default DF with 1 (Breiman) forest per layer (left) or 4 forests (2 Breiman, 2 CRF) per layer (right), with respect to the maximal  number of layers.}
    \label{fig:yeast_heatmap}
\end{figure}

\begin{figure}[!htp]
    \centering
    \includegraphics[width = 0.4\textwidth]{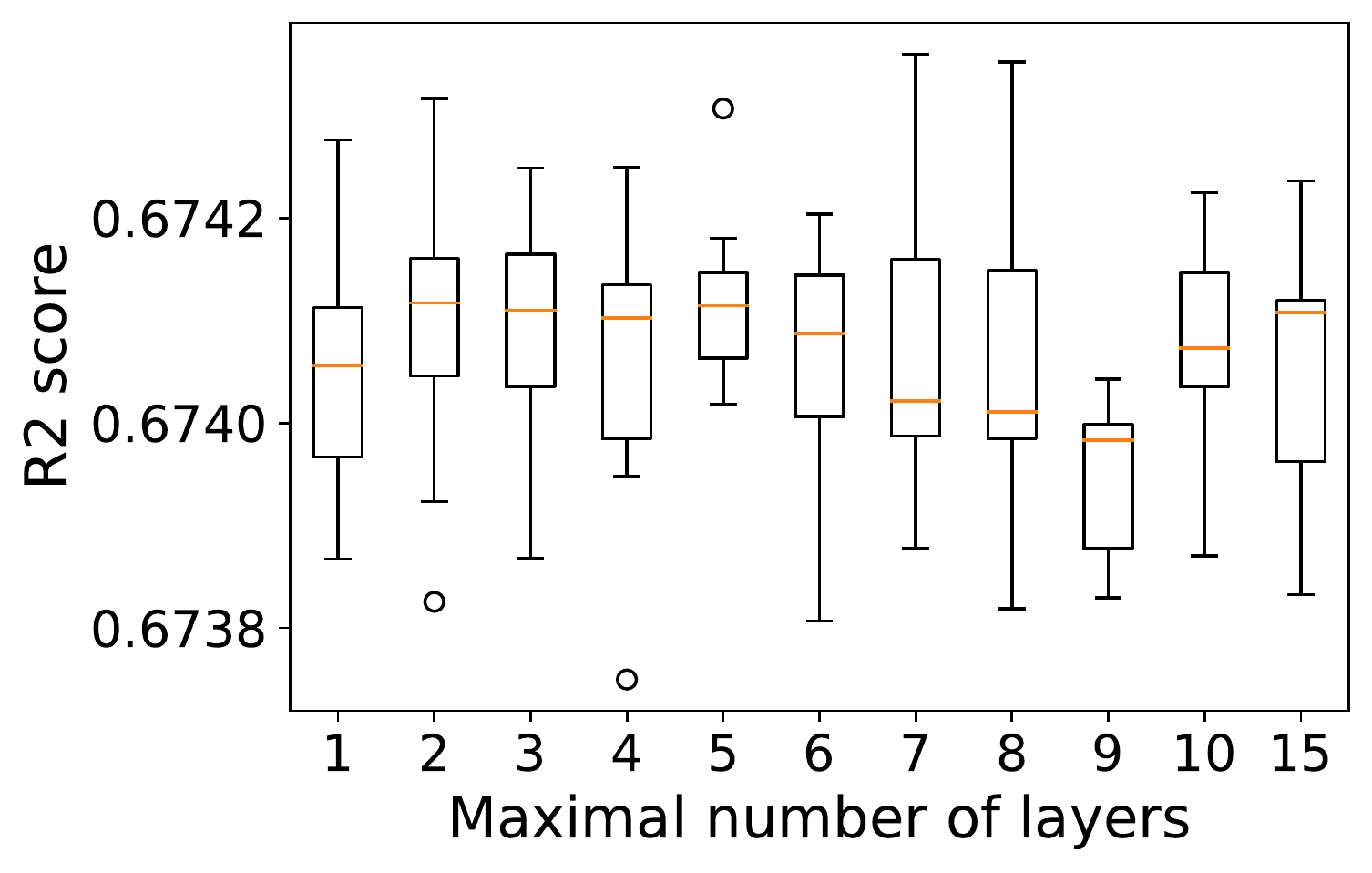}
    \includegraphics[width = 0.4\textwidth]{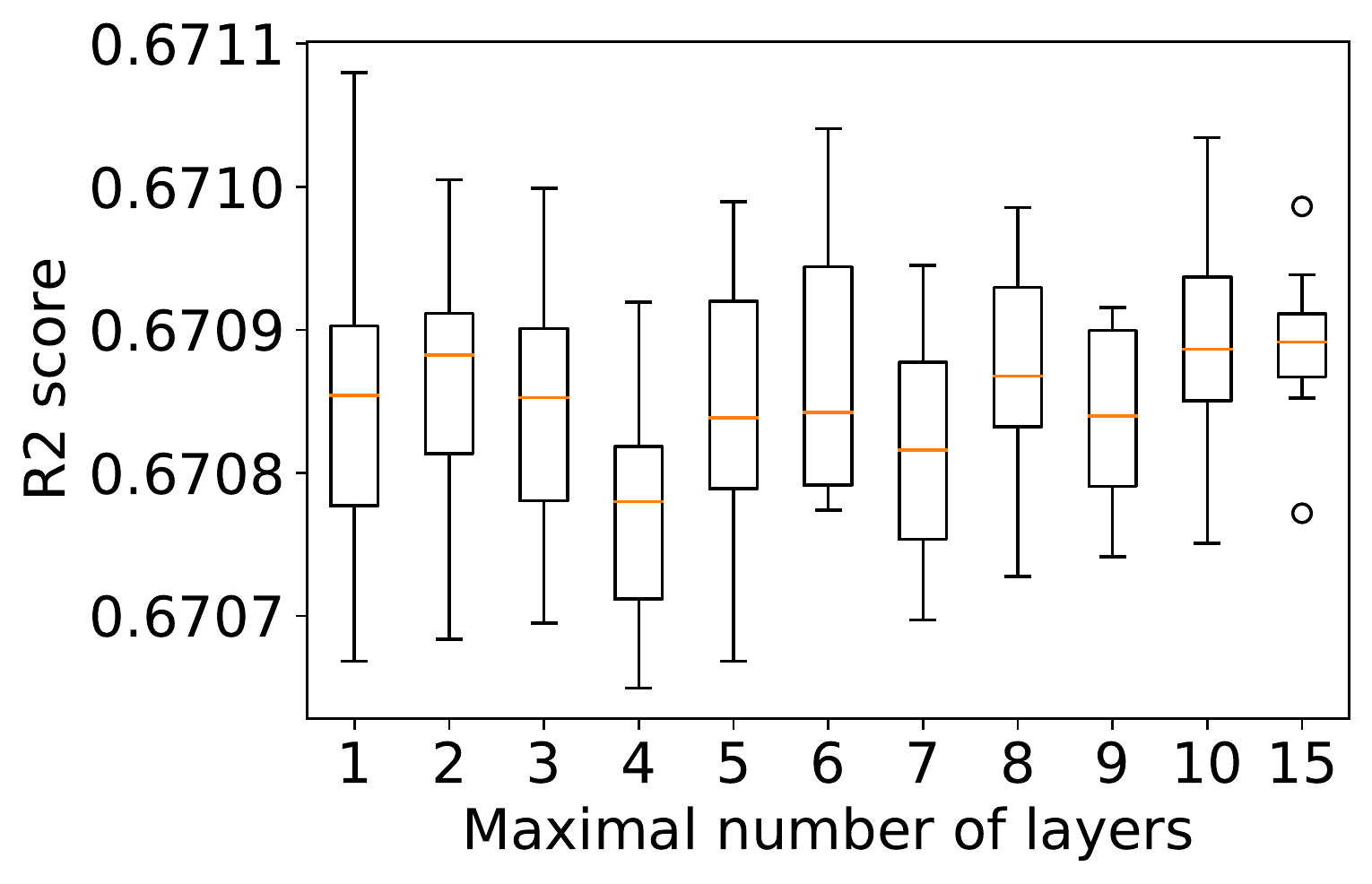}
    \caption{Airbnb dataset. Boxplots over 10 tries of the accuracy of a DF with 1 forest by layer (left) or 4 forests by layer (right), with respect to the DF maximal number of layers.}
    \label{fig:airbnb_acc_depth}
\end{figure}

\begin{figure}[!htp]
    \centering
    \includegraphics[width = 0.4\textwidth]{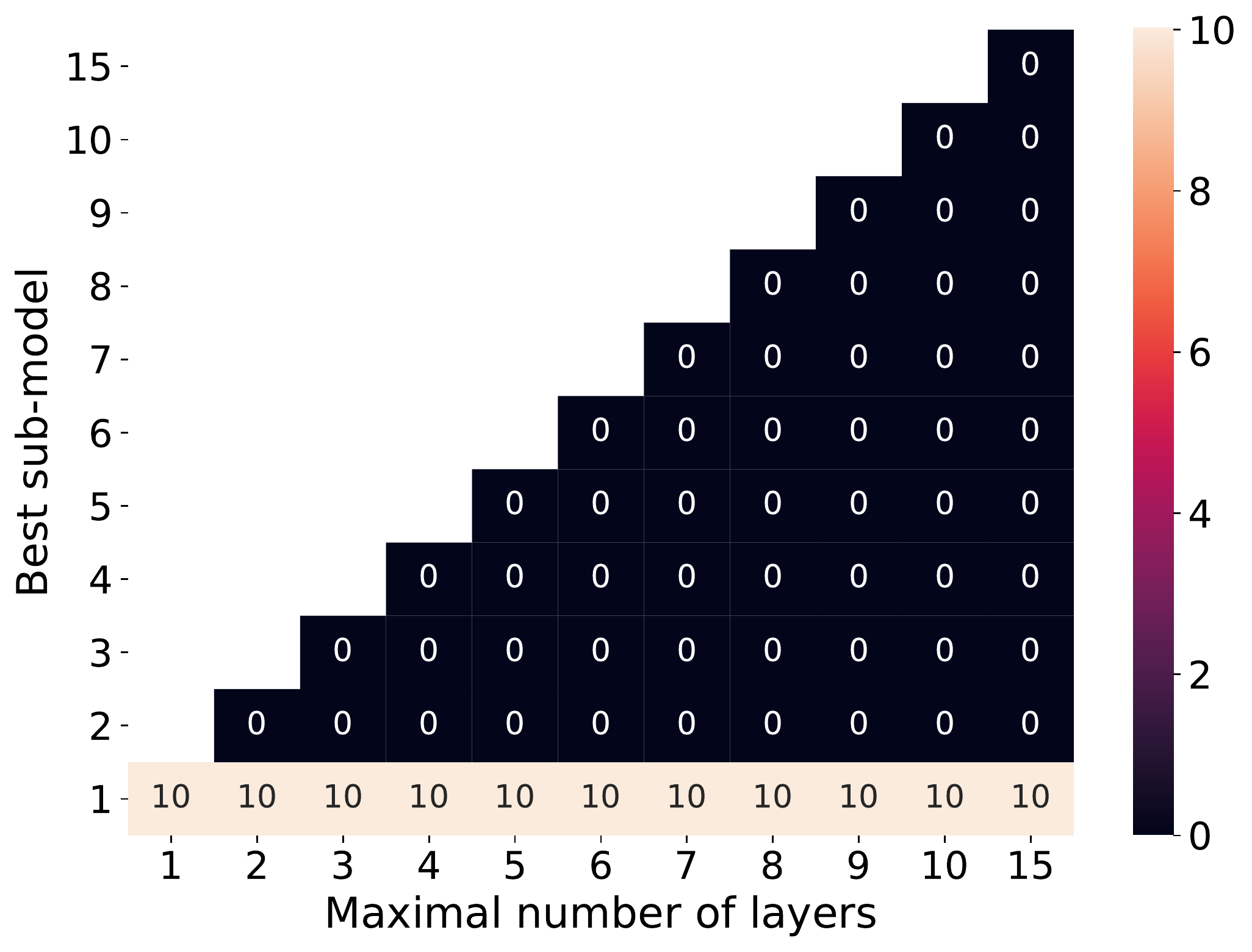}
    \includegraphics[width = 0.4\textwidth]{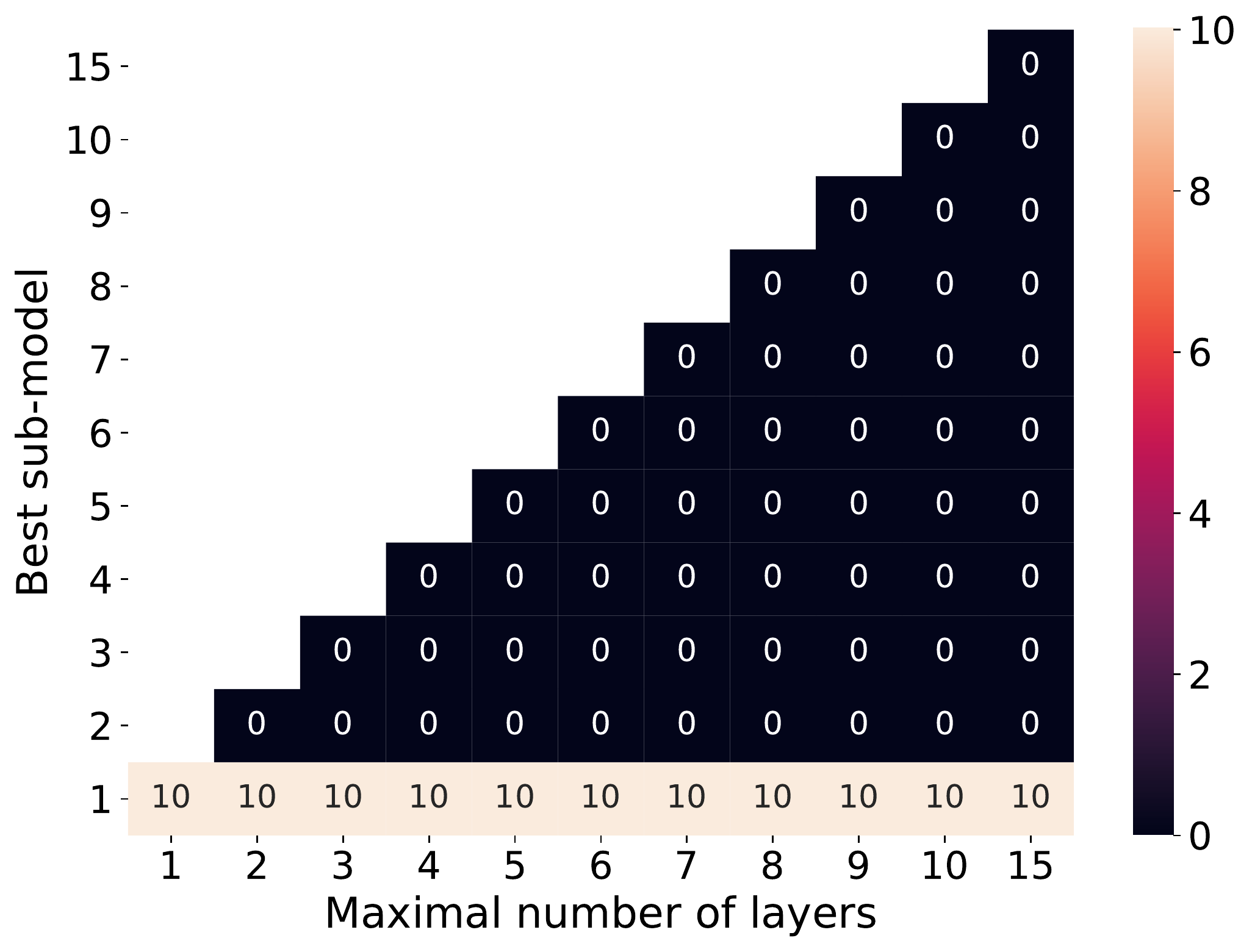}
    \caption{Airbnb datase. Heatmap counting the index of the sub-optimal model over 10 tries of a default DF with 1 (Breiman) forest per layer (left) or 4 forests (2 Breiman, 2 CRF) per layer (right), with respect to the maximal  number of layers.}
    \label{fig:airbnb_heatmap}
\end{figure}

\begin{figure}[!htp]
    \centering
    \includegraphics[width = 0.4\textwidth]{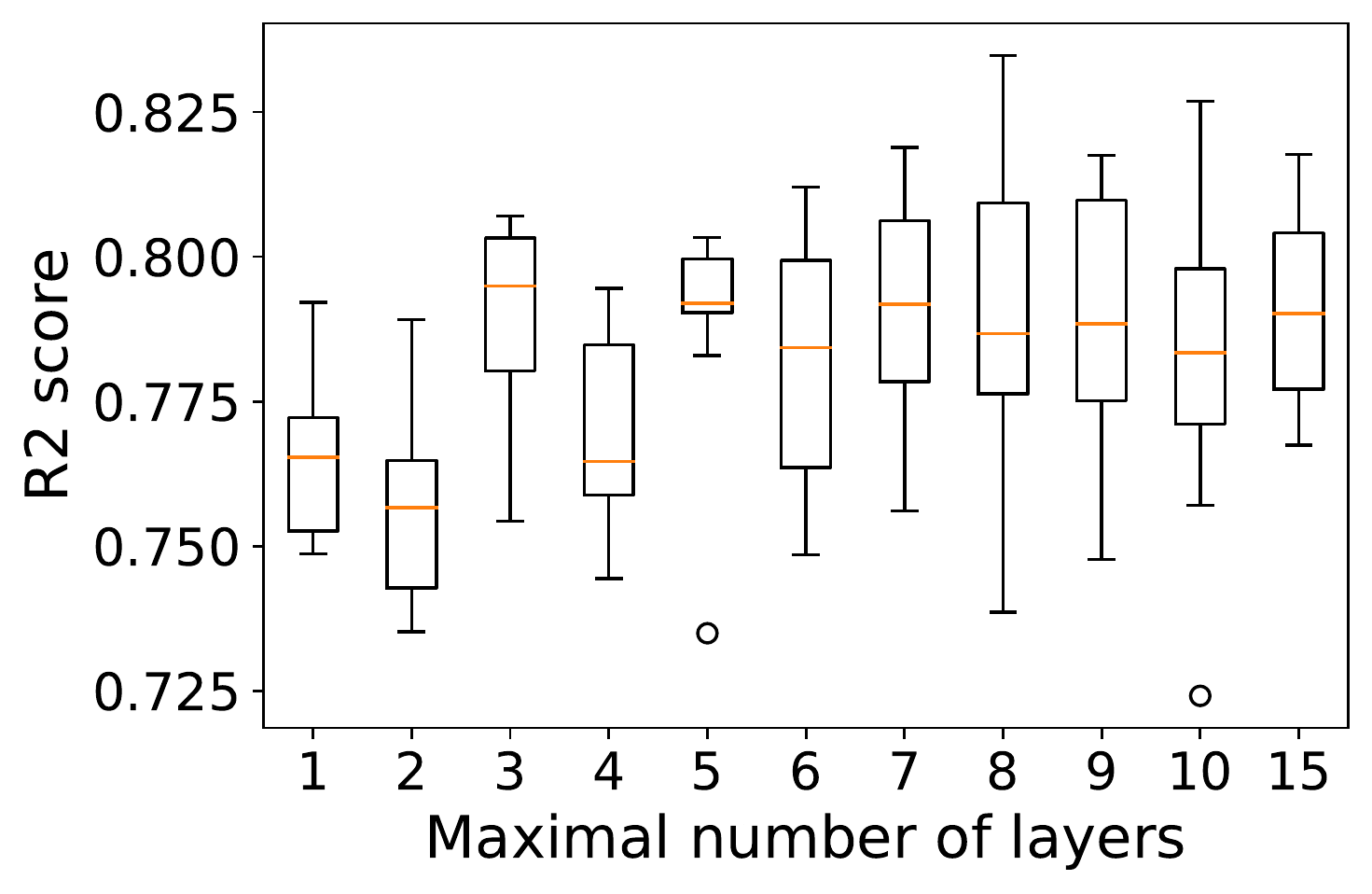}
    \includegraphics[width = 0.4\textwidth]{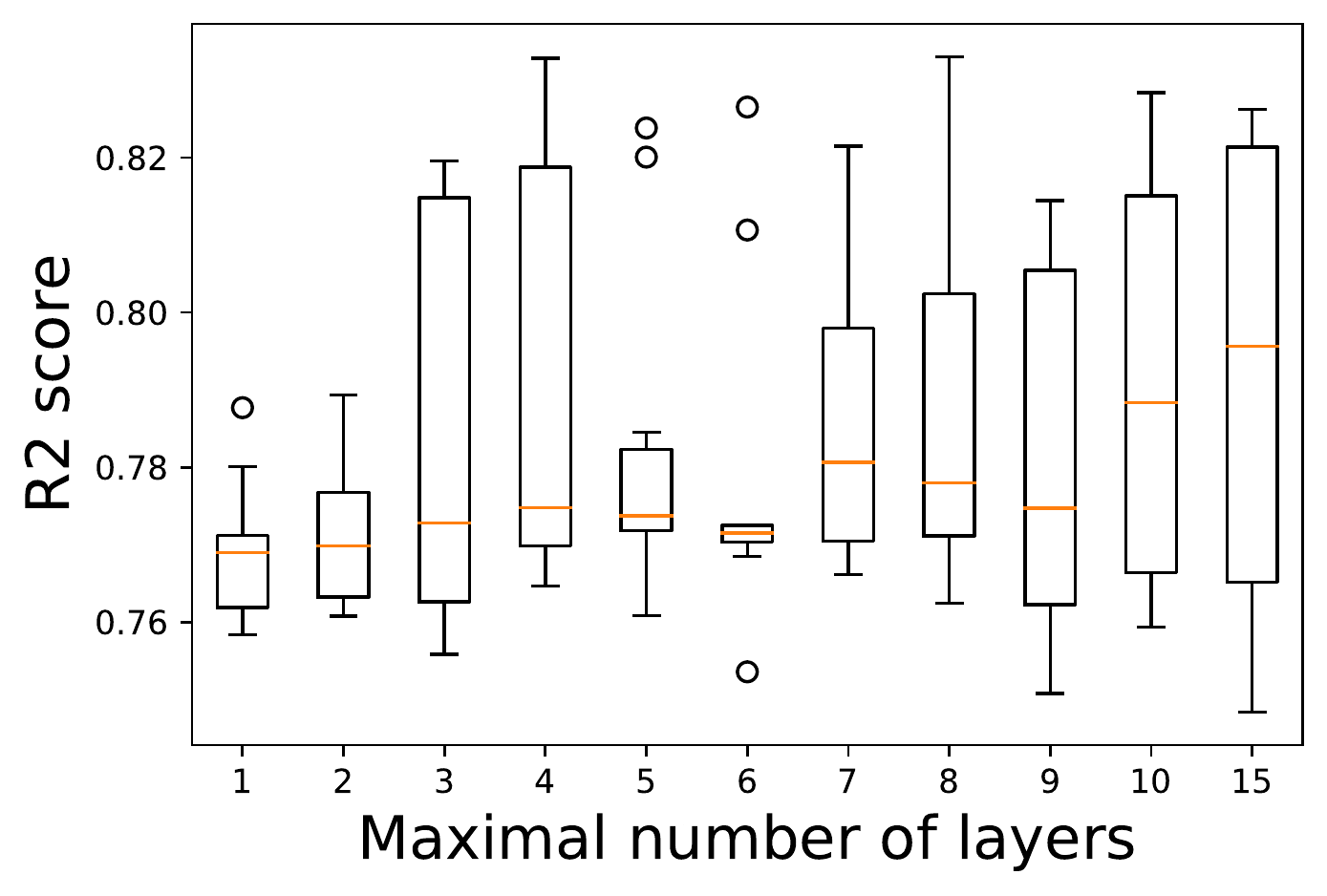}
    \caption{Housing dataset. Boxplots over 10 tries of the accuracy of a DF with 1 forest by layer (left) or 4 forests by layer (right), with respect to the DF maximal number of layers.}
    \label{fig:housing_acc_depth}
\end{figure}

\begin{figure}[!htp]
    \centering
    \includegraphics[width = 0.4\textwidth]{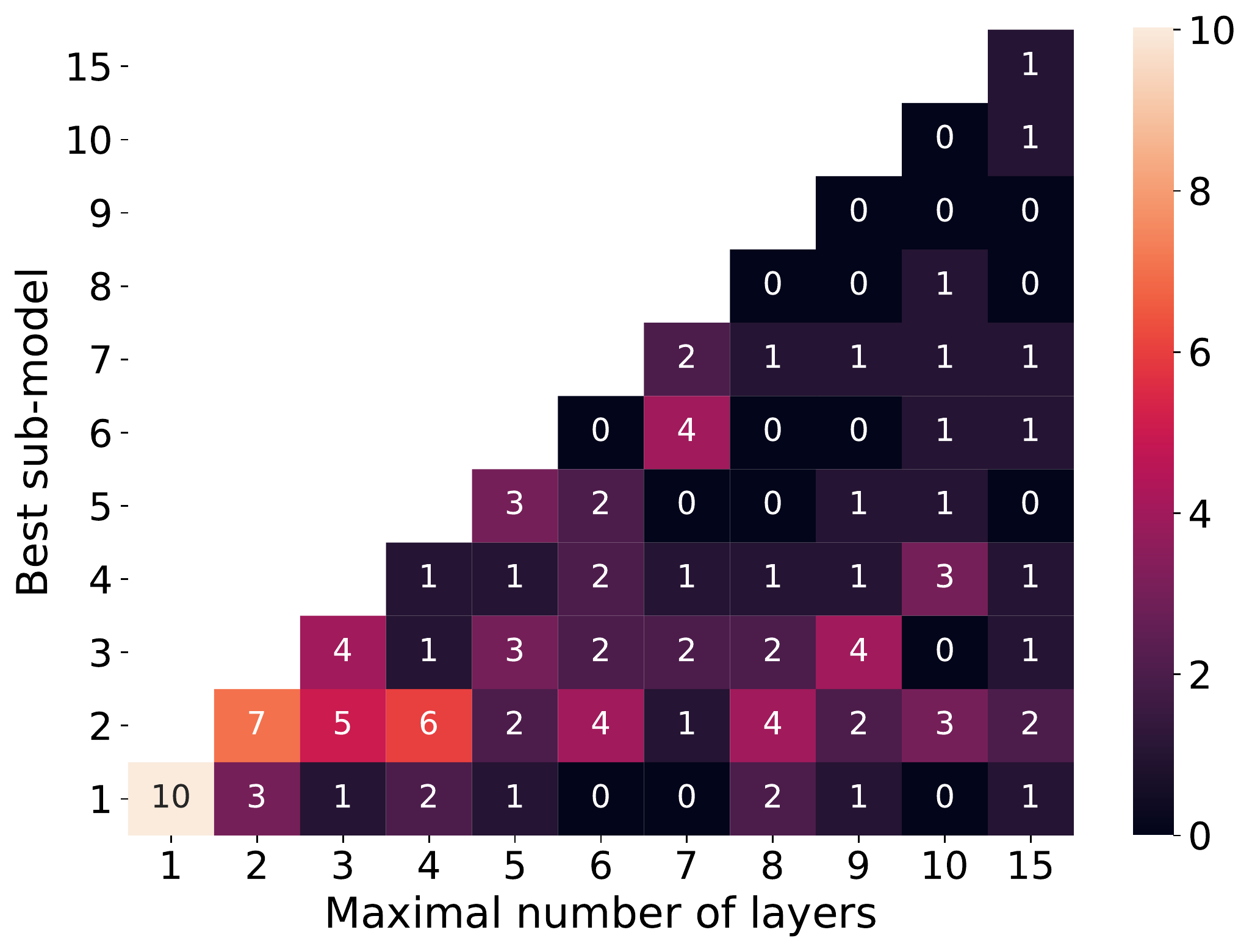}
    \includegraphics[width = 0.4\textwidth]{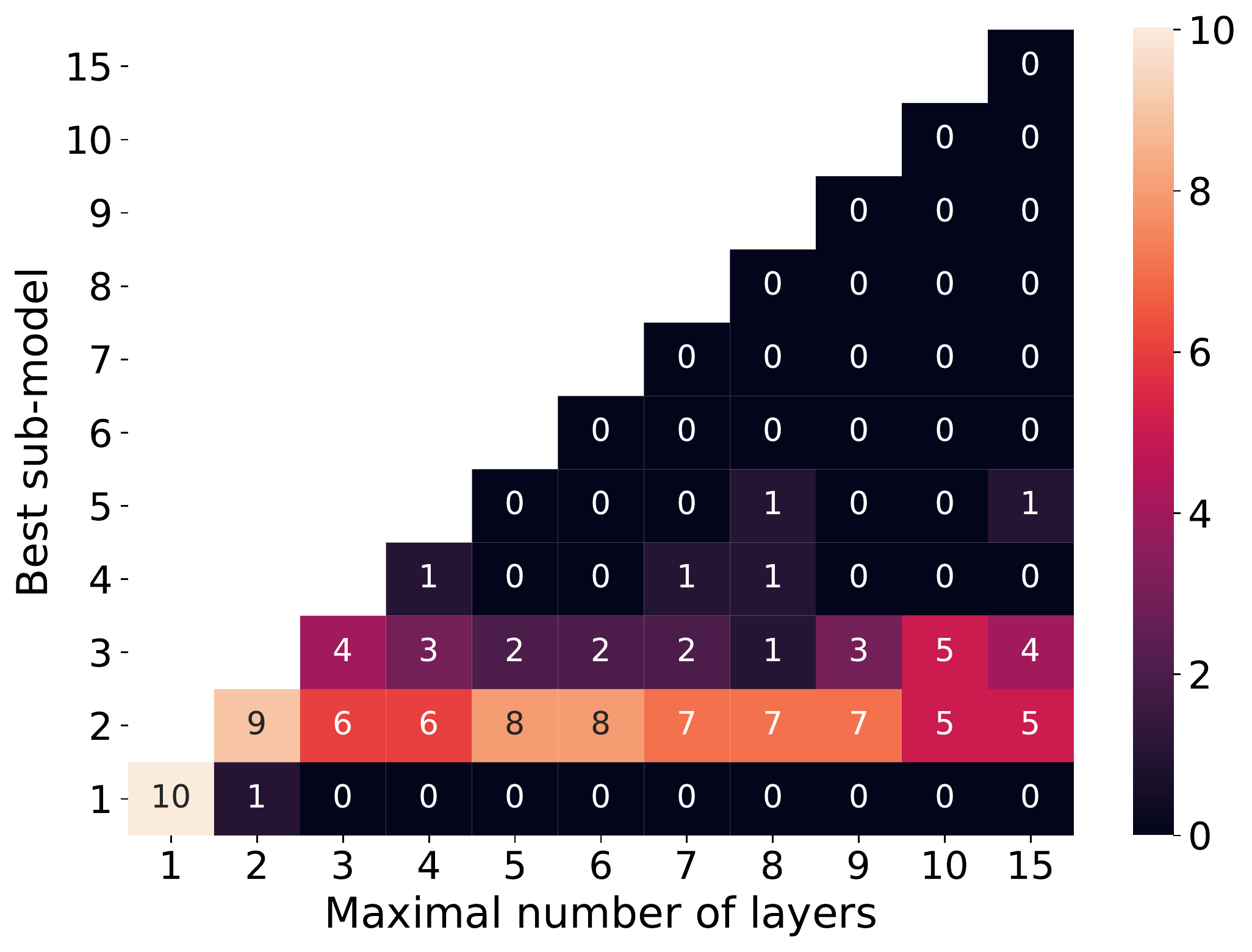}
    \caption{Housing dataset. Heatmap counting the index of the sub-optimal model over 10 tries of a default DF with 1 (Breiman) forest per layer (left) or 4 forests (2 Breiman, 2 CRF) per layer (right), with respect to the maximal  number of layers.}
    \label{fig:housing_heatmap}
\end{figure}

\clearpage

\subsection{Additional figures to Section \ref{sec:theoretical_results}}
\label{app:sec:theoretical_results}
\begin{figure}[!ht]
    \centering
    \includegraphics[width = 0.7\textwidth]{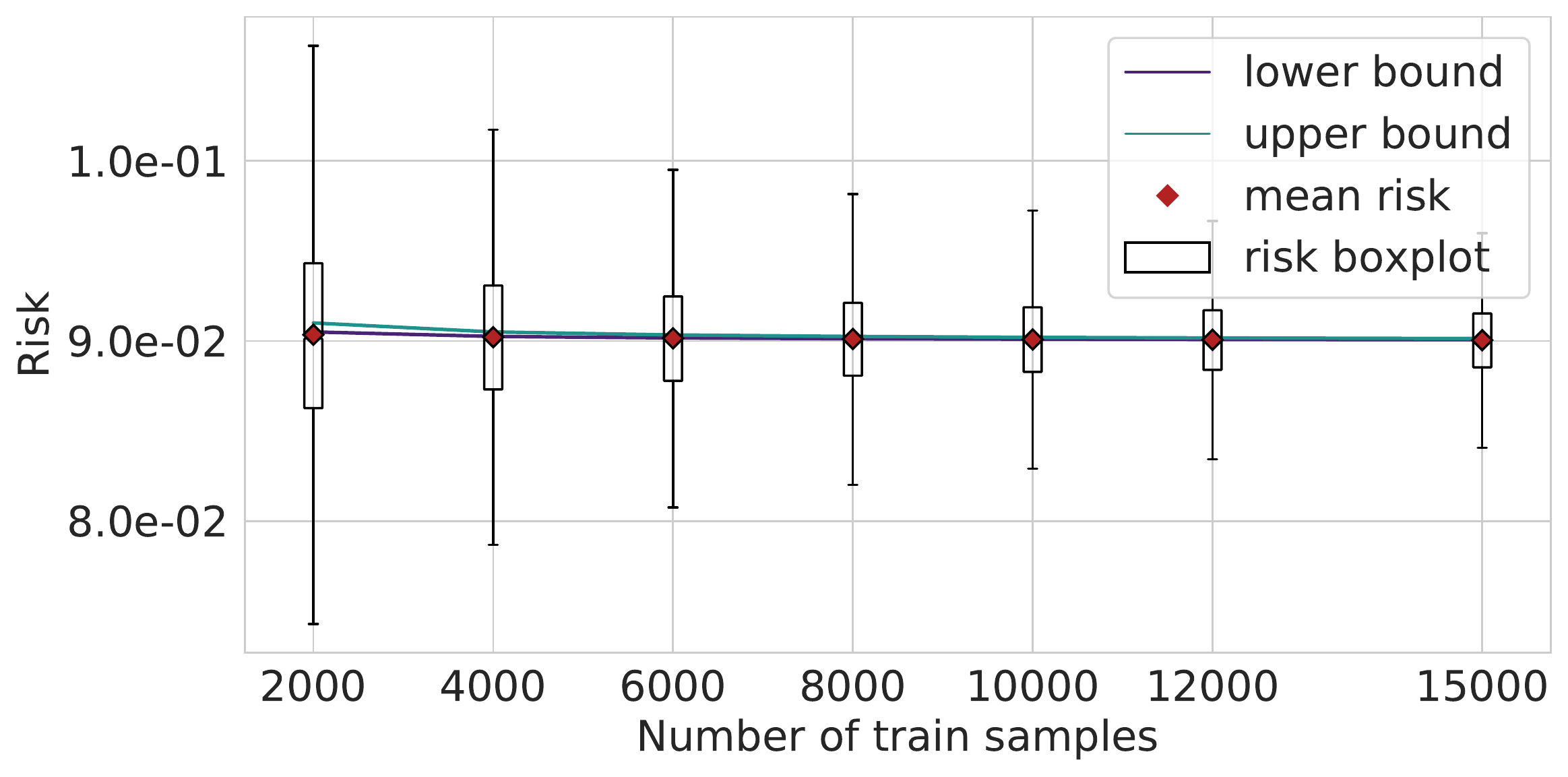}
    \caption{Illustration of the theoretical bounds for a single tree of Proposition \ref{prop:balanced_chessboard} 1. for a chessboard with parameters $k^\star=4$, $N_{\mathcal{B}}=2^{k^\star-1}$, and $p=0.8$.  
    The single tree is of depth $k=2$. 
    We draw a sample of size $n$ (x-axis), and a single tree $r_{k,0,n}$ is fitted for which the theoretical risk is evaluated. Each boxplot is built out of 20 000 repetitions. The outliers are not shown for the sake of presentation.}
    \label{fig:rtree_bound_small_depth}
\end{figure}

\begin{figure}[!ht]
    \centering
    \includegraphics[width = 0.7\textwidth]{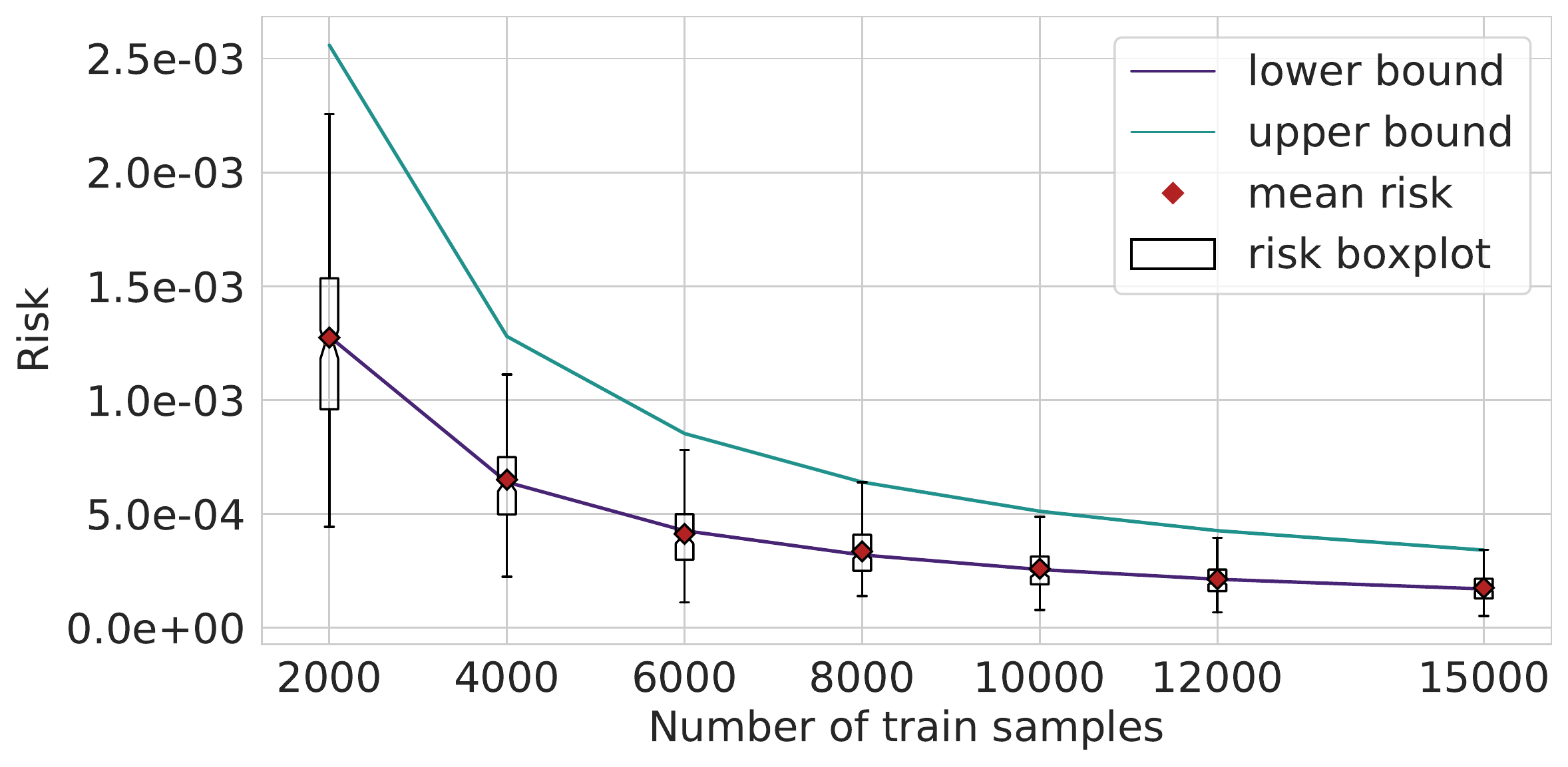}
    \caption{Illustration of the theoretical bounds for a single tree of Proposition \ref{prop:new_risk_large_k} 1. for a chessboard with parameters $k^\star=4$, $N_{\mathcal{B}}=2^{k^\star-1}$ and $p =0.8$. The single tree is of depth $k=4$. We draw a sample of size $n$ (x-axis), and a single tree $r_{k,0,n}$ is fitted for which the theoretical risk is evaluated. Each boxplot is built out of 20 000 repetitions. The outliers are not shown for the sake of presentation.}
    \label{fig:rtree_bound_perfect_depth}
\end{figure}

\begin{figure}[!ht]
    \centering
    \includegraphics[width = 0.7\textwidth]{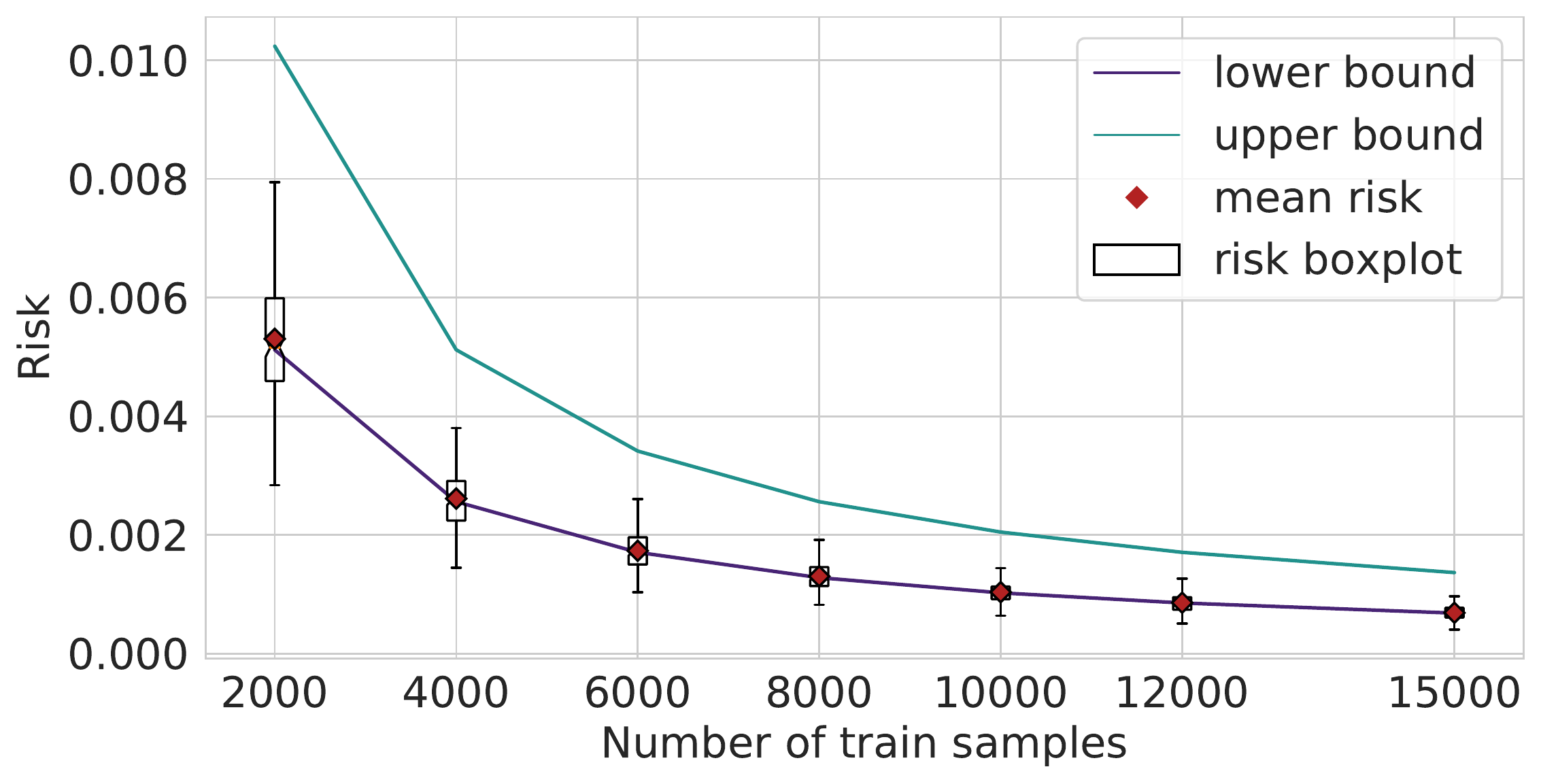}
    \caption{Illustration of the theoretical bounds for a single tree of Proposition \ref{prop:new_risk_large_k} 1. for a chessboard with parameters $k^\star=4$, $N_{\mathcal{B}}=2^{k^\star-1}$ and $p =0.8$. The single tree is of depth $k=6$. We draw a sample of size $n$ (x-axis), and a single tree $r_{k,0,n}$ is fitted for which the theoretical risk is evaluated. Each boxplot is built out of 20 000 repetitions. The outliers are not shown for the sake of presentation.}
    \label{fig:rtree_bound_big_depth}
\end{figure}

\begin{figure}[!ht]
    \centering
    \includegraphics[width = 0.7\textwidth]{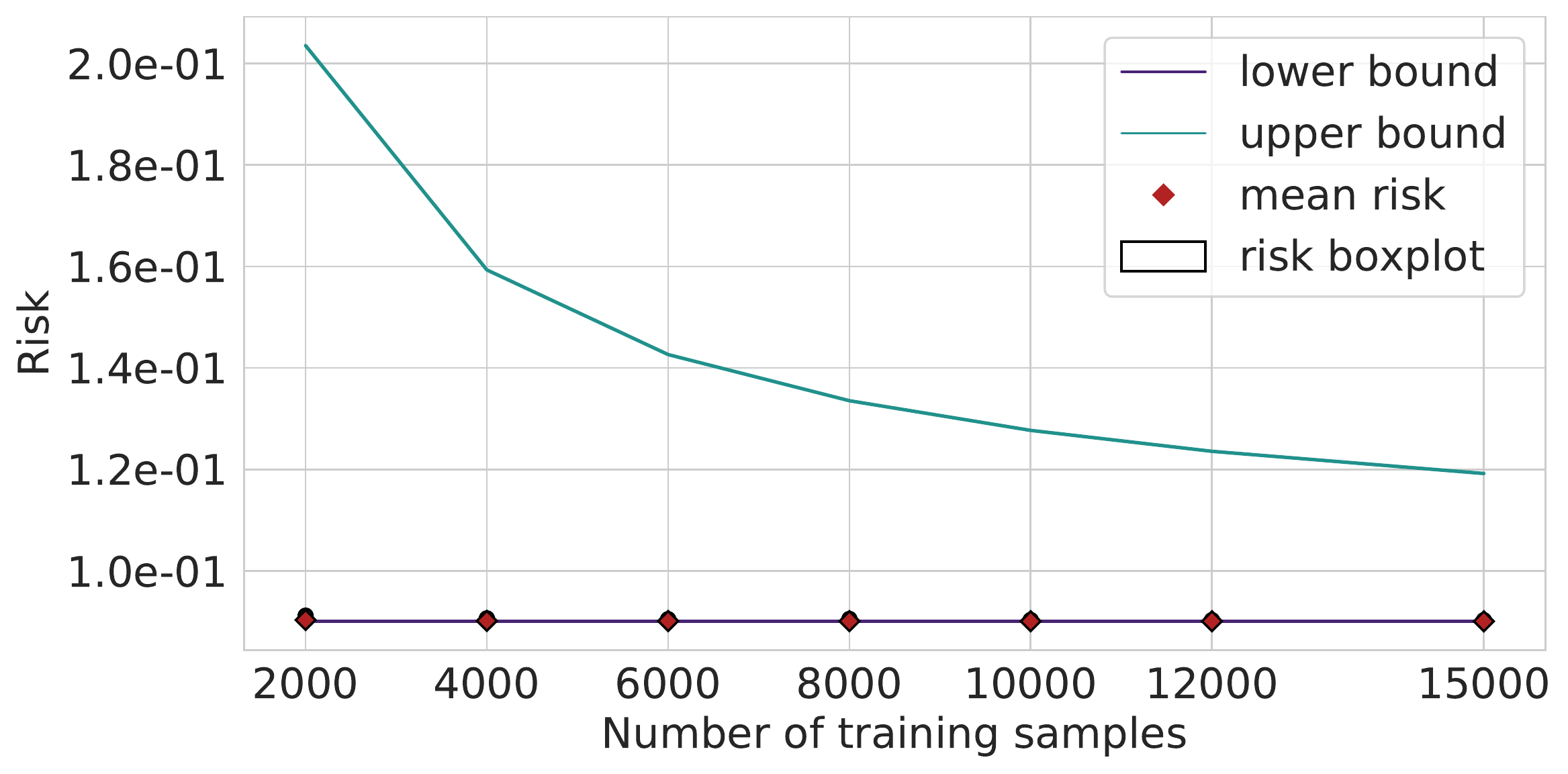}
    \caption{Illustration of the theoretical bounds for a shallow tree network of Proposition \ref{prop:balanced_chessboard} 2. for a chessboard with parameters $k^\star=4$, $N_{\mathcal{B}}=2^{k^\star-1}$ and $p =0.8$. The first-layer tree is of depth $k=2$. We draw a sample of size $n$ (x-axis), and a single tree $r_{k,0,n}$ is fitted for which the theoretical risk is evaluated. Each boxplot is built out of 20 000 repetitions. The outliers are not shown for the sake of presentation. Note that in such a case, the theoretical lower bound is constant and equal to the bias term.}
    \label{fig:etree_bound_small_depth}
\end{figure}

\begin{figure}[!ht]
    \centering
    \includegraphics[width = 0.7\textwidth]{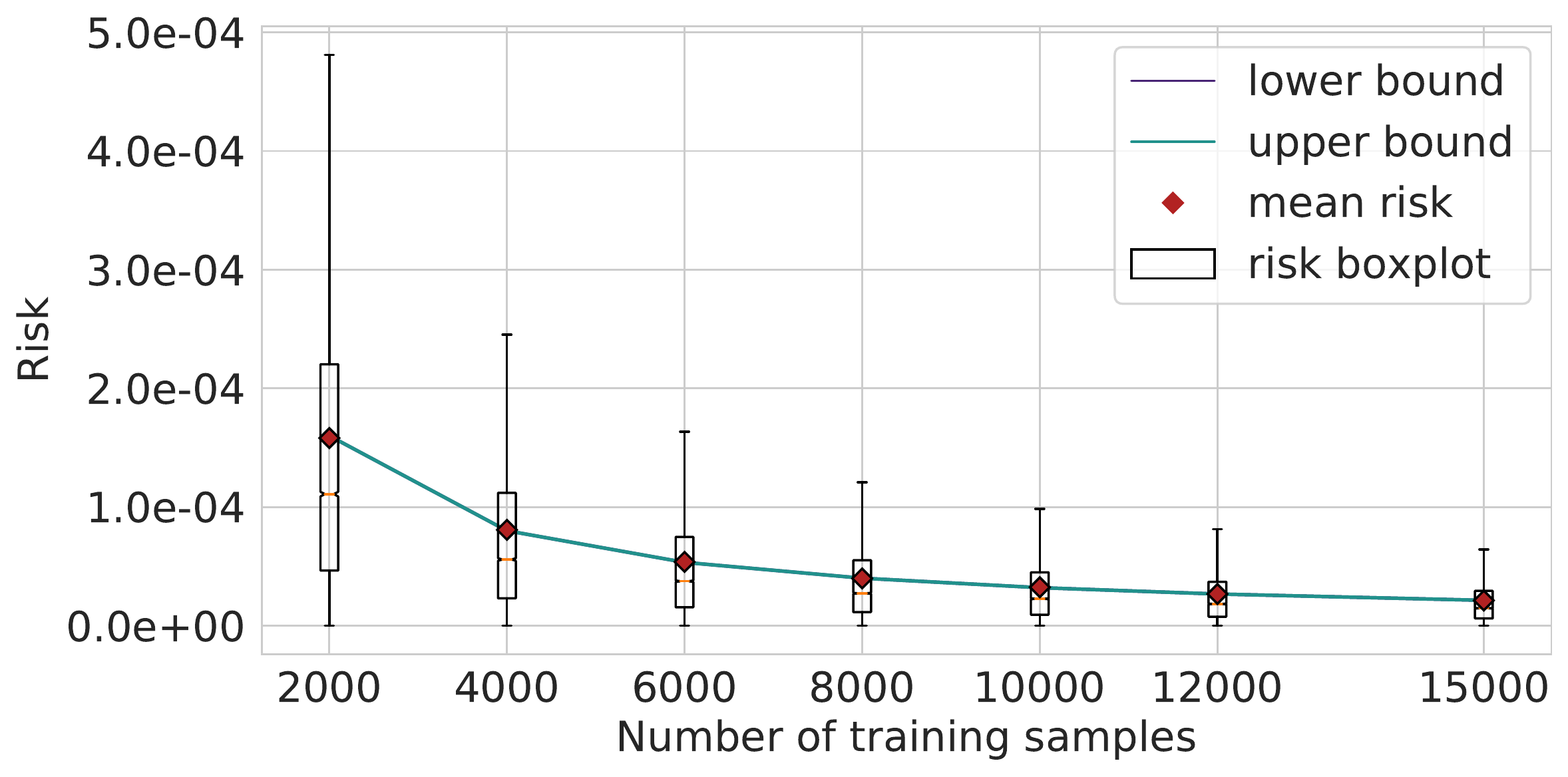}
    \caption{Illustration of the theoretical bounds for a shallow tree network of Proposition \ref{prop:new_risk_large_k} 2. for a chessboard with parameters $k^\star=4$, $N_{\mathcal{B}}=2^{k^\star-1}$ and $p =0.8$. The first-layer tree is of depth $k=4$. We draw a sample of size $n$ (x-axis), and a single tree $r_{k,0,n}$ is fitted for which the theoretical risk is evaluated. Each boxplot is built out of 20 000 repetitions. The outliers are not shown for the sake of presentation. Note that in such a case, the theoretical lower bound is constant and equal to the bias term. Note that the lower bound and the upper bound are merged.}
    \label{fig:etree_bound_perfect_depth}
\end{figure}

\begin{figure}[!ht]
    \centering
    \includegraphics[width = 0.7\textwidth]{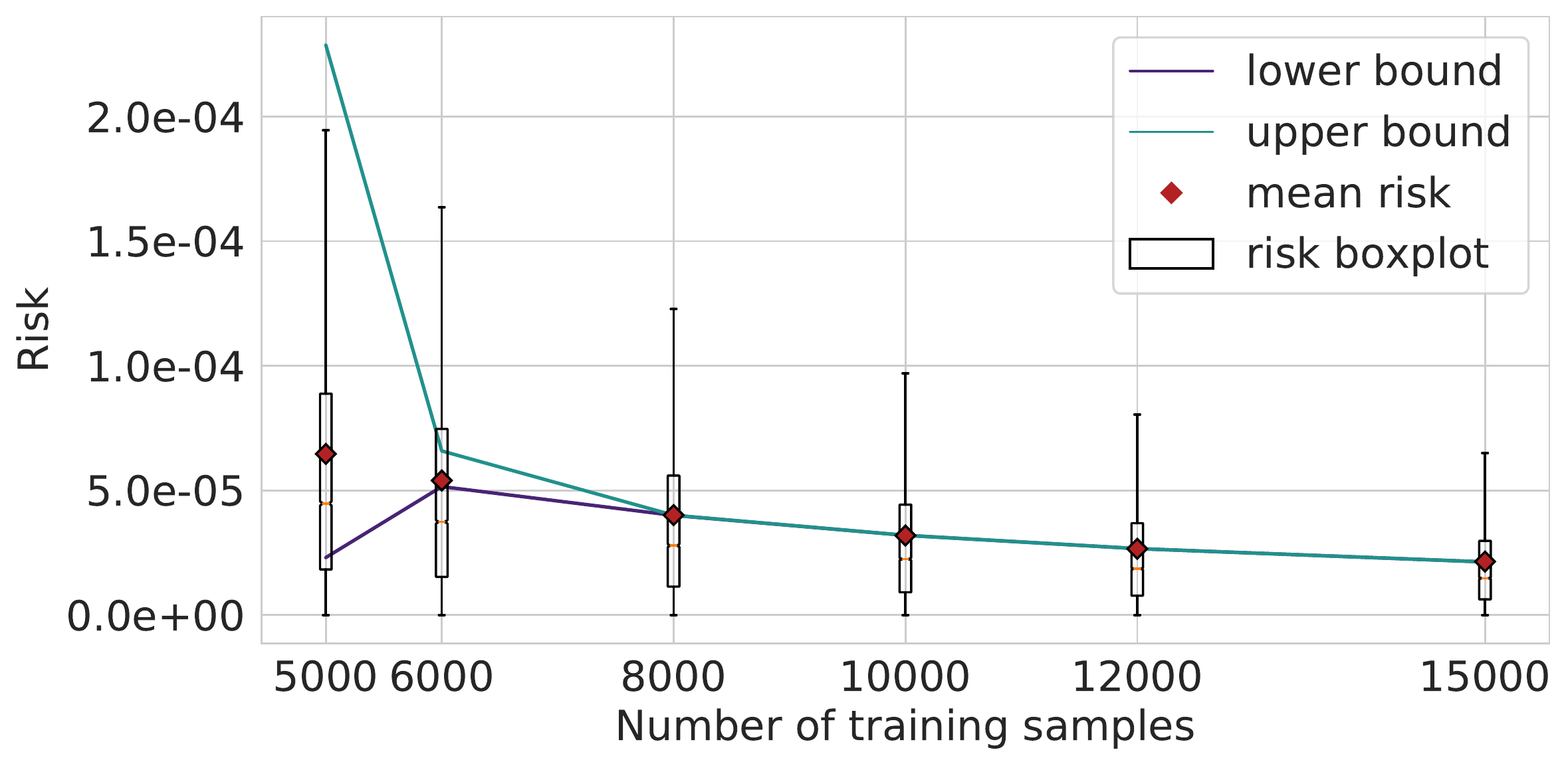}
    \caption{Illustration of the theoretical bounds for a shallow tree network of Proposition \ref{prop:new_risk_large_k} 2. for a chessboard with parameters $k^\star=4$, $N_{\mathcal{B}}=2^{k^\star-1}$ and $p =0.8$. The first-layer tree is of depth $k=6$. We draw a sample of size $n$ (x-axis), and a single tree $r_{k,0,n}$ is fitted for which the theoretical risk is evaluated. Each boxplot is built out of 20 000 repetitions. The outliers are not shown for the sake of presentation. Note that in such a case, the theoretical lower bound is constant and equal to the bias term. }
    \label{fig:etree_bound_big_depth_app}
\end{figure}

\clearpage

\section{Technical results on binomial random variables}

\begin{lemma} \label{lem:tech_res_binomial}
Let $Z$ be a binomial $\mathfrak{B}(n,p)$, $p \in (0,1], n>0$. Then, \\

\begin{enumerate}[(i)]
    \item $$\frac{1-(1-p)^n}{(n+1)p} \leq \Esp{ \frac{\ind{Z>0}}{Z}} \leq \frac{2}{(n+1)p} $$ 
    \item $$\Esp{ \frac{1}{1+Z}} \leq \frac{1}{(n+1)p} $$
    \item $$ \Esp{\frac{1}{1+Z^2}} \leq \frac{3}{(n+1)(n+2)p^2}$$ 
    \item $$\Esp{\frac{\ind{Z>0}}{\sqrt{Z}}} \leq \frac{2}{\sqrt{np}} $$
    \vspace{0.5cm}
    \item \label{lem:ahmed_formula} Let $k$ be an integer $\leq n$. Then, $$ \Esp{Z \mid Z \geq k } = np + (1-p)k \frac{\Prob{Z=k}}{\displaystyle \sum_{i=k}^n \Prob{Z=i}}.$$
    \item \label{lem:tech_res_bin_cond_exp} Let Z be a binomial $\mathfrak{B}(n,\frac{1}{2})$, $n>0$. Then, $$\Esp{Z \mid Z \leq \lfloor \frac{n+1}{2} \rfloor -1} \geq \frac{n}{2} - \left( \frac{\sqrt{n}}{\sqrt{\pi}} + \frac{2 \sqrt{2n}}{\pi \sqrt{2n+1}}  \right).$$
    
    \item \label{lem:tech_res_bin_cond_exp_b}Let Z be a binomial $\mathfrak{B}(n,\frac{1}{2})$, $n>0$. Then,
    $$\Esp{Z \mid Z \geq \lfloor \frac{n+1}{2} \rfloor} \leq \frac{n}{2} +1 + \frac{1}{\sqrt{\pi(n+1)}}.$$
\end{enumerate}

\end{lemma}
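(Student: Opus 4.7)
My plan would be to prove each of the seven items using classical binomial identities, with the cornerstone being
\[
\Esp{\frac{1}{Z+1}} = \frac{1-(1-p)^{n+1}}{(n+1)p},
\]
which follows from $\binom{n}{k}/(k+1) = \binom{n+1}{k+1}/(n+1)$ and summing the shifted binomial tail. Statement $(ii)$ is then immediate by dropping $(1-p)^{n+1}\geq 0$. For $(i)$, I would sandwich $\ind{Z>0}/Z$ between $\ind{Z>0}/(Z+1)$ and $2\,\ind{Z>0}/(Z+1)$, using $1 \leq (Z+1)/Z \leq 2$ on $\{Z \geq 1\}$. The upper half of the sandwich combined with $(ii)$ yields $\Esp{\ind{Z>0}/Z} \leq 2/((n+1)p)$. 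For the lower half, I would start from $\Esp{\ind{Z>0}/Z} \geq \Esp{1/(Z+1)} - \Prob{Z=0}$ and rearrange to match the claimed form $(1-(1-p)^n)/((n+1)p)$.

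For $(iii)$, I would first verify the pointwise estimate $1/(1+Z^2) \leq 3/((Z+1)(Z+2))$ for $Z \in \mathbb{N}$, which amounts to $(2Z-1)(Z-1) \geq 0$. The analog identity $\binom{n}{k}/((k+1)(k+2)) = \binom{n+2}{k+2}/((n+1)(n+2))$ then gives $\Esp{1/((Z+1)(Z+2))} \leq 1/((n+1)(n+2)p^2)$, and the claim follows. For $(iv)$, I would use the pointwise estimate $1/\sqrt{Z} \leq 2/\sqrt{Z+1}$ on $\{Z \geq 1\}$ (equivalent to $Z \geq 1/3$), then Jensen's inequality for the concave square root, $\Esp{1/\sqrt{Z+1}} \leq \sqrt{\Esp{1/(Z+1)}}$, and finally $(ii)$.

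For $(v)$, I would derive Ahmed's formula in the standard way: write $\Esp{Z \ind{Z \geq k}} = \sum_{i \geq k} i \binom{n}{i} p^i (1-p)^{n-i}$ and use $i\binom{n}{i} = n\binom{n-1}{i-1}$ to rewrite this as $np\,\Prob{Z' \geq k-1}$ with $Z' \sim \mathfrak{B}(n-1,p)$. Pascal's identity $\binom{n}{i} = \binom{n-1}{i}+\binom{n-1}{i-1}$ then produces the relation $\Prob{Z'\geq k-1} = \Prob{Z\geq k} + \frac{(1-p)k}{np}\Prob{Z=k}$; dividing through by $\Prob{Z \geq k}$ gives the claim. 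For $(vi)$--$(vii)$, where $p=1/2$, I would combine symmetry $\Esp{Z}=n/2$ with the total expectation decomposition
\[
\frac{n}{2} = \Prob{Z \leq m-1}\Esp{Z \mid Z \leq m-1} + \Prob{Z \geq m}\Esp{Z \mid Z \geq m},
\]
where $m = \lfloor (n+1)/2\rfloor$, and plug in $(v)$ to get
\[
\Esp{Z \mid Z \leq m-1} = \frac{n}{2} - \frac{m\,\Prob{Z=m}}{2\,\Prob{Z \leq m-1}}, \qquad \Esp{Z \mid Z \geq m} = \frac{n}{2} + \frac{m\,\Prob{Z=m}}{2\,\Prob{Z \geq m}}.
\]
The estimates then reduce to explicit bounds on the central binomial coefficient $\binom{n}{m}/2^n$ via Stirling, together with lower bounds on the tail probabilities ($1/2$ when $n$ is odd by symmetry; $(1-\Prob{Z=n/2})/2$ when $n$ is even, and $\Prob{Z\geq m}\geq 1/2$ in both cases for $(vii)$).

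The main obstacle will be producing the exact constants stated in $(vi)$--$(vii)$, namely $\sqrt{n/\pi}+2\sqrt{2n}/(\pi\sqrt{2n+1})$ and $1+1/\sqrt{\pi(n+1)}$: this requires carefully tracking both parities of $n$ and using refined (non-asymptotic) Stirling inequalities on $\binom{n}{\lfloor n/2\rfloor}/2^n$ rather than the coarse estimate $\sim \sqrt{2/(\pi n)}$. A secondary technical point is getting the lower bound in $(i)$ in exactly the stated form $(1-(1-p)^n)/((n+1)p)$: the naive monotonicity inequality $\ind{Z>0}/Z \geq \ind{Z>0}/(Z+1)$ is just shy of this, so a sharper sum manipulation (for instance, splitting off the $k=1$ term and using $1/k \geq 1/(n+1)$ with a weighted rearrangement of the binomial sum) will be needed.
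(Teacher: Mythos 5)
Your overall architecture is sound and, where it differs from the paper, it is arguably cleaner. The paper does not prove (ii), (iii), or the upper half of (i) at all (it cites Lemma~11 of \cite{biau2012}), nor the lower half of (i) (cited from \cite{cribari2000note}), so your explicit derivations via $\Esp{1/(Z+1)}=\frac{1-(1-p)^{n+1}}{(n+1)p}$ and the pointwise bound $1/(1+Z^2)\leq 3/((Z+1)(Z+2))$ are a genuine addition. For (iv) you take a genuinely different and much shorter route: the paper proves $\Esp{2/\sqrt{1+Z}}\leq 2/\sqrt{np}$ through an integral representation of inverse moments, a change of variables, and a delicate bound on the resulting integrand, whereas your combination of $\ind{Z>0}/\sqrt{Z}\leq 2/\sqrt{Z+1}$ with Jensen applied to $\sqrt{\cdot}$ and item (ii) gives $2/\sqrt{(n+1)p}\leq 2/\sqrt{np}$ in three lines; this is correct and strictly simpler. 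For (vi)--(vii) you route through Ahmed's formula (v) and the law of total expectation, while the paper computes the truncated sums directly and splits on the parity of $n$; both reduce to the same non-asymptotic bound $\binom{m}{m/2}\leq 2^m/\sqrt{\pi(m/2+1/4)}$ and series bounds on $1/(1\pm\Prob{Z=n/2})$, and your identities $\Esp{Z\mid Z\leq m-1}=\frac{n}{2}-\frac{m\Prob{Z=m}}{2\Prob{Z\leq m-1}}$ and $\Esp{Z\mid Z\geq m}=\frac{n}{2}+\frac{m\Prob{Z=m}}{2\Prob{Z\geq m}}$ are the right starting point. Your derivation of (v) itself, via $i\binom{n}{i}=n\binom{n-1}{i-1}$ and Pascal's identity, is correct.

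The one genuine gap is the lower bound in (i), and it is worth being precise about why your named starting point cannot work. From $\ind{Z>0}/Z\geq \ind{Z>0}/(Z+1)$ you get $\Esp{\ind{Z>0}/Z}\geq \frac{1-(1-p)^{n+1}}{(n+1)p}-(1-p)^n$, and a direct computation shows this quantity equals $\frac{1-(1-p)^n}{(n+1)p}-\frac{n(1-p)^n}{n+1}$, i.e.\ it falls short of the target by exactly $\frac{n(1-p)^n}{n+1}$; no ``rearrangement'' recovers the claim. You flag this yourself, but the replacement you sketch (splitting off the $k=1$ term and a weighted rearrangement) is not a proof. The clean argument, which is essentially the one in \cite{cribari2000note}, is to write $\frac{1}{k}=\int_0^1 t^{k-1}\,dt$ so that $\Esp{\ind{Z>0}/Z}=\int_0^1 t^{-1}\left((1-p+pt)^n-(1-p)^n\right)dt$, and then compare with $\frac{1-(1-p)^n}{(n+1)p}=\int_0^1\left((1-p+pt)^n-(1-p)^n\right)\frac{dt}{1-(1-p)^n}\cdot\frac{1-(1-p)^n}{(n+1)p}$ or, more directly, bound $t^{-1}\geq 1$ and use $\int_0^1(1-p+pt)^n dt=\frac{1-(1-p)^{n+1}}{(n+1)p}$ together with a one-line comparison; either way an integral representation (not a termwise inequality) is needed. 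As for (vi)--(vii), you correctly identify that the remaining work is bookkeeping of constants over the two parities; that is exactly what the paper's proof spends its effort on, so nothing conceptual is missing there.
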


\bigskip

\begin{proof}
The reader may refer to the Lemma 11 of \cite{biau2012} to see the proof of (ii), (iii) and the right-hand side of (i). The left-hand side inequality of (i) can be found in the Section 1 of \cite{cribari2000note}. 

%The left inequality of (i) relies on Jensen inequality :
%\begin{align*}
%    \Esp{ \frac{\ind{Z>0}}{Z}} &= \Prob{Z>0} \Esp{\frac{1}{Z} | Z>0 }
%\end{align*}

%with $\Prob{Z>0} = 1 - (1-p)^n$ and :
%\begin{align*}
%    \Esp{ \frac{1}{Z} | Z>0} &\geq \frac{1}{\Esp{Z | Z >0}} \\
%    &= \frac{1}{\displaystyle \sum_{i=1}^n i \cdot \Prob{Z=i | Z>0}} \\
%    &= \frac{\Prob{Z>0}}{\displaystyle \sum_{i=1}^n i \cdot \Prob{Z=i \cap %Z>0}} \\
%    &= \frac{\Prob{Z>0}}{np}
%\end{align*}

\underline{\textbf{(iv)}} The first two inequalities rely on simple analysis : 

\begin{align*}
     \Esp{\frac{\ind{Z>0}}{\sqrt{Z}}} &\leq \Esp{\frac{2}{1+\sqrt{Z}}} \\
     &\leq \Esp{\frac{2}{\sqrt{1+Z}}}.
\end{align*}

To go on, we adapt a transformation from Section 2 of \cite{cribari2000note} to our setting:
\begin{align*}
    \Esp{\frac{2}{\sqrt{1+Z}}} &= \frac{2}{\Gamma(1/2)} \int_0^\infty \frac{e^{-t}}{\sqrt{t}} \Esp{e^{-tZ}} dt  \\
    &=\frac{2}{\Gamma(1/2)} \int_0^\infty \frac{e^{-t}}{\sqrt{t}} (1-p+pe^{-t})^n dt \\
    &= \frac{2}{\Gamma(1/2)} \int_0^{-\log(1-p)} g(r) e^{-rn} dr,
\end{align*}
with $g(r) := p^{-1} e^{-r}\left(-\log(1 + \frac{1-e^{-r}}{p})\right)^{-1/2}$ after the change of variable $(1-p+pe^{-t})= e^{-r}$.

Let's prove that 
\begin{align}
    \label{eq:tech_lem_6}
    g(r) \leq \frac{1}{\sqrt{rp}} .
\end{align}
It holds that $\log(1+x) \leq \frac{2x}{2 +x}$ when $-1 < x \leq 0$, therefore
\begin{align*}
    g(r)^2 &=  p^{-2} e^{-2r}\left(-\log(1 + \frac{1-e^{-r}}{p})\right)^{-1} 
    \leq p^{-2} e^{-2r} \frac{2p + e^{-r} -1}{2(1-e^{-r})}.
\end{align*}
Furthermore,
\begin{align*}
    2p &\geq 2p \left( e^{-r} + re^{-2r} \right) \\
    &\geq 2p \left( e^{-r} + re^{-2r} \right) + r \left(e^{-3r} - e^{-2r} \right) \\
    &= r e^{-2r} (2p -1 + e^{-r}) + 2pe^{-r},
\end{align*}
and then dividing by $rp^2$, 
\begin{align*}
    \frac{2}{rp}(1 - e^{-r}) &\geq  \frac{1}{p^2} e^{-2r} (2p -1 + e^{-r}) 
    \qquad  \iff \qquad 
    \frac{1}{rp} \geq p^{-2} e^{-2r} \frac{2p + e^{-r} -1}{2(1-e^{-r})},
\end{align*}
which proves \eqref{eq:tech_lem_6}.

Equation \eqref{eq:tech_lem_6} leads to  
\begin{align}
    \Esp{\frac{2}{\sqrt{1+Z}}} \leq \frac{2}{\Gamma(1/2)} \int_0^{-\log(1-p)} \frac{1}{\sqrt{pr}} e^{-rn} dr .
\end{align}
Note that $\Gamma(1/2) = \sqrt{\pi}$.
After the change of variable $ u = \sqrt{rn}$, we obtain : 
\begin{align*}
    \Esp{\frac{2}{\sqrt{1+Z}}} &\leq \frac{4}{\sqrt{np \pi}} \int_0^{\sqrt{- n \log(1-p)}} e^{-u^2} du 
    \leq \frac{4}{\sqrt{np \pi}} \int_0^\infty e^{-u^2} du 
    \leq \frac{2}{\sqrt{np}}
\end{align*}
which ends the proof of (iv). 

\underline{\textbf{(v).(a)}}
We recall that $p=1/2$. An explicit computation of the expectation yields :

\begin{align*}
    \Esp{Z \mid Z < \lfloor \frac{n+1}{2} \rfloor} &= \frac{1}{\Prob{Z \leq \lfloor \frac{n+1}{2}\rfloor - 1 }} \displaystyle \sum_{i=1}^{\lfloor \frac{n+1}{2}\rfloor -1} \frac{i}{2^n} \binom{n}{i} \\
    &= \frac{2}{1} \frac{n}{2^n} \left( \frac{2^n}{2} - \frac{1}{2} \binom{n-1}{\frac{n-1}{2}}\right) \ind{n \% 2 = 1} \\
    &+ \frac{n}{\frac{1}{2}- \frac{1}{2}\Prob{Z = n/2}} \left(\displaystyle \sum_{i=1}^{n/2} i \binom{n}{i} - \frac{n}{2}\binom{n}{n/2} \right) \frac{\ind{n \% 2 = 0}}{2^n} \\
    &= n \left( \frac{1}{2} - \frac{1}{2^n} \binom{n-1}{\frac{n-1}{2}}\right) \ind{n \% 2 = 1} + \frac{n \cdot \ind{n \% 2 = 0}}{1- \Prob{Z = n/2}} \left( \frac{1}{2} - \frac{1}{2^n} \binom{n}{n/2} \right).
\end{align*}

We use that for all $m \in 2 \mathbb{N}^*$, 
\begin{align}
\label{ineq:binomial_coeff_pi}  
\binom{m}{m/2} \leq \frac{2^m}{\sqrt{\pi (m/2 + 1/4)}} 
\end{align}
and 
$$ \frac{1}{1-\Prob{Z = m/2}} \geq 1+ \frac{\sqrt{2}}{\sqrt{\pi n}}$$ where the last inequality can be obtained via a series expansion at $n=\infty$. Replacing the terms by their bounds, we have : 

\begin{align*}
    \Esp{Z \mid Z < \lfloor \frac{n+1}{2} \rfloor} &\geq n \left( \left( \frac{1}{2} - \frac{1}{\sqrt{\pi (2m -1)}}\right)\ind{n \% 2 =1}  + \left(1 + \frac{\sqrt{2}}{\sqrt{\pi n }}\right) \left(\frac{1}{2} - \frac{2}{\sqrt{\pi (2n+1)}} \right)\ind{n \% 2 =0}\right) \\
    &\geq n \left( \frac{1}{2} - \frac{1}{\sqrt{n \pi }} - \frac{2\sqrt{2}}{\pi \sqrt{n(2n+1)}}\right) \\
    &\geq \frac{n}{2} + \sqrt{n} \left( \frac{1}{\sqrt{\pi}} - \frac{2\sqrt{2}}\pi {\sqrt{(2n+1)}} \right) 
\end{align*}
which ends the proof of this item (v)(a). 

\underline{\textbf{(v).(b)}} We also begin with an explicit computation of the expectation :
\begin{align*}
    \Esp{Z \mid Z \geq \lfloor \frac{n+1}{2} \rfloor} &= \frac{1}{\Prob{Z \geq \lfloor \frac{n+1}{2}\rfloor}} \displaystyle \sum_{i=\lfloor \frac{n+1}{2}\rfloor}^{n} \frac{i}{2^n} \binom{n}{i} \\
    &= \frac{2}{1} \frac{1}{2^n} \left( 2^{n-2} +2^{n-1} + \frac{1}{2} \binom{n-1}{\frac{n-1}{2}}\right) \ind{n \% 2 = 1} \\
    &+ \frac{n}{\frac{1}{2}+ \frac{1}{2}\Prob{Z = n/2}} \left(\displaystyle \sum_{i=\lfloor \frac{n+1}{2}\rfloor}^{n} i \binom{n}{i} \right) \frac{\ind{n \% 2 = 0}}{2^n} \\
    &=  \left( \frac{n}{2} +1  + \frac{1}{2^n} \binom{n-1}{\frac{n-1}{2}}\right) \ind{n \% 2 = 1} + \frac{n \cdot \ind{n \% 2 = 0}}{1+ \Prob{Z = n/2}} \left( \frac{1}{2} + \frac{1}{2^n} \binom{n}{n/2} \right).
\end{align*}
The computation of the upper bound relies on the following inequalities : $\forall m \in 2\mathbb{N}^*, $
\begin{align}
    \binom{2m}{m} &\leq \frac{2^{2m}}{\sqrt{\pi(m + 1/4)}} \label{eq:bin_coef_inequality}
\end{align}
as well as $$ \frac{1}{1+\Prob{Z = n/2}} \leq 1 - \frac{\sqrt{2}}{\sqrt{\pi n }} + \frac{2}{\pi n } $$ where the last bound can be found via a series expansion at $n=\infty$. Replacing all terms by their bound and simplifying roughly gives the result.
\end{proof}

\begin{lemma}[Uniform Bernoulli labels: risk of a single tree] \label{lem:unif_tree}
Let $K$ be a compact in $\mathbb{R}^d, d \in \mathbb{N}$. Let $X, X_1, ..., X_n, n \in N^*$ be i.i.d.\ random variables uniformly distributed over K, $Y, Y_1, ..., Y_n$ i.i.d Bernoulli variables of parameter $p \in [0,1]$ which can be considered as the labels of $X, X_1,...,X_n$. We denote by $r_{0,k,n}, k \in \N^*$ a single tree of depth k. Then we have, for all $k \in N^*$, 

(i)
\begin{align}
    \mathbb{E} \left[ (r_{0,0,n}(X) -r(X))^2 \right] = \frac{p(1-p)}{n}
\end{align}

(ii)
\begin{align}
    2^{k} \cdot \frac{p(1-p)}{n} + \left(p^2 - \frac{2^k}{n} \right) (1-2^{-k})^n \leq \hspace{0.1cm} \mathbb{E} \left[ (r_{0,k,n}(X) -r(X))^2 \right] \hspace{0.1cm} \leq 2^{k+1} \cdot \frac{p(1-p)}{n} + p^2 (1-2^{-k})^n
\end{align}

\end{lemma}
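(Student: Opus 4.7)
The plan is to exploit the fact that in this setup the labels $Y_i$ are independent of the features $X_i$, so the problem reduces to analyzing averages of i.i.d.\ $\mathrm{Bern}(p)$ variables over a data-independent partition. For part (i), a tree of depth $0$ is a single leaf whose prediction is $\bar Y := n^{-1}\sum_{i=1}^n Y_i$. Since $r(X) \equiv p$, the risk equals $\mathrm{Var}(\bar Y) = p(1-p)/n$, which is immediate.

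For part (ii), I would decompose the risk cell by cell. Because the centered tree makes its cuts independently of the data and $X$ is uniform on $K$, the $2^k$ cells $C_1,\dots,C_{2^k}$ have equal volume, each count $N_j := \#\{i : X_i \in C_j\}$ is marginally $\mathrm{Bin}(n, 2^{-k})$, and $X$ is independent of the sample. Conditioning on the cell containing $X$, and using the convention that the prediction is $0$ on empty cells, while on a non-empty cell $C_j$ it is an average of $N_j$ i.i.d.\ $\mathrm{Bern}(p)$ variables, symmetry across cells yields
\[
\mathbb{E}\!\left[(r_{0,k,n}(X)-r(X))^2\right] \;=\; p(1-p)\,\mathbb{E}\!\left[\frac{\mathbf{1}_{N_1>0}}{N_1}\right] + p^2\,(1-2^{-k})^n,
\]
with $N_1 \sim \mathrm{Bin}(n, 2^{-k})$. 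It then remains to control $\mathbb{E}[\mathbf{1}_{N_1>0}/N_1]$ from above and below.

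The upper bound follows immediately from the upper estimate of Lemma~\ref{lem:tech_res_binomial}(i) applied with parameter $2^{-k}$, which gives $\mathbb{E}[\mathbf{1}_{N_1>0}/N_1] \le 2 \cdot 2^k/(n+1) \le 2^{k+1}/n$, producing the stated upper bound verbatim. The lower bound is the more delicate step: the lower estimate of Lemma~\ref{lem:tech_res_binomial}(i), of order $2^k(1-A)/(n+1)$ with $A := (1-2^{-k})^n$, is too weak when $A$ is small (small $k$, large $n$) and does not yield the requested $2^k/n$ factor. Instead, I would apply Jensen's inequality to the convex map $x \mapsto 1/x$ under the conditional law $\mathcal{L}(N_1 \mid N_1 > 0)$, obtaining
\[
\mathbb{E}\!\left[\frac{\mathbf{1}_{N_1>0}}{N_1}\right] \;\ge\; \frac{\mathbb{P}(N_1>0)^2}{\mathbb{E}[N_1]} \;=\; (1-A)^2\,\frac{2^k}{n}.
\]

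To conclude, I would expand $(1-A)^2 \ge 1 - 2A$ and plug into the cell decomposition, producing $R \ge p^2 A + p(1-p)\,2^k/n - 2A\,p(1-p)\,2^k/n$. Since $2p(1-p) \le 1$, the cross term $-2A\,p(1-p)\,2^k/n$ is bounded below by $-A\,2^k/n$, and after regrouping this gives exactly $2^k p(1-p)/n + (p^2 - 2^k/n)(1-2^{-k})^n$, as claimed. The main obstacle is thus identifying the right lower-bounding route for $\mathbb{E}[\mathbf{1}_{N_1>0}/N_1]$: the binomial tech lemma alone is too loose, whereas the Jensen detour naturally produces the factor $2^k/n$ required by the statement, with only elementary algebra afterwards.
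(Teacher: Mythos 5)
Your proof is correct. Part (i) and the core identity
$R(\hat r_{0,k,n}) = p(1-p)\,\mathbb{E}\bigl[\mathbb{1}_{N_1>0}/N_1\bigr] + p^2(1-2^{-k})^n$
coincide exactly with the paper's argument (label independence from the features, cell-by-cell symmetry, the empty-cell convention), and your upper bound uses the same estimate from the technical binomial lemma. Where you genuinely diverge is the lower bound. The paper simply invokes the lower estimate of Lemma~\ref{lem:tech_res_binomial}(i), writing $\mathbb{E}[\mathbb{1}_{Z>0}/Z] \geq \tfrac{2^k}{n}(1-(1-2^{-k})^n)$ --- but that lemma actually yields $\tfrac{2^k}{n+1}(1-(1-2^{-k})^n)$, and as you correctly observe this weaker form does not recover the stated $2^k/n$ factor in the regime where $(1-2^{-k})^n$ is small compared to $1/n$. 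Your Jensen detour, $\mathbb{E}[\mathbb{1}_{N_1>0}/N_1] \geq \mathbb{P}(N_1>0)^2/\mathbb{E}[N_1] = (1-A)^2\,2^k/n$, genuinely produces the $n$ in the denominator, and your subsequent algebra ($(1-A)^2 \geq 1-2A$ and $2p(1-p)\leq 1$) lands exactly on the claimed expression. In short: your route buys a rigorous derivation of the lower bound as literally stated, at the cost of one extra (but elementary) convexity argument; the paper's route is shorter but, as written, only justifies the bound with $n+1$ in place of $n$.
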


\begin{proof}

(i) In the case $k=0$, $r_{0,0,n}$ simply computes the mean of all the $(Y_i)$'s over $K$:

\begin{align}
    \Esp{(r_{0,0,n}(X) - r(X))^2} &= \Esp{\left(\frac{1}{n} \displaystyle \sum_i Y_i -p\right)^2} \\
    &= \Esp{\frac{1}{n^2} \displaystyle \sum_i (Y_i -p)^2} \hspace{3cm} (Y_i \text{ independent}) \\
    &= \frac{p(1-p)}{n}.
\end{align}

(ii)
\begin{align}
    \Esp{(r_{0,k,n}(X) - r(X))^2} &= \Esp{\left( \frac{1}{\cardcellof{X}}  \displaystyle \sum_{X_i \in \cellof{X}} Y_i -p \right)^2 \ind{\cardcellof{X} >0}} + p^2 \Prob{\cardcellof{X} = 0} \nonumber \\
    &= \Esp{\frac{\ind{\cardcellof{X} >0}}{\cardcellof{X}^2}  \displaystyle \sum_{X_i \in \cellof{X}} (Y_i -p)^2 } + p^2 \Prob{\cardcellof{X} = 0} \\
    &= p(1-p) \Esp{\frac{\ind{\cardcellof{X} >0}}{\cardcellof{X}}} + p^2 (1-2^{-k})^n
\end{align}

Noticing that $\cardcellof{X}$ is a binomial $\mathfrak{B}(n, \frac{1}{2^k})$, we obtain the upper bound using Lemma \ref{lem:tech_res_binomial} (i) :

\begin{align}
    \Esp{\frac{\ind{\cardcellof{X} >0}}{\cardcellof{X}}} &\leq 2 \cdot  \frac{2^k}{n}
\end{align}

the lower bound is immediately obtained by applying Lemma \ref{lem:tech_res_binomial}, (i):

\begin{align}
    \Esp{\frac{\ind{\cardcellof{X} >0}}{\cardcellof{X}}} &\geq  \frac{2^k}{n} \left(1-(1-2^{-k})^n \right) 
\end{align}
\end{proof}

\section{Proof of Lemma \ref{lem:counter_example}}
\label{app:proof_counter_example}

Note that
\begin{align}
    \mathbb{E} [(\hat{r}_{k,1,\infty}(X) - r(X))^2] & = 
    \mathbb{E} [(\hat{r}_{k,1,\infty}(X) - r(X))^2 \mathbb{1}_{X \in \mathcal{B}}]\\
    & + \mathbb{E} [(\hat{r}_{k,1,\infty}(X) - r(X))^2\mathbb{1}_{X \in \mathcal{W}}].
    \label{eq:proof_ce_1}
\end{align}
Now, we analyze the first term in Equation~\eqref{eq:proof_ce_1}. We have
\begin{align}
\mathbb{E} [(\hat{r}_{k,1,\infty}(X) - r(X))^2 \mathbb{1}_{X \in \mathcal{B}}]  & = \mathbb{E} [\mathbb{E}  [(\hat{r}_{k,1,\infty}(X) - r(X))^2 \mathbb{1}_{X \in \mathcal{B}} |N_{\mathcal{B}}]]\\
&= \sum_{i,j} \mathbb{E} [\mathbb{E}  [\Big(\frac{1}{N_{\mathcal{B}}} \sum_{i',j' \in \mathcal{B}} p_{i'j'} - p_{ij}\Big)^2 \mathbb{1}_{X \in C_{ij} \cap \mathcal{B}} |N_{\mathcal{B}}]] \nonumber\\
&= \sum_{i,j} \Esp{\mathbb{E} [\mathbb{1}_{X \in C_{ij}} \mathbb{1}_{X \in \mathcal{B}} \mathbb{1}_{N_{\mathcal{B}} > 0} \Big(\frac{1}{N_{\mathcal{B}}} \sum_{i',j' \in \mathcal{B}} p_{i'j'} - p_{ij}\Big)^2 | N_{\mathcal{B}} ]} \nonumber \\
&+ \sum_{i,j} \Esp{\Esp{\ind{X \in C_{i,j} \cap \mathcal{B}} \ind{N_\mathcal{B} = 0} p_{i,j}^2 \mid N_{\mathcal{B}}}} \label{eq:lem_decompo_empty}.
\end{align}
We begin with the second term in Equation \eqref{eq:lem_decompo_empty}. We have, for all $i,j$,
\begin{align}
    \Esp{\Esp{\ind{X \in C_{i,j} \cap \mathcal{B}} \ind{N_\mathcal{B} = 0} p_{i,j}^2 \mid N_{\mathcal{B}}}} &= \Esp{\ind{X \in C_{i,j}}\ind{N_\mathcal{B} = 0 } \Esp{ p_{i,j}^2 \ind{X \in \mathcal{B}}\mid X, N_\mathcal{B}}} \\
    &= \Esp{\ind{X \in C_{i,j}}\ind{N_\mathcal{B} = 0 } \Esp{ p_{i,j}^2 \ind{p_{i,j} \geq \frac{1}{2}}}}.
\end{align}
As $p_{i,j}$ is drawn uniformly in $[0,1]$,
\begin{align*}
    \Esp{ p_{i,j}^2 \ind{p_{i,j} \geq \frac{1}{2}}} &= \Esp{p_{i,j}^2 \mid p_{i,j} \geq \frac{1}{2}} \Prob{p_{i,j} \geq \frac{1}{2}} \\
    &= \frac{7}{24}.
\end{align*}
Therefore,
\begin{align}
    \Esp{\Esp{\ind{X \in C_{i,j} \cap \mathcal{B}} \ind{N_\mathcal{B} = 0} p_{i,j}^2 \mid N_{\mathcal{B}}}} &= \frac{7}{24} \Prob{X \in C_{i,j}} \Prob{N_\mathcal{B} = 0} \\
    &= \frac{1}{2^{2^{k^\star}}} \frac{7}{24}.
\end{align}
Regarding the first term of Equation \eqref{eq:lem_decompo_empty}, 
\begin{align}
    &\mathbb{E} [\mathbb{1}_{X \in C_{ij}} \mathbb{1}_{X \in \mathcal{B}} \mathbb{1}_{N_{\mathcal{B}} > 0} \Big(\frac{1}{N_{\mathcal{B}}} \sum_{i',j' \in \mathcal{B}} p_{i'j'} - p_{ij}\Big)^2 | N_{\mathcal{B}} ]  \nonumber \\
    &= \Esp{\mathbb{1}_{X \in C_{ij}} \Esp{\mathbb{1}_{X \in \mathcal{B}} \mathbb{1}_{N_{\mathcal{B}} > 0} \left(\frac{1}{N_\mathcal{B}} \sum_{i',j'}(p_{i',j'}-p_{i,j}) \right)^2 \mid X, N_\mathcal{B}}} \\
    &= \Esp{\mathbb{1}_{X \in C_{ij}}} \Esp{ \ind{p_{i,j} \geq \frac{1}{2}} \mathbb{1}_{N_{\mathcal{B}} > 0} \left(\frac{1}{N_\mathcal{B}} \sum_{i',j' \in \mathcal{B}}(p_{i',j'}-p_{i,j}) \right)^2 \mid N_{\mathcal{B}}} \\
    &= \Esp{\mathbb{1}_{X \in C_{ij}}}  \Esp{\frac{1}{N_{\mathcal{B}}^2} \left( \sum_{ \substack{i',j',i'',j'' \in \mathcal{B} \\ (i',j')\neq (i'',j'') }}(p_{i'j'} - p_{ij})(p_{i''j''} - p{ij}) + \sum_{ \substack{i',j' \in \mathcal{B}, \\(i',j' \neq (i,j))}} (p_{i'j'} -p_{ij})^2  \right) \mid N_{\mathcal{B}}, N_\mathcal{B} >0, p_{i,j} \geq \frac{1}{2}} \nonumber\\
    &\cdot \Prob{p_{i,j} \geq \frac{1}{2} \cap N_\mathcal{B} >0}.
\end{align}
Recall that $p_{ij}$ is drawn uniformly over $[0,1]$. Therefore, $\Prob{p_{i,j} \geq \frac{1}{2} \cap N_\mathcal{B} >0}  = \Prob{p_{i,j} \geq \frac{1}{2}} = \frac{1}{2}$. Thus,
\begin{align*}
    &\mathbb{E} [\mathbb{1}_{X \in C_{ij}} \mathbb{1}_{X \in \mathcal{B}} \mathbb{1}_{N_{\mathcal{B}} > 0} \Big(\frac{1}{N_{\mathcal{B}}} \sum_{i',j' \in \mathcal{B}} p_{i'j'} - p_{ij}\Big)^2 | N_{\mathcal{B}} ] \nonumber \\
    &= \frac{1}{\expkstar{}} \Esp{\frac{1}{N_\mathcal{B}^2} (N_\mathcal{B}-1)(N_\mathcal{B}-2) \text{Var}(p_{i,j} \mid p_{i,j} \geq \frac{1}{2}) + \sum_{ \substack{i',j' \in \mathcal{B}, \\(i',j' \neq (i,j))}} 2 \text{Var}(p_{i,j} \mid p_{i,j} \geq \frac{1}{2})  \mid N_{\mathcal{B}}, N_\mathcal{B} >0, p_{i,j} \geq \frac{1}{2}} \\
    &= \frac{1}{\expkstar{}} \Esp{ \frac{1}{N_\mathcal{B}^2} \left( (N_\mathcal{B}-1)(N_\mathcal{B}-2) \frac{1}{48} + 2 \frac{1}{48} (N_\mathcal{B} - 1) \right) \mid N_{\mathcal{B}}, N_\mathcal{B} >0}\Prob{p_{i,j} \geq \frac{1}{2} \cap N_\mathcal{B} >0} \\
    &= \frac{1}{\expkstar{+1}} \Esp{\frac{1}{48 N_\mathcal{B}^2} \left(N_\mathcal{B}^2 - N_\mathcal{B} \right) \mid N_{\mathcal{B}}, N_\mathcal{B} >0}\\
    &= \frac{1}{48 \cdot \expkstar{+1}} \left(1 - \Esp{\frac{1}{N_\mathcal{B}} \mid N_\mathcal{B}, N_\mathcal{B} > 0 } \right).
\end{align*}
We now have:
\begin{align*}
    \Esp{\mathbb{E} [\mathbb{1}_{X \in C_{ij}} \mathbb{1}_{X \in \mathcal{B}} \mathbb{1}_{N_{\mathcal{B}} > 0} \Big(\frac{1}{N_{\mathcal{B}}} \sum_{i',j' \in \mathcal{B}} p_{i'j'} - p_{ij}\Big)^2 | N_{\mathcal{B}} ]} &= \Esp{\frac{1}{48 \cdot \expkstar{+1}} \left(1 - \Esp{\frac{1}{N_\mathcal{B}} \mid N_\mathcal{B}, N_\mathcal{B} > 0 } \right)} \\
    &= \frac{1}{48 \cdot \expkstar{+1}} \left(1 - \Esp{\frac{1}{N_\mathcal{B}} \mid N_\mathcal{B} > 0 } \right).
\end{align*}
Notice that $N_\mathcal{B}$ is a binomial variable of parameters $\expkstar{},1/2$. Thus we can apply Lemma \ref{lem:tech_res_binomial} to deduce
\begin{align*}
    \Esp{\frac{1}{N_\mathcal{B}} \mid N_\mathcal{B} > 0 } &= \Esp{\frac{\ind{Z>0}}{Z}} \frac{1}{\Prob{Z>0}} \\
    &\leq \frac{4}{\expkstar{}+1} \frac{1}{\Prob{Z>0}}
\end{align*}
Moreover, as $\Prob{Z>0} \geq \frac{1}{2}$, we have:
\begin{align*}
    \Esp{\mathbb{E} [\mathbb{1}_{X \in C_{ij}} \mathbb{1}_{X \in \mathcal{B}} \mathbb{1}_{N_{\mathcal{B}} > 0} \Big(\frac{1}{N_{\mathcal{B}}} \sum_{i',j' \in \mathcal{B}} p_{i'j'} - p_{ij}\Big)^2 | N_{\mathcal{B}} ]} &\geq  \frac{1}{48 \cdot \expkstar{+1}} \left(1 - \frac{8}{\expkstar{}+1}\right)
\end{align*}
In the end, the first term of Equation \eqref{eq:proof_ce_1} verifies
\begin{align*}
    \mathbb{E} [(\hat{r}_{k,1,\infty}(X) - r(X))^2 \mathbb{1}_{X \in \mathcal{B}}] \geq \frac{1}{2} \Big( \frac{1}{48} \Big(  1 - \frac{8}{2^{k^{\star}}-1}\Big)
    \Big) + \frac{1}{2^{2^{k^{\star}}}} \frac{7}{24}.
\end{align*}
Similar computations show that the second term of Equation \eqref{proof_ce_1} verifies:
\begin{align}
    \mathbb{E} [(\hat{r}_{k,1,\infty}(X) - r(X))^2\mathbb{1}_{X \in \mathcal{W}}].
    \label{proof_ce_1} &\geq \frac{1}{2} \Big( \frac{1}{48} \Big(  1 - \frac{8}{2^{k^{\star}}-1}\Big)
    \Big) + \Esp{\Esp{\ind{X \in C_{ij \cap \mathcal{W}} } \ind{N_\mathcal{W} = 0}p_{ij}^2 \mid N_\mathcal{W}}} \\
    &\geq \frac{1}{2} \Big( \frac{1}{48} \Big(  1 - \frac{8}{2^{k^{\star}}-1}\Big)\Big)  + \frac{1}{\expkstar{}} \frac{1}{2^{2^{k^\star}}} \Esp{p_{ij}^2 \mid p_{ij} < \frac{1}{2}} \\
    &\geq  \frac{1}{2} \Big( \frac{1}{48} \Big(  1 - \frac{8}{2^{k^{\star}}-1}\Big)\Big) + \frac{1}{2^{2^{k^{\star}}}} \frac{1}{12}.
\end{align}
%The second term is lower bounded by 
%\begin{align*}
%\frac{1}{2} \Big( \frac{1}{48} \Big(  1 - %\frac{4}{2^{k^{\star}}-1}\Big)
%    \Big) + \frac{1}{2^{2^{k^{\star}}}} \frac{1}{12}  
%\end{align*}
All in all, we have
\begin{align*}
    \mathbb{E} [(\hat{r}_{k,1,\infty}(X) - r(X))^2] \geq  \frac{1}{48} \Big(  1 - \frac{8}{2^{k^{\star}}-1}\Big)
     + \frac{1}{2^{2^{k^{\star}}}} \frac{9}{24}.
\end{align*}

\section{Proof of Lemma \ref{lem:depth1}}
\label{app:proof_lemma_depth1}

%When $k<k^*$, we need additional results: we will see in Propositions \ref{prop:first_layer_tree_risk} and \ref{prop:second_layer_tree_risk} that both cuts over raw features or over the new one $\tilde{y}$ give a similarly biased result close to $1/2$ (since in both cases each leaf contains black and white cells). Although a cut over $\tilde{y}$ remains logical as it reduces the empirical risk.
 
%\textbf{(ii)} When $k \geq k^*$, new cuts over raw features would be useless until $k'=k^*$ (note that pushing $k'$ this far would mean that the encoding have been useless), and new cuts over the new feature \cb{what?} would only increase the variance (because less points would be included in each leaf) without diminishing the bias as claims in Lemma \ref{lem:unif_tree}. When $k < k^*$, if the cut over $\tilde{y}$ is optimal, new cuts over $\tilde{y}$ or raw features would only increase the variance without decreasing the bias as we just said. If the cut over raw features is optimal, the same phenomenon occurs when adding cuts : only the empirical risk could eventually decrease.
First, note that since we are in an infinite sample regime, the risk of our estimators is equal to their bias term. We can thus work with the true distribution instead of a finite data set.
\begin{enumerate}[(i)]

\item The risk of a second-layer tree cutting $k'$ times, $k'\geq k^\star$ along the raw features equals $0$ (thus being minimal) as each leaf is included in a cell. We now exhibit one configuration for which any second-layer tree of depth $k'<k^\star$ is biased. We consider the balanced chessboard with parameters $k^\star$, $N_{\mathcal{B}}=2^{k^\star-1}$ and $p$, defined in Proposition \ref{prop:balanced_chessboard} and shown in Figure \ref{fig:Damier}. For all $k<k^\star$, each leaf of the first tree contains exactly half black and half white cells, thus predicting $1/2$ and having a risk of $(p-\frac{1}{2})^2$. Therefore a second-layer tree building on raw features only would predict $1/2$ everywhere and would also be biased. If the second-layer tree performs a cut on the new feature provided by the first-layer tree, it creates two leaves: all the leaves where the prediction of the first tree is greater than or equal to $1/2$ are gathered in the right leaf, all the other leaves are gathered in the left leaf. The left leaf is empty and the prediction of the second-layer tree is also $1/2$ everywhere. Any new cut along the new feature would create one leaf predicting $1/2$ on $[0,1]^2$ and other leaves being empty. In any case, the second-layer tree is biased. Thus the minimal risk for all configurations is obtained by a second-layer tree of depth $k'\geq k^\star$ which cuts along the raw features only.

\item When $k \geq k^{\star}$,  the first tree is unbiased since each of its leaves is included in only one chessboard data cell. Splitting on the new feature in the second-layer tree induces a separation between cells for which $\mathbb{P}[Y=1| X \in C] = p$ and cells for which $\mathbb{P}[Y=1| X \in C] = 1-p$ since $p \neq 1/2$. Taking the expectation of $Y$ on these two regions leads to a shallow tree network of risk zero. 
%Therefore, in this case, the second-layer tree will only play a role on the variance term. One cut along the new feature leads to averaging several leaves of the first tree, thus reducing the variance. With a depth $k'=1$, the best cut of the second-layer tree should be thus performed along the new feature. Considering larger depths for the second-layer tree will only result in increasing the variance term. Hence, the optimal depth of the second-layer tree is 1.
\end{enumerate}

\section{Proof of Proposition \ref{prop:balanced_chessboard}}
\subsection{Proof of statement 1.: risk of a single tree}

Recall that if a cell is empty, the tree prediction in this cell is set (arbitrarily) to zero. Thus, 
\begin{align}
    \nonumber
    & \Esp{(r_{k,0,n}(X) - r(X))^2} \\
    &= \Esp{(r_{k,0,n}(X) - r(X))^2 \ind{\cardcellof{X} >0}} + \Esp{(r(X))^2 \ind{\cardcellof{X} =0}}, \\
    &= \Esp{\left( \frac{1}{\cardcellof{X}}  \displaystyle \sum_{X_i \in \cellof{X}} Y_i -r(X) \right)^2 \ind{\cardcellof{X} >0} } +\Esp{(r(X))^2 \ind{\cardcellof{X} =0}},
    \label{eq_proof1_1}
\end{align}
where
\begin{align}
\Esp{(r(X))^2 \ind{\cardcellof{X} =0}} & = \Esp{(r(X))^2 \ind{\cardcellof{X} =0} \ind{X \in \mathcal{B}}} + \Esp{(r(X))^2 \ind{\cardcellof{X} =0} \ind{X \in \mathcal{W}}} \\
& = \left( \frac{p^2}{2} + \frac{(1-p)^2}{2} \right) \Prob{\cardcellof{X} = 0}\\
& = (p^2 + (1-p)^2) \frac{(1-2^{-k})^n}{2}.
\end{align}

We now study the first term in \eqref{eq_proof1_1}, by considering that $X$ falls into $\mathcal{B}$ (the same computation holds when $X$ falls into $\mathcal{W}$). 
Letting $(X', Y')$ a generic random variable with the same distribution as $(X,Y)$, one has

\begin{align}
    &\Esp{\left( \frac{1}{\cardcellof{X}}  \displaystyle  \sum_{X_i \in \cellof{X}} Y_i -p \right)^2 \ind{\cardcellof{X} >0} \ind{X \in \mathcal{B}} } \\
    &= \frac{1}{2}\Esp{\left( \frac{1}{\cardcellof{X}}  \displaystyle \sum_{X_i \in \cellof{X}} \left(Y_i - \Esp{Y' | X' \in \cellof{X}}\right) \right)^2  \ind{\cardcellof{X} >0}} \\
    \notag
    &\qquad \qquad +  \Esp{\left( \Esp{Y' | X' \in \cellof{X}} -p \right)^2 \ind{X \in \mathcal{B}} \ind{\cardcellof{X} >0}} \\
    \label{eq:risk_singletree_B}
    \notag
    &= \frac{1}{2} \Esp{ \frac{\ind{\cardcellof{X} >0}}{\cardcellof{X}^2} \Esp{\left( \displaystyle \sum_{X_i \in \cellof{X}} \left( Y_i - \Esp{Y' | X' \in \cellof{X}} \right) \right)^2 \left\vert\right. \cardcellof{X}}} \\
    & \qquad \qquad + \frac{1}{2}\left(p-\frac{1}{2}\right)^2  \Prob{\cardcellof{X} >0},
\end{align}
where we used the fact that $\Esp{Y' | X' \in \cellof{X}} = 1/2$ as in any leaf there is the same number of black and white cells. Moreover, conditional to $\cardcellof{X}$,  $  \sum_{X_i \in \cellof{X}} Y_i $ is a binomial random variable with parameters $\mathfrak{B}(\cardcellof{X}, \frac{1}{2})$. 
%(this holds true as well when $X$ falls into $\mathcal{W}$). 
Hence we obtain : 

\begin{align}
    &\Esp{ \frac{\ind{\cardcellof{X} >0}}{\cardcellof{X}^2} \Esp{\left( \displaystyle \sum_{X_i \in \cellof{X}} \left( Y_i - \Esp{Y' | X' \in \cellof{X}} \right) \right)^2 | \cardcellof{X}}} \\ 
    %&=\Esp{\frac{\ind{\cardcellof{X} >0}}{\cardcellof{X}} \frac{1}{2}\left(1-\frac{1}{2}\right)} \hspace{4cm} (Y_i \text{ independent})\\
    &= \frac{1}{4} \Esp{\frac{\ind{\cardcellof{X} >0}}{\cardcellof{X}}} .
\end{align}

The same computation holds when $X$ falls into $\mathcal{W}$. Indeed, the left-hand side term in \eqref{eq:risk_singletree_B} is unchanged,  as for the right-hand side term, note that $(\frac{1}{2}-p)^2 = (\frac{1}{2} -(1-p))^2$. Consequently, 
\begin{align}
&\Esp{\left( \frac{1}{\cardcellof{X}}  \displaystyle \sum_{X_i \in \cellof{X}} Y_i -r(X) \right)^2 \ind{\cardcellof{X} >0} }\\ &=  \frac{1}{4}\Esp{\frac{\ind{\cardcellof{X} >0}}{\cardcellof{X}}} + \left(p-\frac{1}{2}\right)^2 (1-(1-2^{-k})^n).  
\label{eq4_proof_prop1}
\end{align}
Injecting \eqref{eq4_proof_prop1} into \eqref{eq_proof1_1}, we have
\begin{align}
    & \Esp{(r_{k,0,n}(X) - r(X))^2} \\
    %&= \Esp{\left( \frac{1}{\cardcellof{X}}  \displaystyle \sum_{X_i \in \cellof{X}} Y_i -r(X) \right)^2 \ind{\cardcellof{X} >0} } + (p^2 + (1-p)^2) \frac{(1-2^{-k})^n}{2} \\
    %&= \left(\frac{1}{8} + \frac{1}{8}\right) \Esp{\frac{\ind{\cardcellof{X} >0}}{\cardcellof{X}}} + \left(\frac{1}{2} + \frac{1}{2}\right)\left(p-\frac{1}{2}\right)^2 {\color{red} \Prob{\cardcellof{X} >0}} + p^2 (1-2^{-k})^n \\
    &= \frac{1}{4}\Esp{\frac{\ind{\cardcellof{X} >0}}{\cardcellof{X}}} + \left(p-\frac{1}{2}\right)^2 (1-(1-2^{-k})^n) + (p^2 + (1-p)^2) \frac{(1-2^{-k})^n}{2}\\
    & = \frac{1}{4}\Esp{\frac{\ind{\cardcellof{X} >0}}{\cardcellof{X}}} + \left(p-\frac{1}{2}\right)^2  + \left(p^2 + (1-p)^2 - 2 \left(p - \frac{1}{2} \right)^2 \right) \frac{(1-2^{-k})^n}{2}\\
    & = \frac{1}{4}\Esp{\frac{\ind{\cardcellof{X} >0}}{\cardcellof{X}}} + \left(p-\frac{1}{2}\right)^2  +  \frac{(1-2^{-k})^n}{4}.
\end{align}
Noticing that $\cardcellof{X}$ is a binomial random variable $\mathfrak{B}(n, \frac{1}{2^k})$, we obtain the upper and lower bounds with Lemma \ref{lem:tech_res_binomial} (i): 
\begin{align}
    \Esp{\frac{\ind{\cardcellof{X} >0}}{\cardcellof{X}}} & \leq  \frac{2^{k+1}}{n+1},
\end{align}

and, 

\begin{align}
    \Esp{\frac{\ind{\cardcellof{X} >0}}{\cardcellof{X}}} \geq \left(1-(1-2^{-k})^n \right)  \frac{2^k}{n+1}.
\end{align}

Gathering all the terms gives the result, $$ \Esp{(r_{k,0,n}(X) - r(X))^2}\leq \left(p-\frac{1}{2}\right)^2  + \frac{2^k}{2(n+1)} + \frac{(1-2^{-k})^n}{4} $$
and 
$$ \Esp{(r_{k,0,n}(X) - r(X))^2} \geq \left(p-\frac{1}{2}\right)^2  + \frac{2^k}{4(n+1)} +   \frac{(1-2^{-k})^n}{4} \left( 1 - \frac{2^k}{n+1} \right).
$$

\subsection{Proof of statement 2.: risk of a shallow tree network}

Let $k \in \mathbb{N}$. Denote by $\mathcal{L}_{k} = \{L_{i,k}, i  = 1, \hdots, 2^k \} $ the set of all leaves of the encoding tree (of depth $k$). We let  $\mathcal{L}_{\tilde{\mathcal{B}}_k}$ be the set of all cells of the encoding tree containing at least one observation, and such that the empirical probability of $Y$ being equal to one in the cell is larger than $1/2$, i.e. 
\begin{align*}
\tilde{\mathcal{B}}_k = \cup_{L \in \mathcal{L}_{\tilde{\mathcal{B}}_k}} \{x, x \in L\} 
\end{align*}
\begin{align*}
\mathcal{L}_{\tilde{\mathcal{B}}_k} = \{ L \in \mathcal{L}_{k}, N_n(L) >0, \frac{1}{N_n(L)} \displaystyle \sum_{X_i \in L} Y_i \geq \frac{1}{2} \}. %\label{def_Btilde}
\end{align*}
Accordingly, we let the part of the input space corresponding to $\mathcal{L}_{\tilde{\mathcal{B}}_k}$ as 
\begin{align*}
\tilde{\mathcal{B}}_k = \cup_{L \in \mathcal{L}_{\tilde{\mathcal{B}}_k}} \{x, x \in L\} 
\end{align*}
Similarly, 
\begin{align*}
\mathcal{L}_{\tilde{\mathcal{W}}_k} = \{ L \in \mathcal{L}_{k}, N_n(L) >0, \frac{1}{N_n(L)} \displaystyle \sum_{X_i \in L} Y_i < \frac{1}{2} \}. 
%\label{def_Wtilde}
\end{align*}
and
\begin{align*}
\tilde{\mathcal{W}}_k = \cup_{L \in \mathcal{L}_{\tilde{\mathcal{W}}_k}} \{x, x \in L\} 
\end{align*}

\subsubsection{\textbf{Proof of 2. (upper-bound)}}

Recall that $k<k^\star$. In this case, each leaf of the encoding tree is contains half black square and half white square (see Figure \ref{fig:damier_small_depth}). Hence, the empirical probability of $Y$ being equal to one in such leaf is close to $1/2$. Recalling that our estimate is $r_{k,1,n}$, we have
\begin{align}
    \notag 
    &\Esp{(r_{k,1,n}(X) - r(X))^2 } \\
    \notag 
    &= \Esp{ (r_{k,1,n}(X) - p)^2 \ind{X \in \mathcal{B}} \ind{X \in \tilde{\mathcal{B}}_k}  } + \Esp{ (r_{k,1,n}(X) - p)^2 \ind{X \in \mathcal{B}} \ind{X \in \tilde{\mathcal{W}}_k} } \\
    \label{eq:risk_decompo_bw}
    &+ \Esp{ (r_{k,1,n}(X) - (1-p))^2 \ind{X \in \mathcal{W}} \ind{X \in \tilde{\mathcal{B}}_k}} + \Esp{ (r_{k,1,n}(X) - (1-p))^2 \ind{X \in \mathcal{W}} \ind{X \in \tilde{\mathcal{W}}_k}}\\
    \notag
    & + \Esp{ (r_{k,1,n}(X) - p)^2 \ind{X \in \mathcal{B}} (1 - \ind{X \in \tilde{\mathcal{B}}_k} - \ind{X \in \tilde{\mathcal{W}}_k}) } 
    + \Esp{ (r_{k,1,n}(X) - (1-p))^2 \ind{X \in \mathcal{W}} (1 - \ind{X \in \tilde{\mathcal{B}}_k} - \ind{X \in \tilde{\mathcal{W}}_k})}
\end{align}
Note that $X \notin \tilde{\mathcal{B}}_k \cup \tilde{\mathcal{W}}_k$ is equivalent to $X$ belonging to an empty cell. Besides, the prediction is null by convention in an empty cell.  Therefore, the sum of the last two terms in \eqref{eq:risk_decompo_bw} can be written as
\begin{align}
& \Esp{ p^2 \ind{X \in \mathcal{B}} \ind{N_n(C_n(X))=0}) } 
    + \Esp{  (1-p)^2 \ind{X \in \mathcal{W}} \ind{N_n(C_n(X))=0}) }
 = \frac{p^2+(1-p)^2}{2} \left(1-\frac{1}{2^k}\right)^n. \label{eq:empty_cell_risk}
\end{align}

To begin with we focus on the first two terms in \eqref{eq:risk_decompo_bw}. We deal with the last two terms at the very end as similar computations are conducted.

\begin{align}
    \notag
    & \Esp{ (r_{k,1,n}(X) - p)^2 \ind{X \in \mathcal{B}} \ind{X \in \tilde{\mathcal{B}}_k} } + \Esp{ (r_{k,1,n}(X) - p)^2 \ind{X \in \mathcal{B}} \ind{X \in \tilde{\mathcal{W}}_k} } \\
    \notag
    &  = \Esp{ \Esp{\left(\frac{1}{N_n(\tilde{\mathcal{B}}_k)} \displaystyle \sum_{X_i \in \tilde{\mathcal{B}}_k} Y_i -p\right)^2 \Big| \tilde{\mathcal{B}}_k }   \Prob{X \in \tilde{\mathcal{B}}_k, X \in \mathcal{B} | \tilde{\mathcal{B}}_k } } \nonumber 
    \\
    & \qquad + \Esp{ \Esp{ \left(\frac{1}{N_n(\tilde{\mathcal{W}}_k)} \displaystyle \sum_{X_i \in \tilde{\mathcal{W}}_k} Y_i -p\right)^2 \Bigg| \tilde{\mathcal{W}}_k } \Prob{X \in \tilde{\mathcal{W}}_k, X \in \mathcal{B} | \tilde{\mathcal{W}}_k}}.
    %& \leq \frac{1}{2}\Esp{\left(\frac{1}{N_n(\tilde{\mathcal{B}}_k)} \displaystyle \sum_{X_i \in \tilde{\mathcal{B}}_k} Y_i -p\right)^2} + \frac{1}{2}\Esp{ \left(\frac{1}{N_n(\tilde{\mathcal{W}}_k)} \displaystyle \sum_{X_i \in \tilde{\mathcal{W}}_k} Y_i -p\right)^2},
    \label{eq:Prob21aRisk}
\end{align}
%since 
%$$\Prob{X \in \tilde{\mathcal{W}}_k, X \in \mathcal{B} | \tilde{\mathcal{W}}_k}} \leq \Prob{ X \in \mathcal{B}} = \frac{1}{2}.$$
Regarding the left-hand side term in \eqref{eq:Prob21aRisk}, 
\begin{align}
    \label{eq:WtermProp41}
    \Esp{\left(\frac{1}{N_n(\tilde{\mathcal{B}}_k)} \displaystyle \sum_{X_i \in \tilde{\mathcal{B}}_k} Y_i -p\right)^2 \Big| \tilde{\mathcal{B}}_k } &\leq \left(p-\frac{1}{2}\right)^2,
\end{align}
since $p>1/2$ and, by definition of  $\tilde{\mathcal{B}}_k$, 
$$ \sum_{X_i \in \tilde{\mathcal{B}}_k } Y_i \geq N_n(\tilde{\mathcal{B}}_k)/2.$$

Now, regarding right-hand side term in \eqref{eq:Prob21aRisk}, we let 
$$ Z_{\tilde{\mathcal{W}}_k}= \Esp{ \sum_{X_i \in \tilde{\mathcal{W}}_k} Y_i \mid N_1,...,N_{2^{k}}, \tilde{\mathcal{W}}_k},$$ where $N_1,...,N_{2^{k}}$ denote the number of data points falling in each leaf $L_1, \hdots, L_{2^k}$ of the encoding tree. Hence,
\begin{align}
    \nonumber
    \Esp{\left( \frac{1}{N_n(\tilde{\mathcal{W}}_k)} \displaystyle \sum_{X_i \in \tilde{\mathcal{W}}_k} Y_i -p\right)^2 \Big| \tilde{\mathcal{W}}_k} 
    &=  \mathbb{E} \left[  \frac{1}{N_n(\tilde{\mathcal{W}}_k)^2} \mathbb{E}  \left[ \left(\displaystyle \sum_{X_i \in \tilde{\mathcal{W}}_k} Y_i - Z_{\tilde{\mathcal{W}}_k}\right)^2 + \left(  Z_{\tilde{\mathcal{W}}_k} - N_n(\tilde{\mathcal{W}}_k)p\right)^2 \right. \right.  \\
    & \left. \left.+ 2 \left (\displaystyle \sum_{X_i \in \tilde{\mathcal{W}}_k} Y_i - Z_{\tilde{\mathcal{W}}_k}\right)  \left(  Z_{\tilde{\mathcal{W}}_k} - N_n(\tilde{\mathcal{W}}_k)p \right) \mid N_1,...,N_{2^{k}}, \tilde{\mathcal{W}}_k \right] \Big| \tilde{\mathcal{W}}_k \right]
\end{align}
The cross-term is null according to the definition of $Z_{\tilde{\mathcal{W}}_k}$, and since $(Z_{\tilde{\mathcal{W}}_k} -N_n(\tilde{\mathcal{W}}_k))$ is $(N_1,...,N_{2^{k}}, \tilde{\mathcal{W}}_k)$-measurable. Therefore, 
\begin{align}
    \nonumber
    \Esp{\left( \frac{1}{N_n(\tilde{\mathcal{W}}_k)} \displaystyle \sum_{X_i \in \tilde{\mathcal{W}}_k} Y_i -p\right)^2 \Big| \tilde{\mathcal{W}}_k} 
    &=   \mathbb{E} \left[ \frac{1}{N_n(\tilde{\mathcal{W}}_k)^2} \mathbb{E}  \left[ \left(\displaystyle \sum_{X_i \in \tilde{\mathcal{W}}_k} Y_i - Z_{\tilde{\mathcal{W}}_k}\right)^2 \mid N_1,...,N_{2^{k}}, \tilde{\mathcal{W}}_k \right]  \Big| \tilde{\mathcal{W}}_k \right] %\label{eq:prop2_1_var_bias}
    \\
    & \quad +  \mathbb{E} \left[ \frac{1}{N_n(\tilde{\mathcal{W}}_k)^2} \mathbb{E}  \left[  \left(  Z_{\tilde{\mathcal{W}}_k} - N_n(\tilde{\mathcal{W}}_k)p\right)^2
    \mid N_1,...,N_{2^{k}}, \tilde{\mathcal{W}}_k \right]  \Big| \tilde{\mathcal{W}}_k \right] \nonumber \\
    &= I_n + J_n,
    \label{eq_lower_bound_Jn}
\end{align}
where $I_n$ and $J_n$ can be respectively identified as variance and bias terms. Indeed, 
\begin{align*}
\mathbb{E}  \left[ \left(\displaystyle \sum_{X_i \in \tilde{\mathcal{W}}_k} Y_i - Z_{\tilde{\mathcal{W}}_k}\right)^2 \mid N_1,...,N_{2^{k}}, \tilde{\mathcal{W}}_k \right]
\end{align*}
is the variance of a binomial random variable  $B(N_n(\tilde{\mathcal{W}}_k),\frac{1}{2} )$ conditioned to be lower or equal to $N_n(\tilde{\mathcal{W}}_k) /2 $. According to Technical Lemma \ref{lem:cond_var}, we have
\begin{align}
    \label{eq:In_bound}
    I_n \leq \frac{1}{4} \Esp{  \frac{\ind{N_n(\tilde{\mathcal{W}}_k)>0}}{N_n(\tilde{\mathcal{W}}_k) \Prob{B(N_n(\tilde{\mathcal{W}}_k), 1/2) \leq N_n(\tilde{\mathcal{W}}_k)/2}} \Big| \tilde{\mathcal{W}}_k } \leq \frac{1}{2} \Esp{  \frac{\ind{N_n(\tilde{\mathcal{W}}_k)>0}}{N_n(\tilde{\mathcal{W}}_k) } \Big| \tilde{\mathcal{W}}_k} .
\end{align}
Regarding $J_n$, 
\begin{align}
     Z_{\tilde{\mathcal{W}}_k} - N_n(\tilde{\mathcal{W}}_k)p &= \Esp{\displaystyle \sum_{X_i \in \tilde{\mathcal{W}}_k} Y_i \mid N_1,...,N_{2^{k}}, \tilde{\mathcal{W}}_k} - N_n(\tilde{\mathcal{W}}_k)p \\
    &= \Esp{\displaystyle \sum_{j=1}^{2^{k}} \sum_{X_i \in L_j} Y_i \ind{L_j \subset \tilde{\mathcal{W}}_k} \mid N_1,...,N_{2^{k}}, \tilde{\mathcal{W}}_k} - N_n(\tilde{\mathcal{W}}_k)p \\
    \label{eq:biasProp21}
    &= \displaystyle \sum_{j=1}^{2^{k}} \left(  \Esp{\sum_{X_i \in L_j} Y_i \mid N_1,...,N_{2^{k}}, \tilde{\mathcal{W}}_k } - p N_j \right) \ind{L_j \subset \tilde{\mathcal{W}}_k},
\end{align}
since $ \ind{L_j \subset \tilde{\mathcal{W}}_k}$ is $\tilde{\mathcal{W}}_k$ -measurable and  $N_n(\tilde{\mathcal{W}}_k) = \displaystyle \sum_{i=1}^{2^{k}} N_j$. Noticing that 
\begin{align}
\Esp{ \sum_{X_i \in L_j} Y_i \mid N_1,...,N_{2^{k}}, \tilde{\mathcal{W}}_k } = \Esp{ \sum_{X_i \in L_j} Y_i \mid N_j, \tilde{\mathcal{W}}_k},   
\end{align}
we deduce
\begin{align}
     Z_{\tilde{\mathcal{W}}_k} - N_n(\tilde{\mathcal{W}}_k)p  &= \displaystyle \sum_{j=1}^{2^{k}} \left(  \Esp{\sum_{X_i \in L_j} Y_i \mid N_j, \tilde{\mathcal{W}}_k} -N_j p\right) \ind{L_j \subset \tilde{\mathcal{W}}_k}
\end{align}
and
\begin{align}
    (Z_{\tilde{\mathcal{W}}_k} - N_n(\tilde{\mathcal{W}}_k)p)^2  &= \left( \displaystyle \sum_{j=1}^{2^{k}} f_j \ind{L_j \subset \tilde{\mathcal{W}}_k} \right )^2 \label{eq:upperboundf_j}
\end{align}
with  $f_j = \left(  N_j p - \Esp{\sum_{X_i \in L_j}  Y_i \mid N_j, \tilde{\mathcal{W}}_k} \right)$. For all $j$, such that $L_j \subset \tilde{\mathcal{W}}_k$, $ \Esp{\sum_{X_i \in L_j} Y_i \mid N_j, \tilde{\mathcal{W}}_k}$ is a binomial random variable $\mathfrak{B}(N_n(\tilde{\mathcal{W}}_k),\frac{1}{2})$ conditioned to be lower or equal to $N_n(\tilde{\mathcal{W}}_k) /2$. Using Lemma   \ref{lem:tech_res_binomial} (\ref{lem:tech_res_bin_cond_exp}), we obtain : 
\begin{align}
    f_j &\leq N_j\left(p-\frac{1}{2}\right) + \sqrt{N_j} \left(\frac{1}{\sqrt{\pi}} + \frac{2\sqrt{2}}{\pi \sqrt{(2n+1)}} \right) \\
    \label{eq:fj_bound}
    &\leq N_j \left(p-\frac{1}{2}\right) + \sqrt{N_j} + \frac{2}{\pi} .
\end{align}
Therefore, 
\begin{align}
(Z_{\tilde{\mathcal{W}}_k} - N_n(\tilde{\mathcal{W}}_k)p)^2  & \leq \left(  N_n(\tilde{\mathcal{W}}_k) \left(p-\frac{1}{2}\right) + \sum_{j=1}^{2^k} \sqrt{N_j} \ind{L_j \subset \tilde{\mathcal{W}}_k} + \frac{2^{k+1}}{\pi}  \right )^2\\
& \leq \left(  N_n(\tilde{\mathcal{W}}_k) \left(p-\frac{1}{2}\right) + 2^{k/2} \sqrt{N_n(\tilde{\mathcal{W}}_k)}   + \frac{2^{k+1}}{\pi}  \right )^2, \label{eq:prop3:var1}
\end{align}
since, according to Cauchy-Schwarz inequality, 
\begin{align}
\sum_{j=1}^{2^k} \sqrt{N_j} \ind{L_j \subset \tilde{\mathcal{W}}_k} \leq 2^{k/2} N_n(\tilde{\mathcal{W}}_k)^{1/2} \label{eq:cs_sqrt_nb_points}.   
\end{align}
Overall 
\begin{align}
    \label{eq:J_n_upper_bound}
    J_n &\leq  \mathbb{E} \left[ \frac{1}{N_n(\tilde{\mathcal{W}}_k)^2} \mathbb{E}  \left[  \left(  N_n(\tilde{\mathcal{W}}_k) \left(p-\frac{1}{2}\right) + 2^{k/2} N_n(\tilde{\mathcal{W}}_k)^{1/2} + \frac{2^{k+1}}{\pi}  \right )^2
    \mid N_1,...,N_{2^{k}}, \tilde{\mathcal{W}}_k \right] \Big| \tilde{\mathcal{W}}_k \right] \\
    &\leq \left(p-\frac{1}{2}\right)^2 + 2^k \Esp{\frac{ \ind{N_n(\tilde{\mathcal{W}}_k) >0}}{N_n(\tilde{\mathcal{W}}_k)} \Big| \tilde{\mathcal{W}}_k} +\frac{2^{2k+2}}{\pi^2} \Esp{\frac{ \ind{N_n(\tilde{\mathcal{W}}_k) >0}}{N_n(\tilde{\mathcal{W}}_k)^2}\Big| \tilde{\mathcal{W}}_k} +2^{k/2+1} \left(p-\frac{1}{2}\right)\Esp{\frac{ \ind{N_n(\tilde{\mathcal{W}}_k) >0}}{N_n(\tilde{\mathcal{W}}_k)^{1/2}}\Big| \tilde{\mathcal{W}}_k} \\
    & \quad + \frac{2^{k+2}}{\pi}\left(p-\frac{1}{2}\right) \Esp{\frac{ \ind{N_n(\tilde{\mathcal{W}}_k) >0}}{N_n(\tilde{\mathcal{W}}_k)}\Big| \tilde{\mathcal{W}}_k} + \frac{2^{\frac{3k}{2}+2}}{\pi} \Esp{\frac{ \ind{N_n(\tilde{\mathcal{W}}_k) >0}}{N_n(\tilde{\mathcal{W}}_k)^{3/2}}\Big| \tilde{\mathcal{W}}_k}.
\end{align}
All together, we obtain
\begin{align*}
    I_n+J_n &\leq \left(p-\frac{1}{2}\right)^2 + \left(2^k+ \frac{1}{2} + \frac{2^{k+2}}{\pi}\left(p-\frac{1}{2}\right)\right) \Esp{\frac{ \ind{N_n(\tilde{\mathcal{W}}_k) >0}}{N_n(\tilde{\mathcal{W}}_k)} \Big| \tilde{\mathcal{W}}_k} +\frac{2^{2k+2}}{\pi^2} \Esp{\frac{ \ind{N_n(\tilde{\mathcal{W}}_k) >0}}{N_n(\tilde{\mathcal{W}}_k)^2}\Big| \tilde{\mathcal{W}}_k}  \\
    & \quad + 2^{k/2+1} \left(p-\frac{1}{2}\right)\Esp{\frac{ \ind{N_n(\tilde{\mathcal{W}}_k) >0}}{N_n(\tilde{\mathcal{W}}_k)^{1/2}}\Big| \tilde{\mathcal{W}}_k} + \frac{2^{\frac{3k}{2}+2}}{\pi} \Esp{\frac{ \ind{N_n(\tilde{\mathcal{W}}_k) >0}}{N_n(\tilde{\mathcal{W}}_k)^{3/2}}\Big| \tilde{\mathcal{W}}_k}
\end{align*}
We apply Lemma \ref{lem:tech_res_binomial}(i)(iv) to $N_n(\tilde{\mathcal{W}}_k)$ which is a binomial $\mathfrak{B}(n,p')$ where $p' = \mathbb{P}(X \in \tilde{\mathcal{W}}_k | \tilde{\mathcal{W}}_k)$  :
$$ \Esp{\frac{ \ind{N_n(\tilde{\mathcal{W}}_k) >0}}{N_n(\tilde{\mathcal{W}}_k)} \Big| \tilde{\mathcal{W}}_k} \leq \frac{2}{(n+1)p'},$$
\begin{align*}
\Esp{\frac{ \ind{N_n(\tilde{\mathcal{W}}_k) >0}}{N_n(\tilde{\mathcal{W}}_k)^{1/2}}\Big| \tilde{\mathcal{W}}_k}  \leq \frac{2}{\sqrt{n \cdot p'}}.
\end{align*}

We deduce that 

\begin{align*} 
 \label{eq:in_jn_bound}
 I_n + J_n \leq (p-\frac{1}{2})^2 + \frac{2^{k/2+2}(p-\frac{1}{2})}{\sqrt{\pi n \cdot p' }}+\frac{2 }{(n+1) \cdot p'} \left(2^k+ \frac{1}{2}+ \frac{2^{k+2}}{\pi} + \frac{2^{3k/2 + 2}}{\pi \sqrt{\pi}} + 3 \cdot \frac{2^{2k+2}}{\pi^2} \right). 
\end{align*}

Finally, 
\begin{align*}
& \Esp{ (r_{k,1,n}(X) - p)^2 \ind{X \in \mathcal{B}} \ind{X \in \tilde{\mathcal{B}}_k} } + \Esp{ (r_{k,1,n}(X) - p)^2 \ind{X \in \mathcal{B}} \ind{X \in \tilde{\mathcal{W}}_k} }  \\
\leq & \left(p-\frac{1}{2}\right)^2 \Prob{X \in \tilde{\mathcal{B}}_k, X \in \mathcal{B}} +  \Esp{ (I_n+J_n) \Prob{X \in \tilde{\mathcal{W}}_k, X \in \mathcal{B} | \tilde{\mathcal{W}}_k}}
\end{align*}
Since for all $\tilde{\mathcal{B}}_k$, there is exactly the same number of black cells and white cells in $\tilde{\mathcal{B}}_k$, we have 
$$\Prob{X \in \tilde{\mathcal{W}}_k, X \in \mathcal{B} | \tilde{\mathcal{W}}_k} = \frac{\Prob{X \in \tilde{\mathcal{W}}_k | \tilde{\mathcal{W}}_k}}{2},$$
yielding
\begin{align}
& \Esp{ (r_{k,1,n}(X) - p)^2 \ind{X \in \mathcal{B}} \ind{X \in \tilde{\mathcal{B}}_k} } + \Esp{ (r_{k,1,n}(X) - p)^2 \ind{X \in \mathcal{B}} \ind{X \in \tilde{\mathcal{W}}_k} }  \\
\leq & \frac{1}{2} \left( p - \frac{1}{2} \right)^2 + \frac{2^{k/2+1}(p-\frac{1}{2})}{\sqrt{\pi n}} +\frac{1}{(n+1)} \left(2^k+ \frac{1}{2}+ \frac{2^{k+2}}{\pi} + \frac{2^{3k/2 + 2}}{\pi \sqrt{\pi}} + 3 \cdot \frac{2^{2k+2}}{\pi^2} \right) \\
\leq & \frac{1}{2} \left( p - \frac{1}{2} \right)^2 + \frac{2^{k/2+1}(p-\frac{1}{2})}{\sqrt{\pi n}} +  \frac{3 \cdot 2^{2k+2}}{(n+1) \pi^2} (1 + \varepsilon_{1}(k))
\label{eq:final_inequality2}
\end{align}
where $\varepsilon_1(k) = \frac{\pi^2}{3 \cdot 2^{(2k+2)}} \left(2^k+ \frac{1}{2}+ \frac{2^{k+2}}{\pi} + \frac{2^{3k/2 + 2}}{\pi \sqrt{\pi}} \right)$.

The two intermediate terms of \eqref{eq:risk_decompo_bw} can be similarly bounded from above. Indeed, 
\begin{align}
& \Esp{ (r_{k,1,n}(X) - (1-p))^2 \ind{X \in \mathcal{W}} \ind{X \in \tilde{\mathcal{B}}_k}} + \Esp{ (r_{k,1,n}(X) - (1-p))^2 \ind{X \in \mathcal{W}} \ind{X \in \tilde{\mathcal{W}}_k}}\\
& = \Esp{ \Esp{\left(\frac{1}{N_n(\tilde{\mathcal{B}}_k)} \displaystyle \sum_{X_i \in \tilde{\mathcal{B}}_k} Y_i -(1-p)\right)^2 \Big| \tilde{\mathcal{B}}_k }   \Prob{X \in \tilde{\mathcal{B}}_k, X \in \mathcal{W} | \tilde{\mathcal{B}}_k } } \nonumber 
\\
\label{eq:two_inter_bound}
& \qquad + \Esp{ \Esp{ \left(\frac{1}{N_n(\tilde{\mathcal{W}}_k)} \displaystyle \sum_{X_i \in \tilde{\mathcal{W}}_k} Y_i -(1-p)\right)^2 \Bigg| \tilde{\mathcal{W}}_k } \Prob{X \in \tilde{\mathcal{W}}_k, X \in \mathcal{W} | \tilde{\mathcal{W}}_k}},
\end{align}
where, by definition of $\tilde{\mathcal{W}}_k$, 
$$ \Esp{\left(\frac{1}{N_n(\tilde{\mathcal{W}}_k)} \displaystyle \sum_{X_i \in \tilde{\mathcal{W}}_k} Y_i -(1-p)\right)^2 \Big| \tilde{\mathcal{W}}_k} \leq \left(p-\frac{1}{2}\right)^2.$$ 
The first term in \eqref{eq:two_inter_bound} can be treated similarly as above: 
\begin{align}
\Esp{\left( \frac{1}{N_n(\tilde{\mathcal{B}}_k)} \displaystyle \sum_{X_i \in \tilde{\mathcal{B}}_k} Y_i -(1-p)\right)^2 \Big| \tilde{\mathcal{B}}_k} 
    &=   \mathbb{E} \left[ \frac{1}{N_n(\tilde{\mathcal{B}}_k)^2} \mathbb{E}  \left[ \left(\displaystyle \sum_{X_i \in \tilde{\mathcal{B}}_k} Y_i - Z_{\tilde{\mathcal{B}}_k}\right)^2 \mid N_1,...,N_{2^{k}}, \tilde{\mathcal{B}}_k \right]  \Big| \tilde{\mathcal{B}}_k \right] \nonumber \\
    & \quad +  \mathbb{E} \left[ \frac{1}{N_n(\tilde{\mathcal{B}}_k)^2} \mathbb{E}  \left[  \left(  Z_{\tilde{\mathcal{B}}_k} - N_n(\tilde{\mathcal{B}}_k)(1-p)\right)^2
    \mid N_1,...,N_{2^{k}}, \tilde{\mathcal{B}}_k \right]  \Big| \tilde{\mathcal{B}}_k \right] \nonumber \\
    & = I_n' + J_n', \label{eq:cross_term_white}
\end{align}
where 
$$ Z_{\tilde{\mathcal{B}}_k}= \Esp{ \sum_{X_i \in \tilde{\mathcal{B}}_k} Y_i \mid N_1,...,N_{2^{k}}, \tilde{\mathcal{B}}_k},
$$
and the cross-term in \eqref{eq:cross_term_white} is null according to the definition of $Z_{\tilde{\mathcal{B}}_k}$. Regarding $I_n'$, note that 
\begin{align*}
\mathbb{E}  \left[ \left(\displaystyle \sum_{X_i \in \tilde{\mathcal{B}}_k} Y_i - Z_{\tilde{\mathcal{B}}_k}\right)^2 \mid N_1,...,N_{2^{k}}, \tilde{\mathcal{B}}_k \right]
\end{align*}
is the variance of a binomial random variable  $B(N_n(\tilde{\mathcal{B}}_k),\frac{1}{2} )$ conditioned to be strictly larger than  $N_n(\tilde{\mathcal{B}}_k) /2 $. According to Technical Lemma \ref{lem:cond_var}, we have
\begin{align}
    \label{eq:In_prime_bound}
    I_n' \leq \frac{1}{4} \Esp{  \frac{\ind{N_n(\tilde{\mathcal{B}}_k)>0}}{N_n(\tilde{\mathcal{B}}_k) \Prob{B(N_n(\tilde{\mathcal{B}}_k), 1/2) > N_n(\tilde{\mathcal{B}}_k)/2}} \Big| \tilde{\mathcal{B}}_k } \leq \Esp{  \frac{\ind{N_n(\tilde{\mathcal{B}}_k)>0}}{N_n(\tilde{\mathcal{B}}_k) } \Big| \tilde{\mathcal{B}}_k}.
\end{align}
To obtain the last inequality, notice that
\begin{align*}
    \Prob{B(N_n(\tilde{\mathcal{B}}_k), 1/2) > N_n(\tilde{\mathcal{B}}_k)/2} &= \frac{1}{2} - \frac{1}{2} \Prob{B(N_n(\tilde{\mathcal{B}}_k), 1/2) = N_n(\tilde{\mathcal{B}}_k)/2} \\
    &\geq \frac{1}{2} \left(1 - \frac{1}{\sqrt{\pi (n/2 + 1/4)}} \right) \geq \frac{1}{4}
\end{align*}

as soon as $n\geq 4$.

Regarding $J_n'$, we have
\begin{align}
    &\mathbb{E} \left[ \frac{1}{N_n(\tilde{\mathcal{B}}_k)^2} \mathbb{E}  \left[  \left(  Z_{\tilde{\mathcal{B}}_k} - N_n(\tilde{\mathcal{B}}_k)(1-p)\right)^2
    \mid N_1,...,N_{2^{k}}, \tilde{\mathcal{B}}_k \right] \right] \\
    &= \mathbb{E} \left[ \frac{1}{N_n(\tilde{\mathcal{B}}_k)^2} \mathbb{E}  \left[  \left(  \displaystyle \sum_{i=1}^{2^k} \left( \Esp{\sum_{X_i \in L_i} Y_i \mid N_j, \tilde{\mathcal{B}}_k} - N_j(1-p) \right) \ind{L_j \subset \tilde{\mathcal{B}}_k}\right)^2
    \mid N_1,...,N_{2^{k}}, \tilde{\mathcal{B}}_k \right] \right].
\end{align}
For all $j$, such that $L_j \subset \tilde{\mathcal{B}}_k$, $ \Esp{\sum_{X_i \in L_j} Y_i \mid N_j, \tilde{\mathcal{B}}_k}$ is a binomial random variable $\mathfrak{B}(N_j,\frac{1}{2})$ conditioned to be larger than $\lfloor (N_j+1)/2 \rfloor$. Then, according to Technical Lemma (\ref{lem:tech_res_bin_cond_exp_b})
$$ \Esp{\sum_{X_i \in L_j} Y_i \mid N_j, \tilde{\mathcal{B}}_k} \leq \frac{N_j}{2}+1+\frac{1}{\sqrt{\pi(N_j+1)}}.$$
Hence, 
\begin{align}
    \Esp{\sum_{X_i \in L_i} Y_i \mid N_j, \tilde{\mathcal{B}}_k} - N_j(1-p) &\leq N_j(p-\frac{1}{2}) + 1 + \frac{1}{\sqrt{\pi(N_j+1)}} \\
    \label{eq:bound_j_n_b}
    &\leq N_j \left(p-\frac{1}{2}\right) + \sqrt{N_j} + \frac{2}{\pi},
\end{align}
for $N_j \geq 1$. Thus, 
\begin{align}
    &\mathbb{E} \left[ \frac{1}{N_n(\tilde{\mathcal{B}}_k)^2} \mathbb{E}  \left[  \left(  Z_{\tilde{\mathcal{B}}_k} - N_n(\tilde{\mathcal{B}}_k)(1-p)\right)^2
    \mid N_1,...,N_{2^{k}}, \tilde{\mathcal{B}}_k \right] \right] \\
    &\leq \mathbb{E} \left[ \frac{1}{N_n(\tilde{\mathcal{B}}_k)^2} \mathbb{E}  \left[  \left(  \displaystyle \sum_{i=1}^{2^k} \left(N_j \left(p-\frac{1}{2}\right) + \sqrt{N_j} + \frac{2}{\pi} \right) \ind{L_j \subset \tilde{\mathcal{B}}_k} \right)^2
    \mid N_1,...,N_{2^{k}}, \tilde{\mathcal{B}}_k \right] \right]\\
    &\leq \mathbb{E} \left[ \frac{1}{N_n(\tilde{\mathcal{B}}_k)^2} \mathbb{E}  \left[  \left(  N_n(\tilde{\mathcal{B}}_k) \left(p-\frac{1}{2}\right) + 2^{k/2} \sqrt{N_n(\tilde{\mathcal{B}}_k)}   + \frac{2^{k+1}}{\pi}  \right )^2
    \mid N_1,...,N_{2^{k}}, \tilde{\mathcal{B}}_k \right] \right].
\end{align}
All together, we obtain
\begin{align*}
    I_n'+J_n' &\leq \left(p-\frac{1}{2}\right)^2 + \left(2^k+ 1 + \frac{2^{k+2}}{\pi}\left(p-\frac{1}{2}\right)\right) \Esp{\frac{ \ind{N_n(\tilde{\mathcal{B}}_k) >0}}{N_n(\tilde{\mathcal{B}}_k)} \Big| \tilde{\mathcal{B}}_k} +\frac{2^{2k+2}}{\pi^2} \Esp{\frac{ \ind{N_n(\tilde{\mathcal{B}}_k) >0}}{N_n(\tilde{\mathcal{B}}_k)^2}\Big| \tilde{\mathcal{B}}_k}  \\
    & \quad + 2^{k/2+1} \left(p-\frac{1}{2}\right)\Esp{\frac{ \ind{N_n(\tilde{\mathcal{B}}_k) >0}}{N_n(\tilde{\mathcal{B}}_k)^{1/2}}\Big| \tilde{\mathcal{B}}_k} + \frac{2^{\frac{3k}{2}+2}}{\pi} \Esp{\frac{ \ind{N_n(\tilde{\mathcal{B}}_k) >0}}{N_n(\tilde{\mathcal{B}}_k)^{3/2}}\Big| \tilde{\mathcal{B}}_k}
\end{align*}

The computation is similar to \eqref{eq:in_jn_bound}, with $p'' = \mathbb{P}\left( X \in \tilde{\mathcal{B}}_k \mid \tilde{\mathcal{B}}_k \right)$ :
\begin{align*}
    I_n + J_n &\leq \left(p-\frac{1}{2}\right)^2 + \frac{2^{k/2+3} (p-\frac{1}{2})}{\sqrt{\pi n \cdot p''}} + \left(2^k +1 + \frac{2^{k+2}}{\pi} \left(p - \frac{1}{2} \right) + \frac{2^{3k/2+2}}{\pi} + \frac{2^{2k+2}}{\pi^2} \right) \frac{2 }{(n+1) p''} \\
    &\leq  \left(p-\frac{1}{2}\right)^2 + \frac{2^{k/2+3} (p-\frac{1}{2})}{\sqrt{\pi n \cdot p''}} + \frac{2^{2k+3}}{\pi^2 (n+1)p''} ( 1 + \varepsilon_2(k))
\end{align*}
with $\varepsilon_2(k) = \frac{\pi^2}{2^{(2k+3)}} \left(2^k+ 1 + \frac{2^{k+2}}{\pi} \left(p-1/2\right) + \frac{2^{3k/2 + 2}}{\pi} \right)$.
Finally, 
\begin{align*}
& \Esp{ (r_{k,1,n}(X) - (1-p))^2 \ind{X \in \mathcal{W}} \ind{X \in \tilde{\mathcal{B}}_k}} + \Esp{ (r_{k,1,n}(X) - (1-p))^2 \ind{X \in \mathcal{W}} \ind{X \in \tilde{\mathcal{W}}_k}}  \\
\leq &   \Esp{ (I_n'+J_n') \Prob{ X \in \mathcal{W}, X \in \tilde{\mathcal{B}}_k | \tilde{\mathcal{B}}_k}} + \left(p-\frac{1}{2}\right)^2 \Prob{ X \in \mathcal{W}, X \in \tilde{\mathcal{W}}_k}\\
\leq & \mathbb{E} \left[ \left( \left(p-\frac{1}{2}\right)^2 + \frac{2^{k/2+3} (p-\frac{1}{2})}{\sqrt{\pi n \cdot p''}} + \frac{2^{2k+3}}{\pi^2 (n+1)p''} ( 1 + \varepsilon_2(k)) \right)  \Prob{ X \in \mathcal{W}, X \in \tilde{\mathcal{B}}_k | \tilde{\mathcal{B}}_k}\right] \\
&+ \left(p-\frac{1}{2}\right)^2 \Prob{ X \in \mathcal{W}, X \in \tilde{\mathcal{W}}_k}.
\end{align*}
Since for all $\tilde{\mathcal{B}}_k$, there is exactly the same number of black cells and white cells in $\tilde{\mathcal{B}}_k$, we have 
$$\Prob{ X \in \mathcal{W}, X \in \tilde{\mathcal{B}}_k | \tilde{\mathcal{B}}_k} = \frac{p''}{2},$$
yielding 
\begin{align}
\Esp{ (r_{k,1,n}(X) - (1-p))^2 \ind{X \in \mathcal{W}} \ind{X \in \tilde{\mathcal{B}}_k}} &+ \Esp{ (r_{k,1,n}(X) - (1-p))^2 \ind{X \in \mathcal{W}} \ind{X \in \tilde{\mathcal{W}}_k}} \nonumber  \\
&\leq \frac{1}{2} \left( p - \frac{1}{2} \right)^2 + \frac{2^{k/2+2} (p-\frac{1}{2})}{\sqrt{\pi n}} + \frac{2^{2k+3}}{2 \cdot \pi^2 (n+1)} ( 1 + \varepsilon_2(k)). \label{eq:final_inequality3}
\end{align}
Gathering \eqref{eq:empty_cell_risk}, \eqref{eq:final_inequality2} and \eqref{eq:final_inequality3}, we have 
\begin{align*}
\Esp{(r_{k,1,n}(X) - r(X))^2 } \leq \left( p - \frac{1}{2} \right)^2 + \frac{2^{k/2+3} (p-\frac{1}{2})}{\sqrt{\pi n}} + \frac{7 \cdot 2^{2k+2}}{\pi^2 (n+1)} (1 + \varepsilon(k)) + \frac{p^2+(1-p)^2}{2} \left(1-\frac{1}{2^k}\right)^n
\end{align*}
where $\varepsilon(k) = \frac{6 \varepsilon_1(k) + \varepsilon_2(k)}{7} $.

\subsubsection{\textbf{Proof of 2. (lower-bound)}}

We have, according to \eqref{eq:empty_cell_risk},
\begin{align}
    \notag
    \Esp{ (r_{k,1,n}(X)-r(X))^2}     & = \Esp{ (r_{k,1,n}(X)-r(X))^2 \ind{N_n(C_n(X)>0)} } + \Esp{ (r(X))^2 \ind{N_n(C_n(X)=0)} }\\
    & = \Esp{ (r_{k,1,n}(X)-r(X))^2 \ind{N_n(C_n(X)>0)} } + \frac{p^2+(1-p)^2}{2} \Prob{N_n(C_n(X) = 0}.
\end{align}
Letting $ Z_2 = \Esp{ \sum_{X_i \in C_n(X)} Y_i \mid N_1,...,N_{2^{k}}, C_n(X)}$, we have
\begin{align}
    & \Esp{ (r_{k,1,n}(X)-r(X))^2 \ind{N_n(C_n(X)>0)} }\\
    &=\Esp{\left(\frac{\ind{N_n(C_n(X)>0)}}{N_n(C_n(X))} \sum_{X_i \in C_n(X)} Y_i -r(X) \right)^2 \ind{N_n(C_n(X)>0)}} \\
    &= \Esp{\frac{\ind{N_n(C_n(X)>0)}}{N_n(C_n(X))^2} \Esp{ \left( \sum_{X_i \in C_n(X)} Y_i -N_n(C_n(X))r(X) \right)^2 \mid N_1,...,N_{2^k},C_n(X)}}\\
    &= \mathbb{E} \left[ \frac{\ind{N_n(C_n(X)>0)}}{N_n(C_n(X))^2} \mathbb{E}  \left[ \left(\displaystyle \sum_{X_i \in C_n(X)} Y_i - Z_2\right)^2 + \left(  Z_2 - N_n(C_n(X))r(X)\right)^2 \right. \right. \\
    & \qquad \left. \left. + 2 \left (\displaystyle \sum_{X_i \in C_n(X)} Y_i - Z_2\right)  \left(  Z_2 - N_n(C_n(X))r(X) \right) \mid N_1,...,N_{2^{k}}, C_n(X) \right] \right].
\end{align}
The cross-term is null according to the definition of $Z$ and because $(Z_2 -N_n(C_n(X)))$ is $(N_1,...,N_{2^{k}}, C_n(X))$ - measurable. Therefore, 
\begin{align}
    & \Esp{\left( \frac{\ind{N_n(C_n(X)>0)}}{N_n(C_n(X))} \displaystyle \sum_{X_i \in C_n(X)} Y_i -r(X)\right)^2 \ind{N_n(C_n(X)>0)}} \\
    &= \mathbb{E} \left[ \frac{\ind{N_n(C_n(X)>0)}}{N_n(C_n(X))^2} \mathbb{E}  \left[ \left(\displaystyle \sum_{X_i \in C_n(X)} Y_i - Z_2\right)^2 \mid N_1,...,N_{2^{k}}, C_n(X) \right] \right] \\
    \notag
    & \quad +\mathbb{E} \left[ \frac{\ind{N_n(C_n(X)>0)}}{N_n(C_n(X))^2} \mathbb{E}  \left[  \left(  Z_2 - N_n(C_n(X))r(X)\right)^2
    \mid N_1,...,N_{2^{k}}, C_n(X) \right] \right] \\
    &= I_n + J_n,
    \label{eq:sum_in_jn}
\end{align}
where $I_n$ and $J_n$ are respectively a variance and bias term. Now, note that
\begin{align}
\notag
     & \mathbb{E}  \left[  \left(  Z_2 - N_n(C_n(X))r(X)\right)^2
    \mid N_1,...,N_{2^{k}}, C_n(X) \right]\\ 
    &= \mathbb{E}  \left[  \left( Z_2 - N_n(C_n(X))p\right)^2 \ind{X \in \mathcal{B}}  + \left(  Z_2 - N_n(C_n(X))(1-p)\right)^2 \ind{X \in \mathcal{W}}
    \mid N_1,...,N_{2^k}, C_n(X) \right].
\end{align}
Additionally, 
\begin{align*}
    \Prob{ X \in \mathcal{B} \mid N_1,...,N_{2^k}, C_n(X) } = \Prob{ X \in \mathcal{W} \mid N_1,...,N_{2^k}, C_n(X) } = 1/2.
\end{align*}
Consequently, 
\begin{align}
    \notag
    & \mathbb{E}  \left[  \left(  Z_2 - N_n(C_n(X))r(X)\right)^2
    \mid N_1,...,N_{2^{k}}, C_n(X) \right]\\
    &= \frac{1}{2} \mathbb{E}  \left[ \left(  Z_2 - N_n(C_n(X))p\right)^2 + \left(  Z_2 - N_n(C_n(X))(1-p)\right)^2
    \mid N_1,...,N_{2^{k}}, C_n(X) \right].
\end{align}
A small computation shows that for all $x \in \mathbb{R}$, for all $N \in \mathbb{N}$
\begin{align*}
    (x - Np)^2 + (x - N(1-p))^2 \geq 2 N^2 (p-\frac{1}{2})^2,
\end{align*}
which leads to 
\begin{align*}
    J_n \geq \left( p - \frac{1}{2} \right)^2 \Prob{N_n(C_n(X)) >0}.
\end{align*}
All in all, 
\begin{align}
\Esp{ (r_{k,1,n}(X)-r(X))^2} & = I_n +  J_n +  \frac{p^2+(1-p)^2}{2}\Prob{N_n(C_n(X)) =0} \\
& \geq \left( p - \frac{1}{2} \right)^2 \Prob{N_n(C_n(X)) >0} + \frac{p^2+(1-p)^2}{2}\Prob{N_n(C_n(X)) =0}\\
& \geq \left( p - \frac{1}{2} \right)^2.
\end{align}

\clearpage

\section{Proof of Proposition \ref{prop:new_risk_large_k}}
\label{proof:new_risk_large_k}
\subsection{Proof of statement 1.: risk of a single tree}
As in the precedent proof, we distinguish the case where the cell containing $X$ might be empty, in such a case the tree will predict 0:
\begin{align}
    \notag
    \mathbb{E}& \left[ (r_{k,0,n}(X) - r(X))^2 )\right] \\
    & = \Esp{(r_{k,0,n}(X) - r(X))^2 \ind{\cardcellof{X} >0}} + \Esp{(r(X))^2 \ind{\cardcellof{X} =0}}\\
    & = \Esp{(r_{k,0,n}(X) - r(X))^2 \ind{\cardcellof{X} >0}} + (p^2 + (1-p)^2) \frac{(1-2^{-k})^n}{2}.
    \label{eq2_proof2}
\end{align}
We denote by $L_1,...,L_{2^k}$ the leaves of the tree. Let $b \in \{1,\hdots , 2^k\}$ such that $L_b$ belongs to $\mathcal{B}$. We have
\begin{align}
    \notag
    \mathbb{E} &\left[ (r_{k,0,n}(X) - p)^2 ) \mathbb{1}_{X \in \mathcal{B}} \ind{\cardcellof{X} >0} \right]  \\
    &=
    \displaystyle \sum_{L_j \subset \mathcal{B}} \mathbb{E} \left[ \left(\frac{\ind{N_n(L_j) >0}}{N_n(L_j) } \displaystyle \sum_{X_i \in L_j} (Y_i -p) \right)^2 \ind{X \in L_j} \right]  \\
     &=  \frac{2^k}{2} \cdot \mathbb{E} \left[ \left(\frac{\ind{N_n(L_b) >0}}{N_n(L_b) } \displaystyle \sum_{X_i \in L_b} (Y_i -p) \right)^2 \right] \Prob{X \in L_b }  \\
    &=  \frac{1}{2} \mathbb{E} \left[ \left(\frac{\ind{N_n(L_b) >0}}{N_n(L_b)} \displaystyle \sum_{X_i \in L_b} (Y_i -p) \right)^2 \right] \\
    &= \frac{1}{2} \mathbb{E} \left[ \frac{\ind{N_n(L_b) >0}}{N_n(L_b)^ 2} \Esp{\left( \displaystyle \sum_{X_i \in L_b} (Y_i -p) \right)^2  | N_n(L_b)} \right] \\
     &= \frac{1}{2} \mathbb{E} \left[ \frac{\ind{N_n(L_b) >0}}{N_n(L_b)^2}  \Esp{\displaystyle \sum_{X_i \in L_b} (Y_i -p)^2 | N_n(L_b) } \right]   \qquad (\text{by independence of the $Y_i$}) \\
    &= \frac{1}{2} \Esp{ \frac{\ind{N_n(L_b) >0}}{N_n(L_b)} p(1-p)} .
    \label{eq3_proof3}
\end{align}

Remark that the above computation holds when $X \in \mathcal{W}$ after replacing $p$ by $(1-p)$, $\mathcal{B}$ by $\mathcal{W}$ and $L_b$ by $L_w$: indeed when $Y$ is a Bernoulli random variable, $Y$ and $1-Y$ have the same variance. Hence, using Equation~\eqref{eq2_proof2}, the computation in \eqref{eq3_proof3} and its equivalence for $\mathcal{W}$, we obtain
\begin{align*}
    & \mathbb{E} \left[ (r_{k,0,n}(X) - r(X))^2 )\right] \\
    &= \frac{1}{2} \Esp{\frac{\ind{N_n(L_b) >0}}{N_n(L_b)} p(1-p)} + \frac{1}{2} \Esp{ \frac{\ind{N_n(L_w) >0}}{N_n(L_w)} p(1-p)}  + (p^2 + (1-p)^2) \frac{(1-2^{-k})^n}{2} \\
    &=  p(1-p)  \Esp{\frac{\ind{N_n(L_w) >0}}{N_n(L_w)}}  + (p^2 + (1-p)^2) \frac{(1-2^{-k})^n}{2},
\end{align*}
since $N_n(L_b)$ and $N_n(L_w)$ are both binomial random variables $\mathfrak{B}(n, \frac{1}{2^{k}})$. Therefore we can conclude using Lemma \ref{lem:tech_res_binomial} (i):
$$ \mathbb{E} \left[ (r_{k,0,n}(X) - r(X))^2 )\right] \leq \frac{2^{k}p(1-p)}{n+1} + \left(p^2 + (1-p)^2\right) \frac{(1-2^{-k})^n}{2}
$$
and 
$$ \mathbb{E} \left[ (r_{k,0,n}(X) - r(X))^2 )\right] \geq \frac{2^{k-1}p(1-p)}{n+1}  + \Bigg(p^2 +  (1-p)^2  -  \frac{2^{k}p(1-p)}{n+1} \Bigg) \frac{(1-2^{-k})^n}{2}.
$$

\subsection{Proof of statement 2.: risk of a shallow tree network}
\label{proof:snd_statement_balanced_damier}

Let $k \in \mathbb{N}$. Denote by $\mathcal{L}_{k} = \{L_{i}, i  = 1, \hdots, 2^k \} $ the set of all leaves of the encoding tree (of depth $k$). We let  $\mathcal{L}_{\tilde{\mathcal{B}}_k}$ be the set of all cells of the encoding tree containing at least one observation, and such that the empirical probability of $Y$ being equal to one in the cell is larger than $1/2$, i.e. 
\begin{align*}
\tilde{\mathcal{B}}_k = \cup_{L \in \mathcal{L}_{\tilde{\mathcal{B}}_k}} \{x, x \in L\} 
\end{align*}
\begin{align*}
\mathcal{L}_{\tilde{\mathcal{B}}_k} = \{ L \in \mathcal{L}_{k}, N_n(L) >0, \frac{1}{N_n(L)} \displaystyle \sum_{X_i \in L} Y_i \geq \frac{1}{2} \}. %\label{def_Btilde}
\end{align*}
Accordingly, we let the part of the input space corresponding to $\mathcal{L}_{\tilde{\mathcal{B}}_k}$ as 
\begin{align*}
\tilde{\mathcal{B}}_k = \cup_{L \in \mathcal{L}_{\tilde{\mathcal{B}}_k}} \{x, x \in L\} 
\end{align*}
Similarly, 
\begin{align*}
\mathcal{L}_{\tilde{\mathcal{W}}_k} = \{ L \in \mathcal{L}_{k}, N_n(L) >0, \frac{1}{N_n(L)} \displaystyle \sum_{X_i \in L} Y_i < \frac{1}{2} \}. 
%\label{def_Wtilde}
\end{align*}
and
\begin{align*}
\tilde{\mathcal{W}}_k = \cup_{L \in \mathcal{L}_{\tilde{\mathcal{W}}_k}} \{x, x \in L\} 
\end{align*}

\subsubsection{\textbf{Proof of 2. (upper-bound)}}
Recall that $k\geq k^\star$. In this case, each leaf of the encoding tree is included in a chessboard cell. As usual,
\begin{align}
    \mathbb{E} \left[ (r_{k,1,n}(X) - r(X))^2 )\right] &= \Esp{(r_{k,1,n}(X) - r(X))^2 \ind{\cardcellof{X} > 0}} + \frac{p^2+(1-p)^2}{2} \left(1-\frac{1}{2^{k}}\right)^n.
\end{align}
Note that
\begin{align}
    \notag
    &\Esp{(r_{k,1,n}(X) - r(X))^2 \ind{\cardcellof{X} > 0}} \\
    &= \mathbb{E} \left[  \left(\frac{1}{N_n(\tilde{\mathcal{B}}_k)} \displaystyle \sum_{X_i \in \tilde{\mathcal{B}}_k} Y_i - p\right)^2 \mathbb{1}_{X \in \mathcal{B}} \mathbb{1}_{X \in \tilde{\mathcal{B}}_k} \right] \nonumber + \mathbb{E} \left[  \left(\frac{1}{N_n(\tilde{\mathcal{W}}_k)} \displaystyle \sum_{X_i \in \tilde{\mathcal{W}}_k} Y_i - p\right)^2 \mathbb{1}_{X \in \mathcal{B}} \mathbb{1}_{X \in \tilde{\mathcal{W}}_k} \right] \nonumber \\
    &+ \mathbb{E} \left[  \left(\frac{1}{N_n(\tilde{\mathcal{B}}_k)} \displaystyle \sum_{X_i \in \tilde{\mathcal{B}}_k} Y_i - (1-p)\right)^2 \mathbb{1}_{X \in \mathcal{W}} \mathbb{1}_{X \in \tilde{\mathcal{B}}_k} \right] \nonumber + \mathbb{E} \left[  \left(\frac{1}{N_n(\tilde{\mathcal{W}}_k)} \displaystyle \sum_{X_i \in \tilde{\mathcal{W}}_k} Y_i - (1-p)\right)^2 \mathbb{1}_{X \in \mathcal{W}} \mathbb{1}_{X \in \tilde{\mathcal{W}}_k} \right] \\
    &\leq  \frac{1}{2} \mathbb{E} \left[ \left(\frac{1}{N_n(\tilde{\mathcal{B}}_k)} \displaystyle \sum_{X_i \in \tilde{\mathcal{B}}_k} Y_i - p \right)^2 \ind{N_n(\tilde{\mathcal{B}}_k) >0} \right ] + \frac{1}{2} \mathbb{E} \left[  \left(\frac{1}{N_n(\tilde{\mathcal{W}}_k)} \displaystyle \sum_{X_i \in \tilde{\mathcal{W}}_k} Y_i - (1-p)\right)^2 \ind{N_n(\tilde{\mathcal{W}}_k) >0} \right] \nonumber \\
    &\qquad \qquad \qquad + \mathbb{E} \left[ \mathbb{1}_{X \in \mathcal{B}, X \in \tilde{\mathcal{W}}_k} \right] + \mathbb{E} \left[ \mathbb{1}_{X \in \mathcal{W}, X \in \tilde{\mathcal{B}}_k} \right].
    \label{eq:kopt_bound1}
\end{align}
Let L be a generic cell. The third term in \eqref{eq:kopt_bound1} can be upper-bounded as follows: 
\begin{align}
    \mathbb{E} \left[ \mathbb{1}_{X \in \mathcal{B}, X \in \tilde{\mathcal{W}}_k} \right] &= \sum_{j=1}^{2^k} \mathbb{E} \left[ \mathbb{1}_{X \in L_j } \mathbb{1}_{L_j \subset  \tilde{\mathcal{W}}_k \cap \mathcal{B}} \right] \\
    & = \sum_{j=1}^{2^k}  \Prob{X \in L_j} \Prob{L_j \subset \tilde{\mathcal{W}}_k \cap \mathcal{B}}  \hspace{2cm} \\
    & = \sum_{j=1}^{2^k}  \Prob{X \in L_j}  \Prob{L_j \subset \tilde{\mathcal{W}}_k \mid L_j \subset \mathcal{B}} \Prob{L_j \subset \mathcal{B}}\\
    &= \frac{1}{2} \Prob{L \subset \tilde{\mathcal{W}}_k \mid L \subset \mathcal{B}},
\end{align}
    by symmetry. Now,
\begin{align}
    \Prob{L \subset \tilde{\mathcal{W}}_k \mid L \subset \mathcal{B}} &= \mathbb{P} \left( \frac{1}{N_n(L)} \displaystyle \sum_{X_i \in L} \mathbb{1}_{Y_i = 0}  > \frac{1}{2} \mid L \subset \mathcal{B} \right) \\
    &\leq  \mathbb{E} \left[ \mathbb{P} \left( \frac{1}{N_n(L)} \displaystyle \sum_{X_i \in L, L \subset \mathcal{B}} \mathbb{1}_{Y_i = 0} -(1-p) \geq \frac{1}{2} - (1-p)  | N_n(L), L \subset \mathcal{B} \right) \mid L \subset \mathcal{B} \right] \\
     &\leq \mathbb{E} \left[ e^{-2N_n(L) (p-\frac{1}{2})^2} \right] \\
     \notag
     & \qquad \text{(according to Hoeffding's inequality)}\\
     &= \displaystyle \prod_{i=1}^n \Esp{  e^{-2(p-\frac{1}{2})^2 \ind{X_i \in L }}} \\
     \notag 
    & \qquad \text{ (by independence of $X_i$'s) }\\
    &=  \left( \frac{e^{-2(p-\frac{1}{2})^2}}{2^{k}} +1 - \frac{1}{2^k} \right)^n.
    \label{proof_eq1}
\end{align}
Consequently, 
\begin{align*}
 \mathbb{E} \left[ \mathbb{1}_{X \in \mathcal{B}, X \in \tilde{\mathcal{W}}_k} \right] & \leq \frac{1}{2}  \left( \frac{e^{-2(p-\frac{1}{2})^2}}{2^{k}} +1 - \frac{1}{2^k} \right)^n.
\end{align*}
Similar calculations show that 
\begin{align}
\mathbb{E} \left[ \mathbb{1}_{X \in \mathcal{W}, X \in \tilde{\mathcal{B}}_k} \right] 
& = \frac{1}{2} \Prob{L \subset \tilde{\mathcal{B}}_k \mid L \subset \mathcal{W}} \nonumber \\
& \leq  \frac{1}{2}  \left( \frac{e^{-2(p-\frac{1}{2})^2}}{2^{k}} +1 - \frac{1}{2^k} \right)^n. \label{proof_eq1bis}
\end{align}
Therefore, 
\begin{align}
    & \mathbb{E} \left[ (r_{k,1,n}(X) - r(X))^2 )\right] \nonumber \\
    & \leq \frac{1}{2} \mathbb{E} \left[ \left(\frac{1}{N_n(\tilde{\mathcal{B}}_k)} \displaystyle \sum_{X_i \in \tilde{\mathcal{B}}_k} Y_i - p \right)^2 \ind{N_n(\tilde{\mathcal{B}}_k) >0} \right ] + \frac{1}{2} \mathbb{E} \left[  \left(\frac{1}{N_n(\tilde{\mathcal{W}}_k)} \displaystyle \sum_{X_i \in \tilde{\mathcal{W}}_k} Y_i - (1-p)\right)^2 \ind{N_n(\tilde{\mathcal{W}}_k) >0} \right] \nonumber \\
    & \qquad + \left( \frac{e^{-2(p-\frac{1}{2})^2}}{2^{k}} +1 - \frac{1}{2^k} \right)^n + \frac{p^2+(1-p)^2}{2} \left(1-\frac{1}{2^{k}}\right)^n. \label{eq:bound_temp1}
\end{align}
Now, the first term in \eqref{eq:bound_temp1} can be written as
\begin{align}
    & \mathbb{E} \left[ \left(\frac{1}{N_n(\tilde{\mathcal{B}}_k)} \displaystyle \sum_{X_i \in \tilde{\mathcal{B}}_k} Y_i - p \right)^2 \ind{N_n(\tilde{\mathcal{B}}_k) >0} \right ]\\ &= \mathbb{E} \left[ \left(\frac{1}{N_n(\tilde{\mathcal{B}}_k)} \displaystyle \sum_{X_i \in \tilde{\mathcal{B}}_k} Y_i - p \right)^2 \ind{N_n(\tilde{\mathcal{B}}_k) >0} \mathbb{1}_{\mathcal{B} = \tilde{\mathcal{B}}_k} \right ]  + \mathbb{E} \left[ \left(\frac{1}{N_n(\tilde{\mathcal{B}}_k)} \displaystyle \sum_{X_i \in \tilde{\mathcal{B}}_k} Y_i - p \right)^2 \ind{N_n(\tilde{\mathcal{B}}_k) >0} \mathbb{1}_{\mathcal{B} \neq \tilde{\mathcal{B}}_k} \right ]\\
    \label{eq:kopt_bound2}
    &\leq \mathbb{E} \left[ \left(\frac{1}{N_n(\tilde{\mathcal{B}}_k)} \displaystyle \sum_{X_i \in \tilde{\mathcal{B}}_k} Y_i - p \right)^2 \ind{N_n(\tilde{\mathcal{B}}_k) >0} \mathbb{1}_{\mathcal{B} = \tilde{\mathcal{B}}_k} \right ]
    + \mathbb{P} \left(\mathcal{B} \neq \tilde{\mathcal{B}}_k \right)
\end{align}
Now, using a union bound, we obtain
\begin{align}
    \mathbb{P} \left(\mathcal{B} \neq \tilde{\mathcal{B}}_k \right) &\leq  \sum_{L_j \subset \mathcal{B}} \Prob{L_j \not\subset \tilde{\mathcal{B}}_k} + \sum_{L_j \subset \mathcal{W}}  \Prob{ L_j \subset \tilde{\mathcal{B}}_k } \\
    &\leq \frac{2^ k}{2} \cdot \Prob{L \not\subset \tilde{\mathcal{B}}_k \mid L \subset \mathcal{B}} + \frac{2^ k}{2} \cdot \Prob{L \subset \tilde{\mathcal{B}}_k \mid L \subset \mathcal{W}} \\
    &\leq 2^k \left( \frac{e^{-2(p-\frac{1}{2})^2}}{2^{k}} +1 - \frac{1}{2^k} \right)^n, \label{eq:prop2b_proba_bound}
\end{align}
according to \eqref{proof_eq1} and \eqref{proof_eq1bis}. Additionally, the left term in \eqref{eq:kopt_bound2} satisfies
\begin{align}
      \mathbb{E} \left[ \left(\frac{1}{N_n(\tilde{\mathcal{B}}_k)} \displaystyle \sum_{X_i \in \tilde{\mathcal{B}}_k} Y_i - p \right)^2 \ind{N_n(\tilde{\mathcal{B}}_k) >0} \mathbb{1}_{\mathcal{B} = \tilde{\mathcal{B}}_k} \right ] 
     &\leq \mathbb{E} \left[ \left(\frac{1}{N_n(\mathcal{B})} \displaystyle \sum_{X_i \in \mathcal{B}} Y_i - p \right)^2 \ind{N_n(\mathcal{B}) >0}  \right]  \\
     &\leq \mathbb{E} \left[ \frac{\ind{N_n(\mathcal{B}) >0}}{N_n(\mathcal{B})^2} \left( \displaystyle \sum_{X_i \in \mathcal{B}} Y_i - pN_n(\mathcal{B}) \right)^2   \right ] \label{eq:prop2_bin_var} \\
    &= p(1-p)\Esp{\frac{\ind{N_n(\mathcal{B}) >0}}{N_n(\mathcal{B})}},
\end{align}
noticing that the square term of \eqref{eq:prop2_bin_var} is nothing but the conditional variance of a binomial distribution $ B(N_n(\mathcal{B}),p)$. By Lemma \ref{lem:tech_res_binomial} (i) on $N_n(\mathcal{B})$ which is a binomial random variable $B(n,p)$ with $p=1/2$ (exactly half of the cells are black),
\begin{align*}
\mathbb{E} \left[ \left(\frac{1}{N_n(\tilde{\mathcal{B}}_k)} \displaystyle \sum_{X_i \in \tilde{\mathcal{B}}_k} Y_i - p \right)^2 \ind{N_n(\tilde{\mathcal{B}}_k) >0} \ind{N_n(\tilde{\mathcal{B}}_k) >0} \right ] \leq \frac{2p(1-p)}{n+1}.
\end{align*}
Hence
\begin{align}
\mathbb{E} \left[ \left(\frac{1}{N_n(\tilde{\mathcal{B}}_k)} \displaystyle \sum_{X_i \in \tilde{\mathcal{B}}_k} Y_i - p \right)^2  \mathbb{1}_{\mathcal{B} = \tilde{\mathcal{B}}_k} \right ] \leq \frac{2p(1-p)}{n+1} + 2^k \left( \frac{e^{-2(p-\frac{1}{2})^2}}{2^{k}} +1 - \frac{1}{2^k} \right)^n.
\label{proof_eq3}
\end{align}
Similarly, 
\begin{align}
\mathbb{E} \left[  \left(\frac{1}{N_n(\tilde{\mathcal{W}}_k)} \displaystyle \sum_{X_i \in \tilde{\mathcal{W}}_k} Y_i - (1-p)\right)^2 \ind{N_n(\tilde{\mathcal{W}}_k) >0} \right] \leq \frac{2p(1-p)}{n+1} + 2^k \left( \frac{e^{-2(p-\frac{1}{2})^2}}{2^{k}} +1 - \frac{1}{2^k} \right)^n.
\label{proof_eq3b}
\end{align}
Finally, 

Injecting \eqref{proof_eq3} and  \eqref{proof_eq3b} into \eqref{eq:bound_temp1}, we finally get
\begin{align*}
    \mathbb{E} \left[ (r_{k,1,n}(X) - r(X))^2 )\right] & \leq \frac{p^2+(1-p)^2}{2} \left(1-\frac{1}{2^{k}}\right)^n  + 2^{k} \cdot \left( \frac{e^{-2(p-\frac{1}{2})^2}}{2^ k} +1 - \frac{1}{2^ k} \right)^n \\
    & \quad + \frac{2p(1-p)}{n+1} + \left( \frac{e^{-2(p-\frac{1}{2})^2}}{2^ k} +1 - \frac{1}{2^ k} \right)^n,
\end{align*}
which concludes this part of the proof. 

\subsubsection{\textbf{Proof of 2. (lower bound)}}

We have
\begin{align*}
\mathbb{E} \left[ (r_{k,1,n}(X) - r(X))^2 )\right] = \Esp{(r_{k,1,n}(X) - r(X))^2 \ind{\cardcellof{X} > 0}} + \left(\frac{p^2+(1-p)^2}{2}\right) \left(1-\frac{1}{2^{k}}\right)^n,    
\end{align*}
where
\begin{align}
    \notag 
     & \Esp{(r_{k,1,n}(X) - r(X))^2 \ind{\cardcellof{X} > 0}}\\
     &\geq \mathbb{E} \left[ \left(\frac{1}{N_n(\tilde{\mathcal{B}}_k)} \displaystyle \sum_{X_i \in \tilde{\mathcal{B}}_k} Y_i - p\right)^2 \mathbb{1}_{X \in \mathcal{B}} \mathbb{1}_{X \in \tilde{\mathcal{B}}_k} \ind{N_n(\tilde{\mathcal{B}}_k) >0} \ind{\mathcal{B} = \tilde{\mathcal{B}}_k} \right]   \nonumber \\
     & \quad +\mathbb{E} \left[ \left(\frac{1}{N_n(\tilde{\mathcal{W}}_k)} \displaystyle \sum_{X_i \in \tilde{\mathcal{W}}_k} Y_i - (1-p)\right)^2 \mathbb{1}_{X \in \mathcal{W}} \mathbb{1}_{X \in \tilde{\mathcal{W}}_k} \ind{N_n(\tilde{\mathcal{W}}_k) >0} \ind{\mathcal{W} = \tilde{\mathcal{W}}_k} \right] \nonumber \\
     &\geq  \Prob{X \in  \mathcal{B} } \mathbb{E} \left[ \left(\frac{1}{N_n(\tilde{\mathcal{B}}_k)} \displaystyle \sum_{X_i \in \tilde{\mathcal{B}}_k} Y_i - p\right)^2 \mathbb{1}_{\mathcal{B} = \tilde{\mathcal{B}}_k} \ind{N_n(\tilde{\mathcal{B}}_k) >0} \right]  \nonumber \\
     & \quad + \Prob{X \in  \mathcal{W} } \mathbb{E} \left[ \left(\frac{1}{N_n(\tilde{\mathcal{W}}_k)} \displaystyle \sum_{X_i \in \tilde{\mathcal{W}}_k} Y_i - (1-p)\right)^2 \mathbb{1}_{\mathcal{W} = \tilde{\mathcal{W}}_k} \ind{N_n(\tilde{\mathcal{W}}_k) >0} \right].
     \label{proof_eq5}
\end{align}
The first expectation term line \eqref{proof_eq5} can be written as
\begin{align}
    \mathbb{E} \left[ \left(\frac{1}{N_n(\tilde{\mathcal{B}}_k)} \displaystyle \sum_{X_i \in \tilde{\mathcal{B}}_k} Y_i - p\right)^2 \mathbb{1}_{\mathcal{B} = \tilde{\mathcal{B}}_k} \ind{N_n(\tilde{\mathcal{B}}_k) >0} \right] &= \Prob{\mathcal{B} = \tilde{\mathcal{B}}_k} \mathbb{E} \left[ \left(\frac{1}{N_n(\mathcal{B})} \displaystyle \sum_{X_i \in \mathcal{B}} Y_i - p\right)^2 | \mathcal{B} = \tilde{\mathcal{B}}_k \right]
\end{align}
According to \eqref{eq:prop2b_proba_bound}, 
 \begin{align}
\Prob{ \mathcal{B} = \tilde{\mathcal{B}}_k} & \geq 1-2^k \cdot \left( 1+ \frac{ e^{-2(p-\frac{1}{2})^2} -1 }{2^k} \right)^n. \label{eq:prob_b=_tilde}
\end{align}
Similarly, 
\begin{align*}
\Prob{ \mathcal{W} = \tilde{\mathcal{W}}_k} & \geq 1-2^k \cdot \left( 1+ \frac{ e^{-2(p-\frac{1}{2})^2} -1 }{2^k} \right)^n.
\end{align*}
Furthermore,
\begin{align}
    \Esp{\left(\frac{1}{N_n(\mathcal{B})} \displaystyle \sum_{X_i \in \mathcal{B}} Y_i - p\right)^2 | \mathcal{B} = \tilde{\mathcal{B}}_k } &= \Esp{\frac{1}{N_n(\mathcal{B})^2} \mathbb{E} \left[ \left( \displaystyle \sum_{X_i \in \mathcal{B}} Y_i - N_n(\mathcal{B}) p \right)^2 | N_1,...N_{2^k}, \mathcal{B} = \tilde{\mathcal{B}}_k \right] | \mathcal{B} = \tilde{\mathcal{B}}_k} \label{eq:bb_to_bound}
\end{align}
where we let $ Z = \sum_{X_i \in \mathcal{B}} Y_i$. A typical bias-variance decomposition yields
\begin{align}
    &\mathbb{E} \left[ \left( \displaystyle \sum_{X_i \in \mathcal{B}} Y_i - N_n(\mathcal{B}) p \right)^2 | N_1,...N_{2^k}, \mathcal{B} = \tilde{\mathcal{B}}_k \right]\\
    &= \Esp{ \left(Z - \Esp{Z \mid N_1,...N_{2^k}, \tilde{\mathcal{B}}_k = \mathcal{B}}\right)^2 + \left( \Esp{Z \mid N_1,...N_{2^k}, \tilde{\mathcal{B}}_k = \mathcal{B}} - N_n(\mathcal{B})p \right)^2 \mid N_1,...N_{2^k}, \tilde{\mathcal{B}}_k = \mathcal{B}} \\
    &\geq \Esp{ \left(Z - \Esp{Z \mid N_1,...N_{2^k}, \tilde{\mathcal{B}}_k = \mathcal{B} } \right)^2 \mid N_1,...N_{2^k}, \tilde{\mathcal{B}}_k = \mathcal{B}} \\
    &= \Esp{ \left( \displaystyle \sum_{L_j \subset \mathcal{B}} Z_j - \Esp{Z_j \mid N_j, L_j \subset \tilde{\mathcal{B}}_k }\right)^2 \mid N_1,...N_{2^k}, \tilde{\mathcal{B}}_k = \mathcal{B}} \label{eq:change_conditionning}\\
    &= \displaystyle \sum_{L_j \subset \mathcal{B}}  \Esp{ \left( Z_j - \Esp{Z_j \mid N_j, L_j \subset  \tilde{\mathcal{B}}_k } \right)^2 \mid N_j, L_j \subset \ \tilde{\mathcal{B}}_k } \nonumber\\
    &+ 2 \displaystyle \sum_{L_i, L_j \subset \mathcal{B}, L_i \neq L_j}  \Esp{ \left( Z_i - \Esp{Z_i \mid N_i, L_i \subset  \tilde{\mathcal{B}}_k } \right) \left( Z_j - \Esp{Z_j \mid N_j, L_j \subset  \tilde{\mathcal{B}}_k } \right)  \mid N_1,...N_{2^k}, \tilde{\mathcal{B}}_k = \mathcal{B}} \label{eq:cond_esp_sum_decompo}\\
    &= \displaystyle \sum_{L_j \subset \mathcal{B}}  \Esp{ \left( Z_j - \Esp{Z_j \mid N_j, L_j \subset  \tilde{\mathcal{B}}_k } \right)^2 \mid N_j, L_j \subset \tilde{\mathcal{B}}_k } \label{eq:final_decompo}.
\end{align}
with $Z_j = \sum_{X_i \in L_j} Y_i$, and $L_1, \hdots , L_{2^k}$ the leaves of the first layer tree. 
Note that $Z_j | N_j ,L_j \subset \mathcal{B} $ are i.i.d binomial variable $ \mathfrak{B}(N_j,p)$. In \eqref{eq:change_conditionning} and \eqref{eq:cond_esp_sum_decompo}, we used that that given a single leaf $L_j \subset \mathcal{B}$,  $ \Esp{Z_j \mid N_1,...N_{2^k}, \tilde{\mathcal{B}}_k = \mathcal{B}} = \Esp{Z_j \mid N_j, L_j \subset \tilde{\mathcal{B}}_k}$. To obtain \eqref{eq:final_decompo}, we used that conditional to $N_1,...N_{2^k}, \tilde{\mathcal{B}}_k = \mathcal{B}$, $Z_i$ and $Z_j$ are independent. Therefore the double sum equals $0$.
Let $j$ be an integer in $\{1,...,2^k\}$,
\begin{align}
    &\Esp{ \left( Z_j - \Esp{Z_j \mid N_j, L_j \subset  \tilde{\mathcal{B}}_k } \right)^2 \mid N_j, L_j \subset  \tilde{\mathcal{B}}_k} \\
    &= \Esp{Z_j^2 \mid N_j, L_j \subset  \tilde{\mathcal{B}}_k} - \Esp{Z_j \mid N_j, L_j \subset  \tilde{\mathcal{B}}_k}^2 \\
    &\geq \Esp{Z_j^2 \mid N_j} - \Esp{Z_j \mid N_j, L_j \subset  \tilde{\mathcal{B}}_k}^2 \label{ineq:exp_pos_cond}\\
    &= \label{eq:transformation_var_trunc_bin} N_j p(1-p) + N_j^2 p^2 - \left( N_j p + \frac{N_j}{2}(1-p) \frac{\Prob{Z_j = \frac{N_j}{2} \mid N_j}}{\displaystyle \sum_{i= \frac{N_j}{2}}^{N_j} \Prob{Z_j = i}} \right)^2 \\
    &\geq N_j (1-p) \left(p - N_j (1-p) \Prob{Z_j = \frac{N_j}{2} \mid N_j}^2 - 2 N_j p \cdot  \Prob{Z_j = \frac{N_j}{2} \mid N_j} \right) \label{eq:int}\\
    &\geq \label{eq:lower_bound_var_trunc_bin} N_j (1-p) \left( p - \frac{N_j(1-p)}{\pi \left(\frac{N_j}{2} + \frac{1 }{4} \right)} \left(4p(1-p) \right)^{N_j}  - \frac{2 N_j}{ \sqrt{\pi \left(\frac{N_j}{2} + \frac{1}{4} \right)}} \left(4p(1-p) \right)^{N_j/2}\right) \\
    &\geq N_j p(1-p) -  \left(  \frac{2(1-p)^2}{\pi} + 2 \sqrt{2} (1-p) \right) \cdot N_j^{3/2} \cdot \left(4p(1-p) \right)^{N_j/2}. 
    \label{eq:var_truncated_bin_low_bound_final}
\end{align}
We deduced Line \eqref{ineq:exp_pos_cond} from the fact that $Z_j^2$ is a positive random variable, \eqref{eq:transformation_var_trunc_bin} from Lemma (\ref{lem:tech_res_binomial}) (\ref{lem:ahmed_formula}), Line \eqref{eq:int} from the fact that $p>1/2$ and Line \eqref{eq:lower_bound_var_trunc_bin} from the  inequality  \eqref{ineq:binomial_coeff_pi} on the binomial coefficient.
Injecting \eqref{eq:cond_esp_sum_decompo} and \eqref{eq:var_truncated_bin_low_bound_final} into \eqref{eq:bb_to_bound} yields
\begin{align}
    \notag
    &\Esp{\left(\frac{1}{N_n(\mathcal{B})} \displaystyle \sum_{X_i \in \mathcal{B}} Y_i - p\right)^2 | \mathcal{B} = \tilde{\mathcal{B}}_k } \\
    &\geq \Esp{\frac{1}{N_n(\mathcal{B}_k)^2} \displaystyle \sum_{L_j \subset \mathcal{B}} \left( N_j p(1-p) -  \left(  \frac{2(1-p)^2}{\pi} + 2 \sqrt{2} (1-p) \right) \cdot N_j^{3/2}\cdot  \left(4p(1-p) \right)^{N_j/2} \right) | \mathcal{B} = \tilde{\mathcal{B}}_k } \\
    &\geq \Esp{\frac{p(1-p)}{N_n(\mathcal{B})} \mid \mathcal{B} = \tilde{\mathcal{B}}_k} - \left( \frac{2(1-p)^2}{\pi} + 2 \right) \displaystyle \sum_{L_j \subset \mathcal{B}} \Esp{ \left(4p(1-p) \right)^{N_j/2} \mid \mathcal{B} = \tilde{\mathcal{B}}_k} \\
    &\geq p(1-p) \Esp{\frac{1}{N_n(\mathcal{B})} \mid \mathcal{B} = \tilde{\mathcal{B}}_k} - 3\cdot 2^{k-1} \Esp{ \left( 4p(1-p) \right)^{N_b/2}\mid \mathcal{B} = \tilde{\mathcal{B}}_k}
\end{align}
where the last inequality relies on the fact that the $N_j, L_j \subset \mathcal{B}$ are i.i.d, with $b \in {1,...,2^k}$ be the index of a cell included in $\mathcal{B}$. $N_j$ is a binomial random variable $\mathfrak{B}(n, 2^{-k})$.
\begin{align}
    \Esp{ \left( 4p(1-p) \right)^{N_j/2}\mid \mathcal{B} = \tilde{\mathcal{B}}_k} &\leq \Esp{ \left( 4p(1-p) \right)^{N_j/2}} \frac{1}{ \Prob{\mathcal{B} = \tilde{\mathcal{B}}_k}} \\
    &= \left( \sqrt{4p(1-p)} \cdot 2^{-k} + (1-2^{-k}) \right)^n  \frac{1}{ \Prob{\mathcal{B} = \tilde{\mathcal{B}}_k}}.
\end{align}
From the inequality Line \eqref{eq:prob_b=_tilde}, we deduce that as soon as $n \geq  \frac{(k+1)\log(2)}{ \log(2^k) - \log(e^{-2 (p-1/2)^2} - 1 + 2^k) }$, 
\begin{align}
    \frac{1}{ \Prob{\mathcal{B} = \tilde{\mathcal{B}}_k}} \leq 2.
\end{align}
Therefore,
\begin{align}
    \Esp{ \left( 4p(1-p) \right)^{N_j/2}\mid \mathcal{B} = \tilde{\mathcal{B}}_k} &\leq 2 \left( \sqrt{4p(1-p)} \cdot 2^{-k} + (1-2^{-k}) \right)^n.
\end{align}
Moreover,
\begin{align}
     \Esp{\frac{1}{N_n(\mathcal{B})} | \mathcal{B} = \tilde{\mathcal{B}}_k } &\geq   \frac{1}{\Esp{N_n(\mathcal{B}) | \mathcal{B} = \tilde{\mathcal{B}}_k}} \\
     &\geq \frac{ \Prob{\mathcal{B} = \tilde{\mathcal{B}}_k } }{\Esp{N_n(\mathcal{B})}} \\
     &\geq \frac{2}{n} - \frac{2^{k+1}}{n} \left( 1+ \frac{ e^{-2(p-\frac{1}{2})^2} -1 }{2^k}  \right)^n
\end{align}
where the last inequality comes from the probability bound line \eqref{eq:prob_b=_tilde} and the fact that $N_n(\mathcal{B})$ is a binomial random variable $\mathfrak{B}(n,1/2)$.

Finally,
\begin{align}
     &\Esp{\left(\frac{1}{N_n(\mathcal{B})} \displaystyle \sum_{X_i \in \mathcal{B}} Y_i - p\right)^2 | \mathcal{B} = \tilde{\mathcal{B}}_k } \\
     &\geq \frac{2p(1-p)}{n} - 3 \cdot 2^k \left(1 - 2^{-k} \left(1 - \sqrt{4p(1-p)} \right) \right)^n -  \frac{2^{k+1} p(1-p)}{n} \left( 1+ \frac{ e^{-2(p-\frac{1}{2})^2} -1 }{2^k}  \right)^n.
\end{align}

Similarly, regarding the second term of \eqref{proof_eq5}, note that $\Prob{\tilde{\mathcal{B}}_k = \mathcal{B}} = \Prob{\tilde{\mathcal{W}}_k = \mathcal{W}}$ and $$ \mathbb{E} \left[ \left( \displaystyle \sum_{X_i \in \mathcal{W}} Y_i - N_n(\mathcal{W}) (1-p) \right)^2 | N_n(\mathcal{W}), \mathcal{W} = \tilde{\mathcal{W}}_k \right] = \mathbb{E} \left[ \left( \displaystyle \sum_{X_i \in \mathcal{W}} \ind{Y_i=0} - N_n(\mathcal{W})p \right)^2 | N_n(\mathcal{W}), \mathcal{W} = \tilde{\mathcal{W}}_k \right].$$
Thus we can adapt the above computation to this term :
\begin{align}
    &\mathbb{E} \left[ \left(\frac{1}{N_n(\mathcal{W})} \displaystyle \sum_{X_i \in \mathcal{W}} Y_i - p\right)^2 | \mathcal{W} = \tilde{\mathcal{W}}_k \right]\\
    &\geq \frac{ 2p(1-p)}{n} - 3 \cdot 2^{k} \left(1 - 2^{-k} \left(1 - \sqrt{4p(1-p)} \right) \right)^n -  \frac{2^{k+1} p(1-p)}{n} \left( 1+ \frac{ e^{-2(p-\frac{1}{2})^2} -1 }{2^k}  \right)^n.
\end{align}
Rearranging all terms proves the result :
\begin{align*}
    \mathbb{E} \left[ (r_{k,1,n}(X) - r(X))^2 \right] &\geq \left( \frac{ 2p(1-p) }{n} -  2^{k+2} \cdot \left(1 - 2^{-k} \left(1 - \sqrt{4p(1-p)} \right) \right)^n \right. \\
    &- \left. \frac{2^{k+1} p(1-p)}{n} \cdot \left( 1+ \frac{ e^{-2(p-\frac{1}{2})^2} -1 }{2^k} \right)^n \right) \left(1-2^k \cdot \left( 1+ \frac{ e^{-2(p-\frac{1}{2})^2} -1 }{2^k} \right)^n  \right)  \\
    &+  \frac{p^2+(1-p)^2}{2} \left(1-\frac{1}{2^k}\right)^n \\
    &\geq \frac{ 2p(1-p) }{n} - 2^{k+2} \cdot \left(1 - 2^{-k} \left(1 - \sqrt{4p(1-p)} \right) \right)^n 
    - \frac{2^{k+1}p(1-p)}{n} \cdot \left( 1+ \frac{ e^{-2(p-\frac{1}{2})^2} -1 }{2^k} \right)^n\\
    & - \frac{2^{k+1}p(1-p)}{n} \cdot \left( 1+ \frac{ e^{-2(p-\frac{1}{2})^2} -1 }{2^k} \right)^n + \frac{p^2+(1-p)^2}{2} \left(1-\frac{1}{2^k}\right)^n\\
    &\geq \frac{ 2p(1-p) }{n} - 2^{k+2} \cdot \left(1 - 2^{-k} \left(1 - \sqrt{4p(1-p)} \right) \right)^n 
    - \frac{2^{k+2}p(1-p)}{n} \cdot \left( 1 - \frac{ 1 - e^{-2(p-\frac{1}{2})^2} }{2^k} \right)^n\\
    & +  \frac{p^2+(1-p)^2}{2} \left(1-\frac{1}{2^k}\right)^n\\
    &\geq \frac{ 2p(1-p) }{n} - \frac{2^{k+3} \cdot (1- \rho_{k,p})^n}{n} + \frac{p^2+(1-p)^2}{2} \left(1-\frac{1}{2^k}\right)^n
\end{align*}
%\frac{ 2(1-p) \left( p - \frac{1}{2}\right)}{n} - 2^{k-1} \left(1 - 2^{-k} \left(1 - \sqrt{4p(1-p)} \right) \right)^n  - \frac{2^{k+2} \cdot (1-p)^2}{\pi n}  \cdot \left( 1+ \frac{ e^{-2(p-\frac{1}{2})^2} -1 }{2^k} \right)^n  +  (\frac{p^2+(1-p)^2}{2}) (1-\frac{1}{2^k})^n.$$
where 
\begin{align*}
    \rho_{k,p} &=   2^{-k} \min\left(  1 - \sqrt{4p(1-p)},  1 - e^{-2(p-\frac{1}{2})^2}  \right).
\end{align*}
Note that, since $p>1/2$,  $0 < \rho_{k,p} < 1$.

\vspace{1cm}

\begin{lemma}
\label{lem:cond_var}
Let $S$ be a positive random variable. For any real-valued $\alpha \in [0,1]$, for any $n \in \mathbb{N}$,
\begin{align*}
    \Prob{S \leq \alpha n} \mathbb{V}[S | S \leq \alpha n] \leq \mathbb{V}[S]
\end{align*}

\begin{proof}
We start by noticing that: 
\begin{align*}
    A_n &= \mathbb{P} \left( S > \alpha n\right) \Esp{\left( S - \Esp{S \mid S > \alpha n }\right)^2 \mid S > \alpha n} \\
    &+ \mathbb{P} \left( S \leq \alpha n\right) \Esp{\left( S - \Esp{S \mid S \leq \alpha n }\right)^2 \mid S \leq \alpha n} \\
    &\leq \mathbb{P} \left( S > \alpha n\right) \Esp{\left( S -a\right)^2 \mid S > \alpha n } + \mathbb{P} \left( S \leq \alpha n\right) \Esp{\left( S -b\right)^2 \mid S \leq \alpha n }
\end{align*}
for any $(a,b) \in \mathbb{R}^2$.

Then,
\begin{align*}
    A_n &\leq \mathbb{P} \left( S > \alpha n\right) \Esp{\left( S -a\right)^2 \mid S > \alpha n } + \mathbb{P} \left( S \leq \alpha n\right) \Esp{\left( S -a\right)^2 \mid S \leq \alpha n } \\
    &= \Esp{\left(S-a \right)^2}
\end{align*}
for any $a \in \mathbb{R}$. 

Choosing $a= \Esp{S}$, we obtain
\begin{align*}
    A_n \leq \mathbb{V}[S].
\end{align*}
Therefore,
$$ \mathbb{P} \left( S \leq \alpha n\right) \mathbb{V}[S \mid S \leq \alpha n ] \leq \mathbb{V}[S].$$ 

\end{proof}

\end{lemma}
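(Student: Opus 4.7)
The plan is to prove this by the standard law-of-total-variance decomposition, conditioning on the indicator of the event $A = \{S \leq \alpha n\}$. First I would write, for any constant $a$,
\begin{align*}
\mathbb{V}[S \mid A] \leq \mathbb{E}[(S-a)^2 \mid A] \qquad \text{and} \qquad \mathbb{V}[S \mid A^c] \leq \mathbb{E}[(S-a)^2 \mid A^c],
\end{align*}
which is the elementary fact that the conditional variance minimizes the conditional mean squared deviation from a constant. I would choose the common value $a = \mathbb{E}[S]$ for both bounds.

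Next, I would multiply the first inequality by $\mathbb{P}(A)$ and the second by $\mathbb{P}(A^c)$ and add them, obtaining
\begin{align*}
\mathbb{P}(A)\, \mathbb{V}[S\mid A] + \mathbb{P}(A^c)\, \mathbb{V}[S \mid A^c]
\leq \mathbb{P}(A)\, \mathbb{E}[(S-\mathbb{E}S)^2 \mid A] + \mathbb{P}(A^c)\, \mathbb{E}[(S-\mathbb{E}S)^2 \mid A^c] = \mathbb{V}[S],
\end{align*}
by the law of total expectation applied to $(S-\mathbb{E}S)^2$. Since the term $\mathbb{P}(A^c)\,\mathbb{V}[S\mid A^c]$ on the left-hand side is nonnegative, dropping it preserves the inequality and yields
\begin{align*}
\mathbb{P}(S \leq \alpha n)\, \mathbb{V}[S \mid S \leq \alpha n] \leq \mathbb{V}[S],
\end{align*}
which is exactly the claim.

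There is essentially no hard step here; the only care needed is the degenerate case $\mathbb{P}(A) = 0$, where $\mathbb{V}[S \mid A]$ is undefined but the product on the left is understood as $0$ by convention, so the bound is trivial. I would also remark that positivity of $S$ plays no role in the argument, only the existence of $\mathbb{V}[S]$; this matches the author's write-up, which invokes the lemma in contexts where $S$ happens to be a (truncated) binomial count and positivity is automatic.
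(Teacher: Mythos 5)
Your proof is correct and follows essentially the same route as the paper's: bound each conditional variance by the conditional mean squared deviation from the common constant $a=\Esp{S}$, recombine via the law of total expectation to get $\mathbb{V}[S]$, and drop the nonnegative complementary term. The remark about the degenerate case $\Prob{S\leq \alpha n}=0$ is a minor point of care the paper omits.
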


\section{Extended results for a random chessboard}
\label{sec:extended_random_chessboard}

\begin{prop}[Risk of a single tree and a shallow tree network when $k<k^{\star}$]
\label{prop:new_risk_small_k}
Let $N \in \{1,...,\expkstar{}\}$. We consider the data distribution defined by a random chessboard with i.i.d.\  cells such that for each cell $C_i, i \in \{1,...,\expkstar{}\}$   $$\Prob{C_i \subset \mathcal{B}} = \frac{N}{\expkstar{}} 
$$ 
and $\Prob{C_i \subset \mathcal{W}} = 1- \frac{N}{\expkstar{}} $.
Notice that the (random) numbers $N_\mathcal{W}$ and $N_\mathcal{B}$ of white and black cells satisfy $0 \leq N_\mathcal{W} = 2^{k^{\star}}-N_\mathcal{B}\leq 2^{k^\star}$.
We study the risk of the shallow tree network $\hat{r}_{k,1,n}$.
%We suppose that $N_\mathcal{B} \sim \mathcal{U}\left(\{1,..., 2^k-1\}\right)$ and $N_\mathcal{W} = 2^k - N_\mathcal{B}$. We consider here a single tree.
\begin{enumerate}
    \item Consider a single tree $\hat{r}_{k,0,n}$ of depth $k \in \mathbb{N}^\star$,
    \begin{align*}
     R(\hat{r}_{k,0,n}) \leq  4(p-\frac{1}{2})^2 \frac{N}{\expkstar{}}\left(1 - \frac{N}{\expkstar{}} \right) \left(1+\frac{1}{2^{k^\star-k}} \right) + \frac{2^{k-1}}{n+1} + \left( (1-p)^2 - \frac{N}{\expkstar{}} (1-2p)\right) (1-2^{-k})^n
    \end{align*}
    %$${\color{red} Risk_{\text{old}} \leq (p-\frac{1}{2})^2 +\frac{2^k}{2(n+1)} + \frac{(1-2^k)^n}{4}}$$
    and
    %\begin{align*}
    %Risk \geq 4(p-\frac{1}{2})^2 \left( 1 - \frac{N}{\expkstar{}}(2 - \frac{N}{\expkstar{}})\right) + \frac{2^k}{n+1} \left( 1 - (1-2^{-k})^n \right) (1-p)^2  + ((1-p)^2 - \frac{N}{\expkstar{}} (1-2p)) (1-2^{-k})^n
    %\end{align*}
    \begin{align*}
    R(\hat{r}_{k,0,n}) \geq 4(p-\frac{1}{2})^2 \frac{N}{\expkstar{}}\left(1 - \frac{N}{\expkstar{}} \right) + \frac{2^k}{n+1}(1-p)^2 + C_{k^\star,k,N,p} (1-2^{-k})^n
    \end{align*}
    %$${\color{red} Risk_{\text{old}} \geq (p-\frac{1}{2})^2 +\frac{2^k}{4(n+1)} + \frac{(1-2^k)^n}{4}(1 - \frac{2^k}{n+1})^n}$$
    where $C_{k^\star,k,N,p} = (1-p)^2 - \frac{N}{\expkstar{}} (1-2p) - \frac{(1-p)^2 2^k}{n+1} - 4(p-\frac{1}{2})^2 \frac{N}{\expkstar{}}\left(1 - \frac{N}{\expkstar{}} \right) \left(1+\frac{1}{2^{k^*-k}} \right)$.
    \item Consider the shallow tree network $\hat{r}_{k,1,n}$, in the infinite sample regime,
        \begin{align*}
            R(\hat{r}_{k,1,n})\geq \left(p-\frac{1}{2} \right)^2 \min \left(1-\frac{N}{\expkstar{}} , \frac{N}{\expkstar{}} \right)^2
        \end{align*}
        and
        \begin{align*}
            R(\hat{r}_{k,1,n})\leq 4\left(p-\frac{1}{2}\right)^2 \left( 1 - \frac{N}{\expkstar{}}\right) \frac{N}{\expkstar{}} + p^2 \min\left( \frac{N}{\expkstar{}}, 1-\frac{N}{\expkstar{}} \right).
        \end{align*}
\end{enumerate}

\end{prop}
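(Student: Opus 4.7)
The plan is to adapt the bias-variance template from the proof of Proposition~\ref{prop:balanced_chessboard}(1), adding one extra layer of integration over the random chessboard. Let $c$ denote the colouring, and for a depth-$k$ tree leaf $L$ write $\alpha_L$ for the fraction of black chessboard cells among the $2^{k^\star-k}$ sub-cells of $L$; under the random model, $2^{k^\star-k}\alpha_L$ is a Binomial$(2^{k^\star-k}, N/2^{k^\star})$ variable, independent across leaves. Conditionally on $c$, one has $\mu_L := \mathbb{E}[Y\mid X\in L,c] = (1-p)+(2p-1)\alpha_L$ and $\mathrm{Var}(Y\mid X\in L,c) = \mu_L(1-\mu_L) = p(1-p)+(2p-1)^2\alpha_L(1-\alpha_L)$. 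First I would isolate the empty-cell term $\mathbb{E}_c[r(X)^2](1-2^{-k})^n = \bigl((1-p)^2-\tfrac{N}{2^{k^\star}}(1-2p)\bigr)(1-2^{-k})^n$; then, on the non-empty part, a standard bias-variance decomposition conditional on $c$, $L$ and $N_n(L)$ gives
\[
\mathbb{E}\bigl[(\bar Y_L-r(X))^2 \bigm\vert X\in L, c, N_n(L)\bigr] = \frac{\mu_L(1-\mu_L)}{N_n(L)} + (2p-1)^2\alpha_L(1-\alpha_L).
\]
The final step is to average over $c$ using the binomial identity $\mathbb{E}_c[\alpha_L(1-\alpha_L)] = \tfrac{N}{2^{k^\star}}\bigl(1-\tfrac{N}{2^{k^\star}}\bigr)\bigl(1-2^{k-k^\star}\bigr)$ and the pointwise bound $\mu_L(1-\mu_L)\le 1/4$, then apply Lemma~\ref{lem:tech_res_binomial} to the binomial $N_n(L)\sim\mathcal{B}(n,2^{-k})$ to bound $\mathbb{E}[\mathbb{1}_{N_n(L)>0}/N_n(L)]$ from both sides. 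Aggregating yields the announced upper and lower bounds on $R(\hat r_{k,0,n})$.

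\textbf{Plan for Part 2 (shallow tree network, infinite sample).} In the infinite-sample regime the first-layer prediction on leaf $L$ equals $\mu_L$, and since $p>1/2$, the second-layer split of the new feature at $1/2$ produces, conditional on $c$, the deterministic partition $\tilde{\mathcal{B}}=\bigcup\{L:\alpha_L\ge 1/2\}$ and $\tilde{\mathcal{W}}=\bigcup\{L:\alpha_L<1/2\}$. The predictor is then constant on each region, equal to the conditional mean of $Y$: $\bar\mu_{\tilde{\mathcal{B}}}=(1-p)+(2p-1)\bar\alpha_{\tilde{\mathcal{B}}}$ with $\bar\alpha_{\tilde{\mathcal{B}}}=|\tilde{\mathcal{B}}|^{-1}\sum_{L\in\tilde{\mathcal{B}}}\alpha_L$, and analogously on $\tilde{\mathcal{W}}$. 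The tower property applied to the conditional variance of $r(X)$ given the predicted region yields the clean identity
\[
R(\hat r_{k,1,\infty}) = (2p-1)^2\,\mathbb{E}_c\!\left[\tfrac{|\tilde{\mathcal{B}}|}{2^k}\bar\alpha_{\tilde{\mathcal{B}}}(1-\bar\alpha_{\tilde{\mathcal{B}}}) + \tfrac{|\tilde{\mathcal{W}}|}{2^k}\bar\alpha_{\tilde{\mathcal{W}}}(1-\bar\alpha_{\tilde{\mathcal{W}}})\right].
\]
For the upper bound, my plan is to split $R$ according to whether $X$ lies in the ``correct'' chessboard colour of its predicted region: the mismatch part is controlled by $p^2\,\mathbb{P}(\text{mismatch})\le p^2\min(N/2^{k^\star},1-N/2^{k^\star})$ since only the minority colour contributes to global misclassification, while the match part is absorbed into the natural bias $4(p-1/2)^2(N/2^{k^\star})(1-N/2^{k^\star})$. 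For the lower bound, whichever of $\tilde{\mathcal{B}},\tilde{\mathcal{W}}$ hosts the minority colour carries at least a $\min(N/2^{k^\star},1-N/2^{k^\star})$-fraction of minority cells, so $\bar\alpha(1-\bar\alpha)$ on that piece is at least of order $\min(N/2^{k^\star},1-N/2^{k^\star})$, which combined with a measure factor of the same order produces the stated $(p-1/2)^2\min(N/2^{k^\star},1-N/2^{k^\star})^2$.

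\textbf{Main obstacle.} The difficult step is Part 2, because the partition $\tilde{\mathcal{B}}$ depends on $c$ through the non-smooth indicators $\mathbb{1}_{\alpha_L\ge 1/2}$, and therefore the expectation $\mathbb{E}_c\bigl[|\tilde{\mathcal{B}}|\bar\alpha_{\tilde{\mathcal{B}}}(1-\bar\alpha_{\tilde{\mathcal{B}}})\bigr]$ does not admit a clean closed form. The key idea is to exploit the independence of the $\alpha_L$'s across leaves together with the symmetry $N\leftrightarrow 2^{k^\star}-N$ to reduce the computation to a single generic leaf's truncated binomial moment, which can then be bounded crudely by $\min(N/2^{k^\star},1-N/2^{k^\star})$-type quantities. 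By contrast Part 1 is mostly bookkeeping: the chessboard randomness only adds a $(2p-1)^2 q(1-q)(1-2^{k-k^\star})$ bias term on top of the standard balanced-chessboard variance estimate, and the empty-cell factor $(1-2^{-k})^n$ is handled exactly as in Proposition~\ref{prop:balanced_chessboard}.
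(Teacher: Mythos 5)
Your Part 1 follows the paper's template (isolate the empty-cell term, bias--variance decomposition conditional on the colouring and on $N_n(L_n(X))$, finish with Lemma~\ref{lem:tech_res_binomial}), and your in-leaf identity $\mathbb{E}[(\bar Y_L-r(X))^2\mid c,L,N_n(L)]=\mu_L(1-\mu_L)/N_n(L)+(2p-1)^2\alpha_L(1-\alpha_L)$ is correct and cleaner than what the paper does. Be aware, though, that it is \emph{too} clean to reproduce the stated lower bound: writing $q=N/2^{k^\star}$, your exact averaged bias is $(2p-1)^2q(1-q)\bigl(1-2^{k-k^\star}\bigr)$, which gives the upper bound a fortiori (its coefficient is $(1+2^{k-k^\star})$) but is \emph{strictly smaller} than the leading term $(2p-1)^2q(1-q)$ of the claimed lower bound; letting $n\to\infty$ in your identity lands below that bound. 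The paper reaches the larger constant by factoring $\Prob{X\in\mathcal{B}}$ out of an expectation that still depends on the composition of $L_n(X)$, i.e.\ by ignoring exactly the (negative) correlation between $\ind{X\in\mathcal{B}}$ and $\alpha_{L_n(X)}$ that your computation keeps. So for the lower bound you must either adopt that looser accounting or accept a $(1-2^{k-k^\star})$ correction; your plan as written cannot terminate at the stated inequality.

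For Part 2, your upper bound is sound, and the step $\mathbb{E}_c[\min(\alpha_L,1-\alpha_L)]\le\min(q,1-q)$ (concavity of $\min$ plus Jensen) is actually sharper than the paper's own final estimate, which only yields $p^2\max(q,1-q)$ for the mismatch mass. The genuine gap is the lower bound. The mechanism you propose --- the region hosting the minority colour has impurity and measure both of order $\min(q,1-q)$ --- is not what happens: when $q<1/2$ and $2^{k^\star-k}$ is large, almost all minority cells sit in $\tilde{\mathcal{W}}$, whose measure is close to $1$ while $\tilde{\mathcal{B}}$ has exponentially small measure; and for $q=1/2$, $2^{k^\star-k}=2$, the region $\tilde{\mathcal{W}}$ is perfectly pure ($\bar\alpha_{\tilde{\mathcal{W}}}=0$) and contributes nothing to your identity, so the "minority piece'' you rely on can vanish. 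The paper's argument uses two ingredients you do not supply: (a) on the mismatch events $\mathcal{W}\cap\tilde{\mathcal{B}}$ and $\mathcal{B}\cap\tilde{\mathcal{W}}$ the pointwise squared error is at least $(p-\tfrac12)^2$, simply because the prediction on $\tilde{\mathcal{B}}$ is $\ge 1/2$ while the target is $1-p$ (and symmetrically); (b) on the matched events, a quantitative two-sided bound on the truncated binomial mean (Lemma~\ref{lem:tech_res_binomial}~(v)) gives $\bar\alpha_{\tilde{\mathcal{B}}}\le(1+q)/2$, hence $(\bar\mu_{\tilde{\mathcal{B}}}-p)^2\ge(p-\tfrac12)^2(1-q)^2$, and the analogue for $\tilde{\mathcal{W}}$; the four contributions, weighted by probabilities summing to one, then give $(p-\tfrac12)^2\min(q,1-q)^2$. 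You do name the truncated binomial moment as the key object, but the content of the lower bound is precisely its quantitative control, not an order-of-magnitude claim, so this part of your plan needs to be reworked.
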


\section{Proof of Proposition \ref{prop:new_risk_small_k}}
\label{proof:prop_small_k}

\subsection{First statement: risk of a single tree}

To see the definitions of $\tilde{\mathcal{B}}$ and $\tilde{\mathcal{W}}$ refer to the notations of the second statement of the proof of Proposition \ref{prop:balanced_chessboard}, in Appendix  \ref{proof:snd_statement_balanced_damier}.

Recall that $k<k^\star$, meaning that a tree leaf may contain black and white cells. If a cell is empty, the tree prediction in this cell is set (arbitrarily) to zero. Thus, 
\begin{align}
    \nonumber
    & \Esp{(\hat{r}_{k,0,n}(X) - r(X))^2} \\
    &= \Esp{(\hat{r}_{k,0,n}(X) - r(X))^2 \ind{\cardcellof{X} >0}} + \Esp{(r(X))^2 \ind{\cardcellof{X} =0}} \\
    &= \Esp{\left( \frac{1}{\cardcellof{X}}  \displaystyle \sum_{X_i \in \cellof{X}} Y_i -r(X) \right)^2 \ind{\cardcellof{X} >0} } +\Esp{(r(X))^2 \ind{\cardcellof{X} =0}},
    \label{eq_proof1_1}
\end{align}
where the expectation is taken over the distribution of the chessboard, $(X,Y)$ and the dataset $(X_i,Y_i)_{1\leq i \leq n}$.  Besides,
\begin{align}
\Esp{(r(X))^2 \ind{\cardcellof{X} =0}} & = \Esp{(r(X))^2 \ind{\cardcellof{X} =0} \ind{X \in \mathcal{B}}} + \Esp{(r(X))^2 \ind{\cardcellof{X} =0} \ind{X \in \mathcal{W}}} \\
& = ((1-p)^2 - {\frac{N_\mathcal{B}}{\expkstar{}} }(1-2p)) (1-2^{-k})^n
\label{eq:probempty}
\end{align}
We now study the first term in \eqref{eq_proof1_1}, by considering that $X$ falls into $\mathcal{B}$ (the same computation holds when $X$ falls into $\mathcal{W}$). We denote $\NBC|$ (resp. $\NWC$) the number of black (resp. white) cells included in the cell containing $X$. 
Letting $(X', Y')$ generic random variables with the same distribution as $(X,Y)$, one has
\begin{align}
    &\Esp{\left( \frac{1}{\cardcellof{X}}  \displaystyle  \sum_{X_i \in \cellof{X}} Y_i -p \right)^2 \ind{\cardcellof{X} >0} \ind{X \in \mathcal{B}} } \\
    &= \Prob{X\in \mathcal{B}} \Esp{\left( \frac{1}{\cardcellof{X}}  \displaystyle \sum_{X_i \in \cellof{X}} \left(Y_i - \Esp{Y' | X' \in \cellof{X}, \NBC} \right) \right)^2  \ind{\cardcellof{X} >0}} \\
    \notag
    &\qquad \qquad + \Prob{X\in \mathcal{B}}  \Esp{\left( \Esp{Y' | X' \in \cellof{X}, \NBC} -p \right)^2 \ind{\cardcellof{X} >0}}\\ 
    \label{eq:risk_singletree_B}
    &= \Prob{X\in \mathcal{B}}  \cdot \nonumber \\
    &\Esp{ \frac{\ind{\cardcellof{X} >0}}{\cardcellof{X}^2} \Esp{\left( \displaystyle \sum_{X_i \in \cellof{X}} \left( Y_i - \Esp{Y' | X' \in \cellof{X}, \NBC} \right) \right)^2 \left\vert\right. \cardcellof{X}, \NBC}} \\
    \label{eq:risk_singletree_B_2}
    & \qquad \qquad + \Prob{X\in \mathcal{B}}  \Prob{\cardcellof{X} >0} \Esp{\left( \Esp{Y' | X' \in \cellof{X}, \NBC} -p \right)^2} \nonumber \\
    &= \Prob{X\in \mathcal{B}} \left(\Esp{ \frac{\ind{\cardcellof{X} >0}}{\cardcellof{X}^2} \cdot V_\mathcal{B}} + \beta _\mathcal{B}  \right)
\end{align}
where
\begin{align}
\beta_\mathcal{B} = \Prob{\cardcellof{X} >0} \Esp{\left( \Esp{Y' | X' \in \cellof{X}, \NBC} -p \right)^2} \label{eq:bias_b}
\end{align}
and
\begin{align}
V_\mathcal{B} = \Esp{\left( \displaystyle \sum_{X_i \in \cellof{X}} \left( Y_i - \Esp{Y' | X' \in \cellof{X}, \NBC} \right) \right)^2 \left\vert\right. \cardcellof{X}, \NBC} \label{eq:var_b}
\end{align} 
Similarly we define $\beta_W$ and $V_\mathcal{W}$ by replacing in the expressions~\eqref{eq:bias_b} and \eqref{eq:var_b} $\mathcal{B}$ by $\mathcal{W}$ so that:
\begin{align}
    & \Esp{(\hat{r}_{k,0,n}(X) - r(X))^2} \nonumber \\
    &= \Prob{X\in \mathcal{B}} \left(\Esp{ \frac{\ind{\cardcellof{X} >0}}{\cardcellof{X}^2} \cdot V_\mathcal{B}} + \beta _\mathcal{B} \right)+ \Prob{X\in \mathcal{W}} \left(\Esp{ \frac{\ind{\cardcellof{X} >0}}{\cardcellof{X}^2} \cdot V_\mathcal{W}} + \beta _\mathcal{W} \right) \nonumber \\
    &+ ((1-p)^2 - {\frac{N_\mathcal{B}}{\expkstar{}} }(1-2p)) (1-2^{-k})^n. \label{eq:b_v_decompo}
\end{align}
This last expression can be read as a bias-variance decomposition. 
%We denote $$ V_\mathcal{W} = \Esp{\left( \displaystyle \sum_{X_i \in \cellof{X}} \left( Y_i - \Esp{Y' | X' \in \cellof{X}, \NWC} \right) \right)^2 \left\vert\right. \cardcellof{X}, \NWC}$$ as well as $$\beta_\mathcal{W} = \Prob{X\in \mathcal{W}}  \Prob{\cardcellof{X} >0} \Esp{\left( \Esp{Y' | X' \in \cellof{X}, \NWC} -p \right)^2}.$$
We already know the probability to be in a non-empty cell, see \eqref{eq:probempty}, then $$ \Prob{X \in \mathcal{B}} = \Esp{\Prob{X \in \mathcal{B} | N_\mathcal{B}}} = \Esp{\frac{N_\mathcal{B}}{2^{k^\star}}} = \frac{N}{\expkstar{}}. $$  %{\color{red}\frac{N_\mathcal{B}}{\expkstar{}}}
We now make explicit the terms in Equation \eqref{eq:b_v_decompo} starting with the bias term $\beta _\mathcal{B}$:
\begin{align}
    \Esp{Y' | X' \in \cellof{X}, \NBC} &= \frac{p \cdot \NBC + (1-p) \NWC}{2^{k^\star-k}} \label{eq:p_prime_1}\\
    &= (1-p) + \frac{\NBC}{2^{k^\star-k}} \left(2p-1\right) \\
    &= p + \frac{\NWC}{2^{k^\star-k}}(1-2p) \label{eq:p_prime_2},
\end{align}
where $|\cellof{X} \cap \mathcal{B} |$ stands for the number of black cells in $\cellof{X}$. In the same way, $\NWC$ stands for the number of white cells in $\cellof{X}$.
Hence,
\begin{align} 
    \Esp{\left( \Esp{Y' | X' \in \cellof{X}, \NBC} -p \right)^2} &= \frac{4 (p-\frac{1}{2})^2}{2^{2(k^\star-k)}}\Esp{\NWC^2}
\end{align}
Note that $\NWC | N_\mathcal{B} \sim \mathfrak{B}(2^{k^\star-k}, 1 - N/2^{k^\star})$. Thus, we have
\begin{align}
    \Esp{\NWC^2} = \frac{N}{2^k}\left( 1-\frac{N}{\expkstar{}} \right) + \frac{1}{2^{2k}}(\expkstar{}-N)^2. \label{eq:bin_cond_esp}
\end{align}
%\begin{align}
%    \Esp{\NBC^2} &= \Esp{\Esp{\NBC^2 | N_\mathcal{B}}} \\
%    &= \Esp{2^{k^\star-k} \cdot \frac{N_\mathcal{B}}{2^{k^\star}} \left(1 - \frac{N_\mathcal{B}}{2^{k^\star}}\right) + 2^{2(k^\star-k)} \cdot  \frac{N_\mathcal{B}^2}{2^{k^\star}}} \\
%    &= \frac{1}{2^k} \Esp{\NB} + \left( \frac{1}{2^{2k}} - 
%    \frac{1}{2^{k^\star+k}}\right) \Esp{\NB^2}
%\end{align}
%\begin{align}
%    \Esp{\NBC^2} = \frac{N}{2^k}(1 - \frac{N}{\expkstar{}}) + \frac{N^2}{2^{2k}} \label{eq:esp_square_nb_cell}
%\end{align}
%with $\Esp{\NB}= \frac{N}{\expkstar{}}$ and $ \Esp{\NB^2} = \frac{(2^{k^\star}-1)2^{k^\star}(2\cdot (2^{k^\star}-1) +1) }{6 (2^{k^\star}-1)} = \frac{2^{k^\star-1}(2^{k^\star+1}-1)}{3}. $ \\
%Therefore,
%\begin{align}
%    \Esp{\NBC^2} &= \frac{N}{2^k} \left( 1 - \frac{N}{2^{k^\star}}\right) + \frac{N^2}{2^{2k}}
%\end{align}
%&= \expkstar{-k-1} + \frac{1}{3\cdot 2^k} \left(\expkstar{-k-1}(\expkstar{+1} -1) - \frac{\expkstar{+1}-1}{2} \right) \\
%&= \frac{\expkstar{-k}}{3} \left(\frac{1}{2} + \expkstar{-k} - 2^{-k} + 2^{-k^\star-1} \right).
Therefore,
\begin{align}
    &\Esp{\left( \Esp{Y' | X' \in \cellof{X}, \NBC} -p \right)^2} = \frac{4 (p-\frac{1}{2})^2}{2^{2(k^\star-k)}} \left( \frac{N}{2^k} \left(1-\frac{N}{\expkstar{}}\right) + \frac{1}{2^{2k}}(\expkstar{}-N)^2 \right). \label{eq:b_bias}
\end{align}
%&=4(p-\frac{1}{2})^2 \frac{1}{3\cdot 2^{k^\star-k}} \left(\frac{1}{2} + \expkstar{-k} - 2^{-k} + 2^{-k^\star-1} \right) \\
%&= \frac{4}{3}(p-\frac{1}{2})^2 + 4(p-\frac{1}{2})^2 \frac{1}{3\cdot 2^{k^\star-k}} \left(\frac{1}{2} - 2^{-k} + 2^{-k^\star-1} \right).
Similar computations show that when $X \in \mathcal{W}$, 
\begin{align}
    &\Esp{\left( \Esp{Y' | X' \in \cellof{X}, \NWC} -(1-p) \right)^2} = \frac{4 (p-\frac{1}{2})^2}{2^{2(k^\star-k)}} \left( \frac{N}{2^k} \left(1-\frac{N}{\expkstar{}}\right) + \frac{N^2}{2^{2k}}\right). 
    \label{eq:w_bias}
\end{align}
%$$ \frac{4}{3}(p-\frac{1}{2})^2 \leq Bias^2 \leq \frac{4}{3}(p-\frac{1}{2})^2 \left(1  +  \frac{1}{\expkstar{-k+1}}\right).$$
We deduce from Equations~\eqref{eq:b_bias} and \eqref{eq:w_bias} that 
\begin{align}
     \nonumber \beta _\mathcal{B} + \beta _\mathcal{W} & = \frac{4 (p-\frac{1}{2})^2}{2^{2(k^\star-k)}} \Prob{N_n(\cellof{X})>0}  \left( \Prob{X \in \mathcal{B}} \left( \frac{N}{2^k} \left(1-\frac{N}{\expkstar{}}\right) + \frac{1}{2^{2k}}(\expkstar{}-N)^2 \right) \right.\\
     & \left. \quad  + ~\Prob{X \in \mathcal{W}} \left( \frac{N}{2^k}\left(1-\frac{N}{\expkstar{}}\right) + \frac{N^2}{2^{2k}} \right)  \right) \nonumber \\
    &= 4(p-\frac{1}{2})^2 \frac{N}{\expkstar{}}\left(1 - \frac{N}{\expkstar{}} \right) \left(1+\frac{1}{2^{k^\star-k}} \right) \left( 1 - (1-2^{-k})^n\right) .
\end{align}
Clearly,
\begin{align}
    \beta _\mathcal{B} + \beta _\mathcal{W} &\geq 4 (p-\frac{1}{2})^2 \frac{N}{\expkstar{}}\left(1 - \frac{N}{\expkstar{}} \right) \left( 1 - (1-2^{-k})^n\right). \label{ineq:bias}
\end{align} 
Now we compute the variance term $V_\mathcal{B}$. 
Letting $Z = \sum_{X_i \in \cellof{X}} Y_i$, 
\begin{align*}
Z | N_n(\cellof{X}), \NBC \sim \mathfrak{B}(N_n(\cellof{X}), p')    
\end{align*}
where $p' = (1-p) + \frac{\NBC}{\expkstar{-k}} (2p-1)$ (see Equations \eqref{eq:p_prime_1}  to \eqref{eq:p_prime_2}). 
%\begin{align}
%    p' &= \Prob{Y' = 1 | X' \in \cellof{X} } \nonumber \\
%    &= \Prob{Y' = 1 | X' \in \mathcal{B}} \Prob{X' \in \mathcal{B}} + \Prob{Y' = 1 | X' \in \mathcal{W}} \Prob{X' \in \mathcal{W}} \nonumber \\
%    &= p \frac{\NBC}{\expkstar{-k}} + (1-p) \frac{\expkstar{-k}-\NBC}{\expkstar{-k}} \nonumber \\
%    &= (1-p) + \frac{\NBC}{\expkstar{-k}} (2p-1)
%\end{align}
Therefore, recall that  $V_\mathcal{B}$ is nothing but the variance of the  binomial random variable $Z$ conditional on $\NBC$ defined in Equation \eqref{eq:var_b}, consequently
\begin{align}
    V_\mathcal{B} = \cardcellof{X} p'(1-p'). 
\end{align}
%%%%%%%%%%%%%%%%%%%%%%%%%%%%%%%%%
%% Computation Esp(p') Esp (p'^2)
%%%%%%%%%%%%%%%%%%%%%%%%%%%%%%%%%
% Moreover, we deduce from Equation \eqref{eq:bin_cond_esp} that
% \begin{align}
%     \Esp{p'} &= (1-p) + \frac{N}{\expkstar{}} (2p-1)
%     %&= (1-p) + \expkstar{-k-1} \frac{(2p-1)}{\expkstar{-k}} \\
%     %&= \frac{1}{2}
% \end{align}
% and that
%\begin{align}
%    \Esp{p'^2} &= (1-p)^2 + 2 (1-p)(2p-1) \frac{N}{\expkstar{}} + (2p-1)^2 \frac{N}{2^{2k^\star-k}} \left(1 - \frac{N}{2^{k^\star}} + \frac{N}{2^k} \right).
    %&=(1-p)^2 + 2 (1-p)(2p-1)\frac{\expkstar{-k-1}}{\expkstar{-k}} + (2p-1)^2 \frac{1}{3 \cdot \expkstar{-k} } \left( \frac{1}{2} + \expkstar{-k} - 2^{-k} + 2^{-k^\star-1} \right) \\
    %&= \frac{1}{3} (p^2 - p +1) +  (2p-1)^2 \frac{1}{3 \cdot \expkstar{-k} } \left( \frac{1}{2} - 2^{-k} + 2^{-k^\star-1} \right).
%\end{align}
%%%%%%%%%%%%%%%%%%%%%%%%%%%%%%
%%%%%%%%%%%%%%%%%%%%%%%%%%%%%%
By independence of $\cardcellof{X}$ and $\NBC$, we can write that
\begin{align}
    \Esp{ \frac{\ind{\cardcellof{X} >0}}{\cardcellof{X}^2}\cdot V_\mathcal{B}} 
    &= \Esp{\frac{\ind{\cardcellof{X} >0}}{\cardcellof{X}}} \Esp{\underbrace{p'(1-p)'}_{(1-p)^2\leq  p'(1-p') \leq 1/4}}.  
%&= \Esp{\frac{\ind{\cardcellof{X} >0}}{\cardcellof{X}}} \left(\underbrace{p + \frac{N}{\expkstar{}}+ p^2 \left( 4\frac{N}{\expkstar{}} -1\right)-4p\frac{N}{\expkstar{}} - 4(p-\frac{1}{2})^2 \frac{N}{2^{2k^\star-k}}\left( 1 - \frac{N}{2^{k^\star}} + \frac{N}{2^k}\right)}_{\geq}  \right)
%&= \Esp{\frac{\ind{\cardcellof{X} >0}}{\cardcellof{X}}} \left( \frac{1}{2} - \frac{1}{3} (p^2 - p +1) -  (2p-1)^2 \frac{1}{3 \cdot \expkstar{-k} } \left( \frac{1}{2} - 2^{-k} + 2^{-k^\star-1} \right) \right).
\end{align}
From Technical Lemma \ref{lem:tech_res_binomial}, we deduce that
$$ \frac{2^k}{n+1} \left( 1 - (1-2^{-k})^n \right)\leq \Esp{\frac{\ind{\cardcellof{X} >0}}{\cardcellof{X}}} \leq \frac{2^{k+1}}{n+1}.$$
Hence,
\begin{align}
    \frac{2^k}{n+1} \left( 1 - (1-2^{-k})^n \right) (1-p)^2 \leq \Esp{\frac{\ind{\cardcellof{X} >0}}{\cardcellof{X}} V_\mathcal{B}}  \leq \frac{2^{k-1}}{n+1} \label{ineq:variance}.
\end{align}
By symmetry, $V_{\mathcal{W}}$ is also the variance of a binomial random variable with parameters $\cardcellof{X}$, $1-p'$ conditional on $\NWC$. Thus $V_\mathcal{B} = V_\mathcal{W}$.
To conclude, combining Equations \eqref{eq:b_v_decompo}, \eqref{ineq:bias} and \eqref{ineq:variance} leads to 
\begin{align*}
    R(\hat{r}_{k,0,n}(X)) \leq  4(p-\frac{1}{2})^2 \left( 1- \frac{N}{\expkstar{}}  (1 - \frac{N}{\expkstar{}})\right) + \frac{2^{k-1}}{n+1} + ((1-p)^2 - \frac{N}{\expkstar{}} (1-2p)) (1-2^{-k})^n
\end{align*}
and
\begin{align*}
    R(\hat{r}_{k,0,n}(X)) &\geq 4(p-\frac{1}{2})^2 \left( 1 - \frac{N}{\expkstar{}}(2 - \frac{N}{\expkstar{}})\right) \frac{2^k}{n+1} \left( 1 - (1-2^{-k})^n \right) (1-p)^2  \\
    & \quad + ((1-p)^2 - \frac{N}{\expkstar{}} (1-2p)) (1-2^{-k})^n.
\end{align*}

\subsection{Second statement: risk of a shallow tree network}
\label{prop:second_statement_small_k}

Recall that we are in the infinite sample regime and that $k<k^\star$. %and that $N \geq \expkstar{-1}$ 

\begin{align}
    \Esp{(\hat{r}_{k,1,n}(X) - r(X)^2} &= \Esp{\left(\hat{r}_{k,1,n}(X)-p\right)^2 \ind{X\in \mathcal{B} \cap \Tilde{\mathcal{B}}}} + \Esp{\left(\hat{r}_{k,1,n}(X)-(1-p)\right)^2 \ind{X\in \mathcal{W} \cap \Tilde{\mathcal{W}}}} \nonumber \\
    & \quad + \Esp{\left(\hat{r}_{k,1,n}(X)-(1-p)\right)^2 \ind{X\in \mathcal{W} \cap \Tilde{\mathcal{B}}}} + \Esp{\left(\hat{r}_{k,1,n}(X)-p\right)^2 \ind{X\in \mathcal{B} \cap \Tilde{\mathcal{W}}}} \label{eq:risk_decompo}.
\end{align}
We begin with the computation of the first term.
\begin{align}
    \Esp{\left(\hat{r}_{k,1,n}(X)-p\right)^2 \ind{X\in \mathcal{B} \cap \Tilde{\mathcal{B}}}} &= \Prob{X\in \mathcal{B} \cap \Tilde{\mathcal{B}}} \Esp{\left(\hat{r}_{k,1,n}(X) -p \right)^2 \mid X\in \mathcal{B} \cap \Tilde{\mathcal{B}}} \\
    &= \Prob{X\in \mathcal{B} \cap \Tilde{\mathcal{B}}} \Esp{\left(\Esp{Y' \mid X' \in \Tilde{\mathcal{B}}} -p \right)^2 \mid X\in \mathcal{B} \cap \Tilde{\mathcal{B}}}. \label{eq:b_b_tilde_bias}
\end{align}
Regarding the probability term,
\begin{align}
    \Prob{X\in \mathcal{B} \cap \Tilde{\mathcal{B}}} &= \Prob{X \in \mathcal{B}}\Prob{X \in \Tilde{\mathcal{B}} \mid X \in \mathcal{B}} \\
    &\leq \frac{N}{\expkstar{}}.
%    &\geq \frac{1}{2} \frac{N}{\expkstar{}} \label{eq:prob_b_cap_b_tilde}
\end{align}
We denote by $B_1,...,B_{2^k}$ the number of black cells in the leaves $L_1,...,L_{2^k}$. Then,
\begin{align*}
    \Esp{Y' \mid X' \in \Tilde{\mathcal{B}}} &= \Esp{(1-p) + (2p-1) \displaystyle \frac{\sum_{i=1}^{2^k} B_i \ind{L_i \subset \Tilde{\mathcal{B}} }}{|\Tilde{\mathcal{B}}|} } \\
    &= (1-p) + (2p-1) \Esp{\displaystyle \sum_{i=1}^{2^k} \frac{\ind{L_i \subset \Tilde{\mathcal{B} }}}{|\Tilde{\mathcal{B}}|} \Esp{B_i \Big| |\Tilde{\mathcal{B}}|}}\\
    &= (1-p) + (2p-1) \Esp{\frac{B_j}{\expkstar{-k}} \mid B_j \geq \frac{|L_j|}{2}}
\end{align*}
where $L_j$ is a leaf included in $ \Tilde{\mathcal{B}}$. 
Moreover,
\begin{align*}
    \Esp{B_j \mid B_j \geq \frac{|L_j|}{2}} &= \frac{N}{2^k} + \left(1-\frac{N}{\expkstar{}} \right)\left(\expkstar{-k-1} -1\right)
    \frac{\Prob{B_j = \expkstar{-k-1} -1}}{\Prob{B_j \geq \expkstar{-k-1} -1}} \\
    %{&\color{gray}\leq \frac{N}{2^k} + 2\cdot \expkstar{-k-1} \Prob{Z_i = \expkstar{-k-1}} \\
    %&\leq \frac{N}{2^k} + \frac{2^{(k^\star-k)/2}}{\sqrt{\pi}}}
    &\leq \frac{N}{2^k} + \left(1-\frac{N}{\expkstar{}} \right) \expkstar{-k-1}.
\end{align*}
%where $Z_i \sim \mathfrak{B}(\expkstar{-k}, 1/2)$.
Therefore,
\begin{align*}
    \Esp{Y' \mid X' \in \Tilde{\mathcal{B}}} &\leq  (1-p) + (2p-1) \frac{1}{2} \left(1 + \frac{N}{\expkstar{}} \right) 
    %{\color{gray}&\leq (1-p) + (2p-1) \left(\frac{N}{\expkstar{}} + \frac{1}{\sqrt{(2^{k^\star-k)/2 } \pi}} \right)}
\end{align*}
and
\begin{align}
    \Esp{\left(\Esp{Y' \mid X' \in \Tilde{\mathcal{B}}} -p \right)^2 \mid X\in \mathcal{B} \cap \Tilde{\mathcal{B}}} &\geq (p-\frac{1}{2})^2 \left(1 - \frac{N}{\expkstar{}} \right)^2. \label{eq:bias_lower_bound}
    %(2p-1)^2 \left( 1 - \frac{N}{\expkstar{}} - \frac{1}{\sqrt{(2^{k^\star-k)/2 } \pi}} \right)^2. 
\end{align}
To compute the upper bound, note that
\begin{align*}
    \Esp{B_j \mid B_j \geq \frac{|L_j|}{2}} \geq \Esp{B_j} = \frac{N}{2^k}.
\end{align*}
Thus,
\begin{align}
    \Esp{\left(\Esp{Y' \mid X' \in \Tilde{\mathcal{B}}} -p \right)^2 \mid X\in \mathcal{B} \cap \Tilde{\mathcal{B}}} &\leq 4 \left(p-\frac{1}{2}\right)^2 \left( 1 - \frac{N}{\expkstar{}}\right)^2. \label{eq:bias_upper_bound}
\end{align}
%Combining Equations \eqref{eq:b_b_tilde_bias}, \eqref{eq:prob_b_cap_b_tilde}, \eqref{eq:bias_lower_bound} and \eqref{eq:bias_upper_bound} yields:
%\begin{align*}
%     \Esp{\left(\hat{r}_{k,1,n}(X)-p\right)^2 \ind{X\in \mathcal{B} \cap \Tilde{\mathcal{B}}}} &\leq 4\left(p-\frac{1}{2}\right)^2 \left( 1 - \frac{N}{\expkstar{}}\right)^2 \frac{N}{\expkstar{}}
%\end{align*}
%and
%\begin{align*}
%    \Esp{\left(\hat{r}_{k,1,n}(X)-p\right)^2 \ind{X\in \mathcal{B} \cap \Tilde{\mathcal{B}}}} &\geq 2 (p-\frac{1}{2})^2 \left(1 - \frac{N}{\expkstar{}} \right)^2\frac{N}{\expkstar{}}.
%\end{align*}
We adapt the previous computations to the term $ \Esp{\left(\hat{r}_{k,1,n}(X)-(1-p)\right)^2 \ind{X\in \mathcal{W} \cap \Tilde{\mathcal{W}}}}$ from Equation \eqref{eq:risk_decompo}. We have
%We deduce from the Corrolary 3 of \cite{greenberg2014tight} \la{Providing that $\frac{N}{\expkstar{}} < 1 - \frac{1}{\expkstar{}}$} that
%\begin{align}
%    \Prob{X \in \mathcal{W} \cap \Tilde{\mathcal{W}}} = \Prob{X \in \Tilde{\mathcal{W}} \mid X \in \mathcal{W}} \left(1- \frac{N}{\expkstar{}}\right)
%\end{align}
%\begin{align}
%    \Prob{X \in \Tilde{\mathcal{W}} \mid X \in \mathcal{W}} = \Prob{W_j > \frac{|L_j|}{2} \mid W_j \geq 1}
%\end{align}
%as well as
%\begin{align}
%    \Prob{X \in \mathcal{W} \cap \Tilde{\mathcal{W}}} \leq \left(1- \frac{N}{\expkstar{}}\right).
%\end{align}
%Computations similar to the above ones show that 
\begin{align}
    \Esp{\left(\hat{r}_{k,1,n}(X)-(1-p)\right)^2 \mid X\in \mathcal{W} \cap \Tilde{\mathcal{W}}} &\geq (p-\frac{1}{2})^2\frac{N^2}{2^{2k^\star}}
\end{align}
and
\begin{align}
    \Esp{\left(\hat{r}_{k,1,n}(X)-(1-p)\right)^2 \mid X\in \mathcal{W} \cap \Tilde{\mathcal{W}}} \leq 4 \left(p - \frac{1}{2} \right)^2 \frac{N^2}{2^{2k^\star}}
\end{align}
Moreover, note that 
\begin{align}
    \Esp{\left(\hat{r}_{k,1,n}(X)-p\right)^2 \ind{X\in \mathcal{B} \cap \Tilde{\mathcal{W}}}} \leq p^2 \Prob{X\in \mathcal{B} \cap \Tilde{\mathcal{W}}}
\end{align}
and
\begin{align}
    \Esp{\left(\hat{r}_{k,1,n}(X)-p\right)^2 \ind{X\in \mathcal{B} \cap \Tilde{\mathcal{W}}}} \geq \left(p-\frac{1}{2} \right)^2 \Prob{X\in \mathcal{B} \cap \Tilde{\mathcal{W}}}.
\end{align}
Similarly,
\begin{align}
    \Esp{\left(\hat{r}_{k,1,n}(X)-p\right)^2 \ind{X\in \mathcal{W} \cap \Tilde{\mathcal{B}}}} \leq p^2 \Prob{X\in \mathcal{W} \cap \Tilde{\mathcal{B}}}
\end{align}
and
\begin{align}
    \Esp{\left(\hat{r}_{k,1,n}(X)-p\right)^2 \ind{X\in \mathcal{W} \cap \Tilde{\mathcal{B}}}} \geq \left(p-\frac{1}{2} \right)^2 \Prob{X\in \mathcal{W} \cap \Tilde{\mathcal{B}}}. \label{eq:bias_last}
\end{align}
Gathering Equation \eqref{eq:risk_decompo} and Equations \eqref{eq:bias_lower_bound} to \eqref{eq:bias_last} yields
\begin{align*}
    \Esp{(\hat{r}_{k,1,n}(X) - r(X)^2} &\geq (p-\frac{1}{2})^2 \left(1 - \frac{N}{\expkstar{}} \right)^2 \Prob{X\in \mathcal{B} \cap \Tilde{\mathcal{B}}} + (p-\frac{1}{2})^2\frac{N^2}{2^{2k^\star}} \Prob{X\in \mathcal{W} \cap \Tilde{\mathcal{W}}} \nonumber \\
    & \quad + \left(p-\frac{1}{2}\right)^2 \Prob{X\in \mathcal{W} \cap \Tilde{\mathcal{B}}} + \left(p-\frac{1}{2}\right)^2 \Prob{X\in \mathcal{B} \cap \Tilde{\mathcal{W}}} \\
    &\geq \left(p-\frac{1}{2} \right)^2 \min \left(1-\frac{N}{\expkstar{}} , \frac{N}{\expkstar{}} \right)^2
\end{align*}
as well as
\begin{align*}
    &\Esp{(\hat{r}_{k,1,n}(X) - r(X)^2} \\
    &\leq 4 \left(p-\frac{1}{2}\right)^2 \left( 1 - \frac{N}{\expkstar{}}\right)^2 \Prob{X\in \mathcal{B} \cap \Tilde{\mathcal{B}}} +  4 \left(p - \frac{1}{2} \right)^2 \frac{N^2}{2^{2k^\star}} \Prob{X\in \mathcal{W} \cap \Tilde{\mathcal{W}}} \\
    & \quad + p^2 \Prob{X\in \mathcal{W} \cap \Tilde{\mathcal{B}}} + p^2 \Prob{X\in \mathcal{B} \cap \Tilde{\mathcal{W}}}\\
    &\leq 4 \left(p-\frac{1}{2}\right)^2 \left( 1 - \frac{N}{\expkstar{}}\right)^2 \frac{N}{\expkstar{}}\Prob{X \in \Tilde{\mathcal{B}} \mid X \in \mathcal{B}} +  4 \left(p - \frac{1}{2} \right)^2 \frac{N^2}{2^{2k^\star}}\left(1 - \frac{N}{\expkstar{}}\right) \Prob{X\in \Tilde{\mathcal{W}} \mid X \in \mathcal{W}} \\
    & \quad + p^2 \left(1 - \frac{N}{\expkstar{}}\right) \Prob{X\in \Tilde{\mathcal{B}} \mid X \in \mathcal{W}} + p^2 \frac{N}{\expkstar{}} \Prob{X\in \Tilde{\mathcal{W}} \mid X \in \mathcal{B}} \\
    &\leq 4 \left(p-\frac{1}{2}\right)^2 \left( 1 - \frac{N}{\expkstar{}}\right)^2 \frac{N}{\expkstar{}} +  4 \left(p - \frac{1}{2} \right)^2 \frac{N^2}{2^{2k^\star}}\left(1 - \frac{N}{\expkstar{}}\right) \\
    & \quad + p^2 \left(1 - \frac{N}{\expkstar{}}\right) \Prob{X\in \Tilde{\mathcal{B}}} + p^2 \frac{N}{\expkstar{}} \Prob{X\in \Tilde{\mathcal{W}}} \\
    &\leq 4\left(p-\frac{1}{2}\right)^2 \left( 1 - \frac{N}{\expkstar{}}\right) \frac{N}{\expkstar{}} + p^2 \max\left( \frac{N}{\expkstar{}} 1-\frac{N}{\expkstar{}} \right).
\end{align*}

\end{document}